\newtheorem{theorem}{Theorem}
\newtheorem{lemma}{Lemma}
\newcommand{\R}{\mathbb{R}}
\newcommand{\cA}{\mathcal{A}}
\newcommand{\cE}{\mathcal{E}}
\newcommand{\cF}{\mathcal{F}}
\newcommand{\cG}{\mathcal{G}}
\newcommand{\cI}{\mathcal{I}}
\newcommand{\cL}{\mathcal{L}}
\newcommand{\cM}{\mathcal{M}}
\newcommand{\cR}{\mathcal{R}}
\newcommand{\cS}{\mathcal{S}}
\newcommand{\cT}{\mathcal{T}}
\newcommand{\argmax}{\operatornamewithlimits{argmax}}
\mathchardef\mhyphen="2D
\newcommand{\ex}{\mathbb{E}}
\newcommand{\kl}{\textup{KL}}
\newcommand{\sbr}[1]{\left( #1 \right)}
\newcommand{\mbr}[1]{\left[ #1 \right]}
\newcommand{\lbr}[1]{\left\{ #1 \right\}}
\newcommand{\abr}[1]{\left| #1 \right|}
\newcommand{\cvar}{\textup{CVaR}}
\newcommand{\icvar}{\textup{ICVaR}}
\newcommand{\quantile}{\textup{VaR}}
\newcommand{\supp}{\textup{supp}}
\newcommand{\algcvar}{\mathtt{ICVaR\mbox{-}RM}}
\newcommand{\algcvarbpi}{\mathtt{ICVaR\mbox{-}BPI}}
\newcommand{\algmin}{\mathtt{MaxWP}}
\newcommand{\indicator}[1]{\mathbbm{1}\left\{ #1 \right\}}
\newcommand{\bernoulli}{\mathtt{Ber}}
\newcommand{\hyphen}{\mbox{-}}
\newcommand{\revision}[1]{{\color{black} #1}}
\crefname{equation}{Eq.}{Eqs.}
\crefname{section}{Section}{Sections}
\crefname{algorithm}{Algorithm}{Algorithms}
\crefname{figure}{Figure}{Figures}
\crefname{table}{Table}{Tables}
\crefname{theorem}{Theorem}{Theorems}
\crefname{lemma}{Lemma}{Lemmas}
\crefname{corollary}{Corollary}{Corollaries}
\crefname{definition}{Definition}{Definitions}
\newcommand{\compilehidecomments}{true}
	\newcommand{\longbo}[1]{}
	\newcommand{\yihan}[1]{}
	\newcommand{\siwei}[1]{}
	\newcommand{\longbo}[1]{{\color{red}  [\text{Longbo:} #1]}}
	\newcommand{\yihan}[1]{{\color{teal} [\text{Yihan:} #1]}}
	\newcommand{\siwei}[1]{{\color{brown} [\text{Siwei:} #1]}}
\newcommand{\compilefullversion}{true} 
	\newcommand{\OnlyInFull}[1]{}
	\newcommand{\OnlyInShort}[1]{#1}
	\newcommand{\OnlyInFull}[1]{#1}%
	\newcommand{\OnlyInShort}[1]{}%
\title{Provably Efficient Risk-Sensitive Reinforcement Learning: Iterated CVaR and Worst Path}
\author{Yihan Du\\
	Institute for Interdisciplinary Information Sciences\\
	Tsinghua University\\
	Beijing, China \\
	\texttt{duyh18@mails.tsinghua.edu.cn} \\
	\And
	Siwei Wang \\
	Microsoft Research \\
	Beijing, China \\
	\texttt{siweiwang@microsoft.com} \\
	\AND
	Longbo Huang \thanks{Corresponding author.} \\
	Institute for Interdisciplinary Information Sciences \\
	Tsinghua University\\
	Beijing, China \\
	\texttt{longbohuang@tsinghua.edu.cn}
}
\begin{document}

	\maketitle
	
	\begin{abstract}
		In this paper, we study a novel episodic risk-sensitive Reinforcement Learning (RL) problem, named Iterated CVaR RL, which aims to maximize the tail of the reward-to-go at each step, and focuses on tightly controlling the risk of getting into catastrophic situations at each stage. 
		This formulation is applicable to real-world tasks that demand strong risk avoidance throughout the decision process, such as autonomous driving, clinical treatment planning and robotics. 
		We investigate two performance metrics under Iterated CVaR RL, i.e., Regret Minimization  and Best Policy Identification. For both metrics, we design efficient algorithms $\algcvar$ and $\algcvarbpi$, respectively, and provide nearly matching upper and lower bounds with respect to the number of episodes $K$. 
		We also investigate an interesting limiting case of Iterated CVaR RL, called Worst Path RL, where the objective becomes to maximize the minimum possible cumulative reward. For Worst Path RL, we propose an efficient algorithm with constant upper and lower bounds. 
		Finally, our techniques for bounding the change of CVaR due to the value function shift and decomposing the regret via a distorted visitation distribution are novel, and can find applications in other risk-sensitive RL problems. 
	\end{abstract}

	\section{Introduction}
	Reinforcement Learning (RL)~\citep{kaelbling1996reinforcement,szepesvari2010algorithms,sutton2018reinforcement} is a classic online decision-making formulation, where an agent interacts with an unknown environment with the goal of maximizing the obtained reward. 
	Despite the empirical success and theoretical progress of recent RL algorithms, e.g., ~\citep{szepesvari2010algorithms,agrawal2017optimistic,azar2017minimax,zanette2019tighter}, they focus mainly on the risk-neutral criterion, i.e., maximizing the expected cumulative reward, and can fail to avoid rare but disastrous situations. 
	As a result, existing algorithms cannot be applied to tackle real-world risk-sensitive tasks, such as autonomous driving~\citep{wen2020safe} and clinical treatment planning~\citep{healthcare}, where 
	policies that ensure low risk of getting into catastrophic situations
	at all decision stages are strongly preferred.

	Motivated by the above facts, we investigate Iterated CVaR RL, a novel episodic RL formulation equipped with an important risk-sensitive criterion, i.e., Iterated Conditional Value-at-Risk (CVaR)~\citep{hardy2004iterated}. 
	Here, CVaR~\citep{artzner1999coherent} is a popular static (single-stage) risk measure which stands for the expected tail reward. 
	Iterated CVaR is a dynamic (multi-stage) risk measure defined upon CVaR by backward iteration, and
	focuses on the worst portion of the reward-to-go at each stage. 
	In the Iterated CVaR RL problem, an agent interacts with an \emph{unknown} episodic Markov Decision Process (MDP) in order to maximize the worst $\alpha$-portion of the reward-to-go 
	at each step, where $\alpha \in (0,1]$ is a given risk level.
	Under this model, we investigate two important performance metrics,
	i.e., Regret Minimization (RM), where the goal is to minimize the cumulative regret over all episodes,  and Best Policy Identification (BPI), where the performance is measured by the number of episodes required for identifying an optimal policy.

	
	\revision{
		Compared to existing CVaR MDP model, e.g., \citep{boda2006time,ott2010markov,bauerle2011markov,chow2015risk},\yihan{added ``e.g.,'' for the references of CVaR MDP} which aims to maximize the CVaR (i.e., the worst $\alpha$-portion) of the \emph{total} reward, our Iterated CVaR RL concerns the worst $\alpha$-portion of the reward-to-go \emph{at each step}, and prevents the agent from getting into catastrophic states more carefully. Intuitively, CVaR MDP takes more cumulative reward into account\yihan{changed ``overall performance'' to ``cumulative reward''} and prefers actions which have better performance in general, but can have larger probabilities of getting into catastrophic states. Thus, CVaR MDP is suitable for scenarios where bad situations lead to a higher cost instead of fatal damage, e.g., finance.
		In contrast, our Iterated CVaR RL prefers actions which have smaller probabilities of getting into catastrophic states. 
		Hence, Iterated CVaR RL is suitable for \emph{safety-critical} applications, where catastrophic states are unacceptable and need to be carefully avoided, e.g., clinical treatment planning~\citep{wang2019artificial} and unmanned helicopter control~\citep{unmanned_helicopter}.
		For example, consider the case where 
		we fly an unmanned helicopter to complete some task. There is a small probability that, at each time during execution, the helicopter encounters a sensing or control failure and does not take the scheduled action. 
		To guarantee the safety of surrounding workers and the helicopter, we need to make sure that even if 
		the failure occurs, the taken policy ensures that the helicopter does not crash and cause fatal damage  (see Appendix~\ref{apx:comparison_cvar},~\ref{apx:comparison_exp_uti} for more detailed comparisons with existing risk-sensitive MDP models).
	}
	
	Iterated CVaR RL faces several unique challenges as follows. 
	(i) The importance (contribution to regret) of a state in Iterated CVaR RL is not proportional to its visitation probability. Specifically, there can be states which are critical (risky) but have a small visitation probability. As a result, the regret for Iterated CVaR RL cannot be decomposed into the estimation error at each step with respect to the visitation distribution, as in standard RL analysis~\citep{jaksch2010near,azar2017minimax,zanette2019tighter}.
	(ii) In Iterated CVaR RL, the calculation of estimation error involves bounding the change of CVaR when the true value function shifts to optimistic value function, which is very different from typically bounding the change of expected rewards as in existing RL analysis~\citep{jaksch2010near,azar2017minimax,jin2018q}.
	Therefore, Iterated CVaR RL demands brand-new algorithm design and analytical techniques. 
	To tackle the above challenges, we design two efficient algorithms $\algcvar$ and $\algcvarbpi$ for the RM and BPI metrics, respectively, equipped with delicate CVaR-adapted value iteration and exploration bonuses to allocate more attention on rare but potentially dangerous states. 
	We also develop novel analytical techniques, for bounding the change of CVaR due to the value function shift and decomposing the regret via a distorted visitation distribution.
	Lower bounds for both metrics are established to demonstrate the optimality of our algorithms with respect to the number of episodes $K$. Moreover, we present experiments to validate our theoretical results and show the performance superiority of our algorithm (see Appendix~\ref{apx:experiments}). 
	
	%

	We further study an interesting limiting case of Iterated CVaR RL when $\alpha$ approaches $0$, called Worst Path RL, where the goal becomes to maximize the minimum possible cumulative reward (optimize the worst path).  This setting corresponds to the scenario where the decision maker is extremely risk-adverse and concerns the worst situation (e.g., in clinical treatment planning~\citep{healthcare}, the worst case can be disastrous). 
	We emphasize that Worst Path RL cannot be directly solved by taking $\alpha \to 0$ 
	in Iterated CVaR RL's results, as the results there have a dependency on $\frac{1}{\alpha}$ in both upper and lower bounds. 
	To handle this limiting case, we design a simple yet efficient algorithm $\algmin$, and obtain constant upper and lower regret bounds which are independent of $K$. 
	
	The contributions of this paper are summarized as follows.

	\begin{itemize}
		\item We propose a novel Iterated CVaR RL formulation, where an agent interacts with an unknown environment, with the objective of maximizing the worst $\alpha$-percent tail of the reward-to-go at each step. 
		This formulation enables one to tightly control risk throughout the decision process, and is most suitable for applications where such safety-at-all-time is critical.
		\item We investigate two important metrics of Iterated CVaR RL, i.e., Regret Minimization (RM) and Best Policy Identification (BPI), and propose efficient algorithms $\algcvar$ and $\algcvarbpi$. We establish nearly matching regret/sample complexity upper and lower bounds with respect to $K$. 
		Moreover, we develop novel techniques to bound the change of CVaR due to the value function shift and decompose the regret via a distorted visitation distribution, which can be applied to other risk-sensitive decision making problems.
		
		\item We further investigate a limiting case of Iterated CVaR RL when $\alpha$ approaches $0$, called Worst Path RL, where the objective is to maximize the minimum possible cumulative reward. We develop a simple and efficient algorithm $\algmin$, and provide constant regret upper and lower bounds (independent of $K$). 
	\end{itemize}
	
	Due to space limit, we defer all proofs and experiments to Appendix.

	\section{Related Work}
	
	Below we review the most related works, and defer a full literature review to Appendix~\ref{apx:related_work}.
	
	\textbf{CVaR-based MDPs (Known Transition).}
	\cite{boda2006time,ott2010markov,bauerle2011markov,chow2015risk} study the CVaR MDP where the objective is to minimize the CVaR of the total cost, and show that the optimal policy for CVaR MDP is history-dependent
	(see Appendix~\ref{apx:comparison_cvar} for a detailed comparison with CVaR MDP).
	\cite{hardy2004iterated} firstly define the Iterated CVaR measure, and
	\cite{osogami2012iterated,chu2014markov,bauerle2022markov} consider iterated coherent risk measures (including Iterated CVaR) in MDPs, and demonstrate the existence of Markovian optimal policies.
	The above works focus mainly on the planning side, i.e., proposing algorithms and error guarantees for MDPs with \emph{known} transition, while our work develops RL algorithms (interacting with the environment) and regret/sample complexity results for \emph{unknown} transition.
	
	\textbf{Risk-sensitive Reinforcement Learning (Unknown Transition).}
	\revision{\cite{tamar2015optimizing,keramati2020being} study CVaR MDP with unknown transition and provide convergence analysis.}
	\cite{borkar2014risk,chow2014algorithms,chow2017risk} investigate RL with CVaR-based constraints.
	\cite{heger1994consideration,coraluppi1997mixed,coraluppi1999risk} consider minimizing the worst-case cost in RL and design heuristic algorithms. 
	%
	\revision{\cite{fei2020near_optimal,fei2021exponential,fei2021function_approx} study risk-sensitive RL with the exponential utility criterion,
		which takes all successor states into account with an exponential reweighting scheme. In contrast, our Iterated CVaR RL primarily concerns the worst $\alpha$-portion successor states, and focuses on optimizing the performance under bad situations (see Appendix~\ref{apx:comparison_exp_uti} for a detailed comparison).}

	\section{Problem Formulation} \label{sec:formulation}
	In this section, we present the problem formulations of Iterated CVaR RL and Worst Path RL. 
	
	\textbf{Conditional Value-at-Risk (CVaR).}
	We first introduce two risk measures, i.e., Value-at-Risk (VaR) and Conditional Value-at-Risk (CVaR). Let $X$ be a random variable with cumulative distribution function $F(x)=\Pr[X \leq x]$. Given a risk level $\alpha \in (0,1]$, the VaR at risk level $\alpha$ is the $\alpha$-quantile of $X$, i.e., $\quantile^{\alpha}(X)=\min\{x|F(x) \geq \alpha\}$, and the CVaR at risk level $\alpha$ is defined as~\citep{rockafellar2000optimization}:  
	\begin{align*}
		\cvar^{\alpha}(X)=\sup_{x \in \R} \Big\{ x-\frac{1}{\alpha} \ex \mbr{ (x-X)^{+} } \Big\},
	\end{align*}
	where $(x)^{+}:=\max\{x,0\}$. If there is no probability atom at $\quantile^{\alpha}(X)$, CVaR can also be written as  
	$
	\cvar^{\alpha}(X)=\ex [X|X \leq \quantile^{\alpha}(X)]
	$
	~\citep{shapiro2021lectures}.
	Intuitively, $\cvar^{\alpha}(X)$ is a distorted expectation of $X$ conditioning on its  $\alpha$-portion tail, which depicts the average value when bad situations happen. When $\alpha=1$, $\cvar^{\alpha}(X)=\ex[X]$, and when $\alpha \to 0$, $\cvar^{\alpha}(X)$ tends to $\min(X)$~\citep{chow2015risk}.

	\textbf{Iterated CVaR RL.}
	We consider an episodic Markov Decision Process (MDP) $\cM(\cS,\cA,H,p,r)$. Here $\cS$ is the state space, $\cA$ is the action space, and $H$ is the length of horizon in each episode. $p$ is the transition distribution, i.e., $p(s'|s,a)$ gives the probability of transitioning to $s'$ when starting from state $s$ and taking action $a$. $r: \cS \times \cA \mapsto [0,1]$ is a reward function, and $r(s,a)$ gives a deterministic reward for taking action $a$ in state $s$. A policy $\pi$ is defined as a collection of $H$ functions, i.e., $\pi=\{\pi_h: \cS \mapsto \cA\}_{h \in [H]}$, where $[H]:=\{1, 2, ..., H\}$. 
	
	The \emph{episodic} RL game is as follows. In each episode $k$, an agent chooses a policy $\pi^k$, and starts from a fixed initial state $s_1$, i.e., $s^k_1:=s_1$, as assumed in many prior RL works~\citep{fiechter1994efficient,kaufmann2021adaptive,menard2021fast}.
	At each step $h \in [H]$, the agent observes the state $s^k_h$ and takes an action $a^k_h=\pi^k_h(s^k_h)$. After that, it receives a reward $r(s^k_h,a^k_h)$ and  transitions to a next state $s^k_{h+1}$ according to the transition distribution $p(\cdot|s^k_h,a^k_h)$. 
	The episode ends after $H$ steps and the agent enters the next episode. 

	In Iterated CVaR RL, for any risk level $\alpha \in (0,1]$ and a policy $\pi$, we use value function $V^{\alpha,\pi}_h:\cS \mapsto \R$ and Q-value function $Q^{\alpha,\pi}_h:\cS \times \cA \mapsto \R$ to denote the cumulative reward that can be obtained when 
	the agent transitions to the worst $\alpha$-portion states at each step, starting from $s$ and $(s,a)$ at step $h$, respectively. 
	For simplicity of notation, when the value of $\alpha$ is clear, we omit the superscript $\alpha$ and use the notations $V^{\pi}_h$ and $Q^{\pi}_h$. 
	Formally, $Q^{\pi}_h$ and $V^{\pi}_h$ are recurrently defined in Eq.~(i) below.  
	Since $\cS$, $\cA$ and $H$ are finite and the maximization of $V^{\pi}_h(s)$ in Iterated CVaR RL satisfies the optimal substructure property, there exists an optimal policy $\pi^{*}$ which gives the optimal value $V^{*}_h(s)=\max_{\pi} V^{\pi}_h(s)$ for all $s \in \cS$ and $h \in [H]$~\citep{chu2014markov}. 
	Therefore, the Bellman equation and the Bellman optimality equation are given in Eqs.~(i),(ii) below, respectively~\citep{chu2014markov}. 
	\begin{equation*}
		\left\{
		\begin{aligned}
			\! Q^{\pi}_h(s,a)& =\! r(s,a) \!+\! \cvar^{\alpha}_{s' \sim p(\cdot|s,a)}\!(V^{\pi}_{h+1}(s'))
			\\
			\! V^{\pi}_h(s)&=Q^{\pi}_h(s,\pi_h(s))
			\\
			\! V^{\pi}_{H+1}(s)&=0, \ \forall s \in \cS ,\hspace*{7em} \textup{(i)}
		\end{aligned}
		\right. 
		\left\{
		\begin{aligned}
			\! Q^{*}_h(s,a) &=\! r(s,a) \!+\! \cvar^{\alpha}_{s' \sim p(\cdot|s,a)}\!(V^{*}_{h+1}(s'))
			\\
			\! V^{*}_h(s)&= \max_{a \in \cA} Q^{*}_h(s,a)
			\\
			\! V^{*}_{H+1}(s)&=0, \ \forall s \in \cS ,\hspace*{7em} \textup{(ii)}
		\end{aligned}
		\right. 
	\end{equation*}
	where $\cvar^{\alpha}_{s' \sim p(\cdot|s,a)}(V^{\pi}_{h+1}(s'))$ denotes the CVaR value of random variable $V^{\pi}_{h+1}(s')$ with $s' \sim p(\cdot|s,a)$ at risk level $\alpha$. \revision{We also provide the expanded version of value function definitions for Iterated CVaR RL (Eqs.~(i),~(ii)) in Appendix~\ref{apx:expanded_value_function}.}

	We consider two performance metrics for Iterated CVaR RL, i.e., Regret Minimization (RM) and Best Policy Identification (BPI). In Iterated CVaR RL-RM, the agent aims to minimize the cumulative regret in  $K$ episodes, defined as 
	\begin{align}
		\cR(K)=\sum_{k=1}^{K} \sbr{V^{*}_1(s_1)-V^{\pi_k}_1(s_1)} . \label{eq:regret}
	\end{align}
	In Iterated CVaR RL-BPI, given a confidence parameter $\delta \in (0,1]$ and an accuracy parameter $\varepsilon>0$, the agent needs to use as few trajectories (episodes) as possible to identify an $\varepsilon$-optimal policy $\hat{\pi}$, which satisfies
	$
	V^{\hat{\pi}}_1(s_1) \geq V^{*}_1(s_1) - \varepsilon 
	$, with probability as least $1-\delta$. 
	That is, the performance of BPI is measured by the number of trajectories used, i.e., sample complexity.

	\textbf{Worst Path RL.}
	Furthermore, we investigate an interesting limiting case of Iterated CVaR RL when $\alpha$ approaches $0$, called Worst Path RL. In this case,  the objective  becomes maximizing the minimum possible reward~\citep{heger1994consideration}. 
	The Bellman (optimality) equations become  
	\begin{equation}
		\left\{
		\begin{aligned}
			Q^{\pi}_h(s,a)&= r(s,a)+ \min_{s' \sim p(\cdot|s,a)}(V^{\pi}_{h+1}(s'))
			\\
			V^{\pi}_h(s)&=Q^{\pi}_h(s,\pi_h(s))  
			\\
			V^{\pi}_{H+1}(s)&=0, \ \forall s \in \cS ,
		\end{aligned}
		\right.
		\ 
		\left\{
		\begin{aligned}
			Q^{*}_h(s,a)&= r(s,a)+ \min_{s' \sim p(\cdot|s,a)}(V^{*}_{h+1}(s'))
			\\
			V^{*}_h(s)&= \max_{a \in \cA} Q^{*}_h(s,a)
			\\
			V^{*}_{H+1}(s)&=0, \ \forall s \in \cS ,
		\end{aligned}
		\right. \label{eq:min_bellman}
	\end{equation}
	where $\min_{s' \sim p(\cdot|s,a)}(V^{\pi}_{h+1}(s'))$ denotes the minimum value of random variable $V^{\pi}_{h+1}(s')$ with $s' \sim p(\cdot|s,a)$. 
	From Eq.~\eqref{eq:min_bellman}, one sees that
	\begin{align*}
		Q^{\pi}_h(s,a) = \!\!\! \min_{(s_t,a_t) \sim \pi} \!\! \mbr{ \sum_{t=h}^{H} r(s_t,a_t) \Big| s_h=s,a_h=a,\pi } \!,
		\ 
		V^{\pi}_h(s) = \!\!\! \min_{(s_t,a_t) \sim \pi} \!\! \mbr{ \sum_{t=h}^{H} r(s_t,a_t) \Big| s_h=s,\pi } \!.
	\end{align*}
	Thus,   $Q^{\pi}_h(s,a)$ and $V^{\pi}_h(s)$ denote the minimum possible cumulative reward under policy $\pi$, starting from $(s,a)$ and $s$ at step $h$, respectively.  The optimal policy $\pi^{*}$ maximizes the minimum possible cumulative reward (i.e., optimizes the worst path) for all starting states and steps. Formally, $\pi^{*}$ gives the optimal value $V^{*}_h(s)=\max_{\pi} V^{\pi}_h(s)$ for all $s \in \cS$ and $h \in [H]$.

	For Worst Path RL, in this paper we mainly consider the regret minimization setting, where the regret is defined the same as Eq.~\eqref{eq:regret}. Note that this case cannot be directly solved by taking $\alpha\to0$ in Iterated CVaR RL, as the results there have a dependency on $\frac{1}{\alpha}$. Thus, changing from $\cvar(\cdot)$ to $\min(\cdot)$ in Worst Path RL requires a different algorithm design and analysis. 
	
	The best policy identification setting of Worst Path RL, on the other hand, is very challenging. This is because we cannot establish confidence intervals under the $\min(\cdot)$  operation, and it is difficult to determine when the estimated optimal policy is accurate enough and when the algorithm should stop. We will further investigate this setting in future work. 

	\section{Iterated CVaR RL with Regret Minimization}
	In this section, we consider regret minimization (Iterated CVaR RL-RM). We propose an algorithm $\algcvar$ with CVaR-adapted exploration bonuses, and demonstrate its sample efficiency.

	\subsection{Algorithm $\algcvar$ and Regret Upper Bound}
	We propose a value iteration-based algorithm $\algcvar$ (Algorithm~\ref{alg:cvar}), which adopts Brown-type~\citep{brown2007large} (CVaR-adapted) exploration bonuses and delicately pays more attention to rare but risky states.  
	Specifically, in each episode $k$, $\algcvar$ computes the empirical CVaR for the values of next states $\cvar^{\alpha}_{s' \sim \hat{p}^k(\cdot|s,a)}(\bar{V}^k_{h+1}(s'))$ and Brown-type exploration bonuses $\frac{H}{\alpha}\sqrt{\frac{L}{n_k(s,a)}}$.
	Here $n^{k}(s,a)$ is the number of times  $(s,a)$ was visited up to episode $k$, and $\hat{p}^{k}(s'|s,a)$ is the empirical estimate of transition probability $p(s'|s,a)$. 
	Then, $\algcvar$ constructs optimistic Q-value function $\bar{Q}^k_h(s,a)$, optimistic value function $\bar{V}^k_h(s)$, and a greedy policy $\pi^k$ with respect to $\bar{Q}^k_h(s,a)$. 
	After calculating the value functions and policy,  $\algcvar$ plays episode $k$ with policy $\pi^k$, observes a trajectory, and updates $n_k(s,a)$ and  $\hat{p}^{k+1}(s'|s,a)$. 
	The calculation of CVaR (Line~\ref{line:calculation_cvar}) can be implemented efficiently, and costs $O(S \log S)$ computation complexity~\citep{shapiro2021lectures}.
	
	We summarize the regret performance of $\algcvar$ as follows.

	\begin{algorithm}[t]
		\caption{$\algcvar$} \label{alg:cvar}
		\KwIn{$\delta$, $\alpha$, $\delta':=\frac{\delta}{5}$, $L:=\log(\frac{KHSA}{\delta'})$, $\bar{V}^k_{H+1}(s) = 0$ for any $k>0$ and $s \in \cS$}
		\For{$k=1,2,\dots,K$}
		{
			\For{$h=H,H-1,\dots,1$} 
			{
					\hspace*{-0.6em}
					$\bar{Q}^k_h(s,a)  \!\leftarrow\! \min \{ r(s,a) \!+\! \cvar^{\alpha}_{s' \sim \hat{p}^k(\cdot|s,a)}(\bar{V}^k_{h+1}(s')) \!+\! \frac{H}{\alpha}\sqrt{\frac{L}{n_k(s,a)}}, H \}$,  $\forall (s,a) \!\in\! \cS \!\times\! \cA$\;\label{line:calculation_cvar}
					$\bar{V}^k_h(s) \leftarrow \max_{a \in \cA} \bar{Q}^k_h(s,a)$, 
					$\pi^k_h(s) \leftarrow  \argmax_{a \in \cA} \bar{Q}^k_h(s,a)$, $\forall s \in \cS$\;
			}
			Play the episode $k$ with policy $\pi^k$, and update $n_{k+1}(s,a)$ and $\hat{p}^{k+1}(s'|s,a)$\;
		}
	\end{algorithm}

	\begin{theorem}[Regret Upper Bound]\label{thm:cvar_rm_ub}
		With probability at least $1-\delta$, the regret of algorithm $\algcvar$ is bounded by
		\begin{align*}
			O \Bigg( \bm{\min} \Bigg\{ \frac{1}{\sqrt{\min_{\begin{subarray}{c}\pi,h,s:\  w_{\pi,h}(s)>0\end{subarray}} w_{\pi,h}(s)}},\ \frac{1}{\sqrt{\alpha^{H-1}}} \Bigg\} \cdot \frac{ HS \sqrt{KHA} }{\alpha} \log\sbr{\frac{KHSA}{\delta}} \Bigg) ,  
		\end{align*}
		where $w_{\pi,h}(s)$ denotes the probability of visiting state $s$ at step $h$ under policy $\pi$.
	\end{theorem}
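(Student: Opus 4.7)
The plan is to (i) establish optimism of $\algcvar$'s value estimates via a Brown-type concentration bound for the empirical CVaR, (ii) decompose the per-episode regret recursively through a distorted visitation distribution that captures how the CVaR operator reweights mass, and (iii) sum across episodes with Cauchy--Schwarz. For (i), I would show that with probability at least $1-\delta'$, for every $(k,h,s,a)$ and every value function $V$ that can arise as some $\bar{V}^k_{h+1}$,
$$\abr{\cvar^{\alpha}_{s'\sim \hat{p}^k(\cdot|s,a)}(V(s')) - \cvar^{\alpha}_{s'\sim p(\cdot|s,a)}(V(s'))} \leq \frac{H}{\alpha}\sqrt{\frac{L}{n_k(s,a)}},$$
using a DKW-style bound on the empirical CDF together with a union bound / covering over the finite family of value functions constructed by the algorithm. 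The scaling $H/\alpha$ is CVaR's Lipschitz constant---the range $H$ times the $1/\alpha$ bound on the Radon--Nikodym derivative from CVaR's dual representation. A backward induction on $h$ using monotonicity of CVaR then gives $\bar{V}^k_h(s)\geq V^{*}_h(s)$ for all $k,h,s$, matching the exploration bonus used by the algorithm.

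\textbf{Step 2: CVaR-adapted regret decomposition.} Let $\Delta^k_h(s) := \bar{V}^k_h(s) - V^{\pi^k}_h(s)$; on the optimism event the regret is bounded by $\sum_k \Delta^k_1(s_1)$. Applying the definition of $\bar{Q}^k_h$, adding and subtracting $\cvar^{\alpha}_{s' \sim p(\cdot|s,\pi^k_h(s))}(\bar{V}^k_{h+1}(s'))$, and invoking the concentration event yields
$$\Delta^k_h(s) \leq \frac{2H}{\alpha}\sqrt{\frac{L}{n_k(s,\pi^k_h(s))}} + \sbr{\cvar^{\alpha}_{s'\sim p(\cdot|s,\pi^k_h(s))}(\bar{V}^k_{h+1}(s')) - \cvar^{\alpha}_{s'\sim p(\cdot|s,\pi^k_h(s))}(V^{\pi^k}_{h+1}(s'))}.$$
The key novel lemma is that the CVaR gap above can be written as $\E_{s'\sim \tilde{p}}[\Delta^k_{h+1}(s')]$ for some distorted measure $\tilde{p}\ll p(\cdot|s,\pi^k_h(s))$ whose Radon--Nikodym derivative is bounded by $1/\alpha$; this follows from CVaR's variational representation by choosing $\tilde{p}$ as the optimizer corresponding to $V^{\pi^k}_{h+1}$. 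Unrolling the recursion across $H$ steps expresses $\Delta^k_1(s_1)$ as a sum over $(h,s,a)$ of bonuses weighted by a distorted visitation distribution $q_{\pi^k,h}(s)$ satisfying both $q_{\pi^k,h}(s)\leq 1$ and $q_{\pi^k,h}(s)\leq w_{\pi^k,h}(s)/\alpha^{h-1}$.

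\textbf{Step 3: summing across episodes, and the main obstacle.} Summing the one-step bound over $(k,h,s,a)$ and applying Cauchy--Schwarz together with the standard potential bound $\sum_{k=1}^K 1/\sqrt{n_k(s,a)} = \widetilde{O}(\sqrt{n_K(s,a)})$ produces the $\sqrt{K}$ rate. Plugging in $q_{\pi^k,h}(s)\leq w_{\pi^k,h}(s)/\alpha^{h-1}$ and using $\sum_s w_{\pi^k,h}(s)=1$ yields the $1/\sqrt{\alpha^{H-1}}$ branch of the minimum; using instead $q_{\pi^k,h}(s)\leq 1$ together with the crude lower bound $n_k(s,a)\gtrsim k\cdot \min_{\pi,h,s:\,w_{\pi,h}(s)>0}w_{\pi,h}(s)$ on reachable pairs yields the $1/\sqrt{\min w_{\pi,h}(s)}$ branch; taking the smaller of the two completes the bound, with the $HS\sqrt{KHA}/\alpha$ prefactor coming from the $H/\alpha$ in the bonus and the sums over $(h,s,a,k)$ together with Cauchy--Schwarz across actions and states. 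The hardest step is the CVaR-shift lemma in Step 2: in the risk-neutral case the analogous gap reduces immediately to $\E_p[\bar V - V^\pi]$, but for CVaR the $\alpha$-tail set depends on the ordering of the value function, so the effective per-step transition used in the decomposition concentrates up to $1/\alpha$ extra mass on rare, low-value states. Constructing $\tilde{p}$ cleanly enough that the recursion compounds to exactly $1/\alpha^{H-1}$---and does so uniformly over all pairs $(\bar V,V^{\pi})$ that arise during learning---is the technical heart of the proof.
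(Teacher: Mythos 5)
Your overall architecture is the same as the paper's: optimism from a Brown-type CVaR concentration, a CVaR-shift lemma identifying the gap $\cvar^{\alpha}_{p}(\bar V)-\cvar^{\alpha}_{p}(V^{\pi^k})$ with an expectation under the conditional ("worst $\alpha$-portion") transition kernel associated with the \emph{lower} function, a recursion yielding a distorted visitation distribution bounded by $\min\{1,\,w/\alpha^{h-1}\}$-type ratios, and Cauchy--Schwarz over episodes. However, there is a concrete gap in Step 1. You claim the uniform bound $|\cvar^{\alpha}_{\hat p^k}(V)-\cvar^{\alpha}_{p}(V)|\le \frac{H}{\alpha}\sqrt{L/n_k(s,a)}$ for ``every $V$ that can arise as some $\bar V^k_{h+1}$'' via ``a union bound over the finite family of value functions constructed by the algorithm.'' That union bound is circular: $\bar V^k_{h+1}$ is itself a function of $\hat p^k$, so Brown's inequality cannot be applied to it as if it were fixed. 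The paper resolves this by splitting the concentration into two statements: (i) for the \emph{fixed} function $V^{*}$, Brown's inequality gives exactly the $\frac{H}{\alpha}\sqrt{L/n}$ rate, and this alone suffices for optimism (the algorithm's bonus is calibrated to this); (ii) for \emph{arbitrary} $V:\cS\mapsto[0,H]$ (needed when you add and subtract $\cvar^{\alpha}_{p}(\bar V^k_{h+1})$ in the decomposition), one passes through the $\ell_1$ deviation $\sum_{s'}|\hat p^k(s'|s,a)-p(s'|s,a)|$, which costs an extra $\sqrt{S}$, i.e.\ the bound becomes $\frac{2H}{\alpha}\sqrt{2SL/n}$. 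That $\sqrt{S}$ is precisely one of the two $\sqrt{S}$ factors in the stated $HS\sqrt{KHA}$; without it your argument would prove a strictly stronger bound than the theorem, which should have been a warning sign. Any honest repair of your uniform claim (an $\varepsilon$-net over $[0,H]^{\cS}$, or the $\ell_1$ route) reintroduces the $\sqrt{S}$.

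A secondary issue is the $1/\sqrt{\min_{\pi,h,s:\,w_{\pi,h}(s)>0}w_{\pi,h}(s)}$ branch in Step 3. The inequality $n_k(s,a)\gtrsim k\cdot\min_{\pi,h,s}w_{\pi,h}(s)$ does not hold for all reachable $(s,a)$: if the played policies $\pi^{1},\dots,\pi^{k-1}$ do not select $a$ at $s$, then $n_k(s,a)$ does not grow at all, regardless of reachability. The paper instead bounds the pointwise ratio $w^{\cvar,\alpha,V^{\pi^k}}_{kh}(s,a)/w_{kh}(s,a)\le 1/\min_{\pi,h,(s,a):\,w_{\pi,h}(s,a)>0}w_{\pi,h}(s,a)$ (valid because the numerator is at most $1$ and the denominator, when nonzero, is at least the minimum), and then applies the standard pigeonhole bound to $\sum_{k,h,(s,a)}w_{kh}(s,a)/n_k(s,a)$, where $n_k(s,a)$ is compared to the \emph{cumulative} expected visits $\sum_{k'<k}\sum_h w_{k'h}(s,a)$ under the actually-played policies (with a separate treatment of under-visited pairs). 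Your Step 2 lemma is otherwise the right statement (the paper proves it as an inequality, $\cvar^{\alpha}_p(\bar V)-\cvar^{\alpha}_p(V)\le \beta^{\alpha,V}(\cdot|s,a)^\top(\bar V-V)$, rather than an exact identity, which is all that is needed), and the compounding to $1/\alpha^{H-1}$ is handled exactly as you describe.
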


	\textbf{Remark 1.}
	The regret depends on the \emph{minimum} between an MDP-intrinsic visitation factor $(\min_{\begin{subarray}{c}\pi,h,s:\  w_{\pi,h}(s)>0\end{subarray}} w_{\pi,h}(s))^{-\frac{1}{2}}$ and  $\frac{1}{\sqrt{\alpha^{H-1}}}$.  
	When $\alpha$ is small, the first term 
	dominates the bound, which stands for the minimum probability of visiting an available state under any feasible policy. 
	Note that $\min_{\pi,h,s:\ w_{\pi,h}(s)>0} w_{\pi,h}(s)$ takes the minimum over only the policies under which $s$ is reachable, and thus, this factor will never be zero. Indeed, this factor also exists in the lower bound (see Section~\ref{sec:regret_lb}). Thus, it characterizes the essential problem hardness, i.e., when the agent is highly risk-adverse, her regret will be heavily influenced by exploring critical but hard-to-reach states. 
	%
	
	When $\alpha$ is large, $\frac{1}{\sqrt{\alpha^{H-1}}}$ instead dominates the bound. The intuition behind the factor $\frac{1}{\sqrt{\alpha^{H-1}}}$ is that for any state-action pair, the ratio of the visitation probability conditioning on transitioning to bad successor states over the original visitation probability can be upper bounded by $\frac{1}{\alpha^{H-1}}$. This ratio is critical and will appear in the regret bound (see Lemma~\ref{lemma:w_cvar_over_w} for a formal statement). 
	
	In the special case when $\alpha=1$, our Iterated CVaR RL problem reduces to the classic RL formulation, and our regret bound becomes $\tilde{O}( HS \sqrt{KHA} )$, which matches the result in existing classic RL work~\citep{jaksch2010near}. This bound has a gap of $\sqrt{HS}$ to the state-of-the-art regret bound for classic RL~\citep{azar2017minimax,zanette2019tighter}. This is because our algorithm is mainly designed for general risk-sensitive cases (which require CVaR-adapted exploration bonuses), and does not use the Bernstein-type exploration bonuses (which only work for classic expectation maximization criterion). Such phenomenon also appears in existing risk-sensitive RL works~\citep{fei2020near_optimal,fei2021exponential}. Designing an algorithm which achieves an optimal regret simultaneously for both risk-sensitive cases and classic expectation maximization case is still an open problem, which we leave for future work.  
	To validate our theoretical analysis, we also conduct experiments to exhibit the influences of parameters $\alpha$, $\delta$, $H$, $S$, $A$ and $K$ on the regret of $\algcvar$ in practice, and the empirical results well match our theoretical bound (see Appendix~\ref{apx:experiments}).


	\revision{\emph{Challenges and Novelty in Regret Analysis.} 
		The analysis of Iterated CVaR RL faces several challenges.
		(i) First of all, in Iterated CVaR RL, the contribution of a state to the regret is not proportional to its visitation probability as in standard RL analysis~\citep{jaksch2010near,azar2017minimax,zanette2019tighter}. Instead, the regret is influenced more by risky but hard-to-reach states. Thus, the regret cannot be decomposed into estimation error with respect to visitation distribution. (ii) Second, unlike existing RL analysis~\citep{jaksch2010near,azar2017minimax,jin2018q} which typically calculates the change of expected rewards between optimistic and true value functions, in Iterated CVaR RL, we need to instead analyze the change of CVaR when the true value function shifts to an optimistic value function.
		%
		To tackle these challenges,  
		we develop a new analytical technique to bound the change of CVaR due to the value function shift via conditional transition probabilities, which can be applied to other CVaR-based RL problems. Furthermore, we establish a novel regret decomposition for Iterated CVaR RL via a distorted (conditional) visitation distribution, and quantify the deviation between this distorted visitation distribution and the original visitation distribution.}
	
	
	Below we present a proof sketch for Theorem~\ref{thm:cvar_rm_ub} (see Appendix~\ref{apx:regret_ub} for a complete proof).

	\emph{Proof sketch of Theorem~\ref{thm:cvar_rm_ub}.} First, we introduce a key inequality (Eq.~\eqref{eq:cvar_gap_V_shift_main_text}) to bound the change of CVaR when the true value function shifts to an optimistic one. To this end, let $\beta^{\alpha,V}(\cdot|s,a) \in \R^{S}$ denote the conditional transition probability conditioning on transitioning to the worst $\alpha$-portion successor states $s'$,  i.e., with the lowest values $V(s')$. It satisfies that
	$
	\sum_{s' \in \cS} \beta^{\alpha,V}(s'|s,a) \cdot V(s') = \cvar^{\alpha}_{s' \sim p(\cdot|s,a)}(V(s')) .
	$
	Then, for any $(s,a)$ and value functions $\bar{V}, V$ such that $\bar{V}(s') \geq V(s')$ for any $s' \in \cS$, we have
	\begin{align}
		\cvar^{\alpha}_{s' \sim p(\cdot|s,a)}(\bar{V}(s')) - \cvar^{\alpha}_{s' \sim p(\cdot|s,a)}(V(s')) \leq \beta^{\alpha,V}(\cdot|s,a)^\top \sbr{\bar{V}-V} . \label{eq:cvar_gap_V_shift_main_text}
	\end{align}
	Eq.~\eqref{eq:cvar_gap_V_shift_main_text} implies that the deviation of CVaR between optimistic and true value functions can be bounded by their value deviation under the conditional transition probability, which resolves the aforementioned challenge (ii), and serves as the basis of our recurrent regret decomposition. 
	
	Now, since $\bar{V}^k_h$ is an optimistic estimate of $V^{*}_h$, we decompose the regret in episode $k$ as
	
	\vspace*{-1em}
	\begin{align} 
		\!\bar{V}^k_1(s^k_1) \!-\! V^{\pi^k}_1(s^k_1)
		\!\overset{\textup{(a)}}{\leq} &  \frac{H}{\alpha}\sqrt{\frac{L}{n_k(s^k_1,a^k_1)}}+\cvar^{\alpha}_{s' \sim \hat{p}^k(\cdot|s^k_1,a^k_1)}(\bar{V}^k_{2}(s')) - \cvar^{\alpha}_{s' \sim p(\cdot|s^k_1,a^k_1)}(\bar{V}^k_{2}(s')) \nonumber\\& + \cvar^{\alpha}_{s' \sim p(\cdot|s^k_1,a^k_1)}(\bar{V}^k_{2}(s')) - \cvar^{\alpha}_{s' \sim p(\cdot|s^k_1,a^k_1)}(V^{\pi^k}_{2}(s'))
		\nonumber\\
		\overset{\textup{(b)}}{\leq} &  \frac{H}{\alpha}\sqrt{\frac{L}{n_k(s^k_1,a^k_1)}}+ \frac{4H}{\alpha}\sqrt{\frac{SL}{n_k(s^k_1,a^k_1)}} +  \beta^{\alpha,V^{\pi^k}_{2}}(\cdot|s^k_1,a^k_1)^\top (\bar{V}^k_{2} - V^{\pi^k}_{2})
		\nonumber\\
		\overset{\textup{(c)}}{\leq} & \sum_{h=1}^{H} \sum_{(s,a)} w^{\cvar,\alpha,V^{\pi^k}}_{kh}\!\!(s,a) \cdot \frac{H\sqrt{L}+4H\sqrt{SL}}{\alpha \sqrt{n_k(s,a)}} 	\label{eq:regret_per_episode_main_text}
	\end{align}
	\vspace*{-1em}
	
	Here $w^{\cvar,\alpha,V^{\pi^k}}_{kh}\!\!(s,a)$ denotes the conditional probability of visiting $(s,a)$ at step $h$ of episode $k$, conditioning on transitioning to the worst $\alpha$-portion successor states $s'$ (i.e., with the lowest $\alpha$-portion values $V^{\pi^k}_{h'+1}(s')$) at each step $h'=1,\dots,h-1$. Intuitively, $w^{\cvar,\alpha,V^{\pi^k}}_{kh}\!\!(s,a)$ is a distorted visitation probability under the conditional transition probability $\beta^{\alpha,V^{\pi^k}}(\cdot|\cdot,\cdot)$. Inequality (b) uses the concentration of CVaR and  Eq.~\eqref{eq:cvar_gap_V_shift_main_text}. Inequality (c) follows from recurrently applying steps (a)-(b) to unfold $\bar{V}^k_{h}(\cdot) - V^{\pi^k}_{h}(\cdot)$ for $h=2,\dots,H$, and the fact that $w^{\cvar,\alpha,V^{\pi^k}}_{kh}\!\!(s,a)$ is the visitation probability under conditional transition probability $\beta^{\alpha,V^{\pi^k}}(\cdot|\cdot,\cdot)$. 
	Eq.~\eqref{eq:regret_per_episode_main_text} decomposes the regret into estimation error at all state-action pairs via the distorted (conditional) visitation distribution $w^{\cvar,\alpha,V^{\pi^k}}_{kh}\!\!(s,a)$, which overcomes the aforementioned challenge (i).
	%

	Summing Eq.~\eqref{eq:regret_per_episode_main_text} over all episodes $k \in [K]$ and using the Cauchy–Schwarz inequality, we have
	\begin{align*}
		\ex[\cR(K)]
		\leq & \frac{5H\sqrt{SL}}{\alpha} \sqrt{ \sum_{k=1}^{K} \sum_{h=1}^{H} \sum_{(s,a)} \frac{w^{\cvar,\alpha,V^{\pi^k}}_{kh}(s,a)}{n_k(s,a)}} \cdot \sqrt{\sum_{k=1}^{K} \sum_{h=1}^{H} \sum_{(s,a)} w^{\cvar,\alpha,V^{\pi^k}}_{kh}(s,a)}
		\nonumber\\
		\overset{\textup{(d)}}{=} & \frac{ 5H\sqrt{SL} \cdot \sqrt{KH} }{\alpha} \! \sqrt{ \sum_{k=1}^{K} \sum_{h=1}^{H} \sum_{(s,a)} \frac{w^{\cvar,\alpha,V^{\pi^k}}_{kh}(s,a)}{w_{kh}(s,a)} \cdot \frac{w_{kh}(s,a)}{n_k(s,a)} \cdot \indicator{w_{kh}(s,a) \neq 0} } 
		\nonumber\\
		\overset{\textup{(e)}}{\leq} & \frac{5H\sqrt{KHSL}}{\alpha} \!\! \sqrt{ \min \! \bigg\{\! \frac{1}{ \min \limits_{\begin{subarray}{c}\pi,h,(s,a):\ w_{\pi,h}(s,a)>0\end{subarray}} w_{\pi,h}(s,a)}, \frac{1}{\alpha^{H-1}} \!\bigg\}  \sum_{k=1}^{K} \! \sum_{h=1}^{H} \! \sum_{(s,a)} \! \frac{w_{kh}(s,a)}{n_k(s,a)} } ,
	\end{align*}
	Here $w_{kh}(s,a)$ denotes the probability of visiting $(s,a)$ at step $h$ of episode $k$, and \revision{$w_{\pi,h}(s,a)$ denotes the probability of visiting $(s,a)$ at step $h$ under policy $\pi$.} Equality (d) uses the facts that  $\sum_{(s,a)} w^{\cvar,\alpha,V^{\pi^k}}_{kh}(s,a) = 1$, and if the visitation probability $w_{kh}(s,a)=0$, the conditional visitation probability $w^{\cvar,\alpha,V^{\pi^k}}_{kh}(s,a)$ must be 0 as well. Inequality (e) is due to that $w^{\cvar,\alpha,V^{\pi^k}}_{kh}(s,a)/w_{kh}(s,a)$ can be bounded by both $1/ \min_{\begin{subarray}{c}\pi,h,(s,a):\ w_{\pi,h}(s,a)>0\end{subarray}} w_{\pi,h}(s,a)$ and $1/\alpha^{H-1}$. 
	Specifically, the bound $1/ \min_{\begin{subarray}{c}\pi,h,(s,a):\ w_{\pi,h}(s,a)>0\end{subarray}} w_{\pi,h}(s,a)$ follows from $\min_{\begin{subarray}{c}\pi,h,(s,a):\ w_{\pi,h}(s,a)>0\end{subarray}} w_{\pi,h}(s,a) \leq w_{kh}(s,a)$, 
	and the bound $1/\alpha^{H-1}$ comes from the fact that the conditional visitation probability $w^{\cvar,\alpha,V^{\pi^k}}_{kh}(s,a)$ is at most $1/\alpha^{H-1}$ times the visitation probability $w_{kh}(s,a)$. 
	Having established the above, we can use a similar analysis as that in classic RL~\citep{azar2017minimax,zanette2019tighter} to bound  $\sum_{k=1}^{K} \! \sum_{h=1}^{H} \! \sum_{(s,a)} \! \frac{w_{kh}(s,a)}{n_k(s,a)}$, and then, we can obtain Theorem~\ref{thm:cvar_rm_ub}.
	$\hfill \square$

	\subsection{Regret Lower Bound}\label{sec:regret_lb}
	We now present a regret lower bound to demonstrate the optimality of algorithm $\algcvar$. 
	
	\begin{theorem}[Regret Lower Bound] \label{thm:cvar_rm_lb}
		There exists an instance of Iterated CVaR RL-RM, where $\min_{\begin{subarray}{c}\pi,h,s:\  w_{\pi,h}(s)>0\end{subarray}} w_{\pi,h}(s) > \alpha^{H-1}$ and the regret of any algorithm is at least 
		\begin{align}
			\Omega \Bigg( H \sqrt{\frac{AK}{\alpha \min_{\begin{subarray}{c}\pi,h,s:\  w_{\pi,h}(s)>0\end{subarray}} w_{\pi,h}(s)} } \Bigg) . \label{eq:lb_min_w}
		\end{align}
		In addition, there exists an instance of Iterated CVaR RL-RM, where $\alpha^{H-1} > \min_{\begin{subarray}{c}\pi,h,s:\  w_{\pi,h}(s)>0\end{subarray}} w_{\pi,h}(s)$ and the regret of any algorithm is at least 
		$
		\Omega ( \sqrt{\frac{AK}{{\alpha}^{H-1}}} ) 
		$.
	\end{theorem}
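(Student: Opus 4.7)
The plan is to establish each of the two lower bounds by a separate hard-instance construction combined with a change-of-measure (Bretagnolle--Huber / Le Cam) argument, in the spirit of standard multi-armed-bandit lower bounds. In each case the MDP will be built around a single ``bandit'' state $s^{\dagger}$ at which $A$ arms offer transition probabilities differing by $\alpha\epsilon$ to an absorbing low-value sink, all other transitions being deterministic; the MDP is then designed so that (a) estimating which arm at $s^{\dagger}$ is best is essentially the only information the algorithm can gather and (b) the iterated CVaR recursion converts the transition-probability gap at $s^{\dagger}$ into a per-episode value gap that the algorithm must pay whenever it chooses a suboptimal action.

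For Case 1 ($w_{\min}>\alpha^{H-1}$) I would place $s^{\dagger}$ at an interior step so that its visitation probability under every policy equals $w_{\min}$: use a short $\alpha$-chain (or a single tuned $\alpha$-hop from $s_{1}$) to reach $s^{\dagger}$, and follow the non-sink branch with a deterministic reward-$1$ chain of length $\Theta(H)$ so that the non-sink successor value is $\Theta(H)$. The bandit has $\Pr[\mathrm{sink}\mid s^{\dagger},a^{\star}]=\alpha(1-\epsilon)$ and $\Pr[\mathrm{sink}\mid s^{\dagger},a]=\alpha$ for $a\neq a^{\star}$; because every bad-branch probability is at most $\alpha$, the CVaR tail at $s^{\dagger}$ rests on the sink and direct computation yields $V^{*}_{h^{\dagger}}(s^{\dagger},a^{\star})-V^{*}_{h^{\dagger}}(s^{\dagger},a)=\Theta(H\epsilon)$. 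Since the $\alpha$-chain preceding $s^{\dagger}$ places the full CVaR tail on $s^{\dagger}$ at every step, this gap propagates back to $V_{1}^{*}(s_{1})$ without dampening, producing a per-episode regret of $\Theta(H\epsilon)$ for any suboptimal choice at $s^{\dagger}$.

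For Case 2 ($\alpha^{H-1}>w_{\min}$) I would use an $\alpha$-chain of maximum length $H-1$: each risky state $s_{h}^{*}$ at step $h\in\{1,\dots,H-2\}$ transitions to $s_{h+1}^{*}$ with probability $\alpha$ and to an absorbing high-value safe state with probability $1-\alpha$, and the bandit lives at $s_{H-1}^{*}$ with $\Pr[s_{H}^{*}\mid s_{H-1}^{*},a^{\star}]=\alpha(1-\epsilon)$ and $\Pr[s_{H}^{*}\mid s_{H-1}^{*},a]=\alpha$ for $a\neq a^{\star}$, together with terminal rewards $r(s_{H}^{\mathrm{safe}})=1$ and $r(s_{H}^{*})=0$. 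A backward CVaR recursion starting from $V_{H-1}^{*}(s_{H-1}^{*})=\epsilon$ under $a^{\star}$ and $0$ otherwise gives $V_{h}^{*}(s_{h}^{*})=\epsilon$ for every $h\leq H-1$, so a suboptimal bandit choice incurs per-episode regret $\Theta(\epsilon)$. The minimum visitation in this MDP is $\alpha^{H-1}(1-\epsilon)<\alpha^{H-1}$, realized at $s_{H}^{*}$ under the optimal policy, verifying the hypothesis.

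In both cases the closing step will be standard: pair the constructed instance $\cM_{0}$ (in which $a^{\star}$ is best) with the alternate $\cM_{a'}$ that swaps the parameters of $a^{\star}$ with those of an arbitrary other arm $a'\in\cA$, and apply the KL chain rule to get $\kl(\mathbb{P}^{\pi}_{\cM_{0}}\,\|\,\mathbb{P}^{\pi}_{\cM_{a'}})\leq(\ex^{\pi}_{\cM_{0}}[N_{a^{\star}}]+\ex^{\pi}_{\cM_{0}}[N_{a'}])\cdot\kl(\bernoulli(\alpha(1-\epsilon))\,\|\,\bernoulli(\alpha))\leq 2\alpha\epsilon^{2}\cdot(\ex[N_{a^{\star}}]+\ex[N_{a'}])$, where $N_{a}$ counts the visits to $s^{\dagger}$ on which arm $a$ is played. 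Averaging the Bretagnolle--Huber bound over $a'\in\cA$ and using the sample budgets at $s^{\dagger}$ ($\sum_{a}\ex[N_{a}]\lesssim Kw_{\min}$ in Case 1 and $\lesssim K\alpha^{H-2}$ in Case 2) forces $\epsilon=\Omega(\sqrt{A/(K\alpha w_{\min})})$ in Case 1 and $\epsilon=\Omega(\sqrt{A/(K\alpha^{H-1})})$ in Case 2; multiplying by the per-episode gap ($H\epsilon$ or $\epsilon$) and by $K$ yields the two claimed lower bounds. The main obstacle will be that the feedback at $s^{\dagger}$ is coupled with the rest of the MDP trajectory, so the KL chain rule must be applied to the joint episode distribution rather than to isolated arm pulls, and one must verify that the CVaR tail rests on the intended atom at every chain step (which requires $\Pr[\mathrm{risky}]\leq\alpha$ throughout the preceding chain and $V_{h}^{*}(s_{h}^{*})$ strictly less than the value of the safe branch) before the CVaR-propagation computation and the bandit lower bound fit together cleanly.
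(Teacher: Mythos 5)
Your overall strategy coincides with the paper's: a hard-to-reach bandit state whose arms differ by $\Theta(\alpha\epsilon)$ in the probability of a low-value absorbing successor, a CVaR recursion that converts this into a per-episode value gap of $\Theta(H\epsilon)$ in Case~1 (resp.\ $\Theta(\epsilon)$ in Case~2), and a change-of-measure argument over the $A$ arms. The paper uses the uniform reference instance together with Lemma~A.1 of Auer et al.\ rather than pairwise arm swaps with Bretagnolle--Huber, but these are interchangeable; with the identification $\eta=\alpha\epsilon$ your per-episode gaps and KL bounds match the paper's $0.6(H-n)\eta/\alpha$ and $\kl(\bernoulli(\alpha)\,\|\,\bernoulli(\alpha-\eta))\lesssim \eta^{2}/\alpha$ exactly, and your Case~2 instance and budget $K\alpha^{H-2}$ at the bandit state reproduce the paper's $\Omega(\sqrt{AK/\alpha^{H-1}})$.

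The step that fails as written is the identification of $\min_{\pi,h,s:\,w_{\pi,h}(s)>0}w_{\pi,h}(s)$ with $w(s^{\dagger})$ in Case~1. In your construction the absorbing sink is entered from $s^{\dagger}$ with probability $\approx\alpha$, so it is visited with probability $\approx\alpha\,w(s^{\dagger})$; hence for your instance $\min w\approx\alpha\,w(s^{\dagger})$, and the bound you actually prove, $H\sqrt{AK/(\alpha w(s^{\dagger}))}=H\sqrt{AK/\min w}$, falls a factor $\sqrt{1/\alpha}$ short of the claimed $H\sqrt{AK/(\alpha\min w)}$ for that instance. The paper avoids this by additionally routing probability $\mu$ directly from the initial state into each absorbing reward state (it sets $p(x_2|s_1,a)=p(x_3|s_1,a)=\mu$ and $p(x_1|s_1,a)=1-3\mu$), which guarantees every absorbing state has visitation probability at least $\mu\gg\mu^{n-1}=w(s_n)$, so the bandit state really is the least-visited reachable state; you need the analogous patch. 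A smaller point: for the gap to propagate undamped through the preceding chain you need the probability of the risky successor to be at least $\alpha$ at each step (the paper takes $\mu>\alpha$), not at most $\alpha$ as you wrote; since you take exactly $\alpha$ per hop the two conditions coincide and nothing breaks, but the stated inequality is reversed.
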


	\textbf{Remark 2.} 
	Theorem~\ref{thm:cvar_rm_lb} demonstrates that when $\alpha$ is small, the factor $\min_{\pi,h,s:\  w_{\pi,h}(s)>0} w_{\pi,h}(s)$ is inevitable in general.
	This reveals the intrinsic hardness of Iterated CVaR RL, i.e., when the agent is highly sensitive to bad situations, she must suffer a regret due to exploring risky but hard-to-reach states.
	This lower bound also validates that $\algcvar$ is near-optimal with respect to $K$.

	\revision{
		
		\begin{wrapfigure}[10]{r}{0.4\textwidth}
			\centering    
			\vspace*{-1.5em}
			\includegraphics[width=0.38\textwidth]{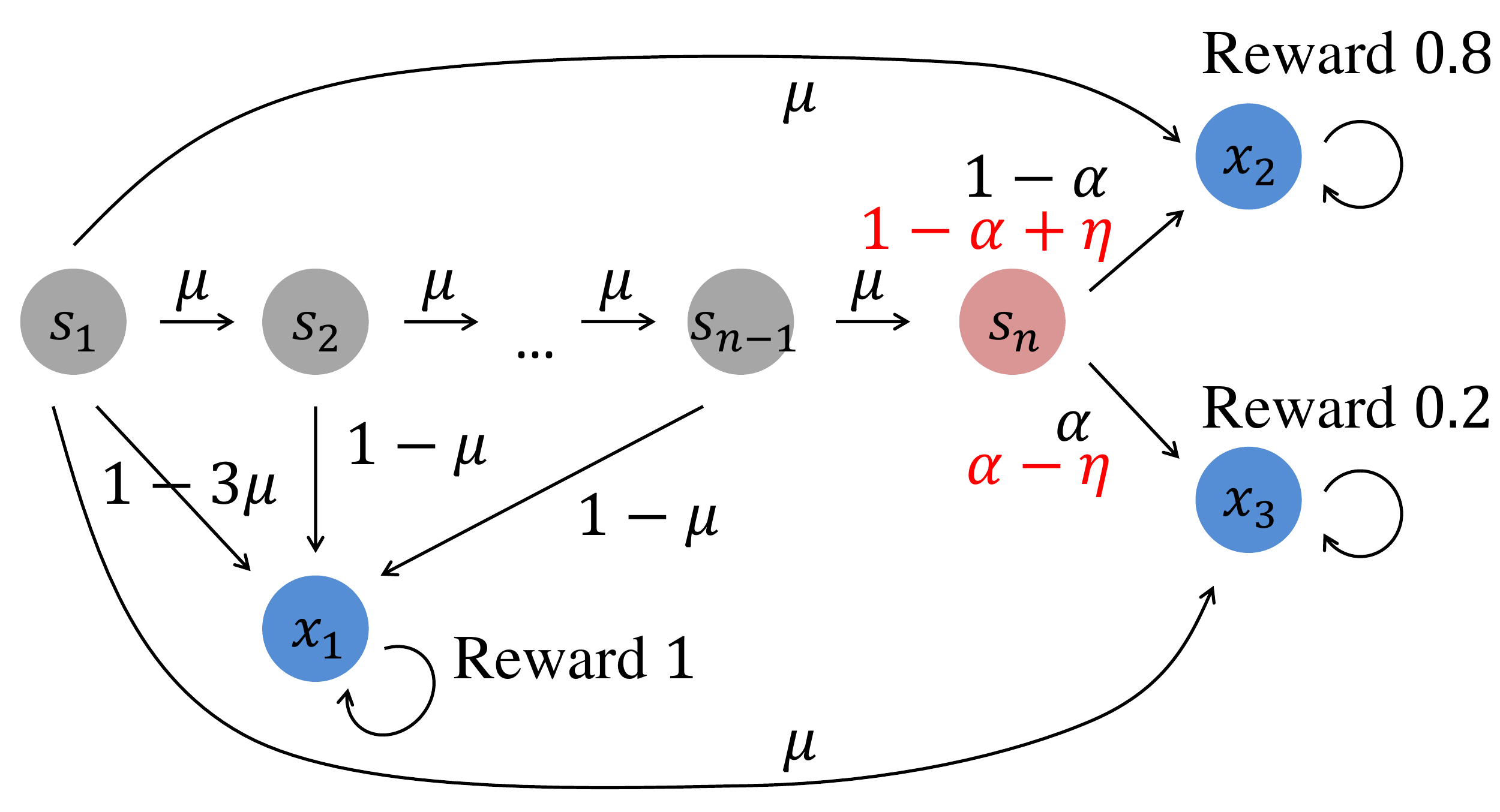}  
			\vspace*{-0.6em}
			\caption{Instance for the lower bound.}     
			\label{fig:lower_bound_main_text}
		\end{wrapfigure}
		
		\emph{Lower Bound Analysis.} 
		Here we provide the proof idea of the first lower bound (Eq.~\eqref{eq:lb_min_w}) in Theorem \ref{thm:cvar_rm_lb}, and defer the full proof to Appendix~\ref{apx:lb_rm}.
		We construct an instance with a hard-to-reach bandit state (which has an optimal action and multiple sub-optimal actions), 
		and show that this state is critical for minimizing the regret, but difficult for any algorithm to learn. 
		As shown in Figure~\ref{fig:lower_bound_main_text}, we consider an MDP with $A$ actions, $n$ regular states $s_1,\dots,s_n$ and three absorbing states $x_1,x_2,x_3$, where $n < \frac{1}{2} H$.  The reward function $r(s,a)$ depends only on the states, i.e., $s_1,\dots,s_n$ generate zero reward, and $x_1,x_2,x_3$ generate rewards $1$, $0.8$ and $0.2$, respectively. 
		Let $\mu$ be a parameter such that $0<\alpha<\mu<\frac{1}{3}$.
		Under all actions, 
		state $s_1$ transitions to $s_2,x_1,x_2,x_3$ with probabilities $\mu$, $1-3\mu$, $\mu$ and $\mu$, respectively, and state $s_i$ ($2 \leq i \leq n-1$) transitions to $s_{i+1},x_1$ with probabilities $\mu$ and $1-\mu$, respectively.
		For the bandit state $s_n$, under the optimal action, $s_n$ transitions to $x_2,x_3$ with probabilities $1-\alpha+\eta$ and $\alpha-\eta$, respectively. Under sub-optimal actions, $s_n$ transitions to $x_2,x_3$ with  probabilities $1-\alpha$ and $\alpha$, respectively. 
		%

		In this MDP, under the Iterated CVaR criterion, the value function mainly depends on the path $s_1 \rightarrow s_2 \rightarrow \dots \rightarrow s_n \rightarrow x_2/x_3$, and especially on the action choice in the bandit state $s_n$. 
		Thus, to distinguish the optimal action in $s_n$, any algorithm must suffer a regret dependent on the probability of visiting $s_n$, which is exactly the minimum visitation probability over all reachable states $\min_{\pi,h,s:\ w_{\pi,h}(s)>0} w_{\pi,h}(s)$.
		Note that in this instance, $\min_{\pi,h,s:\ w_{\pi,h}(s)>0} w_{\pi,h}(s)=\mu^{n-1}$, which does not depend on $\alpha$ and $H$.
		This demonstrates that there is an essential dependency on $\min_{\pi,h,s:\ w_{\pi,h}(s)>0} w_{\pi,h}(s)$ in the lower bound.
	}
	
	\vspace*{-0.2em}
	\section{Iterated CVaR RL with Best Policy Identification}
	In this section, we design an efficient algorithm $\algcvarbpi$, and establish sample complexity upper and lower bounds for Iterated CVaR RL with best policy identification (BPI). 
	
	\vspace*{-0.2em}
	\subsection{Algorithm $\algcvarbpi$ and Sample Complexity Upper Bound}
	
	\revision{Algorithm $\algcvarbpi$ introduces a novel distorted (conditional) empirical transition probability to construct estimation error, which effectively assigns more attention to bad situations and  fits the main focus of the Iterated CVaR criterion.} Due to space limit, we defer the pseudo-code and detailed description of $\algcvarbpi$ to Appendix~\ref{apx:bpi_alg}.
	%
	Below we present the sample complexity of $\algcvarbpi$.
	
	\begin{theorem} [Sample Complexity Upper Bound] \label{thm:bpi_ub}
		The number of trajectories used by 
		algorithm $\algcvarbpi$ to return an $\varepsilon$-optimal policy with probability at least $1-\delta$ is bounded by
		\begin{align*}
			O \bigg( & \bm{\min} \Big\{\frac{ 1 }{\min_{\begin{subarray}{c}\pi,h,s:\  w_{\pi,h}(s)>0\end{subarray}} w_{\pi,h}(s)},\ \frac{1}{\alpha^{H-1}} \Big\} \frac{ H^3 S^2 A}{\varepsilon^2 \alpha^2 } \cdot C \bigg) ,
		\end{align*}
		where $C:=\log^2 ( \min \{\frac{ 1 }{\min_{\begin{subarray}{c}\pi,h,s:\  w_{\pi,h}(s)>0\end{subarray}} w_{\pi,h}(s)},\ \frac{1}{\alpha^{H-1}} \} \frac{HSA}{ \varepsilon \alpha \delta } )$.
	\end{theorem}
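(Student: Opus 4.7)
The strategy is to mirror the Iterated CVaR RL-RM analysis of Theorem~\ref{thm:cvar_rm_ub}, adapted to a stopping-rule BPI algorithm. I would design $\algcvarbpi$ to maintain both an optimistic value function $\bar V^k_h$ (as in $\algcvar$) and a pessimistic value function $\underline V^k_h$, obtained from the same CVaR-adapted value iteration but with the exploration bonus subtracted, both built from the distorted (conditional) empirical transition so that each per-step CVaR shift is tightly controlled. The stopping rule is $\bar V^k_1(s_1) - \underline V^k_1(s_1) \le \varepsilon$; at that point, returning the greedy policy $\hat\pi$ with respect to $\bar Q^k$ yields $V^{*}_1(s_1) - V^{\hat\pi}_1(s_1) \le \bar V^k_1(s_1) - \underline V^k_1(s_1) \le \varepsilon$ on the good concentration event.

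\textbf{Gap decomposition.} To bound the number of episodes needed to trigger the stopping rule, I would show by induction on $h$, using the CVaR-shift inequality \eqref{eq:cvar_gap_V_shift_main_text} in both the optimistic and pessimistic directions, that
\begin{align*}
\bar V^k_h(s) - \underline V^k_h(s) \;\lesssim\; \sum_{h'=h}^{H} \sum_{(s',a')} w^{\cvar,\alpha,V^{\pi^k}}_{kh'}(s',a' \mid s) \cdot \frac{ H\sqrt{SL}}{\alpha \sqrt{n_k(s',a')}} ,
\end{align*}
where $w^{\cvar,\alpha,V^{\pi^k}}_{kh'}(\cdot \mid s)$ is the same distorted conditional visitation distribution that arises in the proof of Theorem~\ref{thm:cvar_rm_ub}. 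The key observation, inherited from the RM proof, is that any gap between the two value-function estimates at step $h+1$ propagates to step $h$ only through $\beta^{\alpha,V^{\pi^k}}(\cdot|s,a)$, yielding a recursion of exactly the same structure as Eq.~\eqref{eq:regret_per_episode_main_text}.

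\textbf{Sample complexity inversion.} Next I would convert the distorted visitation to the true one using $w^{\cvar,\alpha,V^{\pi^k}}_{kh}(s,a)/w_{kh}(s,a) \le \min\{1/\min_{\pi,h,s:\ w_{\pi,h}(s)>0} w_{\pi,h}(s),\ \alpha^{-(H-1)}\}$ from Lemma~\ref{lemma:w_cvar_over_w}. Combined with Cauchy-Schwarz and the standard pigeonhole estimate $\sum_{k=1}^{K}\sum_{h,s,a} w_{kh}(s,a)/n_k(s,a) = \tilde O(HSA)$, this gives, in expectation over trajectories,
\begin{align*}
\ex\mbr{\bar V^k_1(s_1) - \underline V^k_1(s_1)} \;=\; \tilde O\sbr{\, \sqrt{\frac{ M \cdot H^3 S^2 A}{\alpha^2 k}}\, } ,
\end{align*}
where $M := \min\{1/\min_{\pi,h,s:\ w_{\pi,h}(s)>0} w_{\pi,h}(s),\ \alpha^{-(H-1)}\}$. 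Setting the right-hand side equal to $\varepsilon$ and solving for $k$ yields the claimed $\tilde O(M \cdot H^3 S^2 A /(\varepsilon^2 \alpha^2))$ bound on the number of trajectories, with the $\log^2$ factor $C$ arising from a union bound over the polylog-many candidate stopping times in the doubling schedule on $n_k(s,a)$.

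\textbf{Main obstacle.} The hardest step will be rigorously justifying the use of the \emph{distorted empirical transition probability} inside the pessimistic iterate. Unlike the RM case, where a single Brown-type bonus on the empirical CVaR suffices, here one needs a uniform (over all value functions $V$ that can arise in the algorithm, not only $V^{*}$ and $V^{\pi^k}$) concentration of $\beta^{\alpha,V}(\cdot|s,a)^\top V$ against its distorted empirical counterpart, so that $\underline V^k_h \le V^{\pi^k}_h$ and $V^{*}_h \le \bar V^k_h$ both hold simultaneously across all $(k,h,s)$. Establishing this without inflating the $H$ or $S$ factors, while keeping the stopping rule tight enough that $M$ enters inside the square root rather than outside the $\varepsilon$-inversion, is the delicate part and is precisely the place where the extra $S$ factor (relative to the best classical tabular BPI rates) is paid.
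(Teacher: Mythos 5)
Your proposal follows essentially the same route as the paper's proof: the paper's $\algcvarbpi$ likewise runs optimistic and pessimistic CVaR-adapted value iterations, stops when a computable upper bound $J^k_1(s_1)$ on $\bar V^k_1(s_1)-\underline V^k_1(s_1)$ falls below $\varepsilon$, unfolds the gap via the CVaR-shift inequality into a sum of bonuses weighted by a distorted conditional visitation distribution, and inverts using Lemma~\ref{lemma:w_cvar_over_w}, Cauchy--Schwarz, the pigeonhole bound on $\sum_{k,h,s,a} w_{kh}(s,a)/n_k(s,a)$, and an anytime union bound over episodes that yields the $\log^2$ factor. The only cosmetic differences are that the paper conditions the distorted visitation on the pessimistic value $\underline{V}^k$ rather than on $V^{\pi^k}$, obtains anytime concentration from an episode-indexed $\log(k^3\cdot)$ inflation rather than a doubling schedule, and your per-episode ``in expectation'' bound should be read as the summed pigeonhole inequality $(\tau-1)\varepsilon<\sum_{k<\tau}J^k_1(s_1)$.
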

	
	\vspace*{-0.5em}
	Similar to Theorem~\ref{thm:cvar_rm_ub}, $\min_{\pi,h,s:\ w_{\pi,h}(s)>0} w_{\pi,h}(s)$ and $\alpha^{H-1}$ dominate the bound for a large $\alpha$ and a small $\alpha$, respectively. When $\alpha=1$, the problem reduces to the classic RL formulation with best policy identification, and our sample complexity becomes $\tilde{O} ( \frac{H^3 S^2 A}{\varepsilon^2 } )$, which recovers the result in prior classic RL work~\citep{dann2017unifying}. Similar to Theorem~\ref{thm:cvar_rm_ub}, this bound has a gap of $HS$ to the state-of-the-art sample complexity for classic RL~\citep{menard2021fast}. This gap is due to the fact that the result in~\citep{menard2021fast} is obtained using the Bernstein-type exploration bonuses, which are more fine-grained for the classic RL problem but do not work for general risk-sensitive cases, because it cannot be used to  quantify the estimation error of CVaR.

	To validate the tightness of Theorem~\ref{thm:bpi_ub}, we further provide sample complexity lower bounds $\Omega ( \frac{ H^2 A}{\varepsilon^2 \alpha \min_{\begin{subarray}{c}\pi,h,s:\  w_{\pi,h}(s)>0\end{subarray}} w_{\pi,h}(s)} \log \sbr{\frac{1}{\delta}} )$ and $\Omega ( \frac{A}{\alpha^{H-1} \varepsilon^2} \log \sbr{\frac{1}{\delta}} )$ for different instances, which demonstrate that the factor $\min \{ 1/\min_{\begin{subarray}{c}\pi,h,s:\  w_{\pi,h}(s)>0\end{subarray}} w_{\pi,h}(s),\ 1/\alpha^{H-1} \}$ is indispensable in general (see Appendix~\ref{apx:lb_bpi} for a formal statement of lower bound).

	\begin{algorithm}[t]
		\caption{$\algmin$} \label{alg:min}
		\KwIn{$\delta$, $\delta':=\frac{\delta}{2}$, $L:=\log(\frac{SA}{\delta'})$, $\hat{V}^k_{H+1}(s)=0$ for any $k>0$ and $s \in \cS$}
		\For{$k=1,2,\dots,K$}
		{
			\For{$h=H,H-1,\dots,1$}
			{
						$\hat{Q}^k_h(s,a) \leftarrow r(s,a)+\min_{s' \sim \hat{p}^k(\cdot|s,a)}(\hat{V}^k_{h+1}(s'))$, $\forall (s,a)\in\cS\times\cA$\;
					$\hat{V}^k_h(s) \leftarrow \max_{a \in \cA} \hat{Q}^k_h(s,a)$, 
					$\pi^k_h(s) \leftarrow  \argmax_{a \in \cA} \hat{Q}^k_h(s,a)$, $\forall s \in \cS$\;
			}
			Play the episode $k$ with policy $\pi^k$, and update $n_{k+1}(s,a)$ and $\hat{p}^{k+1}(s'|s,a)$\;
		}
	\end{algorithm}
	
	\vspace*{-0.5em}
	\section{Worst Path RL}
	
	In this section, we investigate an interesting limiting case of Iterated CVaR RL when $\alpha \to 0$, called Worst Path RL, in which case the agent aims to maximize the minimum possible cumulative reward.

	Worst Path RL has a \emph{unique feature} that, the value function (Eq.~\eqref{eq:min_bellman}) concerns only the minimum value of successor states, which are independent of specific transition probabilities. 
	Therefore, once we learn the connectivity among states, we can perform a planning to compute the optimal policy. 
	Yet, this feature does not make the Worst Path RL problem trivial, because it is still challenging to distinguish whether a successor state is hard to reach or does not exist. As a result, a careful scheme is needed to both explore undetected successor states and exploit observations to minimize regret.
	
	\vspace*{-0.5em}
	\subsection{Algorithm $\algmin$ and Regret Upper Bound}
	
	\revision{We design an algorithm $\algmin$ (Algorithm~\ref{alg:min}) based on a simple and efficient empirical Q-value function,
		which makes full use of the unique feature of Worst Path RL, and simultaneously explores undetected successor states and exploits the current best action.}
	Specifically, in episode $k$, $\algmin$ constructs empirical Q-value/value functions $\hat{Q}^k_h(s,a), \hat{V}^k_h(s)$ using the estimated lowest value of next states, and then, takes a greedy policy $\pi^k_h(s)$ with respect to $\hat{Q}^k_h(s,a)$ in this episode.
	
	\revision{The intuition behind $\algmin$ is as follows. 
		Since the Q-value function for Worst Path RL uses the $\min$ operator, if the Q-value function is not accurately estimated, it can only be over-estimated (not under-estimated). 
		If  over-estimation happens, $\algmin$ will be exploring an over-estimated action and urging its empirical Q-value to get back to its true Q-value. Otherwise, if the Q-value function is already accurate, $\algmin$ just selects the optimal action.
		In other words, $\algmin$ combines the exploration of over-estimated actions (which lead to undetected successor states) and  exploitation of current best actions.} 
	Below we provide the regret guarantee for algorithm $\algmin$. 
	\begin{theorem} \label{thm:min_regret_ub}
		With probability at least $1-\delta$, the regret of algorithm $\algmin$ is bounded by
		\begin{align*}
			O \bigg( \sum_{(s,a) \in \cS \times \cA} \frac{H}{\min_{\begin{subarray}{c} \pi: \ \upsilon_{\pi}(s,a)>0 \end{subarray}} \upsilon_{\pi}(s,a) \cdot \min_{s' \in \supp( p(\cdot|s,a))} p(s'|s,a) } \log \Big( \frac{SA}{\delta} \Big) \bigg) ,
		\end{align*}
		where $\upsilon_{\pi}(s,a)$ denotes the probability  $(s,a)$ is visited at least once in an episode under policy $\pi$.  
	\end{theorem}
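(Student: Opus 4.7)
The plan is to exploit the feature of Worst Path RL emphasized in the paper: once $\supp(\hat{p}^k(\cdot|s,a)) = \supp(p(\cdot|s,a))$ for every relevant $(s,a)$, the empirical Bellman operator used by $\algmin$ coincides with the true one. Thus I would reduce the analysis to counting the episodes in which some support is still incomplete, and bound each such episode's regret by the trivial $H$.

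\textbf{Step 1: optimism and a zero-regret criterion.} A backward induction on $h$ would give $\hat{Q}^k_h \geq Q^*_h$ and $\hat{V}^k_h \geq V^*_h$ pointwise, since the inclusion $\supp(\hat{p}^k(\cdot|s,a)) \subseteq \supp(p(\cdot|s,a))$ holds automatically and a minimum over a smaller set can only be larger. Hence $V^*_1(s_1) - V^{\pi^k}_1(s_1) \leq \hat{V}^k_1(s_1) - V^{\pi^k}_1(s_1) \leq H$. A parallel induction shows that if $\supp(\hat{p}^k(\cdot|s,a)) = \supp(p(\cdot|s,a))$ on every $(s,a)$ reached by $\pi^k$ along its ``worst-successor chain'', then $\hat{V}^k_1(s_1) = V^{\pi^k}_1(s_1) = V^*_1(s_1)$ and episode $k$ contributes zero regret.

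\textbf{Step 2: locating a failure pair in a suboptimal episode.} For every episode with positive regret, I would build a chain $\tilde{s}_1 = s_1, \tilde{s}_2, \dots$ by recursively setting $\tilde{s}_{h+1} \in \argmin_{s' \in \supp(p(\cdot|\tilde{s}_h, a^k_h))} V^{\pi^k}_{h+1}(s')$ with $a^k_h = \pi^k_h(\tilde{s}_h)$, and telescope the gap $\hat{V}^k_h(\tilde{s}_h) - V^{\pi^k}_h(\tilde{s}_h)$ down the chain. Unless some step $h^k$ admits $\supp(\hat{p}^k(\cdot|\tilde{s}_{h^k}, a^k_{h^k})) \subsetneq \supp(p(\cdot|\tilde{s}_{h^k}, a^k_{h^k}))$, the gap telescopes to zero at step $H$. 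Such a failure pair therefore exists, and since it lies on a chain of positive-probability transitions under $\pi^k$, it satisfies $\upsilon_{\pi^k}(\tilde{s}_{h^k}, a^k_{h^k}) \geq \min_{\pi:\upsilon_\pi(\tilde{s}_{h^k}, a^k_{h^k})>0} \upsilon_\pi(\tilde{s}_{h^k}, a^k_{h^k})$.

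\textbf{Step 3: counting by charging, and the main obstacle.} I would charge each suboptimal episode to its failure pair $(s,a)$ and bound the number $B(s,a)$ of episodes charged to a fixed pair. A Chernoff plus union bound over $s' \in \supp(p(\cdot|s,a))$ shows that $O(\log(SA/\delta)/\min_{s'} p(s'|s,a))$ visits to $(s,a)$ suffice to observe every successor with probability $1 - \delta/(SA)$; a second Chernoff bound converts visits to episodes (each charged episode visits $(s,a)$ with probability at least $\upsilon_{\min}(s,a) := \min_{\pi:\upsilon_\pi(s,a)>0}\upsilon_\pi(s,a)$), yielding $B(s,a) = \tilde{O}\bigl(1/(\upsilon_{\min}(s,a) \cdot \min_{s'} p(s'|s,a))\bigr)$. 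Summing $H \cdot B(s,a)$ over $(s,a)$ and union-bounding over the $O(SA)$ concentration events then produces the claimed regret. The main obstacle is Step~2: I must be sure that whenever $\hat{V}^k_1(s_1) > V^{\pi^k}_1(s_1)$, the failure pair identified by the worst-successor chain is actually reached by $\pi^k$ with positive probability, so that the discovery rate is governed by $\upsilon_{\min}$ and not by some smaller episode-specific quantity, and I must carefully handle the case where $\supp(\hat{p}^k(\cdot|\tilde{s}_h,a^k_h))$ accidentally contains the true minimizer of $V^{\pi^k}_{h+1}$, forcing the over-estimation to be traced further down the chain.
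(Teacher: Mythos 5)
Your proposal is correct and follows essentially the same route as the paper's proof: the overestimation-only property of the $\min$ operator, tracing any positive-regret episode to a reachable state-action pair with an undetected successor, and the two concentration events (episode-level visitation at rate $\upsilon_{\min}(s,a)$ and per-visit successor detection at rate $\min_{s'}p(s'|s,a)$) are exactly the paper's Lemmas on overestimation, $\cG_1$, and $\cG_2$. The only difference is organizational — you charge each bad episode directly to its failure pair and count, whereas the paper argues by contradiction with an episode-exclusion scheme — but both yield the same $H\cdot\sum_{(s,a)}\tilde O\bigl(1/(\upsilon_{\min}(s,a)\min_{s'}p(s'|s,a))\bigr)$ bound.
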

	
	\textbf{Remark 3.}
	The factor $\min_{ \pi: \ \upsilon_{\pi}(s,a)>0} \upsilon_{\pi}(s,a)$ stands for the minimum probability of visiting $(s,a)$ at least once in an episode over all feasible policies, and $\min_{s' \in \supp( p(\cdot|s,a))} p(s'|s,a)$ denotes the minimum transition probability over all successor states of $(s,a)$. 
	Note that this result cannot be implied by Theorem~\ref{thm:cvar_rm_ub}, because the result for Iterated CVaR RL there depends on $\frac{1}{\alpha}$, and simply taking $\alpha \to 0$ leads to a vacuous bound.
	
	Theorem~\ref{thm:min_regret_ub} demonstrates that algorithm $\algmin$ enjoys a constant regret with respect to $K$. This constant regret is made possible by the 
	unique feature of Worst Path RL that, under the worst path metric, once the agent determines the connectivity among states, she can accurately estimate the value function and find the optimal policy. Furthermore, determining the connectivity among states (with a given confidence) only requires a number of samples independent of $K$. $\algmin$ effectively utilizes this problem feature, and efficiently explores the connectivity among states.
	
	To validate the optimality of our regret upper bound, we also provide a lower bound $\Omega( \max_{ (s,a): \exists h,\ a \neq \pi^{*}_h(s) } \frac{ H }{\min_{ \pi: \upsilon_{\pi}(s,a)>0 } \upsilon_{\pi}(s,a) \cdot \min_{s' \in \supp( p(\cdot|s,a))} p(s'|s,a) } )$ for Worst Path RL, which demonstrates the tightness of the factors $\min_{ \pi: \ \upsilon_{\pi}(s,a)>0} \upsilon_{\pi}(s,a)$ and $\min_{s' \in \supp( p(\cdot|s,a))} p(s'|s,a)$.

	\vspace*{-0.5em}
	\section{Conclusion} \label{sec:conclusion}
	In this paper, we investigate a novel Iterated CVaR RL problem with the regret minimization and best policy identification metrics. We design two efficient algorithms $\algcvar$ and $\algcvarbpi$, and provide nearly matching regret/sample complexity upper and lower bounds with respect to $K$. We also study an interesting limiting case called Worst Path RL, and propose a simple and efficient algorithm $\algmin$ with rigorous regret guarantees.
	There are several interesting directions for future work, e.g., further closing the gap between upper and lower bounds, and extending our model and results from the tabular setting to the function approximation framework. 
	
	
%

	\section*{Acknowledgements}
	
	The work of Yihan Du and Longbo Huang is supported by the Technology and Innovation Major Project of the Ministry of Science and Technology of China under Grant 2020AAA0108400 and 2020AAA0108403, the Tsinghua University Initiative Scientific Research Program, and Tsinghua Precision Medicine Foundation 10001020109.
	The work of Siwei Wang was supported in part by the National Natural Science Foundation of China Grant 62106122.

	\bibliographystyle{iclr2023_conference}
	\bibliography{iclr2023_CVaR_RL_revision_ref}

\begin{thebibliography}{52}
\providecommand{\natexlab}[1]{#1}
\providecommand{\url}[1]{\texttt{#1}}
\expandafter\ifx\csname urlstyle\endcsname\relax
  \providecommand{\doi}[1]{doi: #1}\else
  \providecommand{\doi}{doi: \begingroup \urlstyle{rm}\Url}\fi

\bibitem[Agrawal \& Jia(2017)Agrawal and Jia]{agrawal2017optimistic}
Shipra Agrawal and Randy Jia.
\newblock Optimistic posterior sampling for reinforcement learning: worst-case
  regret bounds.
\newblock \emph{Advances in Neural Information Processing Systems}, 30, 2017.

\bibitem[Artzner et~al.(1999)Artzner, Delbaen, Eber, and
  Heath]{artzner1999coherent}
Philippe Artzner, Freddy Delbaen, Jean-Marc Eber, and David Heath.
\newblock Coherent measures of risk.
\newblock \emph{Mathematical Finance}, 9\penalty0 (3):\penalty0 203--228, 1999.

\bibitem[Auer et~al.(2002)Auer, Cesa-Bianchi, Freund, and
  Schapire]{auer2002nonstochastic}
Peter Auer, Nicolo Cesa-Bianchi, Yoav Freund, and Robert~E. Schapire.
\newblock The non-stochastic multi-armed bandit problem.
\newblock \emph{SIAM Journal on Computing}, 32\penalty0 (1):\penalty0 48--77,
  2002.

\bibitem[Azar et~al.(2017)Azar, Osband, and Munos]{azar2017minimax}
Mohammad~Gheshlaghi Azar, Ian Osband, and R{\'e}mi Munos.
\newblock Minimax regret bounds for reinforcement learning.
\newblock In \emph{International Conference on Machine Learning}, pp.\
  263--272. PMLR, 2017.

\bibitem[B{\"a}uerle \& Glauner(2022)B{\"a}uerle and
  Glauner]{bauerle2022markov}
Nicole B{\"a}uerle and Alexander Glauner.
\newblock Markov decision processes with recursive risk measures.
\newblock \emph{European Journal of Operational Research}, 296\penalty0
  (3):\penalty0 953--966, 2022.

\bibitem[B{\"a}uerle \& Ott(2011)B{\"a}uerle and Ott]{bauerle2011markov}
Nicole B{\"a}uerle and Jonathan Ott.
\newblock Markov decision processes with average-value-at-risk criteria.
\newblock \emph{Mathematical Methods of Operations Research}, 74\penalty0
  (3):\penalty0 361--379, 2011.

\bibitem[Boda \& Filar(2006)Boda and Filar]{boda2006time}
Kang Boda and Jerzy~A Filar.
\newblock Time consistent dynamic risk measures.
\newblock \emph{Mathematical Methods of Operations Research}, 63\penalty0
  (1):\penalty0 169--186, 2006.

\bibitem[Borkar \& Jain(2014)Borkar and Jain]{borkar2014risk}
Vivek Borkar and Rahul Jain.
\newblock Risk-constrained markov decision processes.
\newblock \emph{IEEE Transactions on Automatic Control}, 59\penalty0
  (9):\penalty0 2574--2579, 2014.

\bibitem[Borkar(2001)]{borkar2001sensitivity}
Vivek~S Borkar.
\newblock A sensitivity formula for risk-sensitive cost and the actor--critic
  algorithm.
\newblock \emph{Systems \& Control Letters}, 44\penalty0 (5):\penalty0
  339--346, 2001.

\bibitem[Borkar(2002)]{borkar2002q}
Vivek~S Borkar.
\newblock {Q-learning} for risk-sensitive control.
\newblock \emph{Mathematics of Operations Research}, 27\penalty0 (2):\penalty0
  294--311, 2002.

\bibitem[Brown(2007)]{brown2007large}
David~B Brown.
\newblock Large deviations bounds for estimating conditional value-at-risk.
\newblock \emph{Operations Research Letters}, 35\penalty0 (6):\penalty0
  722--730, 2007.

\bibitem[Cheng et~al.(2019)Cheng, Orosz, Murray, and Burdick]{cheng2019end}
Richard Cheng, G{\'a}bor Orosz, Richard~M Murray, and Joel~W Burdick.
\newblock End-to-end safe reinforcement learning through barrier functions for
  safety-critical continuous control tasks.
\newblock In \emph{Proceedings of the AAAI Conference on Artificial
  Intelligence}, volume~33, pp.\  3387--3395, 2019.

\bibitem[Chow \& Ghavamzadeh(2014)Chow and Ghavamzadeh]{chow2014algorithms}
Yinlam Chow and Mohammad Ghavamzadeh.
\newblock Algorithms for {CVaR} optimization in {MDPs}.
\newblock In \emph{Advances in Neural Information Processing Systems},
  volume~27, 2014.

\bibitem[Chow et~al.(2015)Chow, Tamar, Mannor, and Pavone]{chow2015risk}
Yinlam Chow, Aviv Tamar, Shie Mannor, and Marco Pavone.
\newblock Risk-sensitive and robust decision-making: a {CVaR} optimization
  approach.
\newblock In \emph{Advances in Neural Information Processing Systems},
  volume~28, 2015.

\bibitem[Chow et~al.(2017)Chow, Ghavamzadeh, Janson, and Pavone]{chow2017risk}
Yinlam Chow, Mohammad Ghavamzadeh, Lucas Janson, and Marco Pavone.
\newblock Risk-constrained reinforcement learning with percentile risk
  criteria.
\newblock \emph{Journal of Machine Learning Research}, 18\penalty0
  (1):\penalty0 6070--6120, 2017.

\bibitem[Chu \& Zhang(2014)Chu and Zhang]{chu2014markov}
Shanyun Chu and Yi~Zhang.
\newblock Markov decision processes with iterated coherent risk measures.
\newblock \emph{International Journal of Control}, 87\penalty0 (11):\penalty0
  2286--2293, 2014.

\bibitem[Coraluppi \& Marcus(1997)Coraluppi and Marcus]{coraluppi1997mixed}
Stefano~P Coraluppi and Steven~I Marcus.
\newblock Mixed risk-neutral/minimax control of markov decision processes.
\newblock In \emph{Proceedings 31st Conference on Information Sciences and
  Systems}. Citeseer, 1997.

\bibitem[Coraluppi \& Marcus(1999)Coraluppi and Marcus]{coraluppi1999risk}
Stefano~P Coraluppi and Steven~I Marcus.
\newblock Risk-sensitive and minimax control of discrete-time, finite-state
  markov decision processes.
\newblock \emph{Automatica}, 35\penalty0 (2):\penalty0 301--309, 1999.

\bibitem[Coronato et~al.(2020)Coronato, Naeem, De~Pietro, and
  Paragliola]{healthcare}
Antonio Coronato, Muddasar Naeem, Giuseppe De~Pietro, and Giovanni Paragliola.
\newblock Reinforcement learning for intelligent healthcare applications: A
  survey.
\newblock \emph{Artificial Intelligence in Medicine}, 109:\penalty0 101964,
  2020.

\bibitem[Dann \& Brunskill(2015)Dann and Brunskill]{dann2015sample}
Christoph Dann and Emma Brunskill.
\newblock Sample complexity of episodic fixed-horizon reinforcement learning.
\newblock In \emph{Advances in Neural Information Processing Systems}, pp.\
  2818--2826, 2015.

\bibitem[Dann et~al.(2017)Dann, Lattimore, and Brunskill]{dann2017unifying}
Christoph Dann, Tor Lattimore, and Emma Brunskill.
\newblock Unifying {PAC} and regret: Uniform {PAC} bounds for episodic
  reinforcement learning.
\newblock In \emph{Advances in Neural Information Processing Systems},
  volume~30, 2017.

\bibitem[Di~Castro et~al.(2012)Di~Castro, Tamar, and Mannor]{di2012policy}
Dotan Di~Castro, Aviv Tamar, and Shie Mannor.
\newblock Policy gradients with variance related risk criteria.
\newblock \emph{arXiv preprint arXiv:1206.6404}, 2012.

\bibitem[Fatemi et~al.(2019)Fatemi, Sharma, Van~Seijen, and
  Kahou]{fatemi2019dead}
Mehdi Fatemi, Shikhar Sharma, Harm Van~Seijen, and Samira~Ebrahimi Kahou.
\newblock Dead-ends and secure exploration in reinforcement learning.
\newblock In \emph{International Conference on Machine Learning}, pp.\
  1873--1881. PMLR, 2019.

\bibitem[Fatemi et~al.(2021)Fatemi, Killian, Subramanian, and
  Ghassemi]{fatemi2021medical}
Mehdi Fatemi, Taylor~W Killian, Jayakumar Subramanian, and Marzyeh Ghassemi.
\newblock Medical dead-ends and learning to identify high-risk states and
  treatments.
\newblock \emph{Advances in Neural Information Processing Systems},
  34:\penalty0 4856--4870, 2021.

\bibitem[Fei et~al.(2020)Fei, Yang, Chen, Wang, and Xie]{fei2020near_optimal}
Yingjie Fei, Zhuoran Yang, Yudong Chen, Zhaoran Wang, and Qiaomin Xie.
\newblock Risk-sensitive reinforcement learning: Near-optimal risk-sample
  tradeoff in regret.
\newblock In \emph{Advances in Neural Information Processing Systems},
  volume~33, pp.\  22384--22395, 2020.

\bibitem[Fei et~al.(2021{\natexlab{a}})Fei, Yang, Chen, and
  Wang]{fei2021exponential}
Yingjie Fei, Zhuoran Yang, Yudong Chen, and Zhaoran Wang.
\newblock Exponential bellman equation and improved regret bounds for
  risk-sensitive reinforcement learning.
\newblock In \emph{Advances in Neural Information Processing Systems},
  volume~34, 2021{\natexlab{a}}.

\bibitem[Fei et~al.(2021{\natexlab{b}})Fei, Yang, and
  Wang]{fei2021function_approx}
Yingjie Fei, Zhuoran Yang, and Zhaoran Wang.
\newblock Risk-sensitive reinforcement learning with function approximation: A
  debiasing approach.
\newblock In \emph{International Conference on Machine Learning}, pp.\
  3198--3207. PMLR, 2021{\natexlab{b}}.

\bibitem[Fiechter(1994)]{fiechter1994efficient}
Claude-Nicolas Fiechter.
\newblock Efficient reinforcement learning.
\newblock In \emph{Conference on Computational Learning Theory}, pp.\  88--97,
  1994.

\bibitem[Garc{\i}a \& Fern{\'a}ndez(2015)Garc{\i}a and
  Fern{\'a}ndez]{garcia2015comprehensive}
Javier Garc{\i}a and Fernando Fern{\'a}ndez.
\newblock A comprehensive survey on safe reinforcement learning.
\newblock \emph{Journal of Machine Learning Research}, 16\penalty0
  (1):\penalty0 1437--1480, 2015.

\bibitem[Hardy \& Wirch(2004)Hardy and Wirch]{hardy2004iterated}
Mary~R Hardy and Julia~L Wirch.
\newblock The iterated {CTE}: a dynamic risk measure.
\newblock \emph{North American Actuarial Journal}, 8\penalty0 (4):\penalty0
  62--75, 2004.

\bibitem[Haskell \& Jain(2015)Haskell and Jain]{haskell2015convex}
William~B Haskell and Rahul Jain.
\newblock A convex analytic approach to risk-aware markov decision processes.
\newblock \emph{SIAM Journal on Control and Optimization}, 53\penalty0
  (3):\penalty0 1569--1598, 2015.

\bibitem[Heger(1994)]{heger1994consideration}
Matthias Heger.
\newblock Consideration of risk in reinforcement learning.
\newblock In \emph{International Conference on Machine Learning}, pp.\
  105--111. Elsevier, 1994.

\bibitem[Jaksch et~al.(2010)Jaksch, Ortner, and Auer]{jaksch2010near}
Thomas Jaksch, Ronald Ortner, and Peter Auer.
\newblock Near-optimal regret bounds for reinforcement learning.
\newblock \emph{Journal of Machine Learning Research}, 11\penalty0 (4), 2010.

\bibitem[Jin et~al.(2018)Jin, Allen-Zhu, Bubeck, and Jordan]{jin2018q}
Chi Jin, Zeyuan Allen-Zhu, Sebastien Bubeck, and Michael~I Jordan.
\newblock Is {Q-learning} provably efficient?
\newblock In \emph{Advances in Neural Information Processing Systems},
  volume~31, 2018.

\bibitem[Johnson \& Kannan(2002)Johnson and Kannan]{unmanned_helicopter}
Eric Johnson and Suresh Kannan.
\newblock Adaptive flight control for an autonomous unmanned helicopter.
\newblock In \emph{AIAA Guidance, Navigation, and Control Conference and
  Exhibit}, pp.\  4439, 2002.

\bibitem[Kaelbling et~al.(1996)Kaelbling, Littman, and
  Moore]{kaelbling1996reinforcement}
Leslie~Pack Kaelbling, Michael~L Littman, and Andrew~W Moore.
\newblock Reinforcement learning: A survey.
\newblock \emph{Journal of Artificial Intelligence Research}, 4:\penalty0
  237--285, 1996.

\bibitem[Kaufmann et~al.(2021)Kaufmann, M{\'e}nard, Domingues, Jonsson,
  Leurent, and Valko]{kaufmann2021adaptive}
Emilie Kaufmann, Pierre M{\'e}nard, Omar~Darwiche Domingues, Anders Jonsson,
  Edouard Leurent, and Michal Valko.
\newblock Adaptive reward-free exploration.
\newblock In \emph{Algorithmic Learning Theory}, pp.\  865--891. PMLR, 2021.

\bibitem[Keramati et~al.(2020)Keramati, Dann, Tamkin, and
  Brunskill]{keramati2020being}
Ramtin Keramati, Christoph Dann, Alex Tamkin, and Emma Brunskill.
\newblock Being optimistic to be conservative: Quickly learning a cvar policy.
\newblock In \emph{Proceedings of the AAAI Conference on Artificial
  Intelligence}, volume~34, pp.\  4436--4443, 2020.

\bibitem[La \& Ghavamzadeh(2013)La and Ghavamzadeh]{la2013actor}
Prashanth La and Mohammad Ghavamzadeh.
\newblock Actor-critic algorithms for risk-sensitive {MDPs}.
\newblock \emph{Advances in Neural Information Processing Systems}, 26, 2013.

\bibitem[M{\'e}nard et~al.(2021)M{\'e}nard, Domingues, Jonsson, Kaufmann,
  Leurent, and Valko]{menard2021fast}
Pierre M{\'e}nard, Omar~Darwiche Domingues, Anders Jonsson, Emilie Kaufmann,
  Edouard Leurent, and Michal Valko.
\newblock Fast active learning for pure exploration in reinforcement learning.
\newblock In \emph{International Conference on Machine Learning}, pp.\
  7599--7608. PMLR, 2021.

\bibitem[Osogami(2012)]{osogami2012iterated}
Takayuki Osogami.
\newblock Iterated risk measures for risk-sensitive markov decision processes
  with discounted cost.
\newblock \emph{arXiv preprint arXiv:1202.3755}, 2012.

\bibitem[Ott(2010)]{ott2010markov}
Jonathan~Theodor Ott.
\newblock A markov decision model for a surveillance application and
  risk-sensitive markov decision processes.
\newblock 2010.

\bibitem[Rockafellar et~al.(2000)Rockafellar, Uryasev,
  et~al.]{rockafellar2000optimization}
R~Tyrrell Rockafellar, Stanislav Uryasev, et~al.
\newblock Optimization of conditional value-at-risk.
\newblock \emph{Journal of Risk}, 2:\penalty0 21--42, 2000.

\bibitem[Shapiro et~al.(2021)Shapiro, Dentcheva, and
  Ruszczynski]{shapiro2021lectures}
Alexander Shapiro, Darinka Dentcheva, and Andrzej Ruszczynski.
\newblock \emph{Lectures on stochastic programming: modeling and theory}.
\newblock SIAM, 2021.

\bibitem[Sutton \& Barto(2018)Sutton and Barto]{sutton2018reinforcement}
Richard~S Sutton and Andrew~G Barto.
\newblock \emph{Reinforcement learning: An introduction}.
\newblock MIT press, 2018.

\bibitem[Szepesv{\'a}ri(2010)]{szepesvari2010algorithms}
Csaba Szepesv{\'a}ri.
\newblock Algorithms for reinforcement learning.
\newblock \emph{Synthesis Lectures on Artificial Intelligence and Machine
  Learning}, 4\penalty0 (1):\penalty0 1--103, 2010.

\bibitem[Tamar et~al.(2015)Tamar, Glassner, and Mannor]{tamar2015optimizing}
Aviv Tamar, Yonatan Glassner, and Shie Mannor.
\newblock Optimizing the {CVaR} via sampling.
\newblock In \emph{AAAI Conference on Artificial Intelligence}, 2015.

\bibitem[Thomas \& Learned-Miller(2019)Thomas and
  Learned-Miller]{thomas2019concentration}
Philip Thomas and Erik Learned-Miller.
\newblock Concentration inequalities for conditional value-at-risk.
\newblock In \emph{International Conference on Machine Learning}, pp.\
  6225--6233. PMLR, 2019.

\bibitem[Wang et~al.(2019)Wang, Zhu, Hong, and Zheng]{wang2019artificial}
Chunhao Wang, Xiaofeng Zhu, Julian~C Hong, and Dandan Zheng.
\newblock Artificial intelligence in radiotherapy treatment planning: present
  and future.
\newblock \emph{Technology in cancer research \& treatment}, 18:\penalty0
  1533033819873922, 2019.

\bibitem[Weissman et~al.(2003)Weissman, Ordentlich, Seroussi, Verdu, and
  Weinberger]{weissman2003inequalities}
Tsachy Weissman, Erik Ordentlich, Gadiel Seroussi, Sergio Verdu, and Marcelo~J
  Weinberger.
\newblock Inequalities for the $\ell$1 deviation of the empirical distribution.
\newblock \emph{Hewlett-Packard Labs, Tech. Rep}, 2003.

\bibitem[Wen et~al.(2020)Wen, Duan, Li, Xu, and Peng]{wen2020safe}
Lu~Wen, Jingliang Duan, Shengbo~Eben Li, Shaobing Xu, and Huei Peng.
\newblock Safe reinforcement learning for autonomous vehicles through parallel
  constrained policy optimization.
\newblock In \emph{IEEE International Conference on Intelligent Transportation
  Systems}, pp.\  1--7. IEEE, 2020.

\bibitem[Zanette \& Brunskill(2019)Zanette and Brunskill]{zanette2019tighter}
Andrea Zanette and Emma Brunskill.
\newblock Tighter problem-dependent regret bounds in reinforcement learning
  without domain knowledge using value function bounds.
\newblock In \emph{International Conference on Machine Learning}, pp.\
  7304--7312. PMLR, 2019.

\end{thebibliography}
	
	\OnlyInFull{
		\clearpage
		\appendix

\renewcommand{\appendixpagename}{\centering \Large \textsc{Appendix}}
\appendixpage

\section{Experiments}\label{apx:experiments}

In this section, we provide experimental results to evaluate the empirical performance of our algorithm $\algcvar$, and compare it to the state-of-the-art algorithms $\mathtt{EULER}$~\citep{zanette2019tighter} and $\mathtt{RSVI2}$~\citep{fei2021exponential} for classic RL and risk-sensitive RL, respectively. 

In our experiments, we consider an $H$-layered MDP with $S=3(H-1)+1$ states and $A$ actions. There is a single state $s_0$ (initial state) in layer $1$. For any $2 \leq h \leq H$, there are three states $s_{3(h-2)+1}, s_{3(h-2)+2}$ and $s_{3(h-2)+3}$ in layer $h$, which induce rewards $1,0$ and $0.4$, respectively. The agent starts from $s_0$ in layer 1, and for each step $h\in[H]$, she takes an action from $\{a_1,\dots,a_{A}\}$, and then transitions to one of three states in the next layer. For any $a \in \{a_1,\dots,a_{A-1}\}$, action $a$ leads to $s_{3(h-1)+1}$ and $s_{3(h-1)+2}$ with probabilities $0.5$ and $0.5$, respectively. Action $a_{A}$ leads to $s_{3(h-1)+2}$ and $s_{3(h-1)+3}$ with probabilities $0.001$ and $0.999$, respectively.

We set $\alpha \in \{0.05, 0.1, 0.15\}$, $\delta \in \{0.5,0.005,0.00005\}$, $H \in \{2,5,10\}$, $S\in\{7,13,25\}$, $A\in\{3,5,12\}$ and $K\in[0,10000]$ (the change of $K$ can be seen from the X-axis in Figure~\ref{fig:experiments}). We take $\alpha=0.05$, $\delta=0.005$, $H=5$, $S=13$, $A=5$ and $K=10000$ as the basic setting, and change parameters $\alpha$, $\delta$, $H$, $S$, $A$ and $K$ to see how they affect the empirical performance of algorithm $\algcvar$. 
For each algorithm, we perform $20$ independent runs and report the average regret across runs with $95\%$ confidence intervals. 

As shown in Figure~\ref{fig:experiments}, our algorithm $\algcvar$ achieves a significantly lower regret than the other algorithms $\mathtt{EULER}$~\citep{zanette2019tighter} and $\mathtt{RSVI2}$~\citep{fei2021exponential}, which demonstrates that $\algcvar$ can effectively control the risk  under the Iterated CVaR criterion and shows performance superiority over the baselines.
Moreover, the influences of parameters $\alpha$, $\delta$, $H$, $S$, $A$ and $K$ on the regret of algorithm $\algcvar$ match our theoretical bounds.
Specifically, as $\alpha$ or $\delta$ increases, the regret of $\algcvar$ decreases. As $H,S$ or $A$ increases, the regret of $\algcvar$ increases as well. As the number of episodes $K$ increases, the regret of $\algcvar$ increases at a sublinear rate.

\begin{figure} [t!]
	\centering     
	\subfigure[$\alpha=0.05$] {    
		\includegraphics[width=0.31\textwidth]{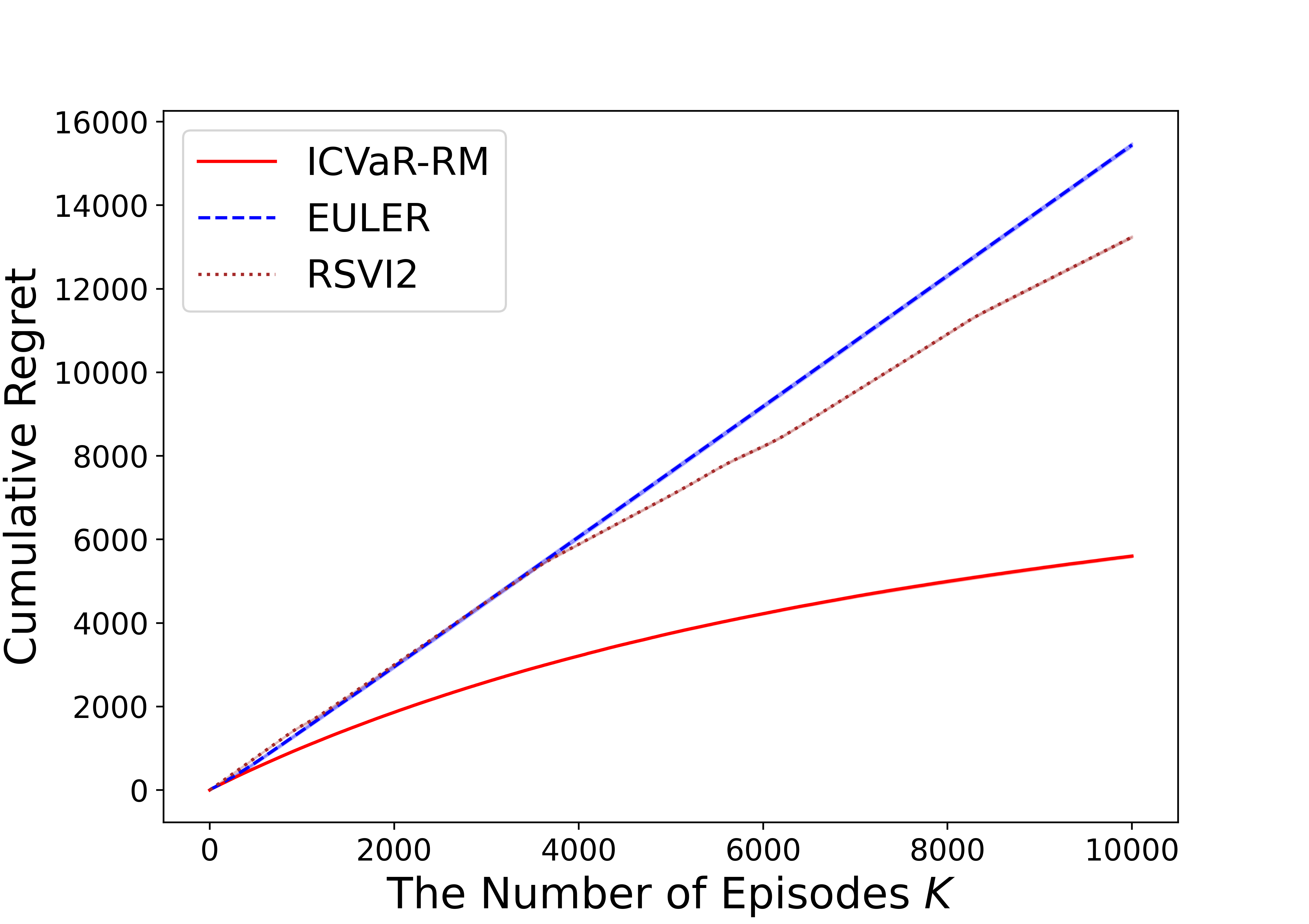} 
	}    
	\subfigure[$\alpha=0.1$] {    
		\includegraphics[width=0.31\textwidth]{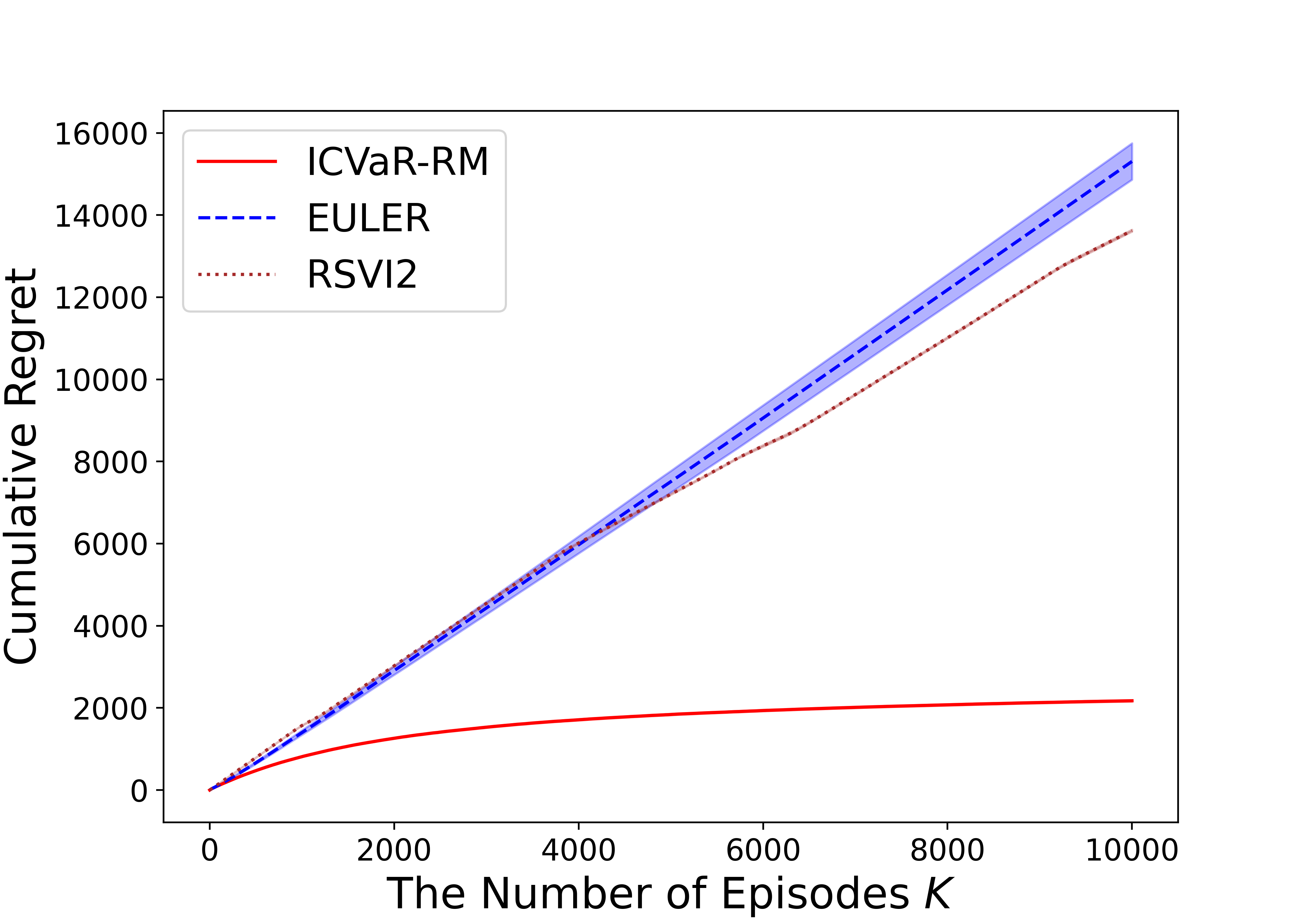} 
	}    
	\subfigure[$\alpha=0.15$] {    
		\includegraphics[width=0.31\textwidth]{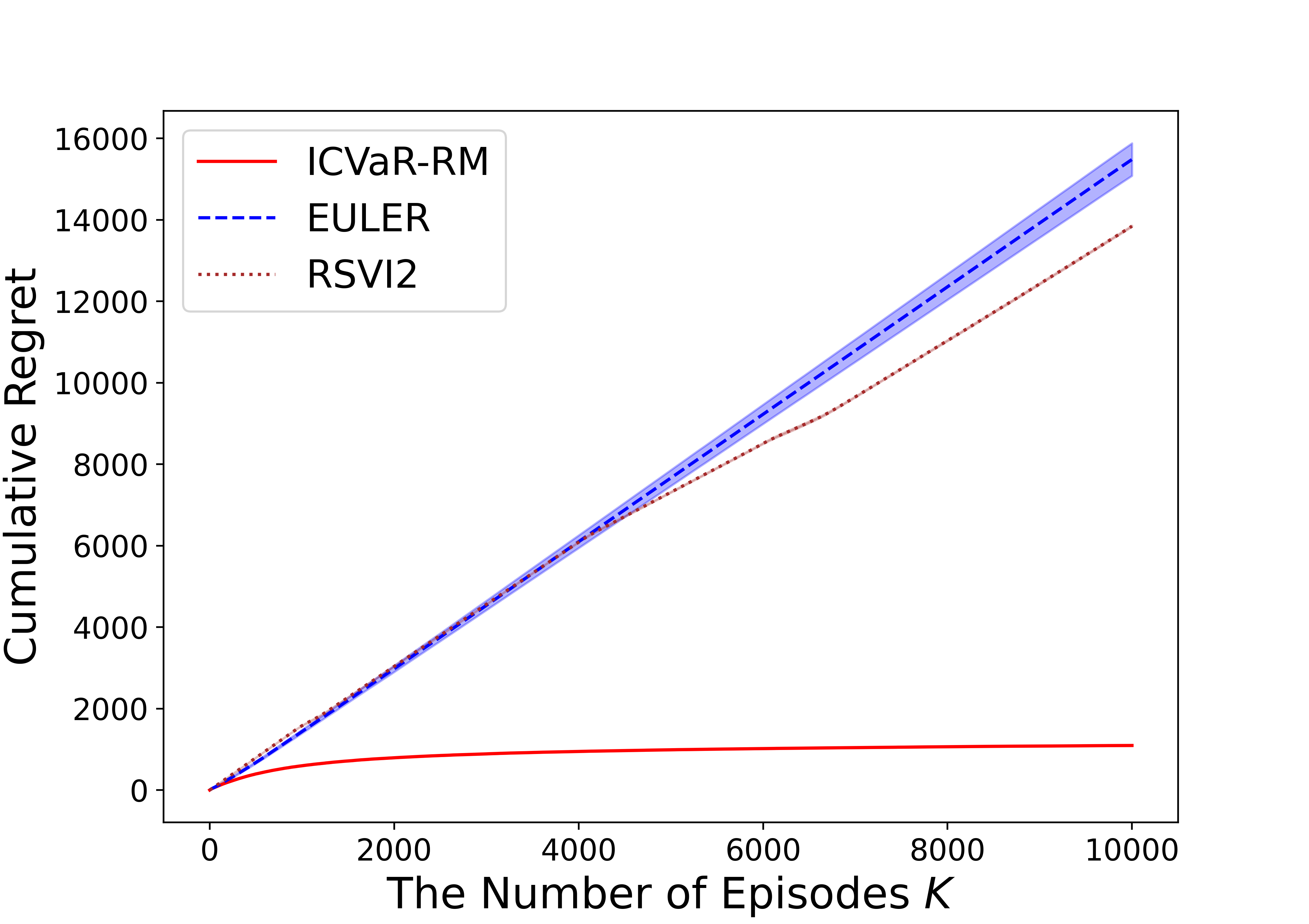} 
	}    
	\subfigure[$\delta=0.00005$] {    
		\includegraphics[width=0.31\textwidth]{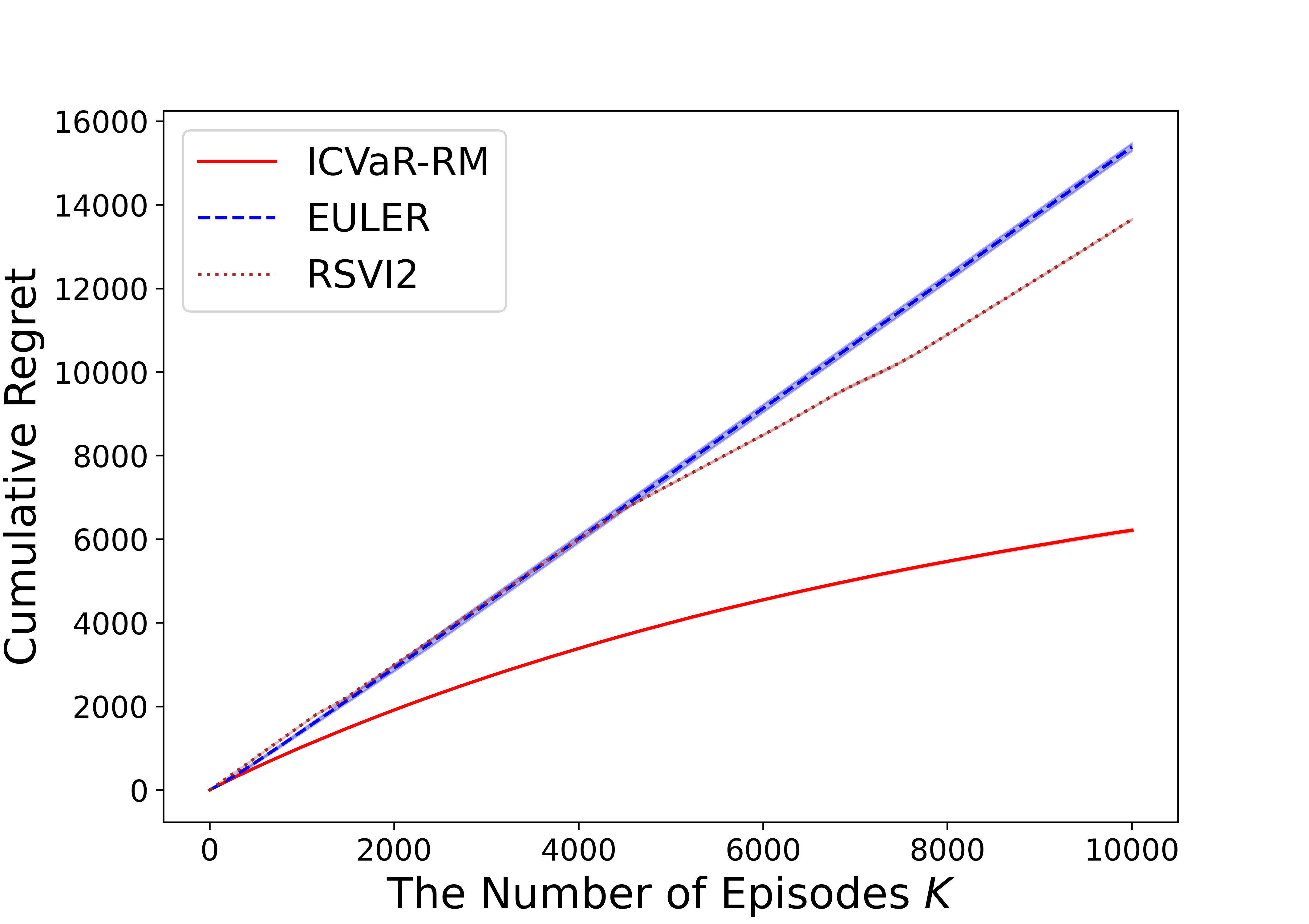} 
	}      
	\subfigure[$\delta=0.005$] {    
		\includegraphics[width=0.31\textwidth]{fig/Layer_3States_simulation20_episode10000_delta0.005_z0.050_UCB_Factor0.100_S13_A5_H5.png} 
	}    
	\subfigure[$\delta=0.5$] {    
		\includegraphics[width=0.31\textwidth]{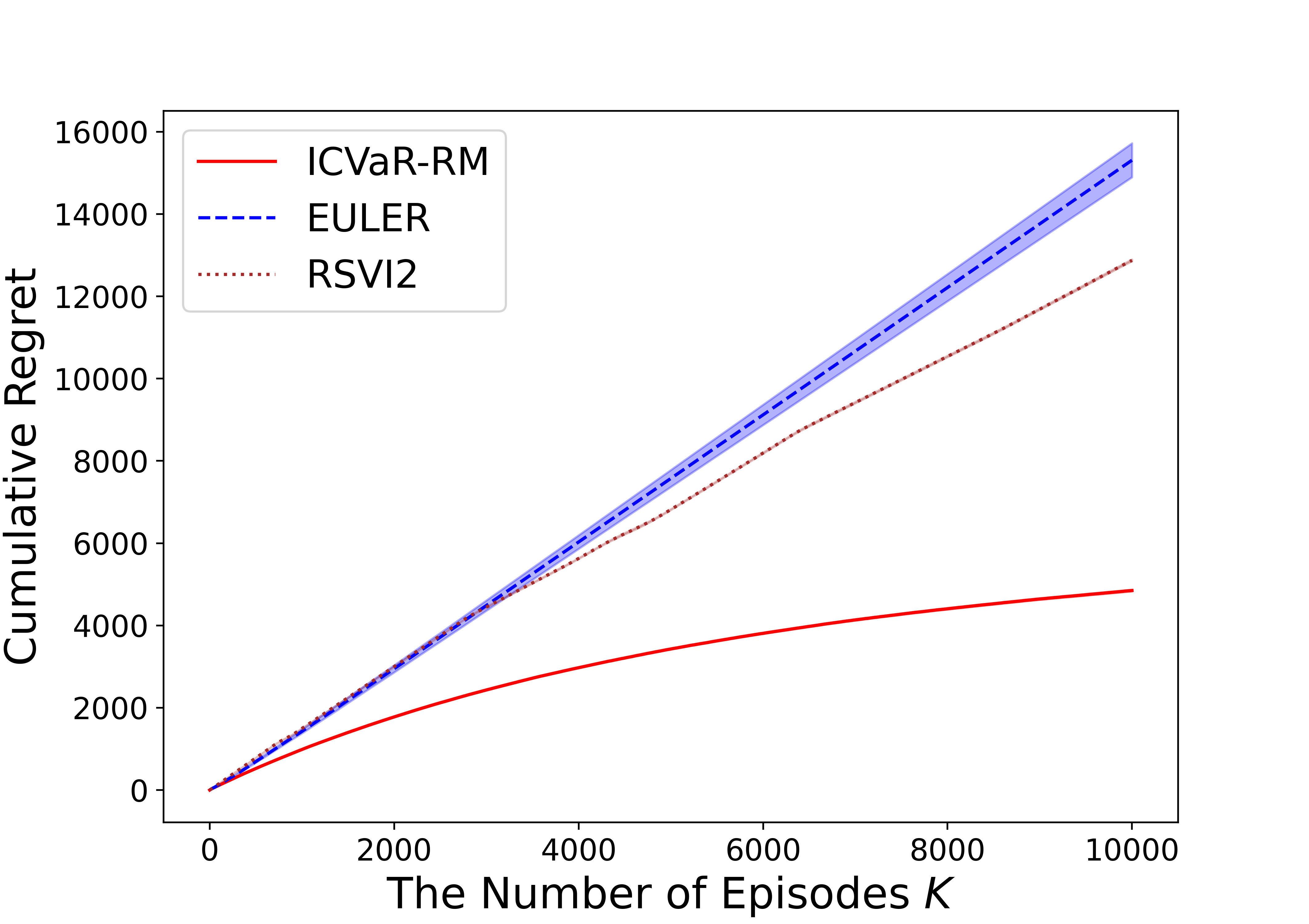} 
	}      
	\subfigure[$H=2$] {    
		\includegraphics[width=0.31\textwidth]{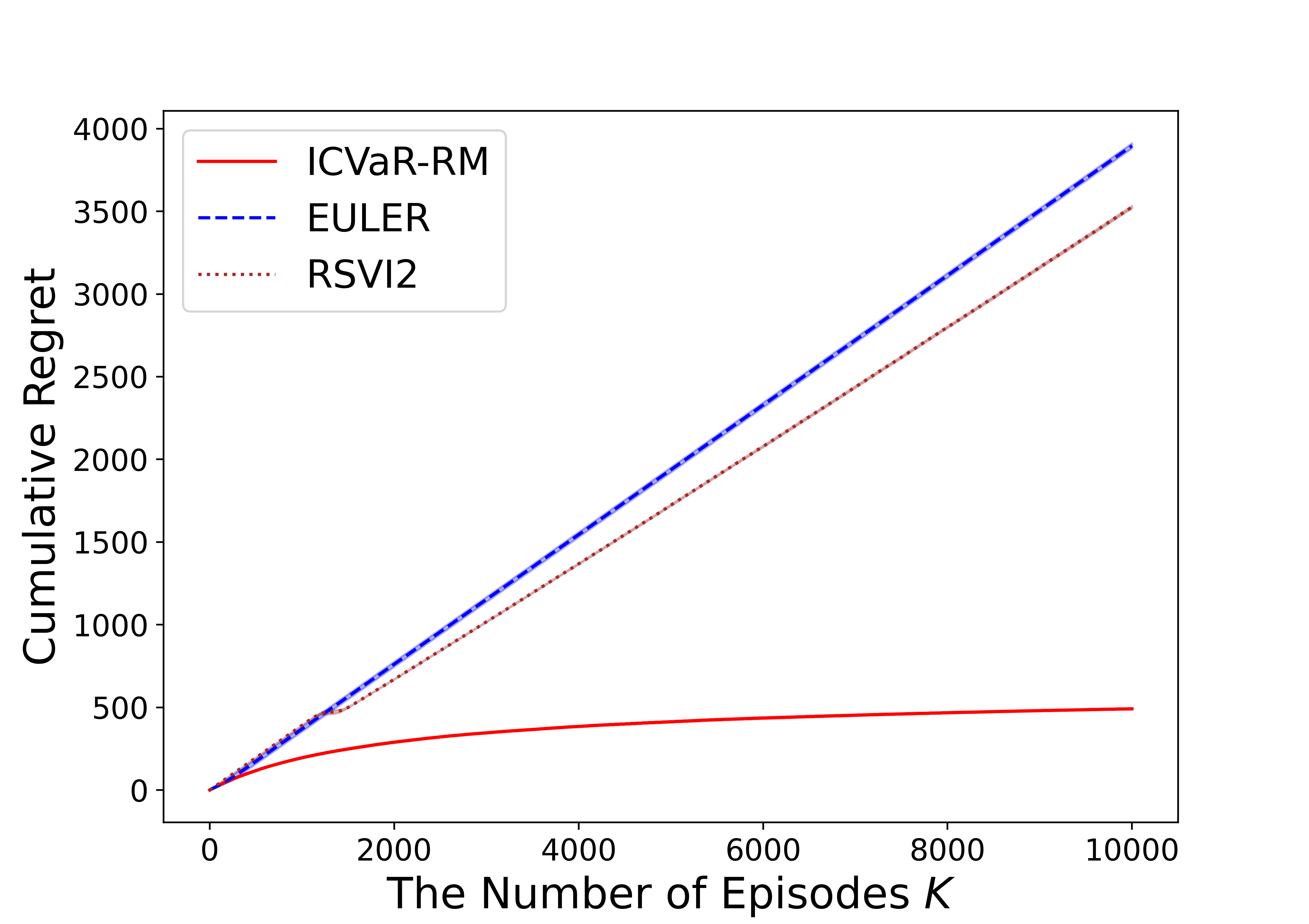} 
	}   
	\subfigure[$H=5$] {    
		\includegraphics[width=0.31\textwidth]{fig/Layer_3States_simulation20_episode10000_delta0.005_z0.050_UCB_Factor0.100_S13_A5_H5.png} 
	}    
	\subfigure[$H=10$] {    
		\includegraphics[width=0.31\textwidth]{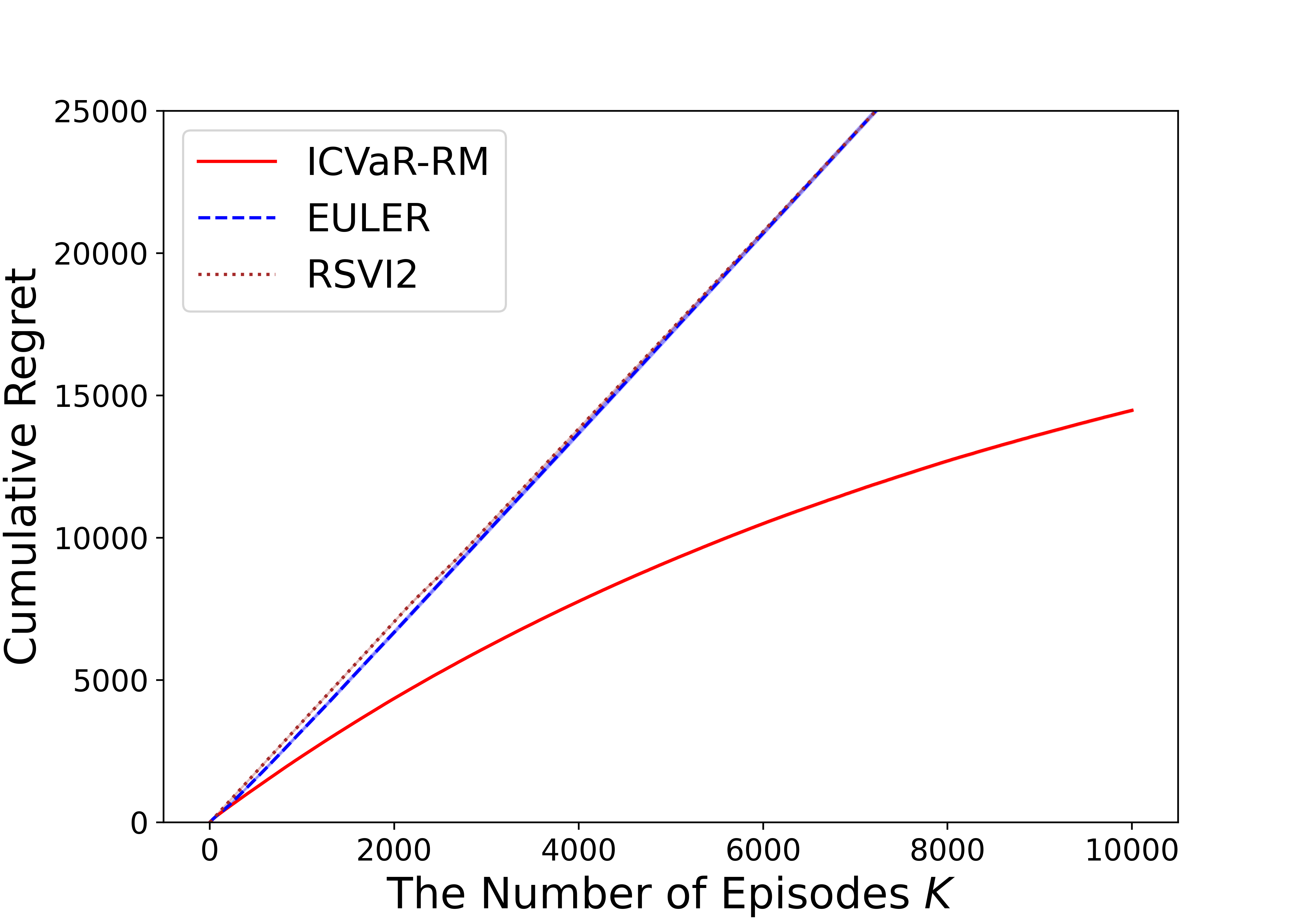} 
	}    
	\subfigure[$S=7$] {    
		\includegraphics[width=0.31\textwidth]{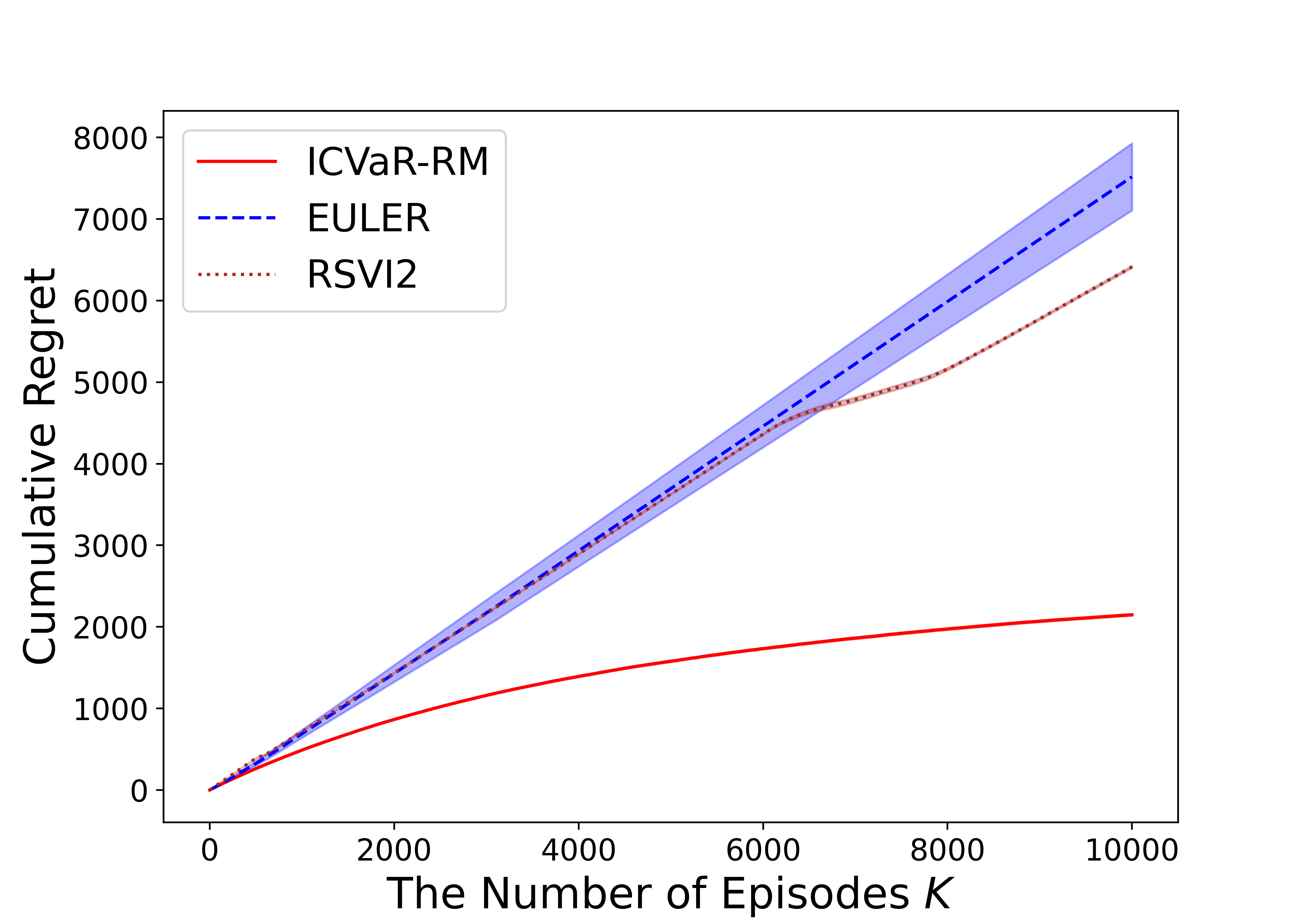} 
	}    
	\subfigure[$S=13$] {    
		\includegraphics[width=0.31\textwidth]{fig/Layer_3States_simulation20_episode10000_delta0.005_z0.050_UCB_Factor0.100_S13_A5_H5.png} 
	}    
	\subfigure[$S=25$] {    
		\includegraphics[width=0.31\textwidth]{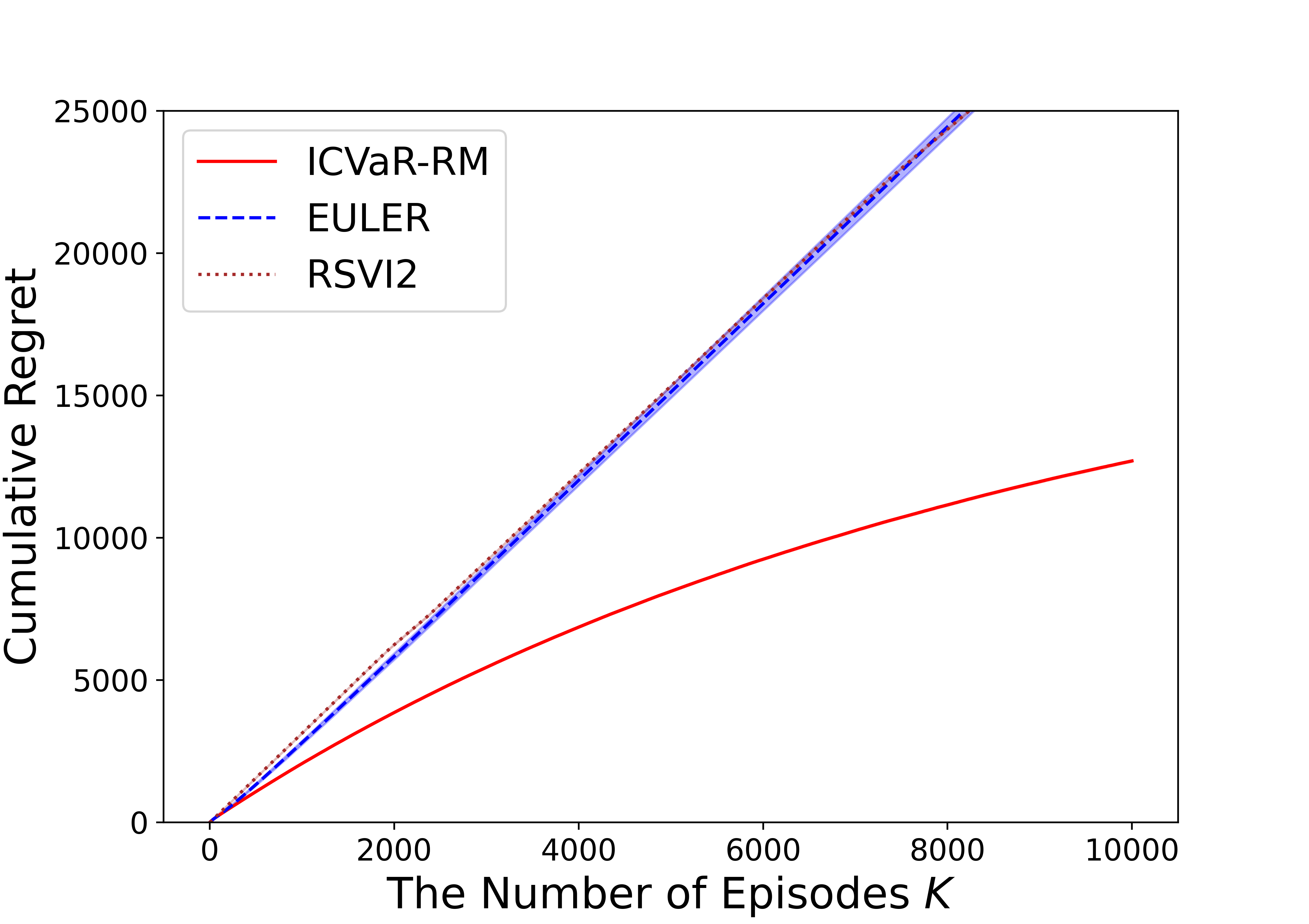} 
	}    
	\subfigure[$A=3$] {    
		\includegraphics[width=0.31\textwidth]{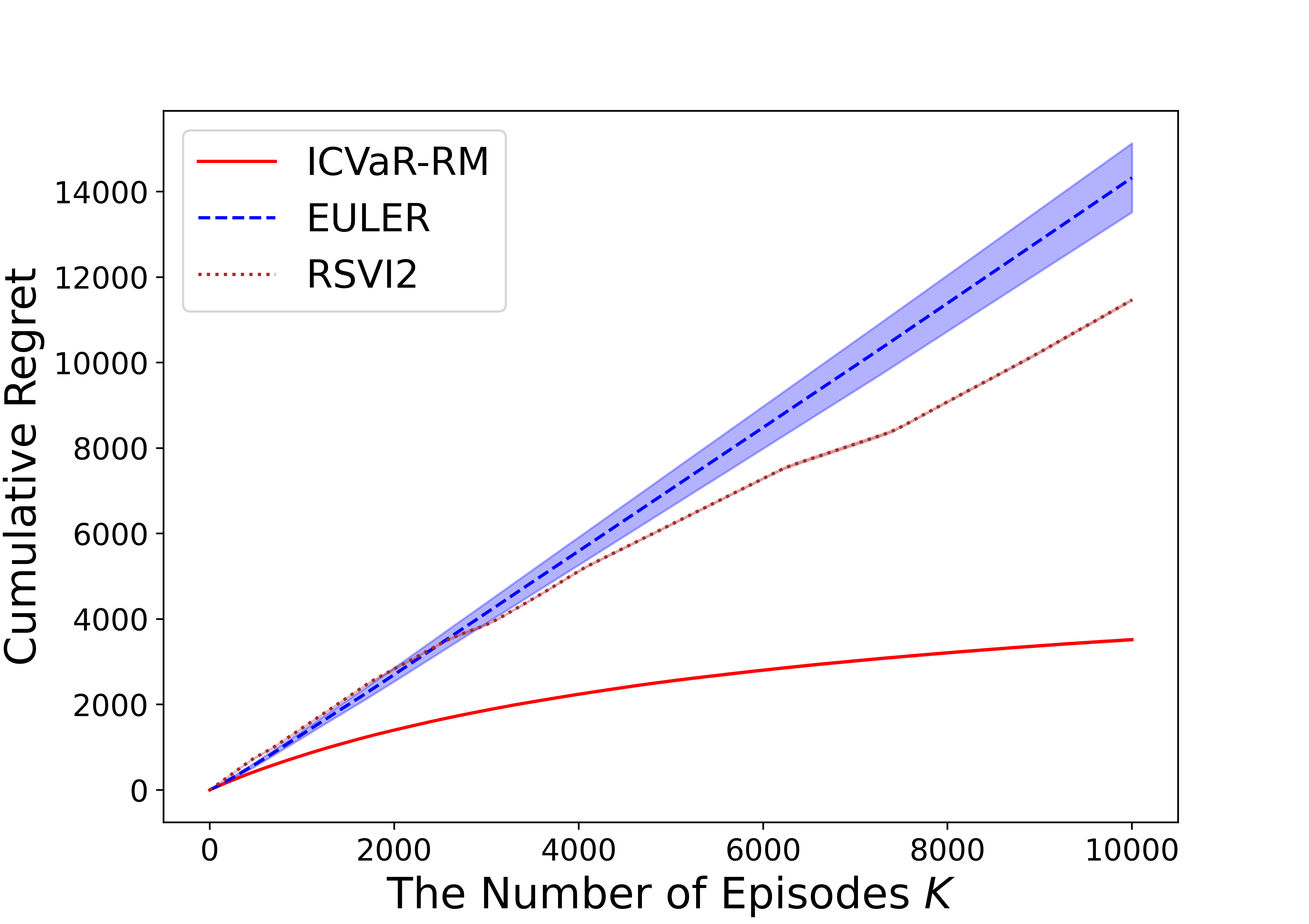} 
	}    
	\subfigure[$A=5$] {    
		\includegraphics[width=0.31\textwidth]{fig/Layer_3States_simulation20_episode10000_delta0.005_z0.050_UCB_Factor0.100_S13_A5_H5.png} 
	}    
	\subfigure[$A=12$] {    
		\includegraphics[width=0.31\textwidth]{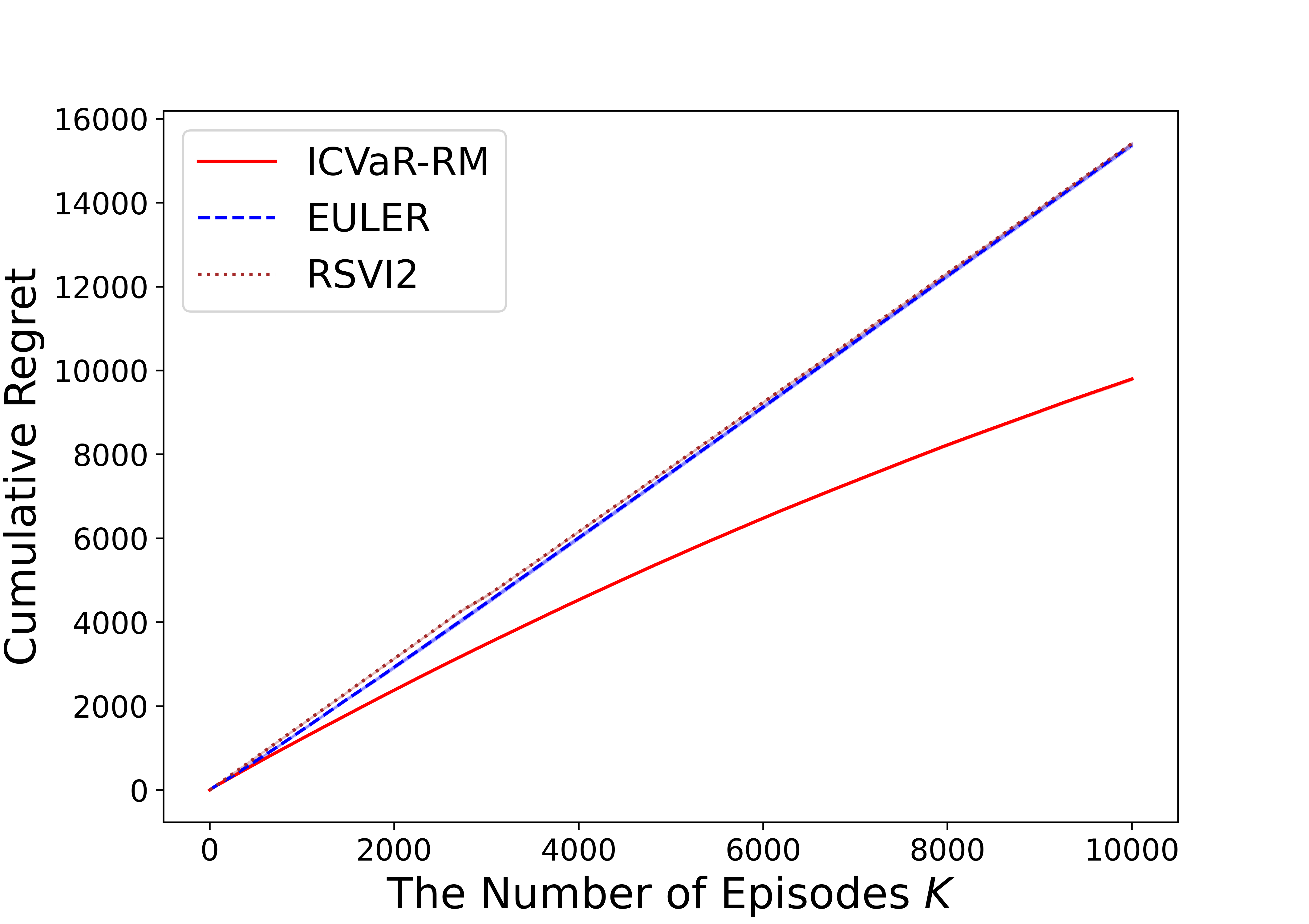} 
	}    
	\caption{Experimental results for Iterated CVaR RL.}
	\label{fig:experiments}
\end{figure}

\section{Related Work} \label{apx:related_work}

Below we present a complete review of related works.

\textbf{CVaR-based MDPs (Known Transition).}
\cite{boda2006time,ott2010markov,bauerle2011markov,haskell2015convex,chow2015risk} study the CVaR MDP problem where the objective is to minimize the CVaR of the total cost with known transition, and demonstrate that the optimal policy for CVaR MDP is history-dependent (not Markovian) and is inefficient to exactly compute.
\cite{hardy2004iterated} firstly define the Iterated CVaR measure, and prove that it is a coherent dynamic risk measure, and applicable to equity-linked insurance. \cite{osogami2012iterated,chu2014markov,bauerle2022markov} investigate iterated coherent risk measures (including Iterated CVaR) in MDPs, and prove the existence of Markovian optimal policies for these MDPs.
The above works focus mainly on designing planning algorithms and derive planning error guarantees for \emph{known} transition, while our work develops RL algorithms (interacting with the environment online) and provides regret and sample complexity guarantees for \emph{unknown} transition. 


\textbf{Risk-Sensitive Reinforcement Learning (Unknown Transition).} 
\cite{heger1994consideration,coraluppi1997mixed,coraluppi1999risk} consider minimizing the worst-case cost in RL, and present dynamic programming of value functions and heuristic algorithms without theoretical analysis.
\cite{borkar2001sensitivity,borkar2002q} study risk-sensitive RL with the exponential utility measure, and design algorithms based on actor–critic learning and Q-learning, respectively.
\cite{di2012policy,la2013actor} investigate variance-related risk measures, and devise policy gradient and actor-critic-based algorithms with convergence analysis. \cite{tamar2015optimizing} consider maximizing the CVaR of the total reward, and propose a sampling-based estimator for the CVaR gradient and a stochastic gradient decent algorithm to optimize CVaR.
\revision{\cite{keramati2020being} also investigate optimizing the CVaR of the total reward, and design an algorithm based on an optimistic version of the distributional Bellman operator.}
\cite{borkar2014risk,chow2014algorithms,chow2017risk} study how to minimize the expected total cost with CVaR-based constraints, and develop policy gradient, actor-critic and stochastic approximation-style algorithms.
The above works mainly give convergence analysis, and do not provide finite-time regret and sample complexity guarantees as in our work. 

To our best knowledge, there are only a few risk-sensitive RL works which provide finite-time regret analysis~\citep{fei2020near_optimal,fei2021exponential,fei2021function_approx}.
\cite{fei2020near_optimal} consider risk-sensitive RL with the exponential utility criterion, and propose algorithms based on logarithmic-exponential transformation and least-squares updates. \cite{fei2021exponential} further improve the regret bound in \citep{fei2020near_optimal} by developing an exponential Bellmen equation and a Bellman backup analytical procedure.
\cite{fei2021function_approx} extend the model and results in \citep{fei2020near_optimal,fei2021exponential} from the tabular setting to the function approximation framework. 
Our work is very different from the above works~\citep{fei2020near_optimal,fei2021exponential,fei2021function_approx} in formulation, algorithms and results.
The above works~\citep{fei2020near_optimal,fei2021exponential,fei2021function_approx} use the exponential utility criterion to characterize the risk and take all successor states into account in decision making. They design algorithms based on exponential Bellmen equations and doubly decaying exploration bonuses.
In contrast, we interpret the risk by the Iterated CVaR criterion, which primarily concerns the worst $\alpha$-portion successor states. We develop algorithms using CVaR-adapted exploration bonuses.

The works we discuss above fall in the literature of RL with risk-sensitive criteria. There are also other RL works which focus on state-wise safety. \cite{cheng2019end} utilize control barrier functions (CBFs) to ensure the agent within a set of safe sets and guide the learning by constraining explorable polices. \cite{fatemi2019dead,fatemi2021medical} define the notion of dead-end states (which lead to suboptimal terminal state with probability $1$ in finite steps) and aim to avoid getting into dead-end states. 
The formulations and algorithms in these works greatly differ from ours, and they do not provide finite-time regret and sample complexity analysis as us.
We refer interested readers to the survey~\citep{garcia2015comprehensive} for detailed categorization and discussion on safe RL.

\section{More Discussion on Iterated CVaR RL}

In this section, we first present the expanded value function definitions for Iterated CVaR RL. Then, we compare Iterated CVaR RL with existing risk-sensitive MDP models, including CVaR MDP~\citep{boda2006time,ott2010markov,bauerle2011markov,chow2015risk} and the exponential utility-based RL~\citep{fei2020near_optimal,fei2021exponential}.

\revision{
	\subsection{Value Function Definitions for Iterated CVaR RL} \label{apx:expanded_value_function}
	
	The value function definition for Iterated CVaR RL, i.e., Eq. (i) in Section~\ref{sec:formulation}, can be expanded as
	
	\begin{align*}
		Q^{\pi}_h(s,a)& = r(s,a) + \cvar^{\alpha}_{s_{h+1} \sim p(\cdot|s,a)} \bigg(r(s_{h+1},\pi_{h+1}(s_{h+1})) \\& 
		\!\!\!+\cvar^{\alpha}_{s_{h+2} \sim p(\cdot|s_{h+1},\pi_{h+1}(s_{h+1}))} \Big(\dots 
		\cvar^{\alpha}_{s_{H} \sim p(\cdot|s_{H-1},\pi_{H-1}(s_{H-1}) )}(r(s_{H},\pi_{H}(s_{H}) ) ) \Big) \bigg) ,
		\\
		V^{\pi}_h(s)&= r(s, \pi_h(s)) + \cvar^{\alpha}_{s_{h+1} \sim p(\cdot|s,\pi_h(s))} \bigg(r(s_{h+1},\pi_{h+1}(s_{h+1})) \\& 
		\!\!\!+\cvar^{\alpha}_{s_{h+2} \sim p(\cdot|s_{h+1},\pi_{h+1}(s_{h+1}))} \Big(\dots 
		\cvar^{\alpha}_{s_{H} \sim p(\cdot|s_{H-1},\pi_{H-1}(s_{H-1}))}(r(s_{H},\pi_{H}(s_{H}))) \Big) \bigg) .
	\end{align*}
	
	Similarly, the optimal value function definition, e.g., Eq. (ii) in Section~\ref{sec:formulation}, can be expanded as 
	
	\begin{align}
		& Q^{*}_h(s,a) \!=\! \max_{\pi} \! \Bigg\{ r(s,a) + \cvar^{\alpha}_{s_{h+1} \sim p(\cdot|s,a)} \bigg(r(s_{h+1},\pi_{h+1}(s_{h+1})) \nonumber\\& 
		+\cvar^{\alpha}_{s_{h+2} \sim p(\cdot|s_{h+1},\pi_{h+1}(s_{h+1}))} \Big(\! \dots \! 
		\cvar^{\alpha}_{s_{H} \sim p(\cdot|s_{H-1},\pi_{H-1}(s_{H-1}) )}(r(s_{H},\pi_{H}(s_{H}) ) ) \Big) \bigg) \Bigg\} ,
		\nonumber\\
		& V^{*}_h(s) \!=\! \max_{\pi} \! \Bigg\{ r(s, \pi_h(s)) + \cvar^{\alpha}_{s_{h+1} \sim p(\cdot|s,\pi_h(s))} \bigg(r(s_{h+1},\pi_{h+1}(s_{h+1})) \nonumber\\& 
		+\cvar^{\alpha}_{s_{h+2} \sim p(\cdot|s_{h+1},\pi_{h+1}(s_{h+1}))} \Big(\! \dots \!
		\cvar^{\alpha}_{s_{H} \sim p(\cdot|s_{H-1},\pi_{H-1}(s_{H-1}))}(r(s_{H},\pi_{H}(s_{H}))) \Big) \bigg) \Bigg\} . \label{eq:expanded_optimal_value_function}
	\end{align}

	From the above value function definitions, we can see that, Iterated CVaR RL aims to maximize the worst $\alpha$-portion tail of the reward-to-go at each step, i.e., taking the CVaR operator on the reward-to-go at each step. 
	Intuitively, Iterated CVaR RL wants to optimize the performance even when bad situations happen at each decision stage.
}

\revision{
	\subsection{Comparison with CVaR MDP} \label{apx:comparison_cvar}
	
	\begin{figure} [t!]
		\centering       
		\includegraphics[width=0.8\textwidth]{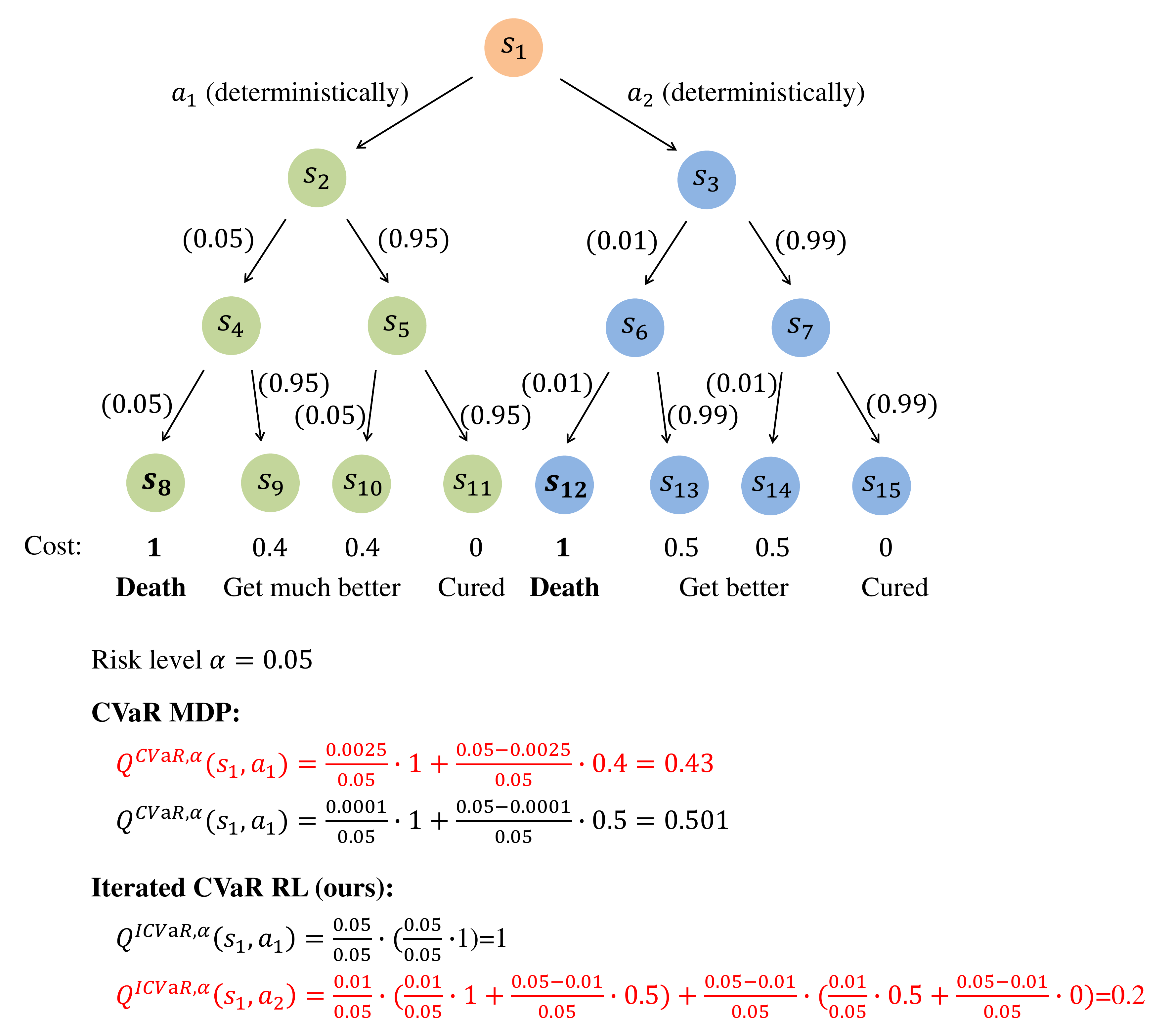} 
		\caption{Illustrating example for the comparison between CVaR MDP and Iterated CVaR RL.}
		\label{fig:comparison_cvar}
	\end{figure}
	
	The objective of CVaR MDP, e.g., \citep{boda2006time,ott2010markov,bauerle2011markov,chow2015risk},\yihan{added ``e.g.,'' for the references of CVaR MDP} is to maximize the worst $\alpha$-portion of the \emph{total} reward, which is formally defined as
	\begin{align*}
		\max_{\pi} \ \cvar^{\alpha}_{(s_h,a_h) \sim p,\pi} \sbr{\sum_{h=1}^{H} r(s_h,a_h)} .
	\end{align*}
	
	Compared to our Iterated CVaR RL (Eq.~\eqref{eq:expanded_optimal_value_function} and Eq.~(ii) in Section~\ref{sec:formulation}) which concerns bad situations \emph{at each step}, CVaR MDP takes more cumulative reward into account\yihan{changed ``overall performance'' to ``cumulative reward''} and prefers actions which have better performance in general, but can have larger probabilities of getting into catastrophic states. Thus, CVaR MDP is suitable for scenarios where bad situations lead to a higher cost but not fatal damage, e.g., finance. 
	In contrast, Iterated CVaR RL prefers actions which have smaller probabilities of getting into catastrophic states. 
	Hence, Iterated CVaR RL is most suitable for safety-critical applications, where catastrophic states are unacceptable and need to be carefully avoid, e.g., clinical treatment planning. 
	
	We emphasize that Iterated CVaR is \emph{not equivalent to} simply taking the worst $\alpha^H$-portion of the total reward. In fact, the good $(1-\alpha^H)$-portion of the total reward also contributes to Iterated CVaR. This is because Iterated CVaR accounts bad situations for all states (both good and bad states) in its iterated computation, instead of just considering bad situations upon bad states.
	
	Below we provide an example of clinical treatment planning to illustrate the difference between Iterated CVaR and CVaR MDP. Here we interpret the objective as cost minimization for ease of understanding, and set the risk level $\alpha=0.05$.

	Consider a 4-layered binary tree-structured MDP shown in Figure~\ref{fig:comparison_cvar}. The state sets in layers 1, 2, 3 and 4 are $\{s_1\}$, $\{s_2,s_3\}$, $\{s_4,\dots,s_7\}$ and $\{s_8,\dots,s_{15}\}$,  respectively. There are two actions $a_1,a_2$ in each state, and $a_1,a_2$ have the same transition distribution in all states except the initial state $s_1$. Thus, a policy is to decide whether to choose $a_1$ or $a_2$ in state $s_1$, which leads to different subsequent costs.

	The agent starts from the initial state $s_1$ in layer 1. 
	If the agent takes action $a_1$, she will transition to state $s_2$ deterministically, and goes into the left sub-tree. On the other hand, if the agent takes action $a_2$ in state $s_1$, she will transition to state $s_3$ deterministically, and enters the right sub-tree.
	
	If the agent goes into the left sub-tree (state $s_2$) in layer 2, she will transition to $s_4$ and $s_5$ in layer 3 with probabilities $0.05$ and $0.95$, respectively. 
	Then, if she starts from state $s_4$ in layer 3, she will transition to $s_8$ and $s_9$ in layer 4 with probabilities $0.05$ and $0.95$, respectively. 
	Otherwise, if she starts from state $s_5$ in layer 3, she will transition to $s_{10}$ and $s_{11}$ in layer 4 with probabilities $0.05$ and $0.95$, respectively.
	
	On the other hand, if the agent goes into the right sub-tree (state $s_3$) in layer 2, she will transition to $s_6$ and $s_7$ in layer 3 with probabilities $0.01$ and $0.99$, respectively. 
	Then, if she starts from state $s_6$ in layer 3, she will transition to $s_{12}$ and $s_{13}$ in layer 4 with probabilities $0.01$ and $0.99$, respectively. 
	Otherwise, if she starts from state $s_7$ in layer 3, she will transition to $s_{14}$ and $s_{15}$ in layer 4 with probabilities $0.01$ and $0.99$, respectively.
	
	
	The costs are state-dependent, and only the states in layer 4 produce non-zero costs. To be concrete, we use the clinical trial example and the costs represent the patient status. Specifically, in layer 4, $s_8$ and $s_{12}$ give costs 1, which denote \emph{death}. $s_{13}$ and $s_{14}$ produce costs 0.5, which means the patient is \emph{getting better}. $s_9$ and $s_{10}$ induce costs 0.4, which denote that the patient \emph{gets much better}. $s_{11}$ and $s_{15}$ produce costs 0, which stand for that the patient is \emph{fully cured}.

	Under the CVaR criterion, we have that 
	$$
	Q^{\cvar, \alpha}(s_1,a_1)=\frac{0.0025}{0.05} \cdot 1+\frac{0.05-0.0025}{0.05} \cdot 0.4=0.43 ,
	$$
	and 
	$$
	Q^{\cvar, \alpha}(s_1,a_2)=\frac{0.0001}{0.05} \cdot 1+\frac{0.05-0.0001}{0.05} \cdot 0.5=0.501 .
	$$
	
	Thus, CVaR MDP will choose action $a_1$ (and goes into the left sub-tree), since $a_1$ leads to better medium states $s_9$ and $s_{10}$, which give a lower cost $0.4$ than the cost $0.5$ produced by the right sub-tree.
	
	On the other hand, under the Iterated CVaR criterion, we have that
	$$
	Q^{\icvar, \alpha}(s_1,a_1)=\frac{0.05}{0.05} \cdot Q^{\icvar, \alpha}(s_4,\cdot)=\frac{0.05}{0.05} \cdot \left( \frac{0.05}{0.05} \cdot Q^{\icvar, \alpha}(s_8,\cdot) \right)=\frac{0.05}{0.05} \cdot \left( \frac{0.05}{0.05} \cdot 1 \right)=1 ,
	$$ 
	and 
	\begin{align*}
		&Q^{\icvar, \alpha}(s_1,a_2)
		\\
		=&\frac{0.01}{0.05} \cdot Q^{\icvar, \alpha}(s_6,\cdot) + \frac{0.05-0.01}{0.05} \cdot Q^{\icvar, \alpha}(s_7,\cdot) 
		\\
		=&\frac{0.01}{0.05} \cdot \left( \frac{0.01}{0.05} \cdot Q^{\icvar, \alpha}(s_{12},\cdot) + \frac{0.05-0.01}{0.05} \cdot Q^{\icvar, \alpha}(s_{13},\cdot) \right) \\& + \frac{0.05-0.01}{0.05} \cdot \left( \frac{0.01}{0.05} \cdot Q^{\icvar, \alpha}(s_{14},\cdot) + \frac{0.05-0.01}{0.05} \cdot Q^{\icvar, \alpha}(s_{15},\cdot) \right) 
		\\
		=&\frac{0.01}{0.05} \cdot \left( \frac{0.01}{0.05} \cdot 1 + \frac{0.05-0.01}{0.05} \cdot 0.5 \right) + \frac{0.05-0.01}{0.05} \cdot \left( \frac{0.01}{0.05} \cdot 0.5 + \frac{0.05-0.01}{0.05} \cdot 0 \right) 
		\\
		=&0.2 .
	\end{align*}
	Thus, Iterated CVaR RL will instead choose action $a_2$, because $a_2$ has a smaller probability of going into the bad left direction (which leads to the catastrophic state $s_{12}$).

	The above example shows that, Iterated CVaR RL prefers actions with a smaller probability of getting into catastrophic states. In contrast, CVaR MDP favors actions with better average therapeutic effects, but has a larger probability of causing death. 
	
	Note that the above example also demonstrates that Iterated CVaR is \emph{not equivalent} to the worst $\alpha^{H}$-portion of the total cost. To see this, we have that (here we consider $\alpha^{3}$ because  there are $3$ transition steps): 
	$$
	Q^{\cvar, \alpha^{3}}(s_1,a_2)=\frac{0.0001}{0.000125} \cdot 1+\frac{0.000125-0.0001}{0.000125} \cdot 0.5=0.9 ,
	$$
	and 
	\begin{align*}
		& Q^{\icvar, \alpha}(s_1,a_2)
		\\
		=& \frac{0.01}{0.05} \cdot \left( \frac{0.01}{0.05} \cdot 1 + \frac{0.05-0.01}{0.05} \cdot 0.5 \right) + \frac{0.05-0.01}{0.05} \cdot \left( \frac{0.01}{0.05} \cdot 0.5 + \frac{0.05-0.01}{0.05} \cdot 0 \right) 
		\\
		=& 0.2 .
	\end{align*}
	In addition, one can see that, the good state which gives cost 0 (i.e., $s_{15}$) also contributes to $Q^{\icvar, \alpha}(s_1,a_2)$, which shows that the good $(1-\alpha^H)$-portion of the total cost also matters for Iterated CVaR.
}

\subsection{Comparison with exponential utility-based risk-sensitive RL} \label{apx:comparison_exp_uti}

The Bellman optimality equation for risk-sensitive RL with the exponential utility criterion~\citep{fei2020near_optimal,fei2021exponential} is defined as
$$
Q^*_h(s,a)=r_h(s,a)+\frac{1}{\beta} \log \{ \mathbb{E}_{s' \sim p(\cdot|s,a)} [ \exp (\beta \cdot V^*_{h+1}(s')) ] \} ,
$$
which takes all successor states $s'$ into account, i.e., all successor states $s'$ contribute to the computation of the Q-value. Here $\beta<0$ is a risk-sensitivity parameter.

In contrast, in Iterated CVaR RL, the Bellman optimality equation is defined as
$$
Q^{*}_h(s,a) = r(s,a)+ \cvar^{\alpha}_{s' \sim p(\cdot|s,a)}(V^{*}_{h+1}(s')) ,
$$
which focuses only on the worst $\alpha$-portion successor states $s'$ (i.e., with the lowest $\alpha$-portion values $V^{*}_{h+1}(s')$), i.e., only the worst $\alpha$-portion successor states $s'$ contribute to the computation of the Q-value. 

Besides the formulation, our algorithm design and results are also very different from those in \citep{fei2020near_optimal,fei2021exponential}. The algorithms in \citep{fei2020near_optimal,fei2021exponential} are based on exponential Bellman equations and doubly decaying exploration bonuses, and their results depend on $\exp(|\beta|H)$. In contrast, our algorithms are based on value iteration for Iterated CVaR with CVaR-adapted exploration bonuses, and our results depend on the minimum between an MDP-intrinsic visitation measure $1/\min_{\begin{subarray}{c}\pi,h,s:\  w_{\pi,h}(s)>0\end{subarray}} w_{\pi,h}(s)$ and a risk-level-dependent factor $1/\alpha^{H-1}$.

\section{Proofs for Iterated CVaR RL with Regret Minimization}

In this section, we present the proofs of regret upper and lower bounds (Theorems~\ref{thm:cvar_rm_ub} and \ref{thm:cvar_rm_lb}) for Iterated CVaR RL-RM.

\subsection{Proofs of Regret Upper Bound} \label{apx:regret_ub}


\subsubsection{Concentration}

For any $k>0$, $h \in [H]$ and $(s,a) \in \cS \times \cA$, let $n_{kh}(s,a)$ denote the number of times that $(s,a)$ was visited at step $h$ before episode $k$, and let $n_{k}(s,a):=\sum_{h=1}^{H} n_{kh}(s,a)$ denote the number of times that $(s,a)$ was visited before episode $k$.

\begin{lemma}[Concentration for $V^{*}$] \label{lemma:concentration_V_star}
	It holds that
	\begin{align*}
		\Pr \Bigg[ & \abr{\cvar^{\alpha}_{s' \sim \hat{p}^k(\cdot|s,a)}(V^{*}_h(s')) - \cvar^{\alpha}_{s' \sim p(\cdot|s,a)}(V^{*}_h(s'))} \leq \frac{H}{\alpha}\sqrt{\frac{ \log \sbr{\frac{KHSA}{\delta'}} }{n_k(s,a)}} ,
		\\ & \forall k \in [K],\ \forall h \in [H],\ \forall (s,a) \in \cS \times \cA \Bigg] \geq 1-2\delta' .
	\end{align*}
\end{lemma}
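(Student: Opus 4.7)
The plan is to reduce this uniform bound to a single-distribution CVaR concentration inequality and then union-bound over all tuples. First, I would fix $(k,h,s,a)$ and exploit the crucial fact that $V^{*}_h:\cS\to[0,H]$ is deterministic---it depends on the true MDP $(p,r)$ but not on any collected data. Consequently $\cvar^{\alpha}_{s'\sim \hat{p}^k(\cdot|s,a)}(V^{*}_h(s'))$ is precisely the empirical $\alpha$-CVaR of the bounded random variable $V^{*}_h(s')$, $s'\sim p(\cdot|s,a)$, evaluated on the $n_k(s,a)$ i.i.d.\ next-state samples used to form $\hat{p}^k(\cdot|s,a)$.

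Next, I would invoke a Brown-type concentration inequality for the empirical CVaR of a bounded random variable \citep{brown2007large}: for any fixed $n$ i.i.d.\ samples of $X\in[0,H]$, the empirical $\alpha$-CVaR deviates from $\cvar^{\alpha}(X)$ by at most $\tfrac{H}{\alpha}\sqrt{\log(1/\delta)/n}$ with probability at least $1-2\delta$, where the factor $2$ combines the two one-sided tails. Since $n_k(s,a)$ is itself a random, data-dependent counter, this fixed-$n$ statement cannot be applied with $n=n_k(s,a)$ directly. I would resolve this via a peeling argument: for each $(h,s,a)$ and each integer $n\in\{1,\dots,KH\}$, apply the fixed-$n$ inequality with per-event failure probability $\delta'/(HSA\cdot KH)$ to the empirical CVaR of $V^{*}_h$ computed from the first $n$ samples of $p(\cdot|s,a)$. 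A union bound over all such $(h,s,a,n)$ yields an event, of probability at least $1-2\delta'$, on which the inequality holds simultaneously; specializing to $n=n_k(s,a)$ gives the stated uniform bound over $(k,h,s,a)$, and the extra $\log(KH)$ from the peeling is absorbed into the logarithmic factor $\log(KHSA/\delta')$ (up to a constant the lemma hides inside it).

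The principal technical hurdle is the last step: the random stopping count $n_k(s,a)$ depends on the algorithm's entire trajectory, so the fixed-$n$ CVaR concentration cannot be invoked directly. The peeling/union-bound step above (equivalently, a time-uniform martingale-style concentration such as Ville's or a stitched LIL-type bound) resolves it, but must be calibrated so that the per-event $\delta$ is small enough to absorb the union over all $(h,s,a,n)$ while still producing only a constant blow-up inside the logarithm. Everything else---the deterministic nature of $V^{*}_h$, the reduction to an i.i.d.\ empirical CVaR, and the invocation of the bounded-range Brown inequality---is routine.
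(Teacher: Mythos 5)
Your proposal is correct and follows essentially the same route as the paper: the paper's proof is exactly ``Brown's inequality for the empirical CVaR of the bounded variable $V^{*}_h(s')$, then a union bound over $(s,a)\in\cS\times\cA$ and over all possible counts $n_k(s,a)\in[KH]$,'' which is your reduction-plus-peeling argument spelled out. The only (shared) looseness is the exact bookkeeping of the union over $h$ inside the logarithmic factor, which affects the bound only by a constant.
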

\begin{proof}[Proof of Lemma~\ref{lemma:concentration_V_star}]
	Using Brown's inequality~\citep{brown2007large} (Theorem 2 in \citep{thomas2019concentration}) and a union bound over $(s,a) \in \cS \times \cA$ and $n_k(s,a) \in [KH]$, we can obtain this lemma.
\end{proof}

For any risk level $\alpha \in (0,1]$, function $V: \cS \mapsto \R$ and $(s',s,a) \in \cS \times \cS \times \cA$, $\beta^{\alpha,V}(s'|s,a)$ is the conditional transition probability from $(s,a)$ to $s'$, conditioning on transitioning to the worst $\alpha$-portion successor states $s'$ (i.e., with the lowest $\alpha$-portion values $V(s')$). 
Let $\mu^{\alpha,V}(s'|s,a)$ denote how large the transition probability of successor state $s'$ belongs to the worst $\alpha$-portion, which satisfies that  $\frac{\mu^{\alpha,V}(s'|s,a)}{\alpha} = \beta^{\alpha,V}(s'|s,a)$ and $\sum_{s' \in \cS} \mu^{\alpha,V}(s'|s,a)=\alpha$.
In addition, for any risk level $\alpha \in (0,1]$, function $V: \cS \mapsto \R$ and $(s,a) \in \cS \times \cA$,
\begin{align*}
	\cvar^{\alpha}_{s' \sim p(\cdot|s,a)}(V(s')) = & \frac{\sum_{s' \in \cS} \mu^{\alpha,V}(s'|s,a) \cdot V(s')}{\alpha} = \sum_{s' \in \cS} \beta^{\alpha,V}(s'|s,a) \cdot V(s') .
\end{align*}

\begin{lemma}[Concentration for any $V$] \label{lemma:concentration_any_V} 
	It holds that
	\begin{align*}
		\Bigg[ &\abr{\cvar^{\alpha}_{s' \sim \hat{p}^k(\cdot|s,a)}(V(s')) - \cvar^{\alpha}_{s' \sim p(\cdot|s,a)}(V(s'))} \leq \frac{2H}{\alpha}\sqrt{\frac{2S\log \sbr{\frac{KHSA}{\delta'}}}{n_k(s,a)}} ,
		\\ & \forall V: \cS \mapsto [0,H],\ \forall k \in [K],\ \forall (s,a) \in \cS \times \cA \Bigg] \geq 1-2\delta' .
	\end{align*}
\end{lemma}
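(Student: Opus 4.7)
The key difference from Lemma~\ref{lemma:concentration_V_star} is that the bound must hold \emph{uniformly} over all value functions $V:\cS\mapsto[0,H]$, not just for the fixed function $V^{*}_h$. This rules out a direct union bound over $V$ in Brown's inequality, so the plan is to decouple the CVaR estimation error from $V$ by passing through an $L_1$ bound on the transition kernel, which is itself uniform in $V$.

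First, I would use the Rockafellar--Uryasev variational representation already stated in the paper,
\[
\cvar^{\alpha}_{s'\sim q(\cdot|s,a)}(V(s')) \;=\; \sup_{x\in\R}\Big\{\, x-\tfrac{1}{\alpha}\sum_{s'}q(s'|s,a)\,(x-V(s'))^{+}\,\Big\},
\]
valid for $q=p$ and $q=\hat p^k$. Since $V(s')\in[0,H]$, the supremum in $x$ is attained on $[0,H]$, and for every such $x$ the function $s'\mapsto(x-V(s'))^{+}$ takes values in $[0,H]$. Taking the difference between the two suprema and applying the standard $|\sup f-\sup g|\le\sup|f-g|$ inequality, I obtain, for every fixed $(s,a)$ and every $V\in[0,H]^{S}$,
\[
\bigl|\cvar^{\alpha}_{\hat p^k}(V)-\cvar^{\alpha}_{p}(V)\bigr| \;\le\; \tfrac{1}{\alpha}\sup_{x\in[0,H]}\Big|\textstyle\sum_{s'}(\hat p^k(s'|s,a)-p(s'|s,a))(x-V(s'))^{+}\Big| \;\le\; \tfrac{H}{\alpha}\,\|\hat p^k(\cdot|s,a)-p(\cdot|s,a)\|_{1}.
\]
Crucially, the right-hand side is independent of $V$, which is precisely what enables uniformity.

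Second, I would invoke Weissman's $L_1$ concentration inequality for empirical distributions on $S$ outcomes: with probability at least $1-\delta$, $\|\hat p^k(\cdot|s,a)-p(\cdot|s,a)\|_{1}\le\sqrt{2S\log(2/\delta)/n_k(s,a)}$. A union bound over $(s,a)\in\cS\times\cA$ and over the possible values $n_k(s,a)\in\{1,\dots,KH\}$ (with $\delta$ replaced by something like $\delta'/(KHSA)$) yields, with probability at least $1-2\delta'$,
\[
\|\hat p^k(\cdot|s,a)-p(\cdot|s,a)\|_{1} \;\le\; 2\sqrt{\tfrac{2S\log(KHSA/\delta')}{n_k(s,a)}}\qquad \forall k,(s,a),
\]
after absorbing constants. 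Combining this with the deterministic $L_1$-to-CVaR inequality from the previous paragraph gives the claimed bound, uniformly over every $V:\cS\mapsto[0,H]$.

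The only subtle point, and the main thing worth checking carefully, is the reduction from CVaR deviation to $L_1$ deviation via the variational formula --- in particular ensuring that the supremum in $x$ can be restricted to a bounded interval so that the integrand $(x-V(s'))^{+}$ is uniformly bounded by $H$ (otherwise the $L_1$ bound would pick up an unbounded factor). Apart from this, the argument is routine: Weissman's inequality plus a union bound over $(s,a)$ and the at most $KH$ possible sample counts handles the probabilistic step, and because the resulting bound depends on $V$ only through the prefactor $H$, it automatically upgrades to a statement uniform in $V$.
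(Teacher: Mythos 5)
Your proposal is correct, and it reaches the stated bound, but the key deterministic step is genuinely different from the paper's. The paper bounds $|\cvar^{\alpha}_{\hat p^k}(V)-\cvar^{\alpha}_{p}(V)|$ by explicitly tracking how the distorted measure $\mu^{\alpha,V}(\cdot|s,a)$ changes when $p$ is replaced by $\hat p^k$: it sorts successor states by $V(s')$, analyzes the shift of the $\alpha$-quantile line, partitions states into left/middle/right/boundary groups, and proves $\sum_{s'}|\hat\mu^{k;\alpha,V}(s'|s,a)-\mu^{\alpha,V}(s'|s,a)|\le 2\|\hat p^k(\cdot|s,a)-p(\cdot|s,a)\|_1$, which yields a Lipschitz constant of $2H/\alpha$ in the $L_1$ distance. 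You instead invoke the Rockafellar--Uryasev variational formula (the paper's own primary definition of CVaR), correctly argue that the supremum may be restricted to $x\in[0,H]$ (for $x\le 0$ the objective is increasing, for $x\ge H$ it is non-increasing since $\alpha\le 1$), and apply $|\sup f-\sup g|\le\sup|f-g|$ with the integrand $(x-V(s'))^{+}\in[0,H]$; this gives the cleaner constant $H/\alpha$ and entirely avoids the case analysis. Both proofs then finish identically with Weissman's $L_1$ inequality and a union bound over $(s,a)$ and the at most $KH$ values of $n_k(s,a)$, and uniformity in $V$ comes for free in both since the bound depends on $V$ only through the range $[0,H]$. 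The one thing your route does not deliver is the intermediate inequality on $\sum_{s'}|\hat\mu^{k;\alpha,V}-\mu^{\alpha,V}|$ itself: the paper reuses exactly that inequality in Lemma~\ref{lemma:concentration_beta} to control the conditional transition probabilities $\hat\beta^{k;\alpha,V}(\cdot|s,a)$ for the BPI algorithm, whereas the dual argument only controls the scalar CVaR value. So your proof is a valid and somewhat sharper substitute for this lemma in isolation, but the paper's combinatorial lemma is doing double duty elsewhere.
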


\begin{figure} [t!]
	\centering       
	\includegraphics[width=0.95\textwidth]{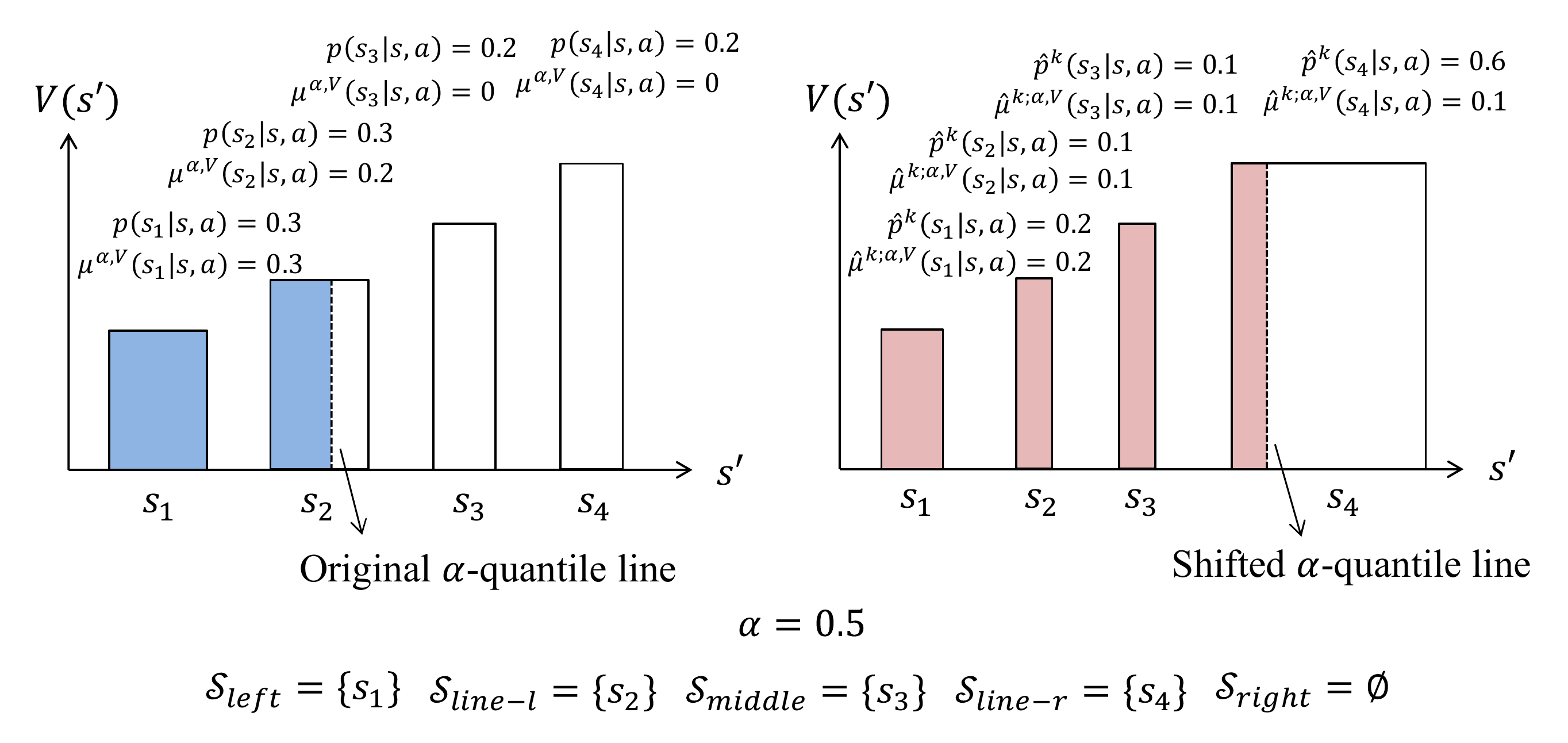} 
	\caption{\revision{Illustrating example for Lemma~\ref{lemma:concentration_any_V}. For each $s' \in \{s_1,s_2,s_3,s_4\}$, the height of the bar denotes the value $V(s')$ (fixed), and the width of the bar denotes the transition probability $p(s'|s,a)$ or $\hat{p}^{k}(s'|s,a)$. The colored part of the bars denotes the worst $\alpha$-portion successor states (i.e., with the lowest $\alpha$-portion values $V(s')$). In this example, $\alpha=0.5$.}}
	\label{fig:concentration_any_V}
\end{figure}

\begin{proof}[Proof of Lemma~\ref{lemma:concentration_any_V}]
	As shown in Figure~\ref{fig:concentration_any_V}, we sort all successor states $s' \in \cS$ by $V(s')$ in ascending order (from the left to the right).
	Add a virtual line at the $\alpha$-quantile, denoted by \emph{$\alpha$-quantile line}.
	Fix the value function $V(\cdot)$, and the transition probability changes from $p(\cdot|s,a)$ to $\hat{p}^k(\cdot|s,a)$.
	
	Without loss of generality, below we consider the case where as the transition probability changes from $p(\cdot|s,a)$ to $\hat{p}^k(\cdot|s,a)$, the $\alpha$-quantile line shifts from left to right (the analysis of the contrary case can also be obtained by interchanging $p(\cdot|s,a)$ and $\hat{p}^k(\cdot|s,a)$). 
	We use \emph{original $\alpha$-quantile line} and \emph{shifted $\alpha$-quantile line} to denote the $\alpha$-quantile line before and after the shift, respectively.
	
	We divide the successor states $s' \in \cS$ into five subsets as follows.
	Let $\cS_{left}$ and $\cS_{right}$ denote the sets of states which are always on the left and right sides of the original and shifted $\alpha$-quantile lines, respectively. Let $\cS_{middle}$ denote the set of states which are in the middle of the original and shifted $\alpha$-quantile lines. Let $s_{line \hyphen l}$ and $s_{line \hyphen r}$ denote the states which lie on the original and shifted $\alpha$-quantile lines, respectively.
	
	For any $s' \in \cS_{left}$, we have that $\mu^{\alpha,V}(s'|s,a)=p(s'|s,a)$ and $\hat{\mu}^{k;\alpha,V}(s'|s,a)=\hat{p}^k(s'|s,a)$.
	
	For any $s' \in \cS_{right}$, we have $\mu^{\alpha,V}(s'|s,a)=\hat{\mu}^{k;\alpha,V}(s'|s,a)=0$.
	
	For any $s' \in \cS_{middle}$, we have that $\mu^{\alpha,V}(s'|s,a)=0$ and $\hat{\mu}^{k;\alpha,V}(s'|s,a)=\hat{p}^{k}(s'|s,a)$.
	
	For state $s_{line \hyphen l}$, we have that 
	$\mu^{\alpha,V}(s_{line \hyphen l}|s,a)=p(s_{line \hyphen l}|s,a)-(\sum_{s' \in \cS_{left}} p(s'|s,a) + p(s_{line \hyphen l}|s,a) - \alpha )$ and $\hat{\mu}^{k;\alpha,V}(s_{line \hyphen l}|s,a)=\hat{p}^{k}(s_{line \hyphen l}|s,a)$.
	
	For state $s_{line \hyphen r}$, we have that $\mu^{\alpha,V}(s_{line \hyphen r}|s,a)=0$ and $\hat{\mu}^{k;\alpha,V}(s_{line \hyphen r}|s,a)=\alpha-\sum_{s' \in \cS_{left}} \hat{p}^{k}(s'|s,a) - \sum_{s' \in \cS_{middle}} \hat{p}^{k}(s'|s,a) - \hat{p}^{k}(s_{line \hyphen l}|s,a)$.
	
	Then, we obtain
	\begin{align}
		& \sum_{s' \in \cS} \abr{\hat{\mu}^{k;\alpha,V}(s'|s,a)-\mu^{\alpha,V}(s'|s,a)}
		\nonumber\\
		\leq & \sum_{s' \in \cS_{left}} \abr{\hat{\mu}^{k;\alpha,V}(s'|s,a)-\mu^{\alpha,V}(s'|s,a)} + \sum_{s' \in \cS_{right}} \abr{\hat{\mu}^{k;\alpha,V}(s'|s,a)-\mu^{\alpha,V}(s'|s,a)} \nonumber\\& + \sum_{s' \in \cS_{middle}} \abr{\hat{\mu}^{k;\alpha,V}(s'|s,a)-\mu^{\alpha,V}(s'|s,a)} + \abr{\hat{\mu}^{k;\alpha,V}(s_{line \hyphen l}|s,a)-\mu^{\alpha,V}(s_{line \hyphen l}|s,a)} \nonumber\\& + \abr{\hat{\mu}^{k;\alpha,V}(s_{line \hyphen r}|s,a)-\mu^{\alpha,V}(s_{line \hyphen r}|s,a)}
		\nonumber\\
		\leq & \sum_{s' \in \cS_{left}} \abr{\hat{p}^k(s'|s,a)-p(s'|s,a)} + \sum_{s' \in \cS_{middle}} \hat{p}^{k}(s'|s,a) \nonumber\\& + \abr{\hat{p}^{k}(s_{line \hyphen l}|s,a) - p(s_{line \hyphen l}|s,a) + \sbr{\sum_{s' \in \cS_{left}} p(s'|s,a) + p(s_{line \hyphen l}|s,a) - \alpha} } \nonumber\\& + \sbr{\alpha-\sum_{s' \in \cS_{left}} \hat{p}^{k}(s'|s,a) - \sum_{s' \in \cS_{middle}} \hat{p}^{k}(s'|s,a) - \hat{p}^{k}(s_{line \hyphen l}|s,a)}
		\nonumber\\
		\leq & \sum_{s' \in \cS_{left}} \abr{\hat{p}^k(s'|s,a)-p(s'|s,a)} + \sum_{s' \in \cS_{middle}} \hat{p}^{k}(s'|s,a) \nonumber\\& + \abr{\hat{p}^{k}(s_{line \hyphen l}|s,a) - p(s_{line \hyphen l}|s,a)} + \abr{\sum_{s' \in \cS_{left}} p(s'|s,a) + p(s_{line \hyphen l}|s,a) - \alpha} \nonumber\\& + \sbr{\alpha-\sum_{s' \in \cS_{left}} \hat{p}^{k}(s'|s,a) - \sum_{s' \in \cS_{middle}} \hat{p}^{k}(s'|s,a) - \hat{p}^{k}(s_{line \hyphen l}|s,a)}
		\nonumber\\
		\overset{\textup{(a)}}{\leq} & \sum_{s' \in \cS_{left}} \abr{\hat{p}^k(s'|s,a)-p(s'|s,a)} + \sum_{s' \in \cS_{middle}} \hat{p}^{k}(s'|s,a) \nonumber\\& + \abr{\hat{p}^{k}(s_{line \hyphen l}|s,a) - p(s_{line \hyphen l}|s,a)} + \sbr{\sum_{s' \in \cS_{left}} p(s'|s,a) + p(s_{line \hyphen l}|s,a) - \alpha} \nonumber\\& + \sbr{\alpha-\sum_{s' \in \cS_{left}} \hat{p}^{k}(s'|s,a) - \sum_{s' \in \cS_{middle}} \hat{p}^{k}(s'|s,a) - \hat{p}^{k}(s_{line \hyphen l}|s,a)}
		\nonumber\\
		= & \sum_{s' \in \cS_{left}} \abr{\hat{p}^k(s'|s,a)-p(s'|s,a)} + \abr{\hat{p}^{k}(s_{line \hyphen l}|s,a) - p(s_{line \hyphen l}|s,a)} \nonumber\\& + \sum_{s' \in \cS_{left}} p(s'|s,a) - \sum_{s' \in \cS_{left}} \hat{p}^{k}(s'|s,a) + p(s_{line \hyphen l}|s,a) - \hat{p}^{k}(s_{line \hyphen l}|s,a)
		\nonumber\\
		\leq & \sum_{s' \in \cS_{left}} \abr{\hat{p}^k(s'|s,a)-p(s'|s,a)} + \abr{\hat{p}^{k}(s_{line \hyphen l}|s,a) - p(s_{line \hyphen l}|s,a)} \nonumber\\& + \sum_{s' \in \cS_{left}} \abr{p(s'|s,a) - \hat{p}^{k}(s'|s,a)} + \abr{p(s_{line \hyphen l}|s,a) - \hat{p}^{k}(s_{line \hyphen l}|s,a)}
		\nonumber\\
		\leq & 2 \sum_{s' \in \cS} \abr{\hat{p}^k(s'|s,a)-p(s'|s,a)} , \label{eq:bound_mu_by_2p}
	\end{align}
	where (a) is due to $\sum_{s' \in \cS_{left}} p(s'|s,a) + p(s_{line \hyphen l}|s,a) - \alpha \geq 0$ by the definition of state $s_{line \hyphen l}$.
	
	Thus, we have
	\begin{align}
		& \abr{\cvar^{\alpha}_{s' \sim \hat{p}^k(\cdot|s,a)}(V(s')) - \cvar^{\alpha}_{s' \sim p(\cdot|s,a)}(V(s')) }
		\nonumber\\
		= & \abr{\frac{ \sum_{s' \in \cS} \hat{\mu}^{k;\alpha,V}(s'|s,a) \cdot V(s') }{\alpha} - \frac{ \sum_{s' \in \cS} \mu^{\alpha,V}(s'|s,a) \cdot V(s') }{\alpha}}
		\nonumber\\
		= & \frac{ \abr{\sum_{s' \in \cS} \sbr{\hat{\mu}^{k;\alpha,V}(s'|s,a)-\mu^{\alpha,V}(s'|s,a)} \cdot V(s')} }{\alpha}
		\nonumber\\
		\leq & \frac{\sum_{s' \in \cS} \abr{\hat{\mu}^{k;\alpha,V}(s'|s,a)-\mu^{\alpha,V}(s'|s,a)} \cdot H}{\alpha}
		\nonumber\\
		\leq & \frac{2 \sum_{s' \in \cS} \abr{p^{k}(s'|s,a)-p(s'|s,a)} \cdot H}{\alpha} \label{eq:con_for_any_V_ell_1}
	\end{align}
	
	Using Eq. (55) in \citep{zanette2019tighter} (originated from \citep{weissman2003inequalities}), we have that with probability at least $1-2\delta'$, for any $k \in [K]$ and $(s,a) \in \cS \times \cA$,
	\begin{align}
		\sum_{s' \in \cS} \abr{\hat{p}^{k}(s'|s,a)-p(s'|s,a)} \leq  \sqrt{\frac{2S\log \sbr{\frac{KHSA}{\delta'}}}{n_k(s,a)}} . \label{eq:p_ell_1_concentration}
	\end{align}

	Plugging Eq.~\eqref{eq:p_ell_1_concentration} into Eq.~\eqref{eq:con_for_any_V_ell_1}, we obtain that with probability at least $1-2\delta'$, for any $k \in [K]$, $(s,a) \in \cS \times \cA$ and function $V:\cS \mapsto [0,H]$, 
	\begin{align*}
		\abr{\cvar^{\alpha}_{s' \sim \hat{p}^k(\cdot|s,a)}(V(s')) - \cvar^{\alpha}_{s' \sim p(\cdot|s,a)}(V(s'))} \leq \frac{2H}{\alpha}\sqrt{\frac{2S\log \sbr{\frac{KHSA}{\delta'}}}{n_k(s,a)}} .
	\end{align*}
	
\end{proof}

For any $k>0$, $h \in [H]$ and $(s,a) \in \cS \times \cA$, let $w_{kh}(s,a)$ denote the probability of visiting $(s,a)$ at step $h$ of episode $k$. Then, it holds that for any $k>0$, $h \in [H]$ and $(s,a) \in \cS \times \cA$, $w_{kh}(s,a) \in [0,1]$ and $\sum_{(s,a)\in \cS \times \cA} w_{kh}(s,a)=1$.

\begin{lemma}[Concentration of Visitation]\label{lemma:con_visitation}
	It holds that
	\begin{align*}
		\Pr \Bigg[  n_k(s,a) \geq \frac{1}{2} \sum_{k'=1}^{k-1} \sum_{h=1}^{H} w_{k'h}(s,a) - H \log \sbr{\frac{HSA}{\delta'}} ,
		\ \forall k>0,\ \forall (s,a) \in \cS \times \cA
		\Bigg] \geq 1-\delta' .
	\end{align*}
\end{lemma}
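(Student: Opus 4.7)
The plan is to apply a multiplicative Chernoff/Freedman-type lower-tail inequality for martingales, fix $(s,a)$ first and union bound at the end. I would introduce the per-episode visit count $N_{k'}(s,a) := \sum_{h=1}^H \indicator{(s^{k'}_h, a^{k'}_h) = (s,a)} \in [0, H]$, so that $n_k(s,a) = \sum_{k'=1}^{k-1} N_{k'}(s,a)$. Letting $\cF_{k'-1}$ be the $\sigma$-algebra generated by all history through the end of episode $k'-1$, one has $\E[N_{k'}(s,a) \mid \cF_{k'-1}] = \sum_{h=1}^H w_{k'h}(s,a) =: \mu_{k'}$, since conditional on $\cF_{k'-1}$ the algorithm has committed to policy $\pi^{k'}$ and the distribution of visits inside episode $k'$ is fully determined. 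The goal reduces to a lower deviation bound on $\sum_{k'<k} N_{k'}(s,a)$ relative to $\sum_{k'<k}\mu_{k'}$, uniformly in $k$.

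For each fixed $(s,a)$, I would build an exponential supermartingale. Using the elementary inequality $e^{-\lambda y} \leq 1 - \tfrac{1-e^{-\lambda H}}{H}\, y$ valid for $y \in [0, H]$ and $\lambda>0$, together with $1-x \leq e^{-x}$,
\begin{align*}
\E\!\left[ e^{-\lambda N_{k'}(s,a)} \,\Big|\, \cF_{k'-1} \right] \leq \exp\!\left( -\tfrac{1-e^{-\lambda H}}{H}\, \mu_{k'} \right).
\end{align*}
Hence $M_k := \exp\bigl(-\lambda \sum_{k'<k} N_{k'}(s,a) + \tfrac{1-e^{-\lambda H}}{H} \sum_{k'<k} \mu_{k'}\bigr)$ is a non-negative supermartingale with $M_1 = 1$. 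Choosing $\lambda = 1/H$ gives $\tfrac{1-e^{-1}}{H} \geq \tfrac{1}{2H}$. Applying Ville's maximal inequality yields $\Pr\!\left[\sup_{k \geq 1} M_k \geq 1/\delta\right] \leq \delta$; taking logarithms on the complementary event and rearranging gives, with probability at least $1-\delta$ and for every $k \geq 1$ simultaneously,
\begin{align*}
n_k(s,a) \geq \tfrac{1}{2} \sum_{k'=1}^{k-1} \mu_{k'} - H \log(1/\delta) .
\end{align*}

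Finally, I would apply a union bound over the $SA$ choices of $(s,a)$, taking $\delta = \delta'/(SA)$; the stated bound's extra $H$ inside the log is harmless slack to uniformize the expression. The main subtlety is that the claim must hold simultaneously for all $k > 0$: a per-$k$ concentration followed by a union bound over $k \in [K]$ would introduce an undesirable $\log K$ factor and break the stated form. This is precisely what the supermartingale-plus-Ville construction circumvents, since Ville's inequality delivers a single uniform-in-time tail bound in one shot. A minor point worth explicitly checking is the range $N_{k'}(s,a) \in [0,H]$, which holds because the agent makes exactly $H$ decisions per episode.
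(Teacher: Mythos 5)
Your proof is correct. The paper obtains the same bound by a different decomposition: it fixes a step $h\in[H]$, treats the per-step visit indicators $\indicator{(s^{k'}_h,a^{k'}_h)=(s,a)}$ as an adapted Bernoulli sequence with conditional means $w_{k'h}(s,a)$, invokes Lemma~F.4 of \citet{dann2017unifying} (which is itself a uniform-in-$k$ multiplicative lower-tail bound of exactly the supermartingale-plus-Ville type you construct) to get $n_{kh}(s,a)\geq \frac{1}{2}\sum_{k'<k}w_{k'h}(s,a)-\log(HSA/\delta')$ for each $h$, and then sums over $h$ and union bounds over the $H$ steps; this union bound over $h$ is precisely where the factor $H$ multiplying the log and the $H$ inside the log come from. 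You instead aggregate to the episode level first, handle the resulting $[0,H]$-valued increments $N_{k'}(s,a)$ directly via the convexity bound $e^{-\lambda y}\leq 1-\frac{1-e^{-\lambda H}}{H}y$ and Ville's inequality, and only union bound over $(s,a)$. Both routes are valid and rest on the same probabilistic mechanism; yours is self-contained (no external lemma), avoids the union bound over $h$, and is in fact marginally tighter (yielding $H\log(SA/\delta')$ rather than $H\log(HSA/\delta')$, as you note), while the paper's is shorter on the page because it delegates the martingale work to a citation. Your identification of the conditional mean $\E[N_{k'}(s,a)\mid\cF_{k'-1}]=\sum_{h}w_{k'h}(s,a)$ and your observation that uniformity in $k$ must come from a maximal inequality rather than a union bound over episodes are both exactly right.
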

\begin{proof}[Proof of Lemma~\ref{lemma:con_visitation}]
	Applying Lemma F.4 in \citep{dann2017unifying}, we have that for any fixed $h \in [H]$,
	\begin{align*}
		\Pr \mbr{ n_{kh}(s,a) \geq \frac{1}{2} \sum_{k'=1}^{k-1} w_{k'h}(s,a) - \log \sbr{\frac{HSA}{\delta'}} , \ \forall k>0,\ \forall (s,a) \in \cS \times \cA } \geq 1-\frac{\delta'}{H}
	\end{align*}
	By a union bound over $h \in [H]$, we have
	\begin{align*}
		\Pr \mbr{ n_{k}(s,a) \geq \frac{1}{2} \sum_{k'=1}^{k-1} \sum_{h=1}^{H} w_{k'h}(s,a) - H \log \sbr{\frac{HSA}{\delta'}} } \geq 1-\delta' .
	\end{align*}
\end{proof}

To sum up, we define several concentration events which will be used in the following proof.
\begin{align*}
	\cE_1:= & \Bigg\{ \abr{\cvar^{\alpha}_{s' \sim \hat{p}^k(\cdot|s,a)}(V^{*}_h(s')) - \cvar^{\alpha}_{s' \sim p(\cdot|s,a)}(V^{*}_h(s'))} \leq \frac{H}{\alpha}\sqrt{\frac{ \log \sbr{\frac{KHSA}{\delta'}} }{n_k(s,a)}} ,
	\\ & \forall k \in [K],\ \forall h \in [H],\ \forall (s,a) \in \cS \times \cA \Bigg\} 
	\\
	\cE_2:= & \Bigg\{ \abr{\cvar^{\alpha}_{s' \sim \hat{p}^k(\cdot|s,a)}(V(s')) - \cvar^{\alpha}_{s' \sim p(\cdot|s,a)}(V(s'))} \leq \frac{2H}{\alpha}\sqrt{\frac{2S\log \sbr{\frac{KHSA}{\delta'}}}{n_k(s,a)}} ,
	\\ & \forall V: \cS \mapsto [0,H],\ \forall k \in [K],\ \forall (s,a) \in \cS \times \cA \Bigg\} 
	\\
	\cE_3:= & \Bigg\{ n_k(s,a) \geq \frac{1}{2} \sum_{k'=1}^{k-1} \sum_{h=1}^{H} w_{k'h}(s,a) - H \log \sbr{\frac{HSA}{\delta'}} ,
	\ \forall k>0,\ \forall (s,a) \in \cS \times \cA
	\Bigg\} 
	\\
	\cE:= & \cE_1 \cap \cE_2 \cap \cE_3 
\end{align*}

\begin{lemma}\label{lemma:rm_con_event}
	Letting $\delta'=\frac{\delta}{5}$, it holds that
	\begin{align*}
		\Pr \mbr{\cE} \geq 1-\delta .
	\end{align*}
\end{lemma}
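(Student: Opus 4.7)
The plan is to prove Lemma~\ref{lemma:rm_con_event} by a direct union bound over the three component events $\cE_1$, $\cE_2$, $\cE_3$, leveraging the failure probability bounds already established in Lemmas~\ref{lemma:concentration_V_star}, \ref{lemma:concentration_any_V}, and \ref{lemma:con_visitation}.

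First, I would recall the individual failure probabilities: Lemma~\ref{lemma:concentration_V_star} gives $\Pr[\cE_1^c] \leq 2\delta'$, Lemma~\ref{lemma:concentration_any_V} gives $\Pr[\cE_2^c] \leq 2\delta'$, and Lemma~\ref{lemma:con_visitation} gives $\Pr[\cE_3^c] \leq \delta'$. Then, applying the union bound on the complement $\cE^c = \cE_1^c \cup \cE_2^c \cup \cE_3^c$ yields
\begin{align*}
\Pr[\cE^c] \leq \Pr[\cE_1^c] + \Pr[\cE_2^c] + \Pr[\cE_3^c] \leq 2\delta' + 2\delta' + \delta' = 5\delta'.
\end{align*}

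Substituting the choice $\delta' = \delta/5$ into this bound gives $\Pr[\cE^c] \leq \delta$, equivalently $\Pr[\cE] \geq 1 - \delta$, as claimed.

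There is essentially no obstacle here; the lemma is a bookkeeping statement that accounts for the total failure probability budget needed across the subsequent regret analysis. The only subtlety worth noting is that $\cE_2$ holds uniformly over \emph{all} value functions $V:\cS \mapsto [0,H]$ (not only $V^*$), which is exactly what Lemma~\ref{lemma:concentration_any_V} provides via the $\ell_1$ bound on the empirical transition distribution in Eq.~\eqref{eq:p_ell_1_concentration}; hence no additional union bound over a covering net is required beyond what is already embedded in that lemma. Thus the splitting $\delta' = \delta/5$ suffices to absorb the combined $5\delta'$ budget.
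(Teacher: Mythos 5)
Your proof is correct and matches the paper's argument exactly: the paper likewise obtains $\Pr[\cE] \geq 1-\delta$ by combining Lemmas~\ref{lemma:concentration_V_star}--\ref{lemma:con_visitation} via a union bound, with the failure budget $2\delta'+2\delta'+\delta'=5\delta'=\delta$. Your explicit accounting simply spells out what the paper states tersely.
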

\begin{proof}[Proof of Lemma~\ref{lemma:rm_con_event}]
	This lemma can be obtained by combining Lemmas~\ref{lemma:concentration_V_star}-\ref{lemma:con_visitation}.
\end{proof}

\subsubsection{Optimism, Visitation and CVaR Gap}

Recall that $L:=\log \sbr{\frac{KHSA}{\delta'}}$.

\begin{lemma}[Optimism] \label{lemma:optimism}
	Suppose that event $\cE$ holds. Then,
	for any $k \in [K]$, $h \in [H]$ and $s \in \cS$, we have
	\begin{align*}
		\bar{V}^k_h(s) \geq V^{*}_h(s) .
	\end{align*}
\end{lemma}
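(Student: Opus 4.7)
The plan is to prove the lemma by backward induction on $h$, running from $h=H+1$ down to $h=1$, for each fixed episode $k$. The base case is immediate since $\bar{V}^k_{H+1}(s) = 0 = V^{*}_{H+1}(s)$ by definition.

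For the inductive step, suppose $\bar{V}^k_{h+1}(s') \geq V^{*}_{h+1}(s')$ for every $s' \in \cS$. I would like to show $\bar{Q}^k_h(s,a) \geq Q^{*}_h(s,a)$ for every $(s,a)$, after which the maximization step yields $\bar{V}^k_h(s) = \max_a \bar{Q}^k_h(s,a) \geq \max_a Q^{*}_h(s,a) = V^{*}_h(s)$. Since rewards lie in $[0,1]$ and the horizon remaining is at most $H$, we always have $Q^{*}_h(s,a) \leq H$, so the outer $\min\{\cdot,H\}$ in the definition of $\bar{Q}^k_h(s,a)$ never cuts below $Q^{*}_h(s,a)$; it therefore suffices to bound the unclipped quantity.

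The key two-step comparison is: (i) monotonicity of CVaR in its argument random variable, giving
\[
\cvar^{\alpha}_{s' \sim \hat{p}^k(\cdot|s,a)}(\bar{V}^k_{h+1}(s')) \;\geq\; \cvar^{\alpha}_{s' \sim \hat{p}^k(\cdot|s,a)}(V^{*}_{h+1}(s'))
\]
by the inductive hypothesis (CVaR is monotone pointwise because the distorted expectation $\sum_{s'} \beta^{\alpha,V}(s'|s,a) V(s')$ increases when $V$ is increased pointwise, up to the reselection of the worst tail which can only further help, a fact that follows from the $\sup$ formulation of CVaR); and (ii) the concentration bound from event $\cE_1$, which gives
\[
\cvar^{\alpha}_{s' \sim \hat{p}^k(\cdot|s,a)}(V^{*}_{h+1}(s')) \;\geq\; \cvar^{\alpha}_{s' \sim p(\cdot|s,a)}(V^{*}_{h+1}(s')) - \frac{H}{\alpha}\sqrt{\frac{L}{n_k(s,a)}}.
\]
Chaining these with the bonus $\frac{H}{\alpha}\sqrt{L/n_k(s,a)}$ baked into $\bar{Q}^k_h$ exactly cancels the concentration slack, yielding $\bar{Q}^k_h(s,a) \geq r(s,a) + \cvar^{\alpha}_{s' \sim p(\cdot|s,a)}(V^{*}_{h+1}(s')) = Q^{*}_h(s,a)$, as desired.

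The only delicate step I expect is monotonicity of CVaR (step (i)); while standard, it is worth stating carefully because the argmin/tail set changes with $V$. I would justify it via $\cvar^{\alpha}(X) = \sup_{x}\{x - \tfrac{1}{\alpha}\E[(x-X)^+]\}$, noting that $(x-X)^+$ is nonincreasing in $X$ pointwise, so a pointwise increase of the random variable's realizations only increases each candidate in the supremum. Apart from that, the proof is a clean induction and the rest is bookkeeping.
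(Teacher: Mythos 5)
Your proof is correct and follows essentially the same route as the paper's: backward induction on $h$, handling the $\min\{\cdot,H\}$ clipping via $Q^{*}_h\leq H$, then chaining CVaR monotonicity (from the inductive hypothesis) with the concentration event $\cE_1$ so the bonus cancels the slack. The only difference is cosmetic — you justify CVaR monotonicity explicitly via the $\sup$ formulation (a nice touch the paper leaves implicit) and conclude with $\max_a$ on both sides where the paper evaluates at $\pi^{*}_h(s)$; both are equivalent.
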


\begin{proof}[Proof of Lemma~\ref{lemma:optimism}]
	We prove this lemma by induction.
	
	First, for any $k \in [K]$, $s \in \cS$, it holds that $\bar{V}^k_{H+1}(s) = V^{*}_{H+1}(s)=0$.
	
	Then, for any $k \in [K]$, $h \in [H]$ and $(s,a) \in \cS \times \cA$, if $\bar{Q}^k_h(s,a)=H$, $\bar{Q}^k_h(s,a) \geq Q^{*}_h(s,a)$ trivially holds, and otherwise,
	\begin{align*}
		\bar{Q}^k_h(s,a) = & r(s,a)+\cvar^{\alpha}_{s' \sim \hat{p}^k(\cdot|s,a)}(\bar{V}^k_{h+1}(s')) + \frac{H}{\alpha}\sqrt{\frac{L}{n_k(s,a)}}
		\\
		\overset{\textup{(a)}}{\geq} & r(s,a)+\cvar^{\alpha}_{s' \sim \hat{p}^k(\cdot|s,a)}(V^{*}_{h+1}(s')) + \frac{H}{\alpha}\sqrt{\frac{L}{n_k(s,a)}}
		\\
		\overset{\textup{(b)}}{\geq} & r(s,a)+\cvar^{\alpha}_{s' \sim p(\cdot|s,a)}(V^{*}_{h+1}(s'))
		\\
		= & Q^{*}_h(s,a) ,
	\end{align*}
	where (a) uses the induction hypothesis and (b) comes from Lemma~\ref{lemma:concentration_V_star}.
	
	Thus, we have
	\begin{align*}
		\bar{V}^k_h(s) \geq \bar{Q}^k_h(s,\pi^{*}_h(s)) \geq Q^{*}_h(s,\pi^{*}_h(s)) = V^{*}_h(s) ,
	\end{align*}
	which concludes the proof.
\end{proof}

Following \citep{zanette2019tighter}, for any episode $k>0$, we define the set of state-action pairs which have sufficient visitations in expectation as follows.
\begin{align}
	\cL_k := \lbr{ (s,a) \in \cS \times \cA: \frac{1}{4} \sum_{k'=1}^{k-1} \sum_{h=1}^{H} w_{k'h}(s,a) \geq H \log \sbr{\frac{HSA}{\delta'}} +H } .  \label{eq:definition_cL_k}
\end{align}


\begin{lemma}[Sufficient Visitation] \label{lemma:sufficient_visit}
	Suppose that event $\cE$ holds. Then, for any $k>0$ and $(s,a) \in \cL_k$,
	\begin{align*}
		n_k(s,a) \geq \frac{1}{4} \sum_{k'=1}^{k} \sum_{h=1}^{H} w_{k'h}(s,a) .
	\end{align*}
\end{lemma}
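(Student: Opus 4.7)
The plan is to combine the concentration of empirical visitation counts guaranteed by event $\cE_3$ with the definition of $\cL_k$ purely by algebraic rearrangement; there is no genuinely hard step here, so the proposal is a short direct derivation.

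First, I would write $W_k(s,a) := \sum_{k'=1}^{k}\sum_{h=1}^{H} w_{k'h}(s,a)$ as shorthand and apply event $\cE_3$ to obtain
\begin{align*}
n_k(s,a) \;\geq\; \tfrac{1}{2} W_{k-1}(s,a) - H\log\!\sbr{\tfrac{HSA}{\delta'}}.
\end{align*}
Next, I would use the defining inequality of $\cL_k$, namely $\tfrac{1}{4} W_{k-1}(s,a) \geq H\log(HSA/\delta') + H$, rewritten as $H\log(HSA/\delta') \leq \tfrac{1}{4} W_{k-1}(s,a) - H$, and substitute it into the bound above. This yields
\begin{align*}
n_k(s,a) \;\geq\; \tfrac{1}{2} W_{k-1}(s,a) - \bigl(\tfrac{1}{4} W_{k-1}(s,a) - H\bigr) \;=\; \tfrac{1}{4} W_{k-1}(s,a) + H.
\end{align*}

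Finally, I would pass from $W_{k-1}(s,a)$ to $W_k(s,a)$ using the trivial per-episode increment bound $\sum_{h=1}^{H} w_{kh}(s,a) \leq H$, which gives $W_k(s,a) \leq W_{k-1}(s,a) + H$, hence $\tfrac{1}{4} W_k(s,a) \leq \tfrac{1}{4} W_{k-1}(s,a) + \tfrac{H}{4}$. Chaining with the previous display and using $\tfrac{H}{4} \leq H$ delivers $n_k(s,a) \geq \tfrac{1}{4} W_k(s,a)$, which is exactly the claim. The only tiny sanity check is that the inequality $\sum_{h=1}^{H} w_{kh}(s,a) \leq H$ is immediate since each $w_{kh}(s,a) \in [0,1]$. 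No step requires additional machinery beyond event $\cE$ and the definition of $\cL_k$.
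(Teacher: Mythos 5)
Your proof is correct and follows essentially the same route as the paper's: apply event $\cE_3$, use the defining inequality of $\cL_k$ to absorb the $H\log(HSA/\delta')$ term and gain an extra additive $H$, then convert that $H$ into the missing episode-$k$ contribution via $\sum_{h=1}^{H} w_{kh}(s,a)\leq H$. The only difference is cosmetic bookkeeping in the last step (you bound $\tfrac14 W_k \leq \tfrac14 W_{k-1}+H$ directly, the paper bounds $H \geq \sum_h w_{kh}(s,a)$ and then folds it in), so nothing further is needed.
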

\begin{proof}[Proof of Lemma~\ref{lemma:sufficient_visit}]
	This proof is the same as that of Lemma 6 in \citep{zanette2019tighter}.
	
	Using Lemma~\ref{lemma:con_visitation}, we have
	\begin{align*}
		n_k(s,a) \geq & \frac{1}{2} \sum_{k'=1}^{k-1} \sum_{h=1}^{H} w_{k'h}(s,a) - H \log \sbr{\frac{HSA}{\delta'}}
		\\
		= & \frac{1}{4} \sum_{k'=1}^{k-1} \sum_{h=1}^{H} w_{k'h}(s,a) + \frac{1}{4} \sum_{k'=1}^{k-1} \sum_{h=1}^{H} w_{k'h}(s,a) - H \log \sbr{\frac{HSA}{\delta'}}
		\\
		\overset{\textup{(a)}}{\geq} & \frac{1}{4} \sum_{k'=1}^{k-1} \sum_{h=1}^{H} w_{k'h}(s,a) + H
		\\
		\overset{\textup{(b)}}{\geq} & \frac{1}{4} \sum_{k'=1}^{k-1} \sum_{h=1}^{H} w_{k'h}(s,a) + \sum_{h=1}^{H} w_{kh}(s,a)
		\\
		= & \frac{1}{4} \sum_{k'=1}^{k} \sum_{h=1}^{H} w_{k'h}(s,a) 
	\end{align*}
	where (a) uses the fact that $(s,a) \in \cL_{k}$ and the definition of $\cL_{k}$, and (b) is due to that for any $k>0$, $h \in [H]$ and $(s,a) \in \cS \times \cA$, $w_{kh}(s,a) \in [0,1]$.
\end{proof}

\begin{lemma}[Standard Visitation Ratio] \label{lemma:standard_visitation}
	For any $K>0$, we have 
	\begin{align*}
		\sqrt{\sum_{k=1}^{K} \sum_{h=1}^{H} \sum_{(s,a) \in \cL_k} \frac{w_{kh}(s,a)}{n_k(s,a)}} \leq &  2 \sqrt{ SA \log \sbr{\frac{KHSA}{\delta'}} } .
	\end{align*}
\end{lemma}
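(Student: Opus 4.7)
The plan is to first apply Lemma~\ref{lemma:sufficient_visit} to replace the empirical count $n_k(s,a)$ in the denominator by the cumulative expected visitation, then decouple the sum into a per $(s,a)$ computation, and finally bound each per $(s,a)$ sum by a telescoping/log argument.

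First, for every $(s,a)\in\cL_k$, Lemma~\ref{lemma:sufficient_visit} yields $n_k(s,a)\ge \frac{1}{4}\sum_{k'=1}^k\sum_{h=1}^H w_{k'h}(s,a)$. Setting $b_k(s,a):=\sum_{h=1}^H w_{kh}(s,a)$ and $B_k(s,a):=\sum_{k'=1}^k b_{k'}(s,a)$, this gives
\begin{align*}
\sum_{k=1}^K\sum_{h=1}^H\sum_{(s,a)\in\cL_k}\frac{w_{kh}(s,a)}{n_k(s,a)} \le 4\sum_{(s,a)\in\cS\times\cA}\ \sum_{k:(s,a)\in\cL_k}\frac{b_k(s,a)}{B_k(s,a)}.
\end{align*}
This is the step that reduces the analysis to a purely scalar inequality per $(s,a)$.

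Next I would bound the inner sum $\sum_k b_k/B_k$ via the elementary inequality $1-\tfrac{1}{x}\le \log x$ for $x\ge 1$, applied with $x=B_k/B_{k-1}$, which gives $b_k/B_k = 1-B_{k-1}/B_k \le \log(B_k/B_{k-1})$. Summing over the episodes in which $(s,a)\in\cL_k$, the right-hand side telescopes to $\log(B_K(s,a)/B_{k_0-1}(s,a))$, where $k_0$ is the first episode in $\cL_k$. By the definition \eqref{eq:definition_cL_k} of $\cL_k$ we have $B_{k_0-1}(s,a)\ge 4H(\log(HSA/\delta')+1)$, which is positive, so the log is well-defined, while the trivial bound $b_k\le H$ gives $B_K(s,a)\le KH$. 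Hence each inner sum is at most $\log(KH) \le \log(KHSA/\delta')=L$.

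Finally I combine the pieces: summing over $(s,a)$ contributes a factor $SA$, so
\begin{align*}
\sum_{k=1}^K\sum_{h=1}^H\sum_{(s,a)\in\cL_k}\frac{w_{kh}(s,a)}{n_k(s,a)} \le 4 SA\,L,
\end{align*}
and taking square roots yields the stated bound $2\sqrt{SA\,L}$. The only delicate step is the telescoping inequality $b_k/B_k\le \log(B_k/B_{k-1})$ and the verification that $B_{k_0-1}(s,a)>0$ on $\cL_k$ so that the resulting $\log$ can be bounded cleanly by $L$; the rest is bookkeeping.
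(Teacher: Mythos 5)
Your proof is correct and follows essentially the same route as the paper: invoke Lemma~\ref{lemma:sufficient_visit} to replace $n_k(s,a)$ by the cumulative expected visitation, reduce to a per-$(s,a)$ sum of increments over partial sums, and bound that sum by $\log(KH)\le L$. The only difference is cosmetic — you bound $\sum_k b_k/B_k$ via the discrete inequality $1-1/x\le\log x$ and telescoping, whereas the paper uses a piecewise-linear interpolation and an integral comparison; both hinge on the same lower bound $B_{k_0-1}(s,a)\ge 4H$ coming from the definition of $\cL_k$.
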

\begin{proof}[Proof of Lemma~\ref{lemma:standard_visitation}]
	This proof is the same as that of Lemma 13 in \citep{zanette2019tighter}.
	
	Recall that for any $k>0$, let $w_{k}(s,a):=\sum_{(s,a) \in \cS \times \cA} w_{kh}(s,a)$. Then, we have
	\begin{align*}
		\sqrt{\sum_{k=1}^{K} \sum_{h=1}^{H} \sum_{(s,a) \in \cL_k} \frac{w_{kh}(s,a)}{n_k(s,a)}} = & \sqrt{\sum_{k=1}^{K} \sum_{(s,a) \in \cL_k} \frac{w_{k}(s,a)}{n_k(s,a)}}
		\nonumber\\
		= & \sqrt{\sum_{k=1}^{K} \sum_{(s,a) \in \cS \times \cA} \frac{w_{k}(s,a)}{n_k(s,a)} \cdot \indicator{ (s,a) \in \cL_k }}
		\nonumber\\
		\overset{\textup{(a)}}{\leq} & 2 \sqrt{\sum_{k=1}^{K} \sum_{(s,a) \in \cS \times \cA} \frac{w_{k}(s,a)}{ \sum_{k'=1}^{k} w_{k'}(s,a) } \cdot \indicator{ (s,a) \in \cL_k }} 
		\nonumber\\
		= & 2 \sqrt{\sum_{(s,a) \in \cS \times \cA} \sum_{k=1}^{K} \frac{w_{k}(s,a)}{ \sum_{k'=1}^{k} w_{k'}(s,a) } \cdot \indicator{ (s,a) \in \cL_k }}
	\end{align*}
	where (a) is due to Lemma~\ref{lemma:sufficient_visit}.
	
	According to the definition of $\cL_k$ (Eq.~\eqref{eq:definition_cL_k}), for any $(s,a) \in \cS \times \cA$, once $(s,a)$ satisfies $(s,a) \in \cL_k$ in some episode $k$, it will always satisfy $(s,a) \in \cL_{k'}$ for all $k' \geq k$. For any $(s,a) \in \cS \times \cA$, let $k_0(s,a)$ denote the first episode $k$ where $(s,a) \in \cL_k$.
	
	Then, for any $k>0$ and $(s,a) \in \cS \times \cA$, if $(s,a) \in \cL_k$, we have
	\begin{align*}
		\sum_{k'=1}^{k} w_{k'}(s,a) = & \sum_{k'=1}^{k_0(s,a)-1} w_{k'}(s,a) + \sum_{k'=k_0(s,a)}^{k} w_{k'}(s,a)
		\\
		\overset{\textup{(a)}}{\geq} & H + \sum_{k'=k_0(s,a)}^{k} w_{k'}(s,a) ,
	\end{align*}
	where (a) uses the fact that $(s,a) \in \cL_{k_0(s,a)}$ and the definition of $\cL_k$ (Eq.~\eqref{eq:definition_cL_k}).
	
	Thus, we have
	\begin{align*}
		\sqrt{\sum_{k=1}^{K} \sum_{h=1}^{H} \sum_{(s,a) \in \cL_k} \frac{w_{kh}(s,a)}{n_k(s,a)}} \leq & 2 \sqrt{ \sum_{(s,a) \in \cS \times \cA} \sum_{k=k_0(s,a)}^{K} \frac{w_{k}(s,a)}{ H + \sum_{k'=k_0(s,a)}^{k} w_{k'}(s,a) } } .
	\end{align*}
	
	Let $a_1:=w_{k_0(s,a)}(s,a)$, $a_2:=w_{k_0(s,a)+1}(s,a)$, $\dots$, $a_{K-k_0(s,a)+1}:=w_{K}(s,a)$. Define function $F(x)=\sum_{i=1}^{\lfloor x \rfloor} a_i + a_{\lceil x \rceil} (x - \lfloor x \rfloor)$, where $0 \leq x \leq K-k_0(s,a)+1$. If $x$ is an integer, we have $F(x)=\sum_{i=1}^{x} a_i$, and otherwise, $F(x)$ interpolates between the function values for integers $x$. The derivative of $F(s)$ is $f(x)=a_{\lceil x \rceil}$.
	
	Hence, we have 
	\begin{align*}
		\sum_{k=k_0(s,a)}^{K} \frac{w_{k}(s,a)}{ H + \sum_{k'=k_0(s,a)}^{k} w_{k'}(s,a) } = & \sum_{k=1}^{K-k_0(s,a)+1} \frac{f(k)}{ H + F(k) } 
		\\
		= & \int_{0}^{K-k_0(s,a)+1} \frac{f(\lceil x \rceil)}{ H + F(\lceil x \rceil) } dx  
		\\
		\overset{\textup{(a)}}{\leq} & \int_{0}^{K-k_0(s,a)+1} \frac{f(x)}{ H + F(x) } dx  
		\\
		= & \log\sbr{ H+F(K-k_0(s,a)+1) } - \log\sbr{ H+F(0) }
		\\
		\overset{\textup{(b)}}{\leq} & \log\sbr{ KH }
		\\
		\leq & \log \sbr{\frac{KHSA}{\delta'}} ,
	\end{align*}
	where (a) uses the fact that for any $0 \leq x \leq K-k_0(s,a)+1$, $f(x)=f(\lceil x \rceil)$ and $F(x) \leq F(\lceil x \rceil)$, and (b) is due to that $k_0(s,a) \geq 2$ by the definitions of $\cL_k$ (Eq.~\eqref{eq:definition_cL_k}) and $k_0(s,a)$.
	
	Therefore, we have
	\begin{align*}
		\sqrt{\sum_{k=1}^{K} \sum_{h=1}^{H} \sum_{(s,a) \in \cL_k} \frac{w_{kh}(s,a)}{n_k(s,a)}} \leq & 2 \sqrt{ \sum_{(s,a) \in \cS \times \cA} \log \sbr{\frac{KHSA}{\delta'}} }
		\\
		\leq & 2 \sqrt{ SA \log \sbr{\frac{KHSA}{\delta'}} }
	\end{align*}
\end{proof}


Recall that for any $(s',s,a) \in \cS \times \cS \times \cA$, $p(s'|s,a)$ is the transition probability from $(s,a)$ to $s'$. 
For any risk level $\alpha \in (0,1]$, function $V: \cS \mapsto \R$ and $(s',s,a) \in \cS \times \cS \times \cA$, $\beta^{\alpha,V}(s'|s,a)$ is the conditional probability of transitioning to $s'$ from $(s,a)$, conditioning on transitioning to the worst $\alpha$-portion successor states $s'$ (i.e., with the lowest $\alpha$-portion values $V(s')$), and it holds that $\cvar^{\alpha}_{s'\sim p(s'|s,a)}(V(s'))=\sum_{s' \in \cS} \beta^{\alpha,V}(s'|s,a) \cdot V(s')$. 

For any $k>0$, $h \in [H]$ and $(s,a) \in \cS \times \cA$, $w_{kh}(s,a)$ is the probability of visiting $(s,a)$ at step $h$ of episode $k$ (under transition probability $p(\cdot|\cdot,\cdot)$), and it holds that $w_{kh}(s,a) \in  [0,1]$ and $\sum_{(s,a)\in \cS \times \cA} w_{kh}(s,a)=1$.
For any risk level $\alpha \in (0,1]$, $k>0$, $h \in [H]$ and $(s,a) \in \cS \times \cA$, $w_{kh}^{CVaR,\alpha,V^{\pi^k}}(s,a)$ is the conditional probability of visiting $(s,a)$ at step $h$ of episode $k$, conditioning on transitioning to the worst $\alpha$-portion successor states $s'$ (i.e., with the lowest $\alpha$-portion values $V_{h'+1}^{\pi^k}(s')$) at each step $h'=1,\dots,h-1$. Here $\pi^k$ is the policy taken in episode $k$, and $V_{h}^{\pi^k}(\cdot):\cS \mapsto \R$ is the value function at step $h$ for policy $\pi^k$.
Intuitively,  $w_{kh}^{CVaR,\alpha,V^{\pi^k}}(s,a)$ is the probability of visiting $(s,a)$ at step $h$ of episode $k$ under conditional transition probability $\beta^{\alpha,V_{h'+1}^{\pi^k}}(\cdot|\cdot,\cdot)$ for each step $h'=1,\dots,h-1$. It holds that for any risk level $\alpha \in (0,1]$, $k>0$, $h \in [H]$ and $(s,a) \in \cS \times \cA$, $w_{kh}^{CVaR,\alpha,V^{\pi^k}}(s,a) \in [0,1]$ and $\sum_{(s,a) \in \cS \times \cA} w_{kh}^{CVaR,\alpha,V^{\pi^k}}(s,a)=1$.

\begin{lemma} \label{lemma:zero_w_impliy_zero_w_cvar}
	For any risk level $\alpha \in (0,1]$, $k>0$, $h \in [H]$ and $(s,a) \in \cS \times \cA$,
	if $w_{kh}(s,a)=0$, then $w_{kh}^{CVaR,\alpha,V^{\pi^k}}(s,a)=0$.
\end{lemma}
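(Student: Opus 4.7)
The plan is to proceed by induction on the step index $h$, using the key structural fact that the conditional transition probability $\beta^{\alpha,V}(\cdot|s,a)$ is absolutely continuous with respect to the true transition $p(\cdot|s,a)$. Indeed, by the construction preceding Lemma~\ref{lemma:concentration_any_V}, we have $\beta^{\alpha,V}(s'|s,a) = \mu^{\alpha,V}(s'|s,a)/\alpha$, where $\mu^{\alpha,V}(s'|s,a)$ is the portion of $p(s'|s,a)$ that lies in the worst $\alpha$-tail; in particular $\mu^{\alpha,V}(s'|s,a) \leq p(s'|s,a)$, so $\beta^{\alpha,V}(s'|s,a) \leq p(s'|s,a)/\alpha$. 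Consequently, whenever $p(s'|s,a)=0$, we also have $\beta^{\alpha,V}(s'|s,a)=0$.

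For the base case $h=1$, both $w_{k1}(s,a)$ and $w_{k1}^{CVaR,\alpha,V^{\pi^k}}(s,a)$ equal $\indicator{s=s_1,\ a=\pi^k_1(s_1)}$, since the initial state is fixed and the policy $\pi^k$ is deterministic. Thus the implication holds trivially. For the inductive step, assume the claim holds at step $h$. Unrolling the visitation probabilities one step,
\begin{align*}
w_{k,h+1}(s',a') &= \indicator{a'=\pi^k_{h+1}(s')} \sum_{(s,a)} w_{kh}(s,a)\, p(s'|s,a), \\
w_{k,h+1}^{CVaR,\alpha,V^{\pi^k}}(s',a') &= \indicator{a'=\pi^k_{h+1}(s')} \sum_{(s,a)} w_{kh}^{CVaR,\alpha,V^{\pi^k}}(s,a)\, \beta^{\alpha,V^{\pi^k}_{h+2}}(s'|s,a).
\end{align*}
Suppose $w_{k,h+1}(s',a')=0$. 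If $a' \neq \pi^k_{h+1}(s')$, then the indicator in the second identity also vanishes, and we are done. Otherwise, every term in the first sum must vanish, so for each $(s,a)$ with $w_{kh}(s,a)>0$ we have $p(s'|s,a)=0$. By the contrapositive of the inductive hypothesis, any $(s,a)$ with $w_{kh}^{CVaR,\alpha,V^{\pi^k}}(s,a)>0$ also has $w_{kh}(s,a)>0$, hence $p(s'|s,a)=0$, and therefore $\beta^{\alpha,V^{\pi^k}_{h+2}}(s'|s,a)=0$ by the structural fact above. Every term in the second sum thus vanishes, yielding $w_{k,h+1}^{CVaR,\alpha,V^{\pi^k}}(s',a')=0$.

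I do not expect any real obstacle here: the only nontrivial ingredient is the pointwise domination $\mu^{\alpha,V} \leq p$, which is immediate from the definition of $\mu^{\alpha,V}$ as the ``clipped'' worst-$\alpha$ portion of $p$ (this was used implicitly in deriving Eq.~\eqref{eq:bound_mu_by_2p}). The rest is a clean one-step propagation of support along the induced Markov chain.
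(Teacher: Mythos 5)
Your proof is correct and rests on the same underlying idea as the paper's: the conditional kernel satisfies $\beta^{\alpha,V}(s'|s,a)=\mu^{\alpha,V}(s'|s,a)/\alpha \le p(s'|s,a)/\alpha$, so it cannot make an unreachable successor state reachable; the paper states this informally ("renormalizing will not make an unreachable successor state reachable") while you formalize it by induction on $h$. The only blemish is an off-by-one in the value-function subscript of your one-step kernel (the transition from step $h$ to $h+1$ conditions on the worst $\alpha$-portion of $V^{\pi^k}_{h+1}$, not $V^{\pi^k}_{h+2}$), which is immaterial here since the pointwise domination $\mu^{\alpha,V}\le p$ holds for every $V$.
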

\begin{proof}[Proof of Lemma~\ref{lemma:zero_w_impliy_zero_w_cvar}]
	If $w_{kh}(s,a)=0$, then the algorithm has zero probability to visit $(s,a)$ at step $h$ of episode $k$, which means that $(s,a)$ is unreachable under transition probability $p(\cdot|\cdot,\cdot)$. 
	
	Note that for each step $h'=1,\dots,h-1$, the conditional transition probability $\beta^{\alpha,V_{h'+1}^{\pi^k}}(s'|s,a)$ just renormalizes the transition probability and assigns more weights to the worst $\alpha$-portion successor states $s'$ (i.e., with the lowest $\alpha$-portion values $V_{h'+1}^{\pi^k}(s')$), but will not make an unreachable successor state reachable. Thus, $(s,a)$ is also unreachable under conditional transition probability $\beta^{\alpha,V_{h'+1}^{\pi^k}}(\cdot|\cdot,\cdot)$ for each step $h'=1,\dots,h-1$, and therefore $w_{kh}^{CVaR,\alpha,V^{\pi^k}}(s,a)=0$.	
\end{proof}

\begin{lemma} \label{lemma:w_cvar_over_w}
	For any functions $V_1,\dots,V_{H}:\cS \mapsto \R$, $k>0$, $h \in [H]$ and $(s,a) \in \cS \times \cA$ such that $w_{kh}(s,a)>0$, 
	\begin{align*}
		\frac{w^{\cvar,\alpha,V}_{kh}(s,a)}{w_{kh}(s,a)} \leq & \min \lbr{ \frac{1}{\min \limits_{\begin{subarray}{c}\pi,h,(s,a):\  w_{\pi,h}(s,a)>0\end{subarray}} w_{\pi,h}(s,a)} ,\  \frac{1}{\alpha^{h-1}} } ,
	\end{align*}
	where $w^{\cvar,\alpha,V}_{kh}(s,a)$ denotes the conditional probability of visiting $(s,a)$ at step $h$ of episode $k$, conditioning on transitioning to the worst $\alpha$-portion successor states $s'$ (i.e., with the lowest $\alpha$-portion values $V_{h'+1}(s')$) at each step $h'=1,\dots,h-1$.
\end{lemma}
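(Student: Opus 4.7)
The idea is to prove the two bounds that comprise the minimum separately, then take their minimum. Write the visitation probabilities as summed products over trajectories: letting $s_1$ be the fixed initial state and $a_{h'}=\pi^k_{h'}(s_{h'})$ for all $h'$, we have
\begin{align*}
w_{kh}(s,a) &= \sum_{s_2,\ldots,s_{h-1}} \prod_{h'=1}^{h-1} p(s_{h'+1}\mid s_{h'},a_{h'}) \cdot \indicator{s_h=s,\,\pi^k_h(s)=a}, \\
w^{\cvar,\alpha,V}_{kh}(s,a) &= \sum_{s_2,\ldots,s_{h-1}} \prod_{h'=1}^{h-1} \beta^{\alpha,V_{h'+1}}(s_{h'+1}\mid s_{h'},a_{h'}) \cdot \indicator{s_h=s,\,\pi^k_h(s)=a}.
\end{align*}

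\textbf{Bound (a): the $1/\alpha^{h-1}$ term.} By the definition of $\mu^{\alpha,V}$ and $\beta^{\alpha,V}$ given just before Lemma~\ref{lemma:concentration_any_V}, we have $\mu^{\alpha,V}(s'\mid s,a)\le p(s'\mid s,a)$ pointwise (since $\mu^{\alpha,V}$ selects a sub-mass of $p$ totaling $\alpha$) and $\beta^{\alpha,V}(s'\mid s,a)=\mu^{\alpha,V}(s'\mid s,a)/\alpha$. Hence for every $(s',s,a)$,
\[
\beta^{\alpha,V}(s'\mid s,a) \;\le\; \frac{1}{\alpha}\, p(s'\mid s,a).
\]
Substituting this into the product expression for $w^{\cvar,\alpha,V}_{kh}(s,a)$ term-by-term yields $w^{\cvar,\alpha,V}_{kh}(s,a)\le \alpha^{-(h-1)}\, w_{kh}(s,a)$, which gives the first half of the bound after dividing by $w_{kh}(s,a)>0$.

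\textbf{Bound (b): the minimum-visitation term.} Since $w^{\cvar,\alpha,V}_{kh}(\cdot,\cdot)$ is a probability distribution over $\cS\times\cA$ (it is the law of the state-action pair at step $h$ under the distorted transitions), in particular $w^{\cvar,\alpha,V}_{kh}(s,a)\le 1$. On the other hand, $w_{kh}(s,a)$ coincides with $w_{\pi^k,h}(s,a)$, so if $w_{kh}(s,a)>0$ then
\[
w_{kh}(s,a) \;\ge\; \min_{\pi,h',(s',a'):\, w_{\pi,h'}(s',a')>0} w_{\pi,h'}(s',a').
\]
Combining these gives $w^{\cvar,\alpha,V}_{kh}(s,a)/w_{kh}(s,a)\le 1/\min_{\pi,h',(s',a'):w_{\pi,h'}(s',a')>0} w_{\pi,h'}(s',a')$.

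Taking the minimum of the two bounds yields the lemma. There is no real obstacle: the only subtlety is justifying the pointwise inequality $\mu^{\alpha,V}\le p$, which is immediate from how $\mu^{\alpha,V}$ is constructed (assigning mass to the lowest-valued successor states until a total of $\alpha$ is reached, capped by $p$), and recognizing that the normalization $\sum_{(s,a)} w^{\cvar,\alpha,V}_{kh}(s,a)=1$ used in step~(d) of the proof sketch of Theorem~\ref{thm:cvar_rm_ub} already implies bound~(b).
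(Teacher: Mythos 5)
Your proposal is correct and follows essentially the same route as the paper's proof: the $1/\alpha^{h-1}$ bound via the pointwise inequality $\beta^{\alpha,V}(s'\mid s,a)=\mu^{\alpha,V}(s'\mid s,a)/\alpha\le p(s'\mid s,a)/\alpha$ applied factor-by-factor in the trajectory-product expansion, and the minimum-visitation bound via $w^{\cvar,\alpha,V}_{kh}(s,a)\le 1$ together with $w_{kh}(s,a)\ge\min_{\pi,h,(s,a):\,w_{\pi,h}(s,a)>0}w_{\pi,h}(s,a)$. No gaps.
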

\begin{proof}[Proof of Lemma~\ref{lemma:w_cvar_over_w}]
	Since $w^{\cvar,\alpha,V}_{kh}(s,a)$ is the conditional probability of visiting $(s,a)$, we have 
	$w^{\cvar,\alpha,V}_{kh}(s,a) \in [0,1]$.
	Since $w_{kh}(s,a)$ is the probability of visiting $(s,a)$ at step $h$ under policy $\pi^k$ and $\min_{\begin{subarray}{c}\pi,h,(s,a):\  w_{\pi,h}(s,a)>0\end{subarray}} w_{\pi,h}(s,a)$ is the minimum probability of visiting any reachable $(s,a)$ at any step $h$ over all policies $\pi$, we have
	\begin{align*}
		w_{kh}(s,a) \geq \min \limits_{\begin{subarray}{c}\pi,h,(s,a):\  w_{\pi,h}(s,a)>0\end{subarray}} w_{\pi,h}(s,a) . 
	\end{align*} 
	
	Hence, we have
	\begin{align}
		\frac{w^{\cvar,\alpha,V}_{kh}(s,a)}{w_{kh}(s,a)} \leq & \frac{1}{\min \limits_{\begin{subarray}{c}\pi,h,(s,a):\  w_{\pi,h}(s,a)>0\end{subarray}} w_{\pi,h}(s,a)} . \label{eq:1_over_w_min}
	\end{align}
	
	Let $s_1$ be the initial state. Since $w_{kh}(s,a)$ and $w^{\cvar,\alpha,V}_{kh}(s,a)$ are the probabilities of visiting $(s,a)$ at step $h$ with policy $\pi^k$ under transition probability $p(\cdot|\cdot,\cdot)$ and conditional transition probability $\beta^{\alpha,V_{h'+1}}(\cdot|\cdot,\cdot)$ for each step $h'=1,\dots,h-1$, respectively, we have that
	\begin{align*}
		w_{kh}(s,a) = \sum_{(s_2,\dots,s_{h-1}) \in \cS^{h-2}} \prod_{h'=1}^{h-1} p(s_{h'+1}|s_{h'},a_{h'}) 
	\end{align*}
	and
	\begin{align*}
		w^{\cvar,\alpha,V}_{kh}(s,a) = & \sum_{(s_2,\dots,s_{h-1}) \in \cS^{h-2}} \prod_{h'=1}^{h-1} \beta^{\alpha,V_{h'+1}}(s_{h'+1}|s_{h'},a_{h'}) ,
	\end{align*}
	where $s_1$ is the initial state, $s_h:=s$, and $a_{h'}:=\pi^k(s_{h'})$ for $h'=1,\dots,h-1$. 
	
	Recall that for any risk level $\alpha \in (0,1]$, function $V: \cS \mapsto \R$ and $(s',s,a) \in \cS \times \cS \times \cA$, $\mu^{\alpha,V}(s'|s,a)$ denotes how large the transition probability of successor state $s'$ belongs to the worst $\alpha$-portion successor states (i.e., with the lowest $\alpha$-portion values $V(\cdot)$), which satisfies that  $\frac{\mu^{\alpha,V}(s'|s,a)}{\alpha} = \beta^{\alpha,V}(s'|s,a)$ and $0 \leq \mu^{\alpha,V}(s'|s,a) \leq p(s'|s,a)$.
	
	Thus, we have
	\begin{align*}
		w^{\cvar,\alpha,V}_{kh}(s,a) = & \sum_{(s_2,\dots,s_{h-1}) \in \cS^{h-2}} \prod_{h'=1}^{h-1} \frac{\mu^{\alpha,V_{h'+1}}(s_{h'+1}|s_{h'},a_{h'})}{\alpha}
		\\
		\leq & \sum_{(s_2,\dots,s_{h-1}) \in \cS^{h-2}} \prod_{h'=1}^{h-1} \frac{p(s_{h'+1}|s_{h'},a_{h'})}{\alpha}
		\\
		= & \frac{1}{\alpha^{h-1}} \sum_{(s_2,\dots,s_{h-1}) \in \cS^{h-2}} \prod_{h'=1}^{h-1} p(s_{h'+1}|s_{h'},a_{h'})
		\\
		= & \frac{1}{\alpha^{h-1}} \cdot w_{kh}(s,a)
	\end{align*}
	
	Therefore,
	\begin{align}
		\frac{w^{\cvar,\alpha,V}_{kh}(s,a)}{w_{kh}(s,a)} \leq \frac{1}{\alpha^{h-1}} . \label{eq:1_over_alpha_h-1}
	\end{align}
	
	Combining Eqs.~\eqref{eq:1_over_w_min} and \eqref{eq:1_over_alpha_h-1}, we obtain this lemma.
\end{proof}

\begin{lemma}[Insufficient Visitation] \label{lemma:insufficient_visit}
	It holds that
	\begin{align*}
		\sum_{k=1}^{K} \sum_{h=1}^{H} \sum_{(s,a) \notin \cL_k} 
		w^{\cvar,\alpha,V^{\pi^k}}_{kh}(s,a) \leq & \min \lbr{\frac{ 1 }{\min \limits_{\begin{subarray}{c}\pi,h,s:\  w_{\pi,h}(s)>0\end{subarray}} w_{\pi,h}(s)}, \frac{1}{\alpha^{H-1}}} \cdot \\& \sbr{ 4SAH \log\sbr{\frac{HSA}{\delta'}} + 5SAH } .
	\end{align*}
\end{lemma}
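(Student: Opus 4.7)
The plan is to combine two earlier results: Lemma~\ref{lemma:w_cvar_over_w}, which replaces the distorted visitation by the ordinary visitation up to the factor $\min\{1/w_{\min},1/\alpha^{h-1}\}$, and the defining inequality of $\cL_k$ (Eq.~\eqref{eq:definition_cL_k}), which caps the total ordinary visitation accumulated on any $(s,a)$ before it enters $\cL_k$. Since $\alpha\in(0,1]$ and $h\le H$, we have $1/\alpha^{h-1}\le 1/\alpha^{H-1}$, so the factor can be uniformly upper-bounded by $M:=\min\{1/\min_{\pi,h,s:\,w_{\pi,h}(s)>0}w_{\pi,h}(s),\ 1/\alpha^{H-1}\}$.

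First I would split the outer sum into the indices where $w_{kh}(s,a)=0$ (for which Lemma~\ref{lemma:zero_w_impliy_zero_w_cvar} gives $w^{\cvar,\alpha,V^{\pi^k}}_{kh}(s,a)=0$, so the contribution is zero) and those where $w_{kh}(s,a)>0$. On the latter, Lemma~\ref{lemma:w_cvar_over_w} yields
\[
\sum_{k=1}^{K}\sum_{h=1}^{H}\sum_{(s,a)\notin\cL_k} w^{\cvar,\alpha,V^{\pi^k}}_{kh}(s,a)\ \le\ M\cdot\sum_{k=1}^{K}\sum_{h=1}^{H}\sum_{(s,a)\notin\cL_k} w_{kh}(s,a).
\]
It remains to bound the right-hand visitation sum.

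Next, I would exchange the order of summation and, for each fixed $(s,a)$, let $k_0(s,a)$ denote the first episode in which $(s,a)\in\cL_k$ (set $k_0(s,a)=K+1$ if this never happens); monotonicity of the cumulative sum in the definition of $\cL_k$ guarantees that once $(s,a)$ enters $\cL_k$ it stays there. Consequently $\indicator{(s,a)\notin\cL_k}=0$ for $k\ge k_0(s,a)$, so
\[
\sum_{k=1}^{K}\sum_{h=1}^{H} w_{kh}(s,a)\indicator{(s,a)\notin\cL_k}\ =\ \sum_{k=1}^{k_0(s,a)-1}\sum_{h=1}^{H} w_{kh}(s,a).
\]
Applying the definition of $\cL_k$ at episode $k_0(s,a)-1$ (which, by minimality of $k_0(s,a)$, does not belong to $\cL_{k_0(s,a)-1}$) yields $\sum_{k=1}^{k_0(s,a)-2}\sum_{h=1}^{H} w_{kh}(s,a) < 4H\log(HSA/\delta') + 4H$, and the episode $k=k_0(s,a)-1$ contributes an additional $\sum_{h=1}^{H} w_{kh}(s,a)\le H$. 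Hence the per-pair visitation budget is at most $4H\log(HSA/\delta')+5H$.

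Finally, summing over the $SA$ state-action pairs gives $\sum_{k,h,(s,a)\notin\cL_k} w_{kh}(s,a)\le 4SAH\log(HSA/\delta')+5SAH$, which multiplied by $M$ produces exactly the claimed bound. The only mildly delicate step is the book-keeping in the cumulative-visitation argument, namely noting the once-in, always-in monotonicity of $\cL_k$ and correctly accounting for the single extra episode $k_0(s,a)-1$ that contributes up to $H$ beyond the threshold; everything else follows mechanically from the two cited lemmas.
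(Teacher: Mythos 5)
Your proposal is correct and follows essentially the same route as the paper: it uses the once-in-always-in monotonicity of $\cL_k$ to truncate the visitation sum at the first entry episode (the paper's $\tilde{k}(s,a)$ is exactly your $k_0(s,a)-1$), extracts the per-pair budget $4H\log(HSA/\delta')+5H$ from the defining inequality of $\cL_k$ plus one extra episode, and converts the distorted visitation to the ordinary one via Lemma~\ref{lemma:zero_w_impliy_zero_w_cvar} and Lemma~\ref{lemma:w_cvar_over_w}. The only cosmetic difference is that you apply the ratio bound before the counting argument while the paper does it after, and the paper additionally spells out that $\min_{\pi,h,(s,a):\,w_{\pi,h}(s,a)>0}w_{\pi,h}(s,a)=\min_{\pi,h,s:\,w_{\pi,h}(s)>0}w_{\pi,h}(s)$ for deterministic policies, which your statement of $M$ implicitly uses.
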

\begin{proof}[Proof of Lemma~\ref{lemma:insufficient_visit}]
	According to the definition of $\cL_k$ (Eq.~\eqref{eq:definition_cL_k}), for any $(s,a) \in \cS \times \cA$, once $(s,a)$ satisfies $(s,a) \in \cL_k$ in some episode $k$, it will always satisfy $(s,a) \in \cL_{k'}$ for all $k' \geq k$. For any $(s,a) \in \cS \times \cA$, let $\tilde{k}(s,a)$ denote the last episode $k$ where $(s,a) \notin \cL_k$.
	Then, we have
	\begin{align*}
		\sum_{k=1}^{K} \sum_{h=1}^{H} \sum_{(s,a) \notin \cL_k} w_{kh}(s,a) = & \sum_{(s,a) \in \cS \times \cA} \sum_{k=1}^{K} \sum_{h=1}^{H} w_{kh}(s,a) \cdot \indicator{(s,a) \notin \cL_k}
		\\
		= & \sum_{(s,a) \in \cS \times \cA} \sum_{k=1}^{\tilde{k}(s,a)} \sum_{h=1}^{H} w_{kh}(s,a) 
		\\
		= & \sum_{(s,a) \in \cS \times \cA} \sbr{\sum_{k=1}^{\tilde{k}(s,a)-1} \sum_{h=1}^{H} w_{kh}(s,a) + \sum_{h=1}^{H} w_{\tilde{k}(s,a),h}(s,a)}
		\\
		\overset{\textup{(a)}}{<} & \sum_{(s,a) \in \cS \times \cA} \sbr{4H \log \sbr{\frac{HSA}{\delta'}} +4H+H}
		\\
		\leq & 4SAH \log\sbr{\frac{HSA}{\delta'}} + 5SAH ,
	\end{align*}
	where (a) is due to that $(s,a) \notin \cL_{\tilde{k}(s,a)}$, and for any $k>0$, $h \in [H]$ and $(s,a) \in \cS \times \cA$, $w_{kh}(s,a) \in [0,1]$.
	
	For any policy $\pi$, $h \in [H]$ and $(s,a) \in \cS \times \cA$, let $w_{\pi,h}(s,a)$ and $w_{\pi,h}(s)$ denote the probabilities of visiting $(s,a)$ and $s$ at step $h$ under policy $\pi$, respectively.
	Then, we have
	\begin{align*}
		& \sum_{k=1}^{K} \sum_{h=1}^{H} \sum_{(s,a) \notin \cL_k} 
		w^{\cvar,\alpha,V^{\pi^k}}_{kh}(s,a) 
		\\
		\overset{\textup{(a)}}{=} & \sum_{k=1}^{K} \sum_{h=1}^{H}  \sum_{(s,a) \notin \cL_k} 
		\frac{w^{\cvar,\alpha,V^{\pi^k}}_{kh}(s,a)}{w_{kh}(s,a)} \cdot w_{kh}(s,a) \cdot  \indicator{w_{kh}(s,a) \neq 0}
		\\
		\overset{\textup{(b)}}{\leq} & \min \lbr{\frac{1}{\min \limits_{\begin{subarray}{c}\pi,h,(s,a):\  w_{\pi,h}(s,a)>0\end{subarray}} w_{\pi,h}(s,a)},\ \frac{1}{\alpha^{H-1}}} \sum_{k=1}^{K} \sum_{h=1}^{H} \sum_{(s,a) \notin \cL_k}  w_{kh}(s,a)
		\\
		\overset{\textup{(c)}}{\leq} & \min \lbr{\frac{ 1 }{\min \limits_{\begin{subarray}{c}\pi,h,s:\  w_{\pi,h}(s)>0\end{subarray}} w_{\pi,h}(s)}, \frac{1}{\alpha^{H-1}}} \sbr{ 4SAH \log\sbr{\frac{HSA}{\delta'}} + 5SAH } ,
	\end{align*}
	Here (a) is due to Lemma~\ref{lemma:zero_w_impliy_zero_w_cvar}. (b) comes from Lemma~\ref{lemma:w_cvar_over_w}. (c) uses the fact that for any deterministic policy $\pi$, $h \in [H]$ and $(s,a) \in \cS \times \cA$, we have either $w_{\pi,h}(s,a)=w_{\pi,h}(s)$ or $w_{\pi,h}(s,a)=0$, and thus $\min_{\begin{subarray}{c}\pi,h,(s,a):\ w_{\pi,h}(s,a)>0\end{subarray}} w_{\pi,h}(s,a)=\min_{\begin{subarray}{c}\pi,h,s:\  w_{\pi,h}(s)>0\end{subarray}} w_{\pi,h}(s)$.
\end{proof}

\begin{figure}[t!]
	\centering       
	\includegraphics[width=0.95\textwidth]{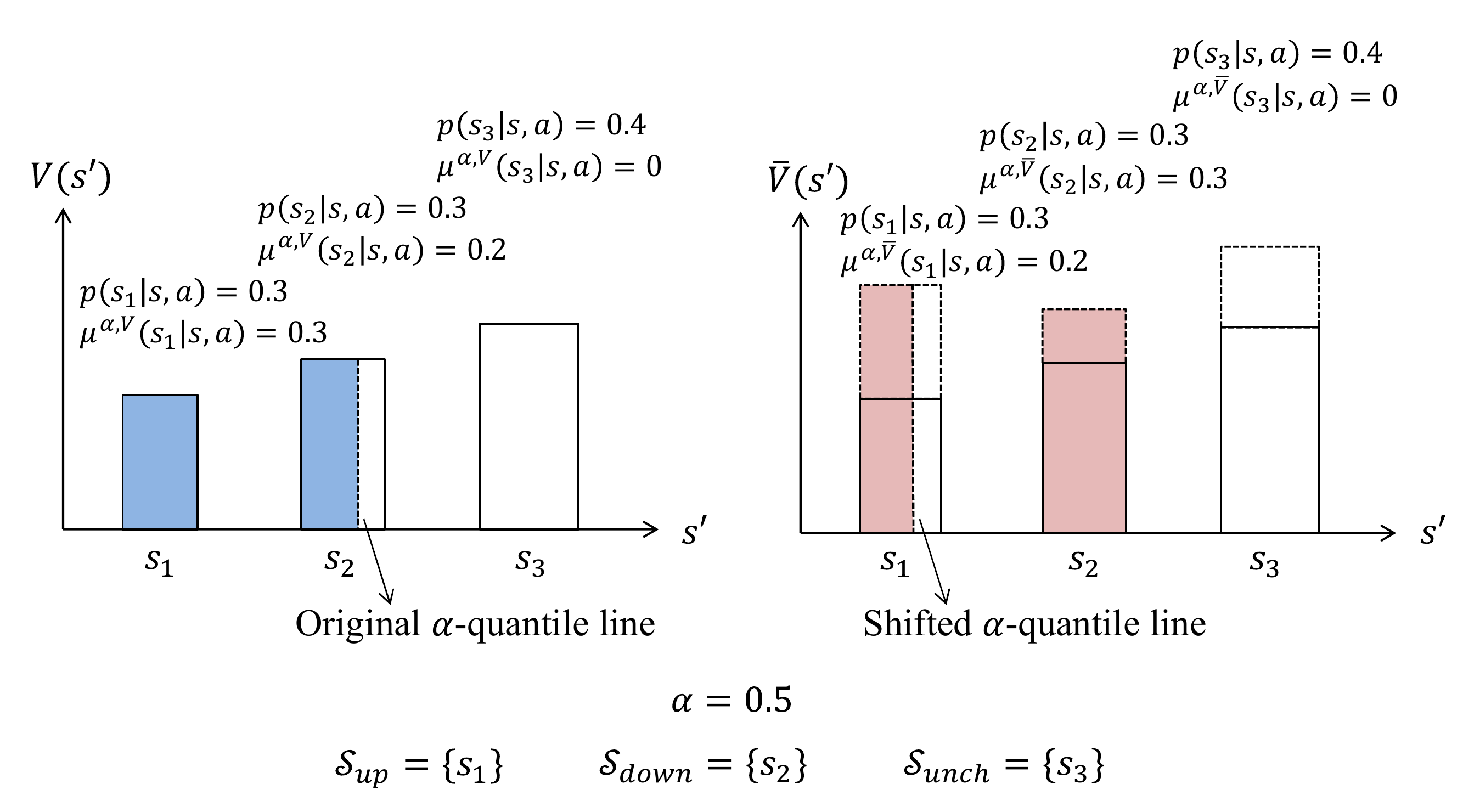} 
	\caption{Illustrating example for Lemma~\ref{lemma:cvar_increase_V}. 
		For each $s' \in \{s_1,s_2,s_3\}$, the height of the bar denotes the value $V(s')$ or $\bar{V}(s')$, and the width of the bar denotes the transition probability $p(s'|s,a)$ (fixed). The colored part of the bars denotes the worst $\alpha$-portion successor states (i.e., with the lowest $\alpha$-portion values $V(s')$ or $\bar{V}(s')$). In this example, $\alpha=0.5$.}
	\label{fig:cvar_increase_V}
\end{figure}

Recall that for any risk level $\alpha \in (0,1]$, function $V: \cS \mapsto \R$ and $(s',s,a) \in \cS \times \cS \times \cA$, $\beta^{\alpha,V}(s'|s,a)$ is the conditional probability of transitioning to $s'$ from $(s,a)$, conditioning on transitioning to the worst $\alpha$-portion successor states $s'$ (i.e., with the lowest $\alpha$-portion values $V(s')$), and it holds that $\cvar^{\alpha}_{s'\sim p(s'|s,a)}(V(s'))=\sum_{s' \in \cS} \beta^{\alpha,V}(s'|s,a) \cdot V(s')$. 

\begin{lemma}[CVaR Gap due to Value Function Shift] \label{lemma:cvar_increase_V}
	For any $(s,a) \in \cS \times \cA$, distribution $p(\cdot|s,a) \in \triangle_{\cS}$, and functions $V,\bar{V}:\cS \mapsto [0,H]$ such that $\bar{V}(s') \geq V(s')$ for any $s' \in \cS$,
	\begin{align*}
		\cvar^{\alpha}_{s' \sim p(\cdot|s,a)}(\bar{V}(s')) - \cvar^{\alpha}_{s' \sim p(\cdot|s,a)}(V(s')) \leq \beta^{\alpha,V}(\cdot|s,a)^\top \sbr{\bar{V}-V} .
	\end{align*}
\end{lemma}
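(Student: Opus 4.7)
My plan is to reduce the lemma to the variational/dual characterization of CVaR: for any function $W:\cS\to\R$,
\[
\cvar^{\alpha}_{s' \sim p(\cdot|s,a)}(W(s')) \;=\; \min_{q \in \cQ} \sum_{s'} q(s')\, W(s'),
\]
where $\cQ := \{ q \in \triangle_{\cS} : q(s') \leq p(s'|s,a)/\alpha \ \forall s' \in \cS \}$. The feasible set $\cQ$ is defined purely in terms of $p(\cdot|s,a)$ and $\alpha$; it does \emph{not} depend on the function whose CVaR is being computed. I would first verify that $\beta^{\alpha,W}(\cdot|s,a) \in \cQ$ for every $W$: the paper's definition gives $\beta^{\alpha,W} = \mu^{\alpha,W}/\alpha$ with $\mu^{\alpha,W}(s'|s,a) \leq p(s'|s,a)$ and $\sum_{s'} \mu^{\alpha,W}(s'|s,a) = \alpha$, so $\beta^{\alpha,W}(\cdot|s,a)$ is a probability distribution pointwise bounded by $p(\cdot|s,a)/\alpha$. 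I would then argue that $\beta^{\alpha,W}$ attains the minimum above.

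Granting the variational representation, the lemma falls out in three lines. Since $\beta^{\alpha,V}(\cdot|s,a) \in \cQ$ (the feasible set is the same regardless of which value function we use), it is an admissible, though not necessarily optimal, choice for the CVaR of $\bar V$. Hence
\[
\cvar^{\alpha}_{s' \sim p(\cdot|s,a)}(\bar V(s')) \;\leq\; \sum_{s'} \beta^{\alpha,V}(s'|s,a)\, \bar V(s') \;=\; \beta^{\alpha,V}(\cdot|s,a)^\top \bar V.
\]
Subtracting the identity $\cvar^{\alpha}_{s' \sim p(\cdot|s,a)}(V(s')) = \beta^{\alpha,V}(\cdot|s,a)^\top V$ (which is stated as the defining property of $\beta^{\alpha,V}$ in the excerpt) produces exactly the claimed bound. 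Note that the hypothesis $\bar V \geq V$ is not used in this chain of reasoning; it is simply the regime where the lemma will be applied (optimistic vs.\ true value function), and it conveniently makes the right-hand side non-negative.

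The main obstacle is establishing that $\beta^{\alpha,V}$ is in fact a minimizer of $q \mapsto q^\top V$ over $\cQ$. I would handle this by an exchange argument: enumerate the states in ascending order of $V(s')$, call them $s'_{(1)}, s'_{(2)}, \dots$, and consider any feasible $q \neq \beta^{\alpha,V}$. By construction, $\beta^{\alpha,V}$ saturates the upper bound $p(s'_{(i)}|s,a)/\alpha$ on the lowest-$V$ states until its cumulative mass reaches $1$, so $q$ must put strictly less mass than $\beta^{\alpha,V}$ on some low-$V$ state $s'_{(i)}$ and strictly more mass on some higher-$V$ state $s'_{(j)}$ (with $j > i$). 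Shifting this excess mass from $s'_{(j)}$ back to $s'_{(i)}$ produces another feasible distribution with a smaller (or equal) expectation of $V$, and iterating reduces $q$ to $\beta^{\alpha,V}$ with monotonically non-increasing cost. Ties at the $\alpha$-quantile of $V$ can be resolved by any consistent tie-breaking rule, since any such rule yields the same value of $q^\top V$. This step is the only place the geometry of CVaR enters; the rest of the proof is a single application of weak duality.
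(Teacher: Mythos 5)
Your proof is correct, but it takes a genuinely different route from the paper's. You invoke the dual (risk-envelope) representation $\cvar^{\alpha}_{s' \sim p(\cdot|s,a)}(W(s')) = \min_{q \in \cQ} q^\top W$ over the \emph{fixed} polytope $\cQ = \{ q \in \triangle_{\cS} : q(s') \leq p(s'|s,a)/\alpha \}$, observe that $\beta^{\alpha,V}(\cdot|s,a) \in \cQ$ is feasible (though generally suboptimal) for the problem defining $\cvar^{\alpha}(\bar V)$ while being optimal for $\cvar^{\alpha}(V)$, and conclude by weak duality in one line; your exchange argument correctly establishes the representation itself, so nothing is left dangling. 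The paper instead gives an elementary, self-contained rearrangement argument: it partitions the successor states into sets $\cS_{up}$, $\cS_{down}$, $\cS_{unch}$ according to how the tail weights $\mu^{\alpha,V}(s'|s,a)$ change to $\mu^{\alpha,\bar V}(s'|s,a)$ when the value function shifts, expands the difference of the two CVaRs algebraically, and controls the cross terms using the conservation of shifted mass together with the fact that every up-moving state has a larger $\bar V$-value than every down-moving state. Your approach is shorter and more conceptual, cleanly isolates the one structural fact that matters (the feasible set $\cQ$ does not depend on the value function), and makes explicit that the hypothesis $\bar V \geq V$ is not actually needed for the inequality; the paper's approach avoids appealing to the variational characterization and stays entirely at the level of the explicit tail weights it has already introduced, which it then reuses in the concentration lemmas. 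Both are valid proofs of the stated bound.
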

\begin{proof}[Proof of Lemma~\ref{lemma:cvar_increase_V}]
	Recall that for any risk level $\alpha \in (0,1]$, function $V: \cS \mapsto \R$ and $(s',s,a) \in \cS \times \cS \times \cA$, $\beta^{\alpha,V}(s'|s,a)$ is the conditional transition probability from $(s,a)$ to $s'$, conditioning on transitioning to the worst $\alpha$-portion successor states $s'$ (i.e., with the lowest $\alpha$-portion values $V(s')$), and $\mu^{\alpha,V}(s'|s,a)$ denotes how large the transition probability of successor state $s'$ belongs to the worst $\alpha$-portion, which satisfies that  $\frac{\mu^{\alpha,V}(s'|s,a)}{\alpha} = \beta^{\alpha,V}(s'|s,a)$ and $\sum_{s' \in \cS} \mu^{\alpha,V}(s'|s,a)=\alpha$.
	Then, for any risk level $\alpha \in (0,1]$, function $V: \cS \mapsto \R$ and $(s',s,a) \in \cS \times \cS \times \cA$,
	\begin{align*}
		\cvar^{\alpha}_{s' \sim p(\cdot|s,a)}(V(s')) = & \frac{\sum_{s' \in \cS} \mu^{\alpha,V}(s'|s,a) \cdot V(s')}{\alpha} = \sum_{s' \in \cS} \beta^{\alpha,V}(s'|s,a) \cdot V(s') ,
	\end{align*}
	
	As shown in Figure~\ref{fig:cvar_increase_V}, we sort all successor states $s' \in \cS$ by their values $V(s')$ in ascending order (from left to right).
	Fix the transition probability $p(\cdot|s,a)$ and the value function shifts from $V(\cdot)$ to $\bar{V}(\cdot)$. Then, below we divide all successor states $s' \in \cS$ into three subsets, i.e., $\cS_{up}$, $\cS_{down}$ and $\cS_{unch}$, according to how $\mu^{\alpha,V}(s'|s,a)$ changes to $\mu^{\alpha,\bar{V}}(s'|s,a)$ as $V(s')$ shifts to $\bar{V}(s')$.
	
	\begin{itemize}
		\item For any $s' \in \cS_{up}$,  $\mu^{\alpha,\bar{V}}(s'|s,a) < \mu^{\alpha,V}(s'|s,a)$, the rank of $s'$ goes up, and the position of $s'$ moves to the right (here ``rank'' means to rank all successor states $s' \in \cS$ by their values $V(s')$ or $\bar{V}(s')$ from highest to lowest). 
		
		\item For any $s' \in \cS_{down}$,  $\mu^{\alpha,\bar{V}}(s'|s,a) > \mu^{\alpha,V}(s'|s,a)$, the rank of $s'$ goes down, and the position of $s'$ moves to the left. 
		
		\item For any $s' \in \cS_{unch}$,  $\mu^{\alpha,\bar{V}}(s'|s,a) = \mu^{\alpha,V}(s'|s,a)$, the rank and position of $s'$ keep unchanged. 
	\end{itemize}
	
	Then, it holds that
	\begin{align}
		\sum_{s' \in \cS_{up}}  \sbr{\mu^{\alpha,\bar{V}}(s'|s,a) - \mu^{\alpha,V}(s'|s,a)}  + \sum_{s' \in \cS_{down}}  \sbr{\mu^{\alpha,\bar{V}}(s'|s,a) - \mu^{\alpha,V}(s'|s,a)} =0 . \label{eq:V_shift_asc_des}
	\end{align}
	
	Next, we have
	\begin{align*}
		& \cvar^{\alpha}_{s' \sim p(\cdot|s,a)}(\bar{V}(s')) - \cvar^{\alpha}_{s' \sim p(\cdot|s,a)}(V(s')) 
		\\
		= & \frac{1}{\alpha} \cdot \Bigg( \sum_{s' \in \cS_{up}} \sbr{ \mu^{\alpha,\bar{V}}(s'|s,a) \cdot \bar{V}(s') - \mu^{\alpha,V}(s'|s,a) \cdot V(s') } 
		\\& 
		+ \sum_{s' \in \cS_{down}} \sbr{ \mu^{\alpha,\bar{V}}(s'|s,a) \cdot \bar{V}(s') - \mu^{\alpha,V}(s'|s,a) \cdot V(s') } 
		\\& 
		+ \sum_{s' \in \cS_{unch}} \sbr{ \mu^{\alpha,\bar{V}}(s'|s,a) \cdot \bar{V}(s') - \mu^{\alpha,V}(s'|s,a) \cdot V(s') } \Bigg)
		\\
		= & \frac{1}{\alpha} \cdot \Bigg( \sum_{s' \in \cS_{up}} \bigg( \mu^{\alpha,V}(s'|s,a) \cdot \sbr{ \bar{V}(s') -  V(s') } 
		\quad + \sbr{ \mu^{\alpha,\bar{V}}(s'|s,a) - \mu^{\alpha,V}(s'|s,a) } \cdot \bar{V}(s') \bigg)
		\\& 
		+ \sum_{s' \in \cS_{down}} \bigg( \mu^{\alpha,V}(s'|s,a) \cdot \sbr{ \bar{V}(s') -  V(s') } 
		\quad + \sbr{ \mu^{\alpha,\bar{V}}(s'|s,a) - \mu^{\alpha,V}(s'|s,a) } \cdot \bar{V}(s') \bigg)
		\\& + \sum_{s' \in \cS_{unch}} \mu^{\alpha,V}(s'|s,a) \cdot \sbr{  \bar{V}(s') -  V(s') } \Bigg)
		\\
		= & \frac{1}{\alpha} \cdot \Bigg( \sum_{s \in \cS} \mu^{\alpha,V}(s'|s,a) \cdot \sbr{ \bar{V}(s') - V(s') }
		- \sum_{s' \in \cS_{up}}  \sbr{\mu^{\alpha,V}(s'|s,a) - \mu^{\alpha,\bar{V}}(s'|s,a) } \cdot \bar{V}(s') 
		\\ 
		& + \sum_{s' \in \cS_{down}}  
		\sbr{ \mu^{\alpha,\bar{V}}(s'|s,a) - \mu^{\alpha,V}(s'|s,a) } \cdot \bar{V}(s') 
		\Bigg)
		\\
		\overset{\textup{(a)}}{\leq} & \frac{1}{\alpha} \Bigg( \sum_{s \in \cS}  \mu^{\alpha,V}(s'|s,a) \cdot \sbr{ \bar{V}(s') - V(s') }
		- \min_{s' \in \cS_{up}}  \bar{V}(s') \cdot \sum_{s' \in \cS_{up}} \sbr{ \mu^{\alpha,V}(s'|s,a) - \mu^{\alpha,\bar{V}}(s'|s,a) } 
		\\ 
		& + \min_{s' \in \cS_{up}}  \bar{V}(s') \cdot \sum_{s' \in \cS_{down}}  
		\sbr{ \mu^{\alpha,\bar{V}}(s'|s,a) - \mu^{\alpha,V}(s'|s,a) }  
		\Bigg)
		\\
		\overset{\textup{(b)}}{=} & \frac{1}{\alpha} \cdot  \sum_{s \in \cS}  \mu^{\alpha,V}(s'|s,a) \cdot \sbr{ \bar{V}(s') - V(s') } 
		\\
		= &  \beta^{\alpha,V}(\cdot|s,a)^\top \sbr{ \bar{V} - V } 
	\end{align*}
	Here (a) is due to that for any $s' \in \cS_{up}$,  $\mu^{\alpha,\bar{V}}(s'|s,a) < \mu^{\alpha,V}(s'|s,a)$, and for any $s \in \cS_{up}$, $s' \in \cS_{down}$, $\bar{V}(s) \geq \bar{V}(s')$. (b) comes from Eq.~\eqref{eq:V_shift_asc_des}.
	
\end{proof}

\subsubsection{Proof of Theorem~\ref{thm:cvar_rm_ub}}

\begin{proof}[Proof of Theorem~\ref{thm:cvar_rm_ub}]
	Suppose that event $\cE$ holds. Then, for any $k \in [K]$,
	\begin{align}
		& V^{*}_1(s^k_1)-V^{\pi^k}_1(s^k_1)
		\nonumber\\
		\overset{\textup{(a)}}{\leq} & \bar{V}^k_1(s^k_1)-V^{\pi^k}_1(s^k_1)
		\nonumber\\
		= & \min \lbr{r(s^k_1,a^k_1) + \cvar^{\alpha}_{s' \sim \hat{p}^k(\cdot|s^k_1,a^k_1)}(\bar{V}^k_{2}(s')) + \frac{H}{\alpha}\sqrt{\frac{L}{n_k(s^k_1,a^k_1)}},\ H} 
		\nonumber\\ & - \sbr{r(s^k_1,a^k_1) + \cvar^{\alpha}_{s' \sim p(\cdot|s^k_1,a^k_1)}(V^{\pi^k}_{2}(s'))}
		\nonumber\\
		\leq & r(s^k_1,a^k_1) + \cvar^{\alpha}_{s' \sim \hat{p}^k(\cdot|s^k_1,a^k_1)}(\bar{V}^k_{2}(s')) + \min \lbr{\frac{H}{\alpha}\sqrt{\frac{L}{n_k(s^k_1,a^k_1)}},\ H} 
		\nonumber\\ & - \sbr{r(s^k_1,a^k_1) + \cvar^{\alpha}_{s' \sim p(\cdot|s^k_1,a^k_1)}(V^{\pi^k}_{2}(s'))}
		\nonumber\\
		= & \min \lbr{\frac{H}{\alpha}\sqrt{\frac{L}{n_k(s^k_1,a^k_1)}},\ H} + \cvar^{\alpha}_{s' \sim \hat{p}^k(\cdot|s^k_1,a^k_1)}(\bar{V}^k_{2}(s')) - \cvar^{\alpha}_{s' \sim p(\cdot|s^k_1,a^k_1)}(\bar{V}^k_{2}(s')) \nonumber\\& + \cvar^{\alpha}_{s' \sim p(\cdot|s^k_1,a^k_1)}(\bar{V}^k_{2}(s')) - \cvar^{\alpha}_{s' \sim p(\cdot|s^k_1,a^k_1)}(V^{\pi^k}_{2}(s'))
		\nonumber\\
		\overset{\textup{(b)}}{\leq} & \min \lbr{\frac{H}{\alpha}\sqrt{\frac{L}{n_k(s^k_1,a^k_1)}},\ H} \!+\! \min \lbr{\frac{4H}{\alpha} \sqrt{\frac{SL}{n_k(s^k_1,a^k_1)}},\ H} \!+\! \beta^{\alpha,V^{\pi^k}_2}(\cdot|s^k_1,a^k_1)^\top (\bar{V}^k_2-V^{\pi^k}_2)
		\nonumber\\
		\overset{\textup{(c)}}{\leq} & \min \lbr{\frac{H\sqrt{L}+4H\sqrt{SL}}{\alpha \sqrt{n_k(s^k_1,a^k_1)}},\ 2H} +\sum_{s_2 \in \mathcal{S}} \beta^{\alpha,V^{\pi^k}_2}(s_2|s^k_1,a^k_1) \cdot (\bar{V}^k_2(s_2)-V^{\pi^k}_2(s_2))
		\nonumber\\
		\overset{\textup{(d)}}{\leq} & \min \lbr{\frac{H\sqrt{L}+4H\sqrt{SL}}{\alpha \sqrt{n_k(s^k_1,a^k_1)}},\ 2H} +\sum_{s_2 \in \mathcal{S}} \beta^{\alpha,V^{\pi^k}_2}(s_2|s^k_1,a^k_1) \cdot \nonumber\\& \left( \min \lbr{\frac{H\sqrt{L}+4H\sqrt{SL}}{\alpha \sqrt{n_k(s_2,a_2)}},\ 2H} +  \sum_{s_3 \in \mathcal{S}} \beta^{\alpha,V^{\pi^k}_3}(s_3|s_2,a_2) \cdot (\bar{V}^k_3(s_3) - V^{\pi^k}_3(s_3)) \right)
		\nonumber\\
		\overset{\textup{(e)}}{\leq} & \min \lbr{\frac{H\sqrt{L}+4H\sqrt{SL}}{\alpha \sqrt{n_k(s^k_1,a^k_1)}},\ 2H} + \sum_{s_2 \in \mathcal{S}} \beta^{\alpha,V^{\pi^k}_2}(s_2|s^k_1,a^k_1) \cdot \nonumber\\& \Bigg( \min \lbr{\frac{H\sqrt{L}+4H\sqrt{SL}}{\alpha \sqrt{n_k(s_2,a_2)}},\ 2H} + \sum_{s_3 \in \mathcal{S}} \beta^{\alpha,V^{\pi^k}_3}(s_3|s_2,a_2) \cdot \nonumber\\& \quad \left(\dots \sum_{s_H \in \mathcal{S}} \beta^{\alpha,V^{\pi^k}_H}(s_H|s_{H-1},a_{H-1}) \cdot \left( \min \lbr{\frac{H\sqrt{L}+4H\sqrt{SL}}{\alpha \sqrt{n_k(s_H,a_H)}},\ 2H} \right) \right) \Bigg) 
		\nonumber\\
		\overset{\textup{(f)}}{=} & \sum_{h=1}^{H} \sum_{(s,a) \in \mathcal{S} \times \mathcal{A}} w_{kh}^{CVaR,\alpha,V^{\pi^k}}(s,a) \cdot \min \lbr{\frac{H\sqrt{L}+4H\sqrt{SL}}{\alpha \sqrt{n_k(s,a)}},\ 2H} 
		\nonumber\\
		\leq & \sum_{h=1}^{H} \sum_{(s,a) \in \cL_k} w_{kh}^{CVaR,\alpha,V^{\pi^k}}(s,a) \cdot \frac{H\sqrt{L}+4H\sqrt{SL}}{\alpha \sqrt{n_k(s,a)}} 
		\nonumber\\
		& + \sum_{h=1}^{H} \sum_{(s,a) \notin \cL_k} w_{kh}^{CVaR,\alpha,V^{\pi^k}}(s,a) \cdot 2H \label{eq:V_diff_an_episode}
	\end{align}
	Here $a_h:=\pi^k(s_h)$ for $h=2,\dots,H$. (a) is due to Lemma~\ref{lemma:optimism}. (b) uses Lemma~\ref{lemma:concentration_any_V} and the fact that for any $k>0$, $h \in [H]$ and $s \in \cS$, $\bar{V}^k_{h}(s) \in [0,H]$, and thus for any $k>0$, $h \in [H]$ and $(s,a) \in \cS \times \cA$, $\cvar^{\alpha}_{s' \sim \hat{p}^k(\cdot|s,a)}(\bar{V}^k_{h+1}(s'))-\cvar^{\alpha}_{s' \sim p(\cdot|s,a)}(\bar{V}^k_{h+1}(s')) \leq H$, and also uses Lemma~\ref{lemma:cvar_increase_V}. (c) comes from the property of $\min\{\cdot,\cdot\}$. (d) and (e) follow from recurrently applying steps (a)-(c). (f) is due to that $w_{kh}^{CVaR,\alpha,V^{\pi^k}}(s,a)$ is defined as the probability of visiting $(s,a)$ at step $h$ of episode $k$ under the conditional transition probability $\beta^{\alpha,V_{h'+1}^{\pi^k}}(\cdot|\cdot,\cdot)$ for each step $h'=1,\dots,h-1$.
	
	Since the second term in Eq.~\eqref{eq:V_diff_an_episode} can be bounded by  Lemma~\ref{lemma:insufficient_visit}, below we analyze the first term.
	
	Recall that for any policy $\pi$, $h \in [H]$ and $(s,a) \in \cS \times \cA$, $w_{\pi,h}(s,a)$ and $w_{\pi,h}(s)$ denote the probabilities of visiting $(s,a)$ and $s$ at step $h$ under policy $\pi$, respectively.
	Summing the first term in Eq.~\eqref{eq:V_diff_an_episode} over $k \in [K]$, we have
	\begin{align*}
		& \sum_{k=1}^{K} \sum_{h=1}^{H} \sum_{(s,a) \in \cL_k} w^{\cvar,\alpha,V^{\pi^k}}_{kh}(s,a) \frac{H\sqrt{L}+4H\sqrt{SL}}{\alpha \sqrt{n_k(s,a)}} 
		\\
		\leq & \frac{H\sqrt{L}+4H\sqrt{SL}}{\alpha} \sqrt{ \sum_{k=1}^{K} \sum_{h=1}^{H} \sum_{(s,a) \in \cL_k} \frac{w^{\cvar,\alpha,V^{\pi^k}}_{kh}(s,a)}{n_k(s,a)}} \cdot \sqrt{\sum_{k=1}^{K} \sum_{h=1}^{H} \sum_{(s,a) \in \cL_k} w^{\cvar,\alpha,V^{\pi^k}}_{kh}(s,a)}
		\\
		\overset{\textup{(a)}}{=} & \frac{H\sqrt{L}+4H\sqrt{SL}}{\alpha} \sqrt{ \sum_{k=1}^{K} \sum_{h=1}^{H} \sum_{(s,a) \in \cL_k} \frac{w^{\cvar,\alpha,V^{\pi^k}}_{kh}(s,a)}{n_k(s,a)} \cdot \indicator{w_{kh}(s,a) \neq 0} } \cdot \sqrt{KH}
		\\
		= & \frac{ (H\sqrt{L}+4H\sqrt{SL}) \sqrt{KH} }{\alpha}   \sqrt{ \sum_{k=1}^{K} \sum_{h=1}^{H} \! \sum_{(s,a) \in \cL_k} \!\!\!\! \frac{w^{\cvar,\alpha,V^{\pi^k}}_{kh}(s,a)}{w_{kh}(s,a)} \!\cdot\! \frac{w_{kh}(s,a)}{n_k(s,a)} \!\cdot\! \indicator{w_{kh}(s,a) \neq 0} } 
		\\
		\overset{\textup{(b)}}{\leq} & \frac{ (\! H\sqrt{L} \!+\! 4H\sqrt{SL}) \sqrt{KH}}{\alpha} \!\!\!\! \sqrt{ \!\min \!\bigg\{\!\! \frac{1}{\min \limits_{\begin{subarray}{c}\pi,h,(s,a):\ w_{\pi,h}(s,a)>0\end{subarray}} w_{\pi,h}(s,a)}, \frac{1}{\alpha^{H-1}} \!\!\bigg\} \! \sum_{k=1}^{K} \! \sum_{h=1}^{H} \! \sum_{(s,a) \in \cL_k} \!\!\!\!\!\! \frac{w_{kh}(s,a)}{n_k(s,a)} } 
		\\
		\overset{\textup{(c)}}{\leq} & \frac{(H\sqrt{L}+4H\sqrt{SL}) \sqrt{KH}}{\alpha} \cdot 2\sqrt{SAL} \cdot \min \lbr{ \frac{1}{\sqrt{\min \limits_{\begin{subarray}{c}\pi,h,s:\  w_{\pi,h}(s)>0\end{subarray}} w_{\pi,h}(s)}},\ \frac{1}{\sqrt{\alpha^{H-1}}} }
		\\
		\leq & \frac{ 10HSL \sqrt{KHA} }{\alpha} \cdot \min \lbr{ \frac{1}{\sqrt{\min \limits_{\begin{subarray}{c}\pi,h,s:\  w_{\pi,h}(s)>0\end{subarray}} w_{\pi,h}(s)}},\ \frac{1}{\sqrt{\alpha^{H-1}}} } .
	\end{align*}
	Here (a) is due to Lemma~\ref{lemma:zero_w_impliy_zero_w_cvar} and the fact that for any $k>0$ and $h \in [H]$, $\sum_{(s,a) \in \cS \times \cA} w^{\cvar,\alpha,V^{\pi^k}}_{kh}(s,a)=1$. (b) comes from Lemma~\ref{lemma:w_cvar_over_w}. (c) uses Lemma~\ref{lemma:standard_visitation} and the fact that for any deterministic policy $\pi$, $h \in [H]$ and $(s,a) \in \cS \times \cA$, we have either $w_{\pi,h}(s,a)=w_{\pi,h}(s)$ or $w_{\pi,h}(s,a)=0$, and thus $\min_{\begin{subarray}{c}\pi,h,(s,a):\ w_{\pi,h}(s,a)>0\end{subarray}} w_{\pi,h}(s,a)=\min_{\begin{subarray}{c}\pi,h,s:\  w_{\pi,h}(s)>0\end{subarray}} w_{\pi,h}(s)$.
	
	Then, summing the first and second terms in Eq.~\eqref{eq:V_diff_an_episode} over $k \in [K]$ and using Lemma~\ref{lemma:insufficient_visit}, we have
	\begin{align*}
		\cR(K) = & \sum_{k=1}^{K} \sbr{V^{*}_1(s^k_1)-V^{\pi^k}_1(s^k_1)} 
		\\
		\leq & \sum_{k=1}^{K} \sum_{h=1}^{H} \sum_{(s,a) \in \cL_k} w^{\cvar,\alpha,V^{\pi^k}}_{kh}(s,a) \frac{H\sqrt{L}+4H\sqrt{SL}}{\alpha  \sqrt{n_k(s,a)}} \\& + \sum_{k=1}^{K} \sum_{h=1}^{H} \sum_{(s,a) \notin \cL_k} w^{\cvar,\alpha,V^{\pi^k}}_{kh}(s,a) \cdot 2H
		\\
		\leq & \min \lbr{ \frac{1}{\sqrt{\min \limits_{\begin{subarray}{c}\pi,h,s:\  w_{\pi,h}(s)>0\end{subarray}} w_{\pi,h}(s)}},\ \frac{1}{\sqrt{\alpha^{H-1}}} } \frac{ 10HS \sqrt{KHA} }{\alpha} \log\sbr{\frac{KHSA}{\delta'}} \\& + \min \lbr{\frac{ 1 }{\min \limits_{\begin{subarray}{c}\pi,h,s:\  w_{\pi,h}(s)>0\end{subarray}} w_{\pi,h}(s)},\ \frac{1}{\alpha^{H-1}}} \sbr{ 8SAH^2 \log\sbr{\frac{HSA}{\delta'}} + 10SAH^2 }
	\end{align*}
\end{proof}

When $K$ is large enough, the first term dominates the bound, and thus we obtain Theorem~\ref{thm:cvar_rm_ub}.

\subsection{Proof of Regret Lower Bound} \label{apx:lb_rm}

Below we prove the regret lower bound (Theorem~\ref{thm:cvar_rm_lb}) for Iterated CVaR RL-RM.

\begin{proof}[Proof of Theorem~\ref{thm:cvar_rm_lb}]
	First, we construct an instance where $\min_{\begin{subarray}{c}\pi,h,s:\  w_{\pi,h}(s)>0\end{subarray}} w_{\pi,h}(s) > \alpha^{H-1}$, and prove that on this instance any algorithm must suffer a $\Omega ( \frac{H}{\sqrt{\min_{\begin{subarray}{c}\pi,h,s:\  w_{\pi,h}(s)>0\end{subarray}} w_{\pi,h}(s)}} \sqrt{\frac{AK}{\alpha}} )$ regret.
	
	\revision{
		Consider the instance shown in Figure~\ref{fig:lower_bound} (the same as Figure~\ref{fig:lower_bound_main_text} in the main text):
		
		The state space is $\cS=\{s_1,s_2,\dots,s_n,x_1,x_2,x_3\}$, where $s_1$ is the initial state, and $n=S-3<S<\frac{1}{2}H$.
		
		The reward functions are as follows.
		For any $a \in \cA$, $r(x_1,a)=1$, $r(x_2,a)=0.8$ and $r(x_3,a)=0.2$. For any $i \in [n]$ and $a \in \cA$, $r(s_i,a)=0$.
		
		The transition distributions are as follows.
		Let $\mu$ be a parameter which satisfies that $0<\alpha<\mu<\frac{1}{3}$.
		For any $a \in \cA$, $p(s_2|s_1,a)=\mu$, $p(x_1|s_1,a)=1-3\mu$, $p(x_2|s_1,a)=\mu$ and $p(x_3|s_1,a)=\mu$.
		For any $i\in\{2,\dots,n-1\}$ and $a \in \cA$,
		$p(s_{i+1}|s_i,a)=\mu$ and $p(x_1|s_i,a)=1-\mu$.
		$x_1$, $x_2$ and $x_3$ are absorbing states, i.e., for any $a \in \cA$, $p(x_1|x_1,a)=1$, $p(x_2|x_2,a)=1$ and $p(x_3|x_3,a)=1$.
		Let $a_{J}$ be the optimal action in state $s_n$, which is uniformly drawn from $\cA$.
		For the optimal action $a_J$, $p(x_2|s_n,a_{J})=1-\alpha+\eta$ and $p(x_3|s_n,a_{J})=\alpha-\eta$, where $\eta$ is a parameter which satisfies $0<\eta<\alpha$ and will be chosen later.
		For any suboptimal action $a \in \cA \setminus \{a_{J}\}$,
		$p(x_2|s_n,a)=1-\alpha$ and $p(x_3|s_n,a)=\alpha$.
		
		\begin{figure} [t!]
			\centering     
			\includegraphics[width=0.8\textwidth]{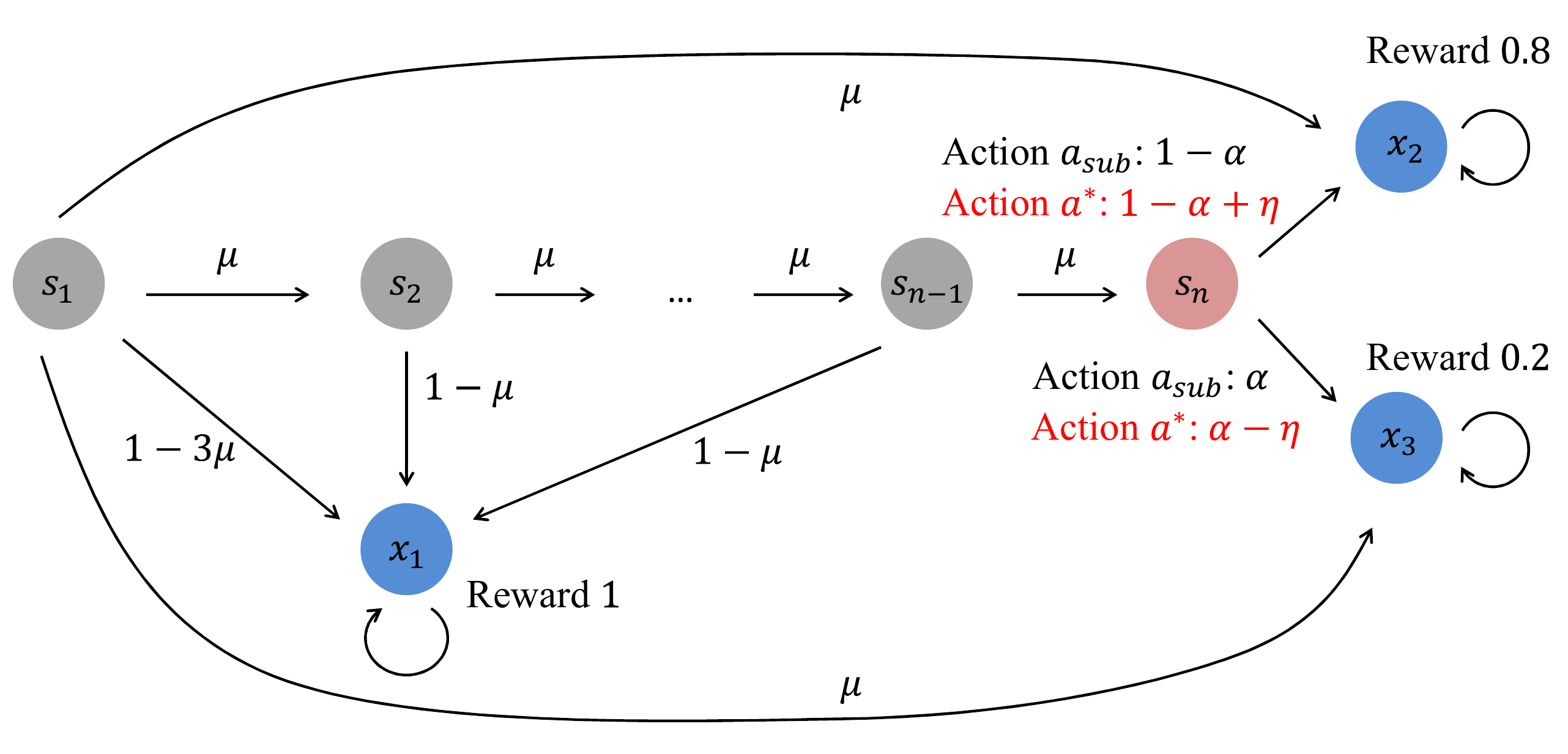}  
			\caption{\revision{Instance of lower bounds (Theorems~\ref{thm:cvar_rm_lb} and \ref{thm:bpi_lb}) for the $\min_{\begin{subarray}{c}\pi,h,s:\  w_{\pi,h}(s)>0\end{subarray}} w_{\pi,h}(s) > \alpha^{H-1}$ case.}} 
			\label{fig:lower_bound} 
		\end{figure} 
		
		For any $a_j \in \cA$, let $\ex_j[\cdot]$ and $\Pr_j[\cdot]$ denote the expectation and probability operators under the instance with $a_J=a_j$.
		Let $\ex_{unif}[\cdot]$ and $\Pr_{unif}[\cdot]$ denote the expectation and probability operators under the uniform instance where all actions $a \in \cA$ in state $s_n$ have the same transition distribution, i.e., $p(x_2|s_n,a)=1-\alpha$ and $p(x_3|s_n,a)=\alpha$.
		
		Fix an algorithm $\cA$. 
		Let $\pi^k$ denote the policy taken by algorithm $\cA$ in episode $k$.
		Let $N_{s_n,a_j}=\sum_{k=1}^{K} \indicator{\pi^k(s_n)=a_j}$ denote the number of episodes that the policy chooses $a_j$ in state $s_n$.
		Let $V_{s_n,a_j}$ denote the number of episodes that the algorithm $\cA$ visits $(s_n,a_j)$. 
		Let $w(s_n)$ denote the probability of visiting $s_n$ in an episode (the probability of visiting $s_n$ is the same for all policies). 
		Then, it holds that $\ex[V_{s_n,a_j}]=w(s_n) \cdot \ex[N_{s_n,a_j}]$.
		
		Recall that $a_J$ is the optimal action in state $s_n$.
		According to the definition of the value function for Iterated CVaR RL,
		we have that
		\begin{align*}
			V^{*}_1(s_1) = &  \frac{(\alpha-\eta)\cdot 0.2(H-n)+\eta \cdot 0.8(H-n)}{\alpha} ,
		\end{align*}
		and for any policy $\pi$,
		\begin{align*}
			V^{\pi}_1(s_1) = &  \frac{(\alpha-\eta)\cdot 0.2(H-n)+\eta \cdot 0.8(H-n)}{\alpha} \cdot \indicator{\pi(s_n)=a_{J}} \\& +   0.2(H-n) \cdot \sbr{1-\indicator{\pi(s_n)=a_{J}}} .
		\end{align*}
		
		If $J=j$, for any policy $\pi$,
		\begin{align}
			V^{*}_1(s_1) - V^{\pi}_1(s_1) = \frac{\eta \cdot 0.6(H-n)}{\alpha} \cdot \sbr{1-\indicator{\pi(s_n)=a_{j}}} , \label{eq:lb_proof_suboptimality}
		\end{align}
		and summing over all episodes $k \in [K]$, we have
		\begin{align*}
			\ex_j \mbr{\cR(K)} = & \sum_{k=1}^{K} \sbr{V^{*}_1(s_1)-V^{\pi^k}_1(s_1)}
			\\
			= & \frac{\eta \cdot 0.6(H-n)}{\alpha} \cdot \sbr{K - \sum_{k=1}^{K} \indicator{\pi(s_n)=a_{j}}} 
			\\
			= & \frac{\eta \cdot 0.6(H-n)}{\alpha} \cdot \sbr{K - \ex_j[N_{s_n,a_j}]} 
		\end{align*}
		
		Therefore, we have
		\begin{align}
			\ex \mbr{\cR(K)} = &  \frac{1}{A} \sum_{j=1}^{A} \sum_{k=1}^{K} \sbr{V^{*}_1(s_1)-V^{\pi^k}_1(s_1)}
			\nonumber\\
			= & \frac{1}{A} \sum_{j=1}^{A} \frac{\eta}{\alpha} \cdot 0.6 (H-n) \sbr{K-\ex_j[N_{s_n,a_j}] } 
			\nonumber\\
			= & 0.6 (H-n) \cdot \frac{\eta}{\alpha} \cdot  \sbr{K-\frac{1}{A} \sum_{j=1}^{A} \ex_j[N_{s_n,a_j}] } 
			\label{eq:regret_lb_half}
		\end{align}
		
		For any $j \in [A]$, using Pinsker's inequality and $0<\alpha<\frac{1}{3}$, we have that $\kl(p_{unif}(s_n,a_j)\|p_j(s_n,a_j)) = \kl(\bernoulli(\alpha)\|\bernoulli(\alpha-\eta)) \leq \frac{\eta^2}{(\alpha-\eta) (1-\alpha+\eta)} \leq  \frac{c_1 \eta^2}{\alpha}$ for some constant $c_1$ and small enough $\eta$.
		Then, using Lemma~A.1 in \citep{auer2002nonstochastic}, we have that for any $j \in [A]$,
		\begin{align*}
			\ex_j[N_{s_n,a_j}] \leq & \ex_{unif}[N_{s_n,a_j}] + \frac{K}{2} \sqrt{ \ex_{unif}[V_{s_n,a_j}] \cdot \kl\sbr{p_{unif}(s_n,a_j)||p_j(s_n,a_j)} }
			\nonumber\\
			\leq & \ex_{unif}[N_{s_n,a_j}] + \frac{K}{2} \sqrt{ w(s_n) \cdot \ex_{unif}[N_{s_n,a_j}] \cdot \frac{c_1 \eta^2}{\alpha} } 
		\end{align*}
		
		Then, using $\sum_{j=1}^{A} \ex_{unif}[N_{s_n,a_j}]=K$ and the Cauchy–Schwarz inequality, we have
		\begin{align}
			\frac{1}{A} \sum_{j=1}^{A} \ex_j[N_{s_n,a_j}] 
			\leq & \frac{1}{A} \sum_{j=1}^{A} \ex_{unif}[N_{s_n,a_j}] +   \frac{K \eta}{2A} \sum_{j=1}^{A} \sqrt{ \frac{c_1}{\alpha} \cdot w(s_n) \cdot \ex_{unif}[N_{s_n,a_j}] } 
			\nonumber\\
			\leq & \frac{1}{A} \sum_{j=1}^{A} \ex_{unif}[N_{s_n,a_j}] +   \frac{K \eta}{2A} \sqrt{ A \sum_{j=1}^{A} \frac{c_1}{\alpha} \cdot w(s_n) \cdot \ex_{unif}[N_{s_n,a_j}] } 
			\nonumber\\
			\leq & \frac{K}{A}  +  \frac{K \eta}{2} \sqrt{ \frac{ c_1 \cdot w(s_n) K}{\alpha A}  }  \label{eq:avg_N_a_j}
		\end{align}
		
		By plugging Eq.~\eqref{eq:avg_N_a_j} into Eq.~\eqref{eq:regret_lb_half}, we have
		\begin{align*}
			\ex \mbr{\cR(K)} 
			\geq 0.6 (H-n) \cdot \frac{\eta}{\alpha} \cdot  \sbr{K - \frac{K}{A} - \frac{K \eta}{2} \sqrt{ \frac{c_1 \cdot w(s_n) K}{\alpha A}  }  } .
		\end{align*}
		
		Let $\eta=c_2\sqrt{\frac{\alpha A}{w(s_n) K}}$ for a small enough constant $c_2$. We have
		\begin{align*}
			\ex \mbr{\cR(K)} 
			= & \Omega \sbr{ H  \sqrt{\frac{A}{\alpha \cdot w(s_n) K}} \cdot K }
			\\
			= & \Omega \sbr{ H  \sqrt{\frac{AK}{\alpha \cdot w(s_n) }} } 
		\end{align*}
		
		Recall that $n<\frac{1}{2} H$ and $0<\alpha<\mu<\frac{1}{3}$.
		Thus, we have that $\min_{\begin{subarray}{c}\pi,h,s:\  w_{\pi,h}(s)>0\end{subarray}}  w_{\pi,h}(s)=w(s_n)=\mu^{n-1} > \alpha^{H-1}$, and
		\begin{align*}
			\ex \mbr{\cR(K)} 
			= & \Omega \sbr{ H  \sqrt{\frac{AK}{\alpha \cdot \min \limits_{\begin{subarray}{c}\pi,h,s:\  w_{\pi,h}(s)>0\end{subarray}} w_{\pi,h}(s) }} } .
		\end{align*}
	}

	\begin{figure} [t!]
		\centering     
		\includegraphics[width=0.8\textwidth]{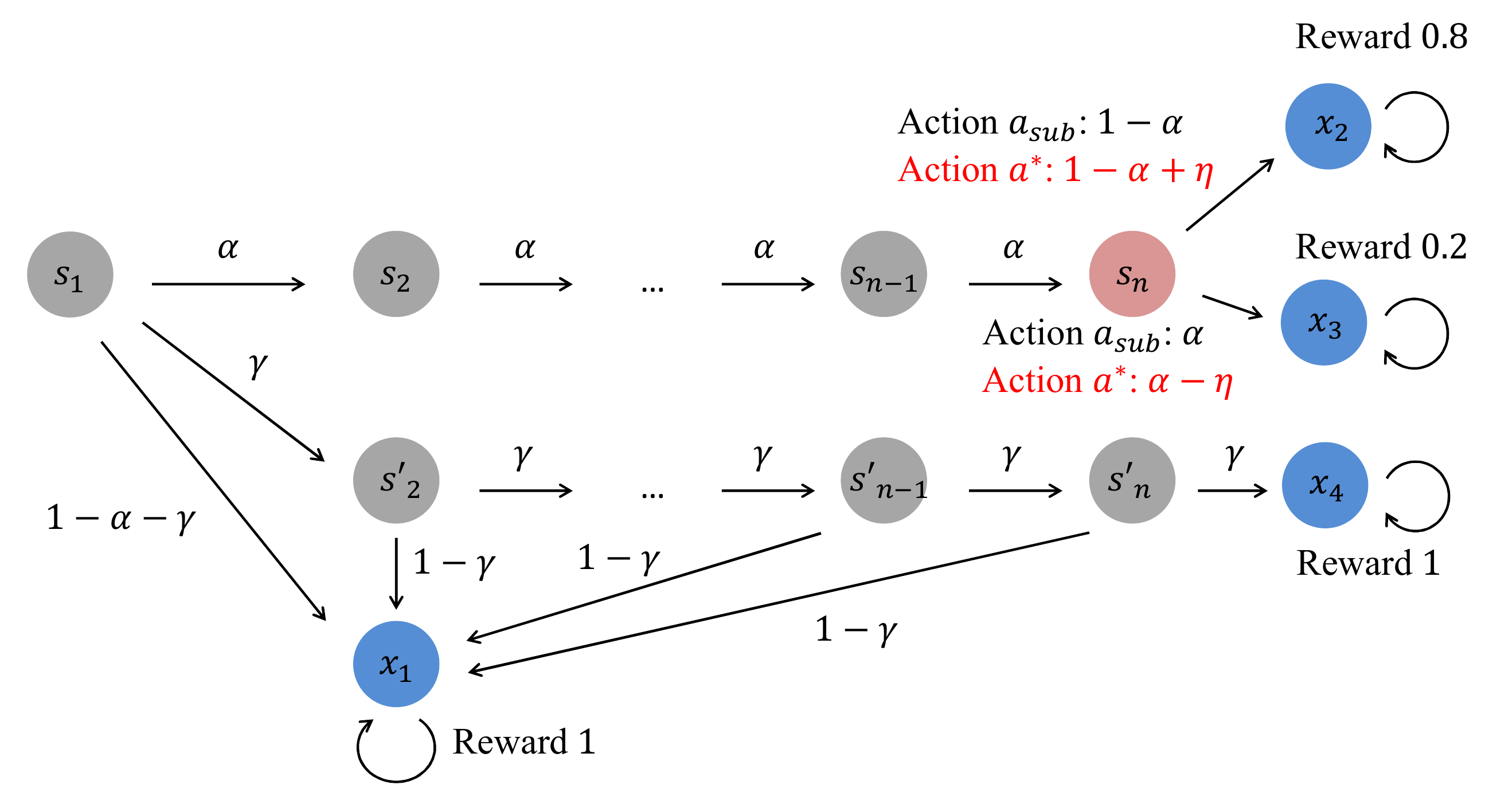}  
		\caption{Instance of lower bounds (Theorems~\ref{thm:cvar_rm_lb} and \ref{thm:bpi_lb}) for the $\alpha^{H-1}>\min_{\begin{subarray}{c}\pi,h,s:\  w_{\pi,h}(s)>0\end{subarray}} w_{\pi,h}(s)$ case.} 
		\label{fig:lower_bound_alpha_H} 
	\end{figure}
	
	Next, we construct another instance where $\alpha^{H-1}>\min_{\begin{subarray}{c}\pi,h,s:\  w_{\pi,h}(s)>0\end{subarray}} w_{\pi,h}(s)$, and prove that on this instance any algorithm must suffer a $\Omega ( \sqrt{\frac{AK}{\alpha^{H-1}}} )$ regret.
	
	Consider the instance shown in Figure~\ref{fig:lower_bound_alpha_H}:
	
	The state space is $\cS=\{s_1,\dots,s_n,s'_2,\dots,s'_n,x_1,x_2,x_3,x_4\}$, where $n=H-1$ and $s_1$ is the initial state. Let $0<\alpha<\frac{1}{4}$.
	
	The reward functions are as follows.
	For any $a \in \cA$, $r(x_1,a)=r(x_4,a)=1$, $r(x_2,a)=0.8$ and $r(x_3,a)=0.2$. For any $i \in [n]$ and $a \in \cA$, $r(s_i,a)=0$. For any $i \in \{2,\dots,n\}$ and $a \in \cA$, $r(s'_i,a)=0$.
	
	The transition distributions are as follows.
	For any $a \in \cA$, $p(s_2|s_1,a)=\alpha$, $p(s'_2|s_1,a)=\gamma$ and $p(x_1|s_1,a)=1-\gamma-\alpha$.
	For any $i\in\{2,\dots,n-1\}$ and $a \in \cA$,
	$p(s_{i+1}|s_i,a)=\alpha$ and $p(x_1|s_i,a)=1-\alpha$.
	For any $i\in\{2,\dots,n-1\}$ and $a \in \cA$,
	$p(s'_{i+1}|s'_i,a)=\gamma$ and $p(x_1|s'_i,a)=1-\gamma$.
	For any $a \in \cA$, $p(x_4|s'_n,a)=\gamma$ and $p(x_1|s'_n,a)=1-\gamma$.
	$x_1$, $x_2$, $x_3$ and $x_4$ are absorbing states, i.e., for any $a \in \cA$ and $i \in [4]$, $p(x_i|x_i,a)=1$.
	Let $a_{J}$ be the optimal action in state $s_n$, which is uniformly drawn from $\cA$.
	For the optimal action $a_J$, $p(x_2|s_n,a_{J})=1-\alpha+\eta$ and $p(x_3|s_n,a_{J})=\alpha-\eta$, where $\eta$ is a parameter which satisfies $0<\eta<\alpha$ and will be chosen later.
	For any suboptimal action $a \in \cA \setminus \{a_{J}\}$,
	$p(x_2|s_n,a)=1-\alpha$ and $p(x_3|s_n,a)=\alpha$.
	
	
	%
	According to the definition of the value function for Iterated CVaR RL,
	we have that
	\begin{align*}
		V^{*}_1(s_1) = &  \frac{0.2 (\alpha-\eta) +0.8 \eta }{\alpha} ,
	\end{align*}
	and for any policy $\pi$,
	\begin{align*}
		V^{\pi}_1(s_1) = &  \frac{0.2 (\alpha-\eta) +0.8 \eta }{\alpha} \cdot \indicator{\pi(s_n)=a_{J}} +   0.2 \sbr{1-\indicator{\pi(s_n)=a_{J}}} .
	\end{align*}
	
	If $J=j$, for any policy $\pi$,
	\begin{align}
		V^{*}_1(s_1) - V^{\pi}_1(s_1) = \frac{0.6 \eta}{\alpha} \sbr{1-\indicator{\pi(s_n)=a_{j}}} , \label{eq:lb_proof_suboptimality_2}
	\end{align}
	and summing over all episodes $k \in [K]$, we have
	\begin{align*}
		\ex_j \mbr{\cR(K)} = & \sum_{k=1}^{K} \sbr{V^{*}_1(s_1)-V^{\pi^k}_1(s_1)}
		\\
		= & \frac{0.6 \eta}{\alpha} \cdot \sbr{K - \sum_{k=1}^{K} \indicator{\pi(s_n)=a_{j}}} 
		\\
		= & \frac{0.6 \eta}{\alpha} \cdot \sbr{K - \ex_j[N_{s_n,a_j}]} 
	\end{align*}
	
	Therefore, we have
	\begin{align}
		\ex \mbr{\cR(K)} = &  \frac{1}{A} \sum_{j=1}^{A} \sum_{k=1}^{K} \sbr{V^{*}_1(s_1)-V^{\pi^k}_1(s_1)}
		\nonumber\\
		= & \frac{1}{A} \sum_{j=1}^{A} \frac{0.6 \eta}{\alpha} \sbr{K-\ex_j[N_{s_n,a_j}] } 
		\nonumber\\
		= & \frac{0.6 \eta}{\alpha} \sbr{K-\frac{1}{A} \sum_{j=1}^{A} \ex_j[N_{s_n,a_j}] } 
		\label{eq:regret_lb_half_2}
	\end{align}
	
	Recall that $0<\alpha<\frac{1}{4}$.
	For any $j \in [A]$, we have that $\kl(p_{unif}(s_n,a_j)\|p_j(s_n,a_j)) = \kl(\bernoulli(\alpha)\|\bernoulli(\alpha-\eta)) \leq \frac{\eta^2}{(\alpha-\eta) (1-\alpha+\eta)} \leq  \frac{c_1 \eta^2}{\alpha}$ for some constant $c_1$ and small enough $\eta$.
	Then, using Lemma~A.1 in \citep{auer2002nonstochastic}, we have that for any $j \in [A]$,
	\begin{align*}
		\ex_j[N_{s_n,a_j}] \leq & \ex_{unif}[N_{s_n,a_j}] + \frac{K}{2} \sqrt{ \ex_{unif}[V_{s_n,a_j}] \cdot \kl\sbr{p_{unif}(s_n,a_j)||p_j(s_n,a_j)} }
		\nonumber\\
		\leq & \ex_{unif}[N_{s_n,a_j}] + \frac{K}{2} \sqrt{ w(s_n) \cdot \ex_{unif}[N_{s_n,a_j}] \cdot \frac{c_1 \eta^2}{\alpha} } 
	\end{align*}
	
	Then, using $\sum_{j=1}^{A} \ex_{unif}[N_{s_n,a_j}]=K$ and the Cauchy–Schwarz inequality, we have
	\begin{align}
		\frac{1}{A} \sum_{j=1}^{A} \ex_j[N_{s_n,a_j}] 
		\leq & \frac{1}{A} \sum_{j=1}^{A} \ex_{unif}[N_{s_n,a_j}] +   \frac{K \eta}{2A} \sum_{j=1}^{A} \sqrt{ \frac{c_1}{\alpha} \cdot w(s_n) \cdot \ex_{unif}[N_{s_n,a_j}] } 
		\nonumber\\
		\leq & \frac{1}{A} \sum_{j=1}^{A} \ex_{unif}[N_{s_n,a_j}] +   \frac{K \eta}{2A} \sqrt{ A \sum_{j=1}^{A} \frac{c_1}{\alpha} \cdot w(s_n) \cdot \ex_{unif}[N_{s_n,a_j}] } 
		\nonumber\\
		\leq & \frac{K}{A}  +  \frac{K \eta}{2} \sqrt{ \frac{ c_1 \cdot w(s_n) K}{\alpha A}  }  \label{eq:avg_N_a_j_2}
	\end{align}
	
	By plugging Eq.~\eqref{eq:avg_N_a_j_2} into Eq.~\eqref{eq:regret_lb_half_2}, we have
	\begin{align*}
		\ex \mbr{\cR(K)} 
		\geq & \frac{0.6 \eta}{\alpha} \cdot  \sbr{K - \frac{K}{A} - \frac{K \eta}{2} \sqrt{ \frac{c_1 \cdot w(s_n) K}{\alpha A}  }  } .
	\end{align*}
	
	Let $\eta=c_2\sqrt{\frac{\alpha A}{w(s_n) K}}$ for a small enough constant $c_2$. We have
	\begin{align*}
		\ex \mbr{\cR(K)} 
		= & \Omega \sbr{ \sqrt{\frac{A}{\alpha \cdot w(s_n) K}} \cdot K }
		\\
		= & \Omega \sbr{ \sqrt{\frac{AK}{\alpha \cdot w(s_n) }} } 
	\end{align*}
	
	Recall that $0<\gamma<\alpha$ and $n=H-1$. Thus, we have $\min_{\begin{subarray}{c}\pi,h,s:\  w_{\pi,h}(s)>0\end{subarray}} w_{\pi,h}(s)=w(x_4)=\gamma^{H-1}<\alpha^{H-1}$. In addition, since $w(s_n)=\alpha^{n-1}=\alpha^{H-2}$, we have
	\begin{align*}
		\ex \mbr{\cR(K)} 
		= & \Omega \sbr{ \sqrt{\frac{AK}{\alpha \cdot \alpha^{H-2}} } }
		\\
		= & \Omega \sbr{ \sqrt{\frac{AK}{ \alpha^{H-1}} } } .
	\end{align*}
	
\end{proof}

\begin{algorithm}[t]
	\caption{$\algcvarbpi$} \label{alg:cvarbpi}
	\KwIn{$\varepsilon$, $\delta$, $\alpha$, $\delta':=\frac{\delta}{7}$, $\tilde{L}(k):=\log(\frac{2HSAk^3}{\delta'})$ for any $k>0$, $J^k_{H+1}(s)=\bar{V}^k_{H+1}(s)=\underline{V}^k_{H+1}(s)=0$ for any $k>0$ and $s \in \cS$.} 
	\For{$k=1,2,\dots,K$}
	{
		\For{$h=H,H-1,\dots,1$} 
		{
			\For{$s \in \cS$}
			{
				\For{$a \in \cA$}
				{
					$\bar{Q}^k_h(s,a) \leftarrow \min \Big\{ r(s,a)+\cvar^{\alpha}_{s' \sim \hat{p}^k(\cdot|s,a)}(\bar{V}^k_{h+1}(s')) + \frac{H}{\alpha}\sqrt{\frac{\tilde{L}(k)}{n_k(s,a)}},\ H \Big\}$\;
					$\underline{Q}^k_h(s,a) \leftarrow \max \Big\{r(s,a)+\cvar^{\alpha}_{s' \sim \hat{p}^k(\cdot|s,a)}(\underline{V}^k_{h+1}(s')) - \frac{4H}{\alpha}\sqrt{\frac{S\tilde{L}(k)}{n_k(s,a)}},\ 0 \Big\}$\;
					$G^k_h(s,a) \leftarrow \min \Big\{ \frac{H(1+4\sqrt{S})\sqrt{\tilde{L}(k)}}{\alpha  \sqrt{n_k(s,a)}}+\hat{\beta}^{k;\alpha,\underline{V}^k_{h+1}}(\cdot|s,a)^\top J^k_{h+1},\ H \Big\}$\; \label{line:bpi_beta}
				}
				$\pi^k_h(s) \leftarrow \argmax_{a \in \cA} \bar{Q}^k_h(s,a)$.
				$\bar{V}^k_h(s) \leftarrow  \max_{a \in \cA} \bar{Q}^k_h(s,a)$.
				$\underline{V}^k_h(s) \leftarrow   \underline{Q}^k_h(s,\pi^k_h(s))$.
				$J^k_h(s) \leftarrow  G^k_h(s,\pi^k_h(s))$\;
			}
		}
		\If{$J^k_1(s)\leq \varepsilon$}
		{
			{\bfseries return} $\pi^k(s)$
		}
		\Else
		{
			Play the episode $k$ with policy $\pi^k$, and update $n_{k+1}(s,a)$ and $\hat{p}^{k+1}(s'|s,a)$
		}
	}
\end{algorithm}

\section{Proofs for Iterated CVaR RL with Best Policy Identification}\label{apx:bpi}

In this section, we present the pseudo-code and detailed description of algorithm $\algcvarbpi$, and formally state the sample complexity lower bound for Iterated CVaR-BPI (Theorem \ref{thm:bpi_lb}). We also give the proofs of sample complexity upper and lower bounds (Theorems~\ref{thm:bpi_ub} and \ref{thm:bpi_lb}).

\subsection{Algorithm $\algcvarbpi$} \label{apx:bpi_alg}

Algorithm $\algcvarbpi$ (Algorithm~\ref{alg:cvarbpi}) constructs optimistic and pessimistic value functions, estimation error, and a hypothesized optimal policy in each episode, and returns the hypothesized optimal policy when the estimation error shrinks within $\varepsilon$. 
Specifically, in each episode $k$, $\algcvarbpi$ calculates the empirical CVaR for values of next states 
$\cvar^{\alpha}_{s' \sim \hat{p}^k(\cdot|s,a)}(\bar{V}^k_{h+1}(s')), \cvar^{\alpha}_{s' \sim \hat{p}^k(\cdot|s,a)}(\underline{V}^k_{h+1}(s'))$ and exploration bonuses  $\frac{H}{\alpha}\sqrt{\frac{\tilde{L}(k)}{n_k(s,a)}}, \frac{4H}{\alpha}\sqrt{\frac{S\tilde{L}(k)}{n_k(s,a)}}$, to establish the optimistic and pessimistic Q-value functions $\bar{Q}^k_h(s,a)$ and $\underline{Q}^k_h(s,a)$, respectively. 
$\algcvarbpi$ further maintains a hypothesized optimal policy $\pi^k$, which is greedy with respect to $\bar{Q}^k_h(s,a)$.
Let $\hat{\beta}^{k;\alpha,\underline{V}^k_{h+1}}(\cdot|s,a)$ denote the conditional empirical transition probability in episode $k$, conditioning on transitioning to the worst $\alpha$-portion successor states $s'$ (i.e., with the worst $\alpha$-portion values $\underline{V}^k_{h+1}(s')$), and it satisfies $\sum_{s' \in \cS} \hat{\beta}^{k;\alpha,\underline{V}^k_{h+1}}(s'|s,a) \cdot \underline{V}^k_{h+1}(s') = \cvar^{\alpha}_{s' \sim \hat{p}^k(\cdot|s,a)}(\underline{V}^k_{h+1}(s'))$ (Line~\ref{line:bpi_beta}).\yihan{specified the definition of $\hat{\beta}$}
Then, $\algcvarbpi$ computes estimation error $G^k_h(s,a)$ and $J^k_h(s)$ using conditional transition probability $\hat{\beta}^{k;\alpha,\underline{V}^k_{h+1}}(\cdot|s,a)$. 
Once estimation error $J^k_h(s)$ shrinks within accuracy parameter $\varepsilon$, $\algcvarbpi$ returns the hypothesized optimal policy $\pi^k$.

\subsection{Proofs of Sample Complexity Upper Bound}


\subsubsection{Concentration}

In the best policy identification analysis, we introduce several useful lemmas and concentration events. Different from the regret minimization analysis where the logrithmic factor $\log(\frac{KHSA}{\delta'})$ in the exploration bonuses is an universal constant, here the logrithmic factor $\log(\frac{2 k^3 HSA}{\delta'})$ will increase as the index of the episode $k$ increases.

\begin{lemma}[Concentration for $V^{*}$ -- BPI] \label{lemma:bpi_concentration_V_star}
	It holds that
	\begin{align*}
		\Pr \Bigg[ & \abr{\cvar^{\alpha}_{s' \sim \hat{p}^k(\cdot|s,a)}(V^{*}_h(s')) - \cvar^{\alpha}_{s' \sim p(\cdot|s,a)}(V^{*}_h(s'))} \leq \frac{H}{\alpha}\sqrt{\frac{ \log \sbr{\frac{2 k^3 HSA}{\delta'}} }{n_k(s,a)}} ,
		\\ & \forall k>0,\ \forall h \in [H],\ \forall (s,a) \in \cS \times \cA \Bigg] \geq 1-2\delta' .
	\end{align*}
\end{lemma}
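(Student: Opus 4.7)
\textbf{Proof proposal for Lemma~\ref{lemma:bpi_concentration_V_star}.} The plan is essentially to repeat the argument sketched for Lemma~\ref{lemma:concentration_V_star}, but with the union bound now spanning \emph{all} episodes $k>0$ rather than just $k \in [K]$. Since the BPI algorithm has no a priori horizon, the confidence parameters must shrink fast enough in the number of collected samples for the union bound to converge; this is exactly where the extra factor of $k^3$ inside the logarithm comes from.

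Concretely, I would first fix $h \in [H]$ and $(s,a) \in \cS \times \cA$. By the Bellman optimality equation (Eq.~(ii) in Section~\ref{sec:formulation}), $V^{*}_h:\cS \to [0,H]$ is a deterministic function independent of the algorithm's randomness, so for any fixed number of samples $n \geq 1$ Brown's inequality (Theorem~2 in \citep{thomas2019concentration}) applied to the random variable $V^{*}_h(s')$ with $s' \sim p(\cdot|s,a)$ yields, for any $\delta_n \in (0,1)$,
\begin{align*}
\Pr\!\left[ \left| \cvar^{\alpha}_{s' \sim \hat p^k(\cdot|s,a)}(V^{*}_h(s')) - \cvar^{\alpha}_{s' \sim p(\cdot|s,a)}(V^{*}_h(s')) \right| > \frac{H}{\alpha}\sqrt{\frac{\log(1/\delta_n)}{n}} \,\Big|\, n_k(s,a)=n \right] \leq 2\delta_n.
\end{align*}
I would then choose $\delta_n := \delta'/(HSA \cdot n^3)$ and take a union bound over $n \in \{1,2,\dots\}$, $h \in [H]$, and $(s,a) \in \cS \times \cA$. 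The total failure probability is at most
\begin{align*}
2 \cdot HSA \cdot \sum_{n=1}^{\infty} \frac{\delta'}{HSA \cdot n^3} \;=\; 2\delta' \sum_{n=1}^{\infty} \frac{1}{n^3} \;\leq\; 2\delta'.
\end{align*}
On the complement of this event, for every $k>0$, every $h$, and every $(s,a)$, writing $n=n_k(s,a)$ and bounding $n \leq (k-1)H \leq kH$ (and dropping the $H$ into the polylog slack), the deviation bound reads $\tfrac{H}{\alpha}\sqrt{\log(HSA\cdot n^3/\delta')/n} \leq \tfrac{H}{\alpha}\sqrt{\log(2 k^3 HSA/\delta')/n_k(s,a)}$, which is exactly the claimed bound with $\tilde L(k)=\log(2k^3HSA/\delta')$.

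The only subtlety compared to the RM version is that there is no fixed horizon $K$, so the union bound must be taken over the (a priori unbounded) number of samples; the $n^{-3}$ (equivalently $k^{-3}$) schedule ensures the series converges absolutely. Everything else is identical to the RM concentration: the independence of $V^{*}_h$ from the transition samples, the direct application of Brown's CVaR concentration to i.i.d.\ samples, and the conditioning-on-$n_k(s,a)$ trick to reduce to the fixed-$n$ case. I do not anticipate a substantive obstacle beyond getting the polynomial-in-$k$ slack right.
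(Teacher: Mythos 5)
Your approach is essentially the paper's: it too applies Brown's CVaR inequality to the fixed deterministic function $V^*_h$ at each fixed sample count and then pays for the unbounded number of episodes with a polynomially decaying confidence schedule and a convergent union bound; the only cosmetic difference is that the paper indexes the union bound by the episode $k$ (allocating roughly $\delta'/k^2$ to episode $k$ and, within it, unioning over $n_k(s,a)\in[kH]$ exactly as in the RM lemma), whereas you index it directly by the sample count $n$.

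Two constant-level slips in your write-up should be patched, though neither touches the substance. First, $\sum_{n\ge 1} n^{-3}=\zeta(3)\approx 1.202>1$, so with $\delta_n=\delta'/(HSA\,n^3)$ your total failure probability is $2\zeta(3)\delta'$, not $\le 2\delta'$ as claimed; taking $\delta_n=\delta'/(2HSA\,n^3)$ repairs this since $\zeta(3)/2<1$. Second, the conversion $n\le kH$ gives $\log(HSA\,n^3/\delta')\le\log(k^3H^4SA/\delta')$, which exceeds the stated $\log(2k^3HSA/\delta')$ unless $H^3\le 2$; this only inflates the logarithmic factor and is absorbed by the theorem's $O(\cdot)$, but as written the displayed inequality of the lemma is not literally recovered. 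The remaining ingredients (independence of $V^*_h$ from the transition samples, and handling the random $n_k(s,a)$ by unioning over its possible values) match the paper's argument.
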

\begin{proof}[Proof of Lemma~\ref{lemma:bpi_concentration_V_star}]
	Using the same analysis as Lemma~\ref{lemma:concentration_V_star}, we have that for a fixed $k$,
	\begin{align*}
		\Pr \Bigg[ & \abr{\cvar^{\alpha}_{s' \sim \hat{p}^k(\cdot|s,a)}(V^{*}_h(s')) - \cvar^{\alpha}_{s' \sim p(\cdot|s,a)}(V^{*}_h(s'))} \leq \frac{H}{\alpha}\sqrt{\frac{ \log \sbr{\frac{2 k^3 HSA}{\delta'}} }{n_k(s,a)}} ,
		\\ & \forall h \in [H],\ \forall (s,a) \in \cS \times \cA \Bigg] \geq 1-2 \cdot \frac{\delta'}{2 k^2} .
	\end{align*}
	
	By a union bound over $k=1,2,\dots$, we have
	\begin{align*}
		\Pr \Bigg[ & \abr{\cvar^{\alpha}_{s' \sim \hat{p}^k(\cdot|s,a)}(V^{*}_h(s')) - \cvar^{\alpha}_{s' \sim p(\cdot|s,a)}(V^{*}_h(s'))} \leq \frac{H}{\alpha}\sqrt{\frac{ \log \sbr{\frac{2 k^3 HSA}{\delta'}} }{n_k(s,a)}} ,
		\\ & \forall k>0,\ \forall h \in [H],\ \forall (s,a) \in \cS \times \cA \Bigg] 
		\\
		\geq & 1-2 \cdot \sum_{k=1}^{\infty} \sbr{\frac{\delta'}{2 k^2}}
		\\
		\geq & 1-2\delta' .
	\end{align*}
\end{proof}

\begin{lemma}[Concentration for any $V$ -- BPI] \label{lemma:bpi_concentration_any_V} 
	It holds that
	\begin{align*}
		\Pr \Bigg[ &\abr{\cvar^{\alpha}_{s' \sim \hat{p}^k(\cdot|s,a)}(V(s')) - \cvar^{\alpha}_{s' \sim p(\cdot|s,a)}(V(s'))} \leq \frac{2H}{\alpha}\sqrt{\frac{2S\log \sbr{\frac{2k^3HSA}{\delta'}}}{n_k(s,a)}} ,
		\\ & \forall V: \cS \mapsto [0,H],\ \forall k>0,\ \forall (s,a) \in \cS \times \cA \Bigg] \geq 1-2\delta' .
	\end{align*}
\end{lemma}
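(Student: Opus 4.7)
The plan is to lift Lemma~\ref{lemma:concentration_any_V} from a fixed-horizon $K$ statement to an anytime-style bound, in exactly the same way Lemma~\ref{lemma:bpi_concentration_V_star} lifts Lemma~\ref{lemma:concentration_V_star}. The crucial observation that enables this is that the proof of Lemma~\ref{lemma:concentration_any_V} already contains a bound which is \emph{uniform in $V$}: indeed, the geometric partitioning into $\cS_{left}, \cS_{right}, \cS_{middle}$ and the two boundary states $s_{line\hyphen l}, s_{line\hyphen r}$ does depend on $V$, but the final deterministic inequality $\sum_{s'} |\hat{\mu}^{k;\alpha,V}(s'|s,a) - \mu^{\alpha,V}(s'|s,a)| \leq 2 \sum_{s'} |\hat{p}^k(s'|s,a) - p(s'|s,a)|$ (Eq.~\eqref{eq:bound_mu_by_2p}) does not. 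Consequently, one needs only to concentrate $\|\hat p^k(\cdot|s,a) - p(\cdot|s,a)\|_1$, not a quantity that depends on $V$, and no covering over $V$ is required.

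The steps I would carry out are as follows. First, I would fix $k$ and invoke the $\ell_1$ Weissman-type bound used as Eq.~\eqref{eq:p_ell_1_concentration} in the proof of Lemma~\ref{lemma:concentration_any_V}, but with the per-round failure budget set to $\delta'/(2k^2)$ rather than $\delta'$. Union-bounding this inequality over $(s,a) \in \cS \times \cA$ and the at most $kH$ possible values of $n_k(s,a)$ produces the logarithmic factor $\log(2k^3 HSA/\delta')$ and shows that, with probability at least $1 - 2\delta'/(2k^2)$,
\begin{equation*}
\sum_{s' \in \cS} \bigl|\hat p^k(s'|s,a) - p(s'|s,a)\bigr| \leq \sqrt{\frac{2 S \log(2k^3 HSA/\delta')}{n_k(s,a)}}
\end{equation*}
for every $(s,a)$ simultaneously. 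Second, plugging this into the $V$-independent inequality Eq.~\eqref{eq:bound_mu_by_2p} and then into Eq.~\eqref{eq:con_for_any_V_ell_1} (which scales by $H/\alpha$) immediately yields, on the same high-probability event, the desired CVaR deviation bound for \emph{every} $V:\cS \to [0,H]$ simultaneously at this fixed $k$.

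Third, I would union-bound across $k=1,2,\dots$, giving total failure probability at most $\sum_{k \geq 1} 2\delta'/(2k^2) = \delta' \pi^2/6$. To match the stated $2\delta'$ one can either absorb the $\pi^2/6$ factor into a slightly different choice of the per-$k$ budget (e.g., $6\delta'/(\pi^2 k^2)$) or simply note that $\pi^2/6 < 2$; either convention recovers the lemma's bound.

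The only step that is not purely mechanical is verifying the $V$-uniformity of the $\ell_1$ inequality in the proof of Lemma~\ref{lemma:concentration_any_V}: one must check that although the identification of $\cS_{left}, \cS_{middle}, \cS_{right}, s_{line\hyphen l}, s_{line\hyphen r}$ moves with $V$, every term on the right-hand side of Eq.~\eqref{eq:bound_mu_by_2p} is a function only of $\hat p^k(\cdot|s,a)$ and $p(\cdot|s,a)$. This is the main (and really the only) obstacle; once it is confirmed, the remainder is a standard anytime union bound identical in form to Lemma~\ref{lemma:bpi_concentration_V_star}.
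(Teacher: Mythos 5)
Your proposal is correct and follows essentially the same route as the paper: the paper's proof simply reruns the argument of Lemma~\ref{lemma:concentration_any_V} for each fixed $k$ with failure budget $\delta'/(2k^2)$ (which is what produces the $\log(2k^3HSA/\delta')$ factor) and then union-bounds over $k$ using $\sum_{k\geq 1} k^{-2} < 2$. Your additional observation that Eq.~\eqref{eq:bound_mu_by_2p} is deterministic and uniform in $V$, so that only the $\ell_1$ concentration of $\hat{p}^k$ needs to be controlled, is exactly the reason the paper's one-line reduction to Lemma~\ref{lemma:concentration_any_V} is valid.
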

\begin{proof}[Proof of Lemma~\ref{lemma:bpi_concentration_any_V}]
	Using the same analysis as Lemma~\ref{lemma:concentration_any_V}, we have that for a fixed $k$,
	\begin{align*}
		\Pr \Bigg[ &\abr{\cvar^{\alpha}_{s' \sim \hat{p}^k(\cdot|s,a)}(V(s')) - \cvar^{\alpha}_{s' \sim p(\cdot|s,a)}(V(s'))} \leq \frac{2H}{\alpha}\sqrt{\frac{2S\log \sbr{\frac{2k^3HSA}{\delta'}}}{n_k(s,a)}} ,
		\\ & \forall V: \cS \mapsto [0,H],\ \forall (s,a) \in \cS \times \cA \Bigg] \geq 1-2 \cdot \frac{\delta'}{2 k^2} .
	\end{align*}
	
	By a union bound over $k=1,2,\dots$, we have
	\begin{align*}
		\Pr \Bigg[ &\abr{\cvar^{\alpha}_{s' \sim \hat{p}^k(\cdot|s,a)}(V(s')) - \cvar^{\alpha}_{s' \sim p(\cdot|s,a)}(V(s'))} \leq \frac{2H}{\alpha}\sqrt{\frac{2S\log \sbr{\frac{2k^3HSA}{\delta'}}}{n_k(s,a)}} ,
		\\ & \forall V: \cS \mapsto [0,H],\ \forall k>0,\ \forall (s,a) \in \cS \times \cA \Bigg]
		\\
		\geq & 1-2 \cdot \sum_{k=1}^{\infty} \sbr{\frac{\delta'}{2 k^2}}
		\\
		\geq & 1-2\delta' .
	\end{align*}
\end{proof}


For any risk level $\alpha \in (0,1]$, function $V: \cS \mapsto \R$, $k>0$ and $(s',s,a) \in \cS \times \cS \times \cA$, $\beta^{\alpha,V}(s'|s,a)$ and $\hat{\beta}^{k;\alpha,V}(s'|s,a)$ are the conditional transition probability from $(s,a)$ to $s'$ and the conditional empirical transition probability from $(s,a)$ to $s'$ in episode $k$, conditioning on transitioning to the worst $\alpha$-portion successor states $s'$ (i.e., with the lowest $\alpha$-portion values $V(s')$), respectively.
$\mu^{\alpha,V}(s'|s,a)$ and $\hat{\mu}^{k;\alpha,V}(s'|s,a)$ denote how large the transition probability of successor state $s'$ and the empirical transition probability of successor state $s'$ in episode $k$ belong to the worst $\alpha$-portion, respectively. 
It holds that  
$$
\frac{\mu^{\alpha,V}(s'|s,a)}{\alpha} = \beta^{\alpha,V}(s'|s,a) ,
$$ 
and 
$$
\frac{\hat{\mu}^{k;\alpha,V}(s'|s,a)}{\alpha} = \hat{\beta}^{k;\alpha,V}(s'|s,a) .
$$

\begin{lemma}[Concentration for conditional transition probability] \label{lemma:concentration_beta} 
	It holds that
	\begin{align*}
		\Pr \Bigg[ &\abr{ \hat{\beta}^{k;\alpha,V}(s'|s,a) - \beta^{\alpha,V}(s'|s,a) } \leq \frac{2}{\alpha}\sqrt{\frac{2S\log \sbr{\frac{2k^3HSA}{\delta'}}}{n_k(s,a)}} ,
		\\ & \forall V: \cS \mapsto \R,\ \forall k>0,\ \forall (s,a) \in \cS \times \cA \Bigg] \geq 1-2\delta' .
	\end{align*}
\end{lemma}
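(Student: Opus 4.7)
The plan is to reduce this pointwise concentration bound on $\hat\beta^{k;\alpha,V}$ to the already-established $\ell_1$ concentration on $\hat p^k$, via the inequality that was derived inside the proof of Lemma~\ref{lemma:concentration_any_V}. Recall from there that $\hat\beta^{k;\alpha,V}(s'|s,a) = \hat\mu^{k;\alpha,V}(s'|s,a)/\alpha$ and $\beta^{\alpha,V}(s'|s,a) = \mu^{\alpha,V}(s'|s,a)/\alpha$, so it suffices to control $|\hat\mu^{k;\alpha,V}(s'|s,a) - \mu^{\alpha,V}(s'|s,a)|$ for a single $s'$ and then divide by $\alpha$.

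First I would apply the trivial pointwise-to-sum bound: for any fixed $s'$,
\[
\abr{\hat\mu^{k;\alpha,V}(s'|s,a) - \mu^{\alpha,V}(s'|s,a)} \;\leq\; \sum_{s'' \in \cS}\abr{\hat\mu^{k;\alpha,V}(s''|s,a) - \mu^{\alpha,V}(s''|s,a)},
\]
since every summand is non-negative. Then I would invoke Eq.~\eqref{eq:bound_mu_by_2p} from the proof of Lemma~\ref{lemma:concentration_any_V}, which showed, by the quantile-line case analysis on $\cS_{left}, \cS_{right}, \cS_{middle}, s_{line\hyphen l}, s_{line\hyphen r}$, that the right-hand side is at most $2\sum_{s''\in\cS}|\hat p^k(s''|s,a) - p(s''|s,a)|$. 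Note that this bound is purely deterministic and applies for any value function $V$ and any $(s,a)$.

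Next I would apply the $\ell_1$ transition-probability concentration from \citep{weissman2003inequalities} (Eq.~\eqref{eq:p_ell_1_concentration} in the text), but with the BPI-style logarithmic factor: for fixed $k$, it yields
\[
\sum_{s''\in\cS}\abr{\hat p^k(s''|s,a) - p(s''|s,a)} \;\leq\; \sqrt{\frac{2S\log(2k^3HSA/\delta')}{n_k(s,a)}},
\]
uniformly in $(s,a)\in \cS\times\cA$, with failure probability at most $2\delta'/(2k^2)$. A union bound over $k = 1,2,\dots$ then contributes a total failure probability of at most $2\delta'\sum_{k\geq 1} k^{-2}/2 \leq 2\delta'$, exactly matching the $1-2\delta'$ guarantee stated in the lemma. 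Combining this with the deterministic inequality above and the factor of $1/\alpha$ from the definition of $\hat\beta$ and $\beta$ yields the claimed bound. Critically, because the deterministic step is independent of $V$, the resulting high-probability event holds simultaneously for all functions $V:\cS\to\R$.

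I do not expect a real obstacle here; the work has essentially already been done inside the proof of Lemma~\ref{lemma:concentration_any_V}. The only mildly delicate point is the union-bound scheme over $k$ with weights $1/(2k^2)$, which parallels Lemmas~\ref{lemma:bpi_concentration_V_star} and \ref{lemma:bpi_concentration_any_V} and explains the appearance of the $\log(2k^3HSA/\delta')$ term in place of the RM-setting's $\log(KHSA/\delta')$.
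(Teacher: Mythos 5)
Your proposal is correct and follows essentially the same route as the paper's proof: both reduce the claim to the deterministic inequality $\sum_{s'}|\hat{\mu}^{k;\alpha,V}(s'|s,a)-\mu^{\alpha,V}(s'|s,a)| \leq 2\sum_{s'}|\hat{p}^k(s'|s,a)-p(s'|s,a)|$ established in Eq.~\eqref{eq:bound_mu_by_2p}, invoke the Weissman-type $\ell_1$ concentration with the $\log(2k^3HSA/\delta')$ factor, divide by $\alpha$, and union-bound over $k$ with weights $\delta'/(2k^2)$. The only cosmetic difference is that the paper directly bounds the full $\ell_1$ sum $\sum_{s'}|\hat{\beta}^{k;\alpha,V}(s'|s,a)-\beta^{\alpha,V}(s'|s,a)|$ (which implies the pointwise statement), whereas you pass through an explicit pointwise-to-sum step first; the content is identical.
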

\begin{proof}[Proof of Lemma~\ref{lemma:concentration_beta}]
	Using the analysis of Eq.~\eqref{eq:bound_mu_by_2p}, we have that for any risk level $\alpha \in (0,1]$, function $V: \cS \mapsto \R$, $k>0$ and $(s,a) \in \cS \times \cA$,
	\begin{align}
		\sum_{s' \in \cS} \abr{\hat{\mu}^{k;\alpha,V}(s'|s,a)-\mu^{\alpha,V}(s'|s,a)} \leq 2 \sum_{s' \in \cS} \abr{\hat{p}^k(s'|s,a)-p(s'|s,a)} .
	\end{align}
	
	Using Eq. (55) in \citep{zanette2019tighter} (originated from \citep{weissman2003inequalities}), we have that for any fixed $k$, with probability at least $1-2 \cdot (\frac{\delta'}{2 k^2})$, for any $(s,a) \in \cS \times \cA$,
	\begin{align*}
		\sum_{s' \in \cS} \abr{\hat{p}^{k}(s'|s,a)-p(s'|s,a)} \leq \sqrt{\frac{2S\log \sbr{\frac{2k^3HSA}{\delta'}}}{n_k(s,a)}} ,
	\end{align*}
	and thus,
	\begin{align*}
		\sum_{s' \in \cS} \abr{\hat{\beta}^{k;\alpha,V}(s'|s,a)-\beta^{\alpha,V}(s'|s,a)}
		= & \sum_{s' \in \cS} \abr{ \frac{\hat{\mu}^{k;\alpha,V}(s'|s,a)}{\alpha} - \frac{\mu^{\alpha,V}(s'|s,a)}{\alpha} }
		\\
		= & \frac{ \sum_{s' \in \cS} \abr{\hat{\mu}^{k;\alpha,V}(s'|s,a) - \mu^{\alpha,V}(s'|s,a)} }{\alpha}
		\\
		\leq &  \frac{ 2 \sum_{s' \in \cS} \abr{\hat{p}^k(s'|s,a)-p(s'|s,a)} }{\alpha}
		\\
		\leq & \frac{2}{\alpha} \sqrt{\frac{2S\log \sbr{\frac{2k^3HSA}{\delta'}}}{n_k(s,a)}}
	\end{align*}
	
	By a union bound over $k=1,2,\dots$, we have
	\begin{align*}
		\Pr \Bigg[ & \sum_{s' \in \cS} \abr{\hat{\beta}^{k;\alpha,V}(s'|s,a)-\beta^{\alpha,V}(s'|s,a)} \leq  \frac{2}{\alpha} \sqrt{\frac{2S\log \sbr{\frac{2k^3HSA}{\delta'}}}{n_k(s,a)}} ,
		\\ & \forall V: \cS \mapsto \R,\ \forall k>0,\ \forall (s,a) \in \cS \times \cA \Bigg]
		\\
		\geq & 1-2 \cdot \sum_{k=1}^{\infty} \sbr{\frac{\delta'}{2 k^2}}
		\\
		\geq & 1-2\delta' .
	\end{align*}
\end{proof}

To sum up, we define the following concentration events and recall event $\cE_3$.
\begin{align*}
	\cF_1:= & \Bigg\{ \abr{\cvar^{\alpha}_{s' \sim \hat{p}^k(\cdot|s,a)}(V^{*}_h(s')) - \cvar^{\alpha}_{s' \sim p(\cdot|s,a)}(V^{*}_h(s'))} \leq \frac{H}{\alpha}\sqrt{\frac{ \log \sbr{\frac{2k^3HSA}{\delta'}} }{n_k(s,a)}} ,
	\\ & \forall k>0,\ \forall h \in [H],\ \forall (s,a) \in \cS \times \cA \Bigg\} 
	\\
	\cF_2:= & \Bigg\{ \abr{\cvar^{\alpha}_{s' \sim \hat{p}^k(\cdot|s,a)}(V(s')) - \cvar^{\alpha}_{s' \sim p(\cdot|s,a)}(V(s'))} \leq \frac{2H}{\alpha}\sqrt{\frac{2S\log \sbr{\frac{2k^3HSA}{\delta'}}}{n_k(s,a)}} ,
	\\ & \forall V: \cS \mapsto [0,H],\ \forall k>0,\ \forall (s,a) \in \cS \times \cA \Bigg\} 
	\\
	\cF_3:= & \Bigg\{ \abr{ \hat{\beta}^{k;\alpha,V}(s'|s,a) - \beta^{\alpha,V}(s'|s,a) } \leq \frac{2}{\alpha}\sqrt{\frac{2S\log \sbr{\frac{2k^3HSA}{\delta'}}}{n_k(s,a)}} ,
	\\ & \forall V: \cS \mapsto \R,\ \forall k>0,\ \forall (s,a) \in \cS \times \cA \Bigg\} 
	\\
	\cE_3:= & \Bigg\{ n_k(s,a) \geq \frac{1}{2} \sum_{k'=1}^{k-1} \sum_{h=1}^{H} w_{k'h}(s,a) - H \log \sbr{\frac{HSA}{\delta'}} ,
	\ \forall k>0,\ \forall (s,a) \in \cS \times \cA
	\Bigg\} 
	\\
	\cF:= & \cF_1 \cap \cF_2 \cap \cF_3 \cap \cE_3 
\end{align*}

\begin{lemma}\label{lemma:bpi_con_event}
	Letting $\delta'=\frac{\delta}{7}$, it holds that
	\begin{align*}
		\Pr \mbr{\cF} \geq 1-\delta .
	\end{align*}
\end{lemma}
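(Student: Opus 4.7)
The plan is a straightforward union bound over the four concentration events already established. First I would invoke the three BPI-style concentration lemmas (Lemma~\ref{lemma:bpi_concentration_V_star}, Lemma~\ref{lemma:bpi_concentration_any_V}, Lemma~\ref{lemma:concentration_beta}), each of which gives a bad-event probability at most $2\delta'$, and then Lemma~\ref{lemma:con_visitation}, which gives a bad-event probability at most $\delta'$ for the visitation event $\cE_3$. Since the total bad-event probability is at most $2\delta' + 2\delta' + 2\delta' + \delta' = 7\delta'$, substituting $\delta' = \delta/7$ yields the claim.

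Concretely, I would write
\begin{align*}
\Pr[\cF^c] &\leq \Pr[\cF_1^c] + \Pr[\cF_2^c] + \Pr[\cF_3^c] + \Pr[\cE_3^c] \\
&\leq 2\delta' + 2\delta' + 2\delta' + \delta' = 7\delta' = \delta,
\end{align*}
so $\Pr[\cF] \geq 1 - \delta$. There is essentially no obstacle here; the only thing one must be careful about is that the three BPI concentration lemmas each apply a union bound over $k = 1, 2, \dots$ internally (using the $k^{-2}$ weighting inside the logarithmic factor), and Lemma~\ref{lemma:con_visitation} is stated for all $k > 0$ simultaneously, so all four events already hold uniformly over episodes and the outer union bound only needs to be taken across the four events themselves.
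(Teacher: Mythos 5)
Your proof is correct and matches the paper's argument exactly: the paper likewise obtains the bound by combining Lemmas~\ref{lemma:bpi_concentration_V_star}, \ref{lemma:bpi_concentration_any_V}, \ref{lemma:concentration_beta} (each with failure probability $2\delta'$) and Lemma~\ref{lemma:con_visitation} (failure probability $\delta'$) via a union bound, giving $7\delta'=\delta$. Your remark that the per-episode union bounds are already handled inside each constituent lemma is also accurate.
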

\begin{proof}[Proof of Lemma~\ref{lemma:bpi_con_event}]
	This lemma can be obtained by combining Lemmas~\ref{lemma:bpi_concentration_V_star}-\ref{lemma:concentration_beta} and \ref{lemma:con_visitation}.
\end{proof}

\subsubsection{Optimism and Estimation Error}

For any $k>0$, let $\tilde{L}(k):=\log \sbr{\frac{2HSA k^3}{\delta'}}$.

\begin{lemma}[Optimism and Pessimism] \label{lemma:bpi_optimism_pessimism}
	Suppose that event $\cF$ holds. Then, for any $k>0$, $h \in [H]$ and $s \in \cS$, 
	\begin{align*}
		\bar{V}^k_h(s) & \geq V^{*}_h(s) ,
		\\
		\underline{V}^k_h(s) & \leq V^{\pi^k}_h(s) .
	\end{align*}
\end{lemma}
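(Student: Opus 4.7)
The plan is to prove both inequalities simultaneously by backward induction on $h$, from the base case $\bar V^k_{H+1}\equiv V^*_{H+1}\equiv\underline V^k_{H+1}\equiv V^{\pi^k}_{H+1}\equiv 0$ down to $h=1$. At each step I would compare the Bellman-style updates defining $\bar V^k_h$ and $\underline V^k_h$ in Algorithm~\ref{alg:cvarbpi} against the true Bellman equations for $V^{\pi^k}$ and $V^*$ (Eqs.~(i),(ii)), exploiting two ingredients: monotonicity of the CVaR operator in its scalar argument (if $V(s')\le V'(s')$ for every $s'$ then $\cvar^{\alpha}_{s'\sim p(\cdot|s,a)}(V)\le\cvar^{\alpha}_{s'\sim p(\cdot|s,a)}(V')$, immediate from the $\sup_x\{x-\frac{1}{\alpha}\E[(x-V)^+]\}$ representation), and the concentration events bundled in $\cF$.

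For the optimism bound $\bar V^k_h(s)\ge V^*_h(s)$, I would essentially recycle the proof of Lemma~\ref{lemma:optimism}, only replacing the constant logarithmic factor $L$ by its episode-dependent counterpart $\tilde L(k)$ and the event $\cE_1$ by its BPI analog $\cF_1$ (Lemma~\ref{lemma:bpi_concentration_V_star}). Concretely, for each $(s,a)$: if $\bar Q^k_h(s,a)=H$ the inequality $\bar Q^k_h(s,a)\ge Q^*_h(s,a)$ is trivial; otherwise, the induction hypothesis plus CVaR monotonicity yield $\cvar^{\alpha}_{s'\sim\hat p^k}(\bar V^k_{h+1}(s'))\ge\cvar^{\alpha}_{s'\sim\hat p^k}(V^*_{h+1}(s'))$, and $\cF_1$ ensures that adding the bonus $\frac{H}{\alpha}\sqrt{\tilde L(k)/n_k(s,a)}$ makes the empirical CVaR dominate the true one, so $\bar Q^k_h(s,a)\ge Q^*_h(s,a)$. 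Taking $a=\pi^*_h(s)$ and then the $\max$ over $a$ closes the step.

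The pessimism bound $\underline V^k_h(s)\le V^{\pi^k}_h(s)$ is structurally the mirror image, but requires a different and slightly stronger concentration, which I expect to be the only real subtlety. Because $\pi^k$, and hence $V^{\pi^k}_{h+1}$, is data-dependent, one cannot appeal to the $V^*$-specific bound $\cF_1$; instead I would invoke the uniform-over-$V$ concentration $\cF_2$ (Lemma~\ref{lemma:bpi_concentration_any_V}), which is precisely why the pessimistic bonus in Algorithm~\ref{alg:cvarbpi} carries the extra $\sqrt S$ factor and satisfies $\frac{4H}{\alpha}\sqrt{S\tilde L(k)/n_k(s,a)}\ge\frac{2H}{\alpha}\sqrt{2S\tilde L(k)/n_k(s,a)}$. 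In the nontrivial case $\underline Q^k_h(s,\pi^k_h(s))>0$, chaining the induction hypothesis $\underline V^k_{h+1}\le V^{\pi^k}_{h+1}$ through CVaR monotonicity and then absorbing the empirical-to-true CVaR gap into the subtracted bonus via $\cF_2$ gives $\underline Q^k_h(s,\pi^k_h(s))\le r(s,\pi^k_h(s))+\cvar^{\alpha}_{s'\sim p}(V^{\pi^k}_{h+1}(s'))=Q^{\pi^k}_h(s,\pi^k_h(s))=V^{\pi^k}_h(s)$; the clipped case $\underline Q^k_h(s,\pi^k_h(s))=0$ is immediate from non-negativity of rewards. Once this data-dependence issue is handled by the uniform-over-$V$ bound, the remainder of the proof is routine bookkeeping against the events in $\cF$.
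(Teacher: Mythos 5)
Your proposal is correct and follows essentially the same route as the paper's proof: backward induction from $h=H+1$, handling the clipped cases trivially, using CVaR monotonicity with the induction hypothesis, and invoking $\cF_1$ for optimism and the uniform-over-$V$ event $\cF_2$ for the data-dependent $V^{\pi^k}$ in the pessimism direction (with the same $2\sqrt{2}\le 4$ absorption into the $\frac{4H}{\alpha}\sqrt{S\tilde L(k)/n_k(s,a)}$ bonus). No gaps.
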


\begin{proof}[Proof of Lemma~\ref{lemma:bpi_optimism_pessimism}]
	The proof of $\bar{V}^k_h(s) \geq V^{*}_h(s)$ is similar to Lemma~\ref{lemma:optimism}.
	Below we prove $\underline{V}^k_h(s) \leq V^{\pi^k}_h(s)$ by induction.
	
	First, for any $k>0$, $s \in \cS$, it holds that $\underline{V}^k_{H+1}(s) = V^{\pi^k}_{H+1}(s)=0$.
	
	Then, for any $k>0$, $h \in [H]$ and $(s,a) \in \cS \times \cA$, if $\underline{Q}^k_h(s,a)=0$, $\underline{Q}^k_h(s,a) \leq Q^{\pi^k}(s,a)$ trivially holds, and otherwise,
	\begin{align*}
		\underline{Q}^k_h(s,a) = & r(s,a)+\cvar^{\alpha}_{s' \sim \hat{p}^k(\cdot|s,a)}(\underline{V}^k_{h+1}(s')) - \frac{4H}{\alpha}\sqrt{\frac{S\tilde{L}(k)}{n_k(s,a)}}
		\\
		\overset{\textup{(a)}}{\leq} & r(s,a)+\cvar^{\alpha}_{s' \sim \hat{p}^k(\cdot|s,a)}(V^{\pi^k}_{h+1}(s')) - \frac{4H}{\alpha}\sqrt{\frac{S\tilde{L}(k)}{n_k(s,a)}}
		\\
		\overset{\textup{(b)}}{\leq} & r(s,a)+\cvar^{\alpha}_{s' \sim \hat{p}^k(\cdot|s,a)}(V^{\pi^k}_{h+1}(s')) \\& - \sbr{ \cvar^{\alpha}_{s' \sim \hat{p}^k(\cdot|s,a)}(V^{\pi^k}_{h+1}(s')) - \cvar^{\alpha}_{s' \sim p(\cdot|s,a)}(V^{\pi^k}_{h+1}(s')) }
		\\
		= & r(s,a)+\cvar^{\alpha}_{s' \sim p(\cdot|s,a)}(V^{\pi^k}_{h+1}(s')) 
		\\
		= & Q^{\pi^k}(s,a) ,
	\end{align*}
	where (a) uses the induction hypothesis, and (b) comes from Lemma~\ref{lemma:concentration_any_V}.
	
	Thus, we have
	\begin{align*}
		\underline{V}^k_h(s) = \underline{Q}^k_h(s,\pi^k_h(s)) \leq Q^{\pi^k}_h(s,\pi^k_h(s)) = V^{\pi^k}_h(s) ,
	\end{align*}
	which concludes the proof.
\end{proof}

For any risk level $\alpha \in (0,1]$, $k>0$, $h \in [H]$ and $(s',s,a) \in \cS \times \cS \times \cA$, $\hat{\beta}^{k;\alpha,\underline{V}^k_{h+1}}(s'|s,a)$ is the conditional empirical transition  probability from $(s,a)$ to $s'$ in episode $k$, conditioning on transitioning to the worst $\alpha$-portion successor states $s'$ (i.e., with the lowest $\alpha$-portion values $\underline{V}^k_{h+1}(s')$). It holds that $\cvar^{\alpha}_{s' \sim \hat{p}^k(\cdot|s,a)}(\underline{V}^k_{h+1}(s')) = \sum_{s' \in \cS} \hat{\beta}^{k;\alpha,\underline{V}^k_{h+1}}(s'|s,a) \cdot \underline{V}^k_{h+1}(s')$.

\begin{lemma}[Estimation Error] \label{lemma:estimate_error}
	Suppose that event $\cF$ holds. Then, for any $k>0$, 
	\begin{align*}
		V^{*}_1(s_1)-V^{\pi^k}_1(s_1) \leq J^k_1(s_1) .
	\end{align*}
\end{lemma}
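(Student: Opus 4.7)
The plan is to prove by backward induction on $h$ that, under event $\cF$, for every $k>0$ and $s\in\cS$,
\[ \bar{V}^k_h(s)-\underline{V}^k_h(s)\leq J^k_h(s). \]
Combining this with Lemma~\ref{lemma:bpi_optimism_pessimism}, which sandwiches $V^{*}_h(s)\leq\bar{V}^k_h(s)$ and $V^{\pi^k}_h(s)\geq\underline{V}^k_h(s)$, immediately yields $V^{*}_1(s_1)-V^{\pi^k}_1(s_1)\leq J^k_1(s_1)$, the statement of the lemma. The base case $h=H+1$ is trivial since $\bar{V}^k_{H+1}$, $\underline{V}^k_{H+1}$, and $J^k_{H+1}$ are all zero by initialization.

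For the inductive step, fix $(s,a)\in\cS\times\cA$ and write out the definitions of $\bar{Q}^k_h(s,a)$ and $\underline{Q}^k_h(s,a)$ from Algorithm~\ref{alg:cvarbpi}. The outer $\min\{\cdot,H\}$ on $\bar{Q}^k_h$ only decreases it and the outer $\max\{\cdot,0\}$ on $\underline{Q}^k_h$ only increases it, so after cancelling the rewards we obtain
\[ \bar{Q}^k_h(s,a)-\underline{Q}^k_h(s,a)\leq \cvar^{\alpha}_{s'\sim\hat{p}^k(\cdot|s,a)}(\bar{V}^k_{h+1}(s'))-\cvar^{\alpha}_{s'\sim\hat{p}^k(\cdot|s,a)}(\underline{V}^k_{h+1}(s'))+\frac{H(1+4\sqrt{S})\sqrt{\tilde{L}(k)}}{\alpha\sqrt{n_k(s,a)}}. \]
Under event $\cF$, Lemma~\ref{lemma:bpi_optimism_pessimism} also gives the coordinate-wise ordering $\underline{V}^k_{h+1}\leq V^{\pi^k}_{h+1}\leq V^{*}_{h+1}\leq\bar{V}^k_{h+1}$, so in particular $\bar{V}^k_{h+1}\geq\underline{V}^k_{h+1}$. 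Applying Lemma~\ref{lemma:cvar_increase_V} (CVaR gap due to value function shift) with the empirical distribution $\hat{p}^k(\cdot|s,a)$ in place of $p(\cdot|s,a)$, the value pair $(\bar{V},V)=(\bar{V}^k_{h+1},\underline{V}^k_{h+1})$, bounds the CVaR difference above by $\hat{\beta}^{k;\alpha,\underline{V}^k_{h+1}}(\cdot|s,a)^{\top}(\bar{V}^k_{h+1}-\underline{V}^k_{h+1})$. Since $\hat{\beta}^{k;\alpha,\underline{V}^k_{h+1}}(\cdot|s,a)$ is a non-negative probability vector, the induction hypothesis propagates through the inner product, replacing $\bar{V}^k_{h+1}-\underline{V}^k_{h+1}$ by $J^k_{h+1}$. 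Using the trivial bound $\bar{V}^k_h(s)-\underline{V}^k_h(s)\leq H$, the resulting expression is precisely $G^k_h(s,a)$. Taking $a=\pi^k_h(s)$ closes the induction because $\bar{V}^k_h(s)=\bar{Q}^k_h(s,\pi^k_h(s))$, $\underline{V}^k_h(s)=\underline{Q}^k_h(s,\pi^k_h(s))$, and $J^k_h(s)=G^k_h(s,\pi^k_h(s))$.

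The main subtlety lies in the specific form of $G^k_h$: the conditional empirical transition $\hat{\beta}^{k;\alpha,\underline{V}^k_{h+1}}$ is built from the \emph{pessimistic} value function rather than the optimistic or true one. This is exactly the form required by Lemma~\ref{lemma:cvar_increase_V}, whose upper bound uses the conditional probability induced by the \emph{smaller} of the two value functions being compared; any other choice would not give a valid pointwise upper bound on the CVaR gap. The remaining bookkeeping — verifying that the two truncations and the outer $\min\{\cdot,H\}$ in $G^k_h$ cooperate with the induction — is routine.
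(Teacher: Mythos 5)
Your proof is correct and follows essentially the same route as the paper's: backward induction on the claim $\bar{V}^k_h(s)-\underline{V}^k_h(s)\leq J^k_h(s)$, invoking Lemma~\ref{lemma:cvar_increase_V} with the empirical transition $\hat{p}^k(\cdot|s,a)$ and the conditional distribution $\hat{\beta}^{k;\alpha,\underline{V}^k_{h+1}}(\cdot|s,a)$, then concluding via the optimism/pessimism sandwich of Lemma~\ref{lemma:bpi_optimism_pessimism}. Your handling of the truncations, and your explicit justification that $\bar{V}^k_{h+1}\geq\underline{V}^k_{h+1}$ before applying the CVaR-gap lemma, are if anything slightly more careful than the paper's case split on $G^k_h(s,a)=H$.
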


\begin{proof}[Proof of Lemma~\ref{lemma:estimate_error}]
	In the following, we prove by induction that for any $k>0$, $h \in [H]$ and $s \in \cS$,
	\begin{align}
		\bar{V}^k_h(s)-\underline{V}^k_h(s) \leq J^k_h(s) . \label{eq:bpi_est_err_induction}
	\end{align}
	
	First, for any $k>0$ and $s \in \cS$, it holds that $\bar{V}^k_{H+1}(s)-\underline{V}^k_{H+1}(s) = J^k_{H+1}(s)=0$.

	
	Then, for any $k>0$, $h \in [H]$ and $(s,a) \in \cS \times \cA$, if $G^k_h(s,a)=H$, $\bar{Q}^k_h(s,a)-\underline{Q}^k_h(s,a) \leq G^k_h(s,a)$ holds trivially, and otherwise,
	\begin{align*}
		\bar{Q}^k_h(s,a)-\underline{Q}^k_h(s,a) = & \frac{H}{\alpha}\sqrt{\frac{\tilde{L}(k)}{n_k(s,a)}} + \frac{4H}{\alpha}\sqrt{\frac{S \tilde{L}(k)}{n_k(s,a)}} 
		\\ & + \cvar^{\alpha}_{s' \sim \hat{p}^k(\cdot|s,a)}(\bar{V}^k_{h+1}(s')) - \cvar^{\alpha}_{s' \sim \hat{p}^k(\cdot|s,a)}(\underline{V}^k_{h+1}(s')) 
		\\
		\overset{\textup{(a)}}{\leq} & \frac{H\sqrt{\tilde{L}(k)}(1+4\sqrt{S})}{\alpha \sqrt{n_k(s,a)}} + \hat{\beta}^{k;\alpha,\underline{V}^k_{h+1}}(\cdot|s,a)^\top \sbr{\bar{V}^k_{h+1}-\underline{V}^k_{h+1}}
		\\
		\overset{\textup{(b)}}{\leq} & \frac{H\sqrt{\tilde{L}(k)}(1+4\sqrt{S})}{\alpha \sqrt{n_k(s,a)}} + \hat{\beta}^{k;\alpha,\underline{V}^k_{h+1}}(\cdot|s,a)^\top J^k_{h+1} 
		\\
		= & G^k_h(s,a) ,
	\end{align*}
	where (a) uses Lemma~\ref{lemma:cvar_increase_V} with empirical transition probability $\hat{p}^k(\cdot|s,a)$, conditional empirical transition probability $\hat{\beta}^{k;\alpha,\underline{V}^k_{h+1}}(\cdot|s,a)$, and values $\bar{V}^k_{h+1},\underline{V}^k_{h+1}$, and (b) is due to the induction hypothesis.
	
	Thus, 
	\begin{align*}
		\bar{V}^k_h(s)-\underline{V}^k_h(s)=\bar{Q}^k_h(s,\pi^k_h(s))-\underline{Q}^k_h(s,\pi^k_h(s)) \leq G^k_h(s,\pi^k_h(s)) = J^k_h(s) ,
	\end{align*}
	which completes the proof of Eq.~\eqref{eq:bpi_est_err_induction}.
	
	Hence, for any $k>0$,
	\begin{align*}
		\bar{V}^k_1(s_1)-\underline{V}^k_1(s_1) \leq J^k_1(s_1) .
	\end{align*}
	
	Using Lemma~\ref{lemma:bpi_optimism_pessimism}, we have
	\begin{align*}
		V^{*}_1(s)-V^{\pi^k}_1(s_1) \leq \bar{V}^k_1(s_1)-\underline{V}^k_1(s_1) \leq J^k_1(s_1) .
	\end{align*}
	
\end{proof}

\subsubsection{Proof of Theorem~\ref{thm:bpi_ub}}

\begin{proof}[Proof of Theorem~\ref{thm:bpi_ub}]
	Suppose that event $\cF$ holds.
	
	First, we prove the correctness.
	Using Lemma~\ref{lemma:estimate_error}, when algorithm $\algcvarbpi$ stops, we have
	\begin{align*}
		V^{*}_1(s_1)-V^{\pi^k}_1(s_1) \leq J^k_1(s_1) \leq \varepsilon .
	\end{align*}
	Thus, the output policy $\pi^k$ is $\varepsilon$-optimal.
	
	Next, we prove the sample complexity.

	Unfolding $J^k_1(s_1)$, we have
	\begin{align*}
		J^k_1(s_1) \overset{\textup{(a)}}{=} & \min \lbr{\frac{H(1+4\sqrt{S})\sqrt{\tilde{L}(k)}}{\alpha  \sqrt{n_k(s_1,a_1)}}+ \sum_{s_2 \in \cS} \hat{\beta}^{k;\alpha,\underline{V}^k_{2}}(s_2|s_1,a_1) \cdot J^k_{2}(s_2),\ H }
		\\
		= & \min \Bigg\{\frac{H(1+4\sqrt{S})\sqrt{\tilde{L}(k)}}{\alpha  \sqrt{n_k(s_1,a_1)}}+ \sum_{s_2 \in \cS} \beta^{\alpha,\underline{V}^k_{2}}(s_2|s_1,a_1) \cdot J^k_{2}(s_2) \\& \qquad \quad + \sum_{s_2 \in \cS} \sbr{\hat{\beta}^{k;\alpha,\underline{V}^k_{2}}(s_2|s_1,a_1) - \beta^{\alpha,\underline{V}^k_{2}}(s_2|s_1,a_1)} \cdot J^k_{2}(s_2),\ H \Bigg\}
		\\
		\overset{\textup{(b)}}{\leq} & \min \Bigg\{\frac{H(1+4\sqrt{S})\sqrt{\tilde{L}(k)}}{\alpha  \sqrt{n_k(s_1,a_1)}}+ \sum_{s_2 \in \cS} \beta^{\alpha,\underline{V}^k_{2}}(s_2|s_1,a_1) \cdot J^k_{2}(s_2) \\& \qquad \quad + H \sum_{s_2 \in \cS} \abr{\sbr{\hat{\beta}^{k;\alpha,\underline{V}^k_{2}}(s_2|s_1,a_1) - \beta^{\alpha,\underline{V}^k_{2}}(s_2|s_1,a_1)}},\ H \Bigg\}
		\\
		\overset{\textup{(c)}}{\leq} & \min \Bigg\{\frac{H(1+4\sqrt{S})\sqrt{\tilde{L}(k)}}{\alpha  \sqrt{n_k(s_1,a_1)}}+ \sum_{s_2 \in \cS} \beta^{\alpha,\underline{V}^k_{2}}(s_2|s_1,a_1) \cdot J^k_{2}(s_2) + \frac{4H}{\alpha} \sqrt{\frac{S \cdot \tilde{L}(k)}{n_k(s_1,a_1)}},\ H \Bigg\}
		\\
		\overset{\textup{(d)}}{\leq} & \min \lbr{\frac{H(1+8\sqrt{S})\sqrt{\tilde{L}(k)}}{\alpha  \sqrt{n_k(s_1,a_1)}},\ H } +\sum_{s_2 \in \cS} \beta^{\alpha,\underline{V}^k_{2}}(s_2|s_1,a_1) \cdot J^k_{2}(s_2)
		\\
		\overset{\textup{(e)}}{\leq} & \min \lbr{\frac{H(1+8\sqrt{S})\sqrt{\tilde{L}(k)}}{\alpha  \sqrt{n_k(s_1,a_1)}},\ H } +\sum_{s_2 \in \cS} \beta^{\alpha,\underline{V}^k_{2}}(s_2|s_1,a_1) \cdot \\& \sbr{ \min \lbr{\frac{H(1+8\sqrt{S})\sqrt{\tilde{L}(k)}}{\alpha  \sqrt{n_k(s_2,a_2)}},\ H } +\sum_{s_3 \in \cS} \beta^{\alpha,\underline{V}^k_{3}}(s_3|s_2,a_2) \cdot J^k_{3}(s_3) }
		\\
		\overset{\textup{(f)}}{\leq} & \min \lbr{\frac{H(1+8\sqrt{S})\sqrt{\tilde{L}(k)}}{\alpha  \sqrt{n_k(s_1,a_1)}},\ H } +\sum_{s_2 \in \cS} \beta^{\alpha,\underline{V}^k_{2}}(s_2|s_1,a_1) \cdot \\& \Bigg( \min \lbr{\frac{H(1+8\sqrt{S})\sqrt{\tilde{L}(k)}}{\alpha  \sqrt{n_k(s_2,a_2)}},\ H } +\sum_{s_3 \in \cS} \beta^{\alpha,\underline{V}^k_{3}}(s_3|s_2,a_2) \cdot \\& \Bigg( \dots \sum_{s_H \in \cS} \beta^{\alpha,\underline{V}^k_{H}}(s_H|s_{H-1},a_{H-1}) \cdot \min \lbr{\frac{H(1+8\sqrt{S})\sqrt{\tilde{L}(k)}}{\alpha  \sqrt{n_k(s_H,a_H)}},\ H } \Bigg) \Bigg) 
		\\
		\overset{\textup{(g)}}{=} & \sum_{h=1}^{H} \sum_{(s,a) \in \cS \times \cA} w^{\cvar,\alpha,\underline{V}^k}_{kh}(s,a) \cdot \min \lbr{\frac{H(1+8\sqrt{S})\sqrt{\tilde{L}(k)}}{\alpha  \sqrt{n_k(s_H,a_H)}},\ H }
		\\
		\leq & \sum_{h=1}^{H} \sum_{(s,a) \in \cL_k} w^{\cvar,\alpha,\underline{V}^k}_{kh}(s,a) \cdot \frac{H(1+8\sqrt{S})\sqrt{\tilde{L}(k)}}{\alpha  \sqrt{n_k(s,a)}} + \sum_{h=1}^{H} \sum_{(s,a) \notin \cL_k} w^{\cvar,\alpha,\underline{V}^k}_{kh}(s,a) \cdot H 
	\end{align*}
	Here (b) is due to that for any $k>0$, $h \in [H]$ and $s \in \cS$, $J^k_{h}(s) \in [0,H]$. (c) comes from Lemma~\ref{lemma:concentration_beta}. (e) and (f) follow from recurrently applying steps (a)-(d). (g) uses the fact that $w^{\cvar,\alpha,\underline{V}^k}_{kh}(s,a)$ is defined as the probability of visiting $(s,a)$ at step $h$ of episode $k$ under the conditional transition probability $\beta^{\alpha,\underline{V}_{h'+1}^{k}}(\cdot|\cdot,\cdot)$ for each step $h'=1,\dots,h-1$.
	
	Let $\tau$ denote the episode in which algorithm $\algcvarbpi$ stops ($\algcvarbpi$ will not sample any trajectory in the stopping episode $\tau$). 
	Then, for any $k<\tau$, we have $\varepsilon < J^k_1(s_1) $. Summing over $k<\tau$, we have
	\begin{align*}
		(\tau-1) \cdot \varepsilon < & \sum_{k=1}^{\tau-1} J^k_1(s_1) 
		\\
		\leq & \sum_{k=1}^{\tau-1} \sum_{h=1}^{H} \sum_{(s,a) \in \cL_k} w^{\cvar,\alpha,\underline{V}^k}_{kh}(s,a) \cdot \frac{H (1+8\sqrt{S}) \sqrt{\tilde{L}(k)}}{\alpha \sqrt{n_k(s,a)}} \\&+ \sum_{k=1}^{\tau-1} \sum_{h=1}^{H} \sum_{(s,a) \notin \cL_k} w^{\cvar,\alpha,\underline{V}^k}_{kh}(s,a) \cdot H
		\\
		\overset{\textup{(a)}}{\leq} & \frac{H (1+8\sqrt{S}) \sqrt{\tilde{L}(\tau-1)}}{\alpha} \sqrt{\sum_{k=1}^{\tau-1} \sum_{h=1}^{H} \sum_{(s,a) \in \cL_k}  \frac{w^{\cvar,\alpha,\underline{V}^k}_{kh}(s,a)}{n_k(s,a)}} \cdot \\& \sqrt{\sum_{k=1}^{\tau-1} \sum_{h=1}^{H} \sum_{(s,a) \in \cL_k}  w^{\cvar,\alpha,\underline{V}^k}_{kh}(s,a)}
		\\& + \min \lbr{\frac{ 1 }{\min \limits_{\begin{subarray}{c}\pi,h,s:\  w_{\pi,h}(s)>0\end{subarray}} w_{\pi,h}(s)},\ \frac{1}{\alpha^{H-1}}}  \sbr{ 4SAH^2 \log\sbr{\frac{HSA}{\delta'}} + 5SAH^2 }
		\\
		\overset{\textup{(b)}}{\leq} & \frac{H (1+8\sqrt{S}) \sqrt{\tilde{L}(\tau-1)}}{\alpha} \cdot \sqrt{(\tau-1)H} \cdot \sqrt{\sum_{k=1}^{\tau-1} \sum_{h=1}^{H} \sum_{(s,a) \in \cL_k}  \frac{w^{\cvar,\alpha,\underline{V}^k}_{kh}(s,a)}{n_k(s,a)}}  
		\\& + \min \lbr{\frac{ 1 }{\min \limits_{\begin{subarray}{c}\pi,h,s:\  w_{\pi,h}(s)>0\end{subarray}} w_{\pi,h}(s)},\ \frac{1}{\alpha^{H-1}}}  \sbr{ 4SAH^2 \log\sbr{\frac{HSA}{\delta'}} + 5SAH^2 }
		\\
		\overset{\textup{(c)}}{\leq} & \frac{(1+8\sqrt{S}) H \sqrt{H \cdot \tilde{L}(\tau-1) \cdot (\tau-1)} }{\alpha} \cdot \\& \sqrt{\sum_{k=1}^{\tau-1} \sum_{h=1}^{H} \sum_{(s,a) \in \cL_k}  \frac{w^{\cvar,\alpha,\underline{V}^k}_{kh}(s,a)}{w_{kh}(s,a)} \cdot \frac{w_{kh}(s,a)}{n_k(s,a)} \indicator{w_{kh}(s,a) \neq 0} } 
		\\& + \min \lbr{\frac{ 1 }{\min \limits_{\begin{subarray}{c}\pi,h,s:\  w_{\pi,h}(s)>0\end{subarray}} w_{\pi,h}(s)},\ \frac{1}{\alpha^{H-1}}}  \sbr{ 4SAH^2 \log\sbr{\frac{HSA}{\delta'}} + 5SAH^2 }
		\\
		\leq & \frac{(1+8\sqrt{S}) H \!\sqrt{H \!\cdot\! \tilde{L}(\tau-1) \!\cdot\! (\tau-1)} }{\alpha} \!\cdot\! \min \Bigg\{\! \frac{ 1 }{\sqrt{\min \limits_{\begin{subarray}{c}\pi,h,s:\  w_{\pi,h}(s)>0\end{subarray}} w_{\pi,h}(s)}}, \frac{1}{\sqrt{\alpha^{H-1}}} \!\Bigg\} \cdot \\& \sqrt{\sum_{k=1}^{\tau-1} \sum_{h=1}^{H} \sum_{(s,a) \in \cL_k} \frac{w_{kh}(s,a)}{n_k(s,a)} } + \min \lbr{\frac{ 1 }{\min \limits_{\begin{subarray}{c}\pi,h,s:\  w_{\pi,h}(s)>0\end{subarray}} w_{\pi,h}(s)},\ \frac{1}{\alpha^{H-1}}} \cdot\\&  \sbr{ 4SAH^2 \log\sbr{\frac{HSA}{\delta'}} + 5SAH^2 }
		\\
		\overset{\textup{(d)}}{\leq} & \frac{(1+8\sqrt{S})H\sqrt{H \cdot \tilde{L}(\tau-1) \cdot (\tau-1)}}{\alpha } \min \Bigg\{\frac{ 1 }{\sqrt{\min \limits_{\begin{subarray}{c}\pi,h,s:\  w_{\pi,h}(s)>0\end{subarray}} w_{\pi,h}(s)}}, \frac{1}{\sqrt{\alpha^{H-1}}} \Bigg\} \cdot\\& 2 \sqrt{SA \tilde{L}(\tau-1)} 
		+ \min \lbr{\frac{ 1 }{\min \limits_{\begin{subarray}{c}\pi,h,s:\  w_{\pi,h}(s)>0\end{subarray}} w_{\pi,h}(s)},\ \frac{1}{\alpha^{H-1}}}  \cdot \\& \sbr{ 4SAH^2 \log\sbr{\frac{HSA}{\delta'}} + 5SAH^2 }
		\\
		\leq & \min \lbr{\frac{ 1 }{\sqrt{\min \limits_{\begin{subarray}{c}\pi,h,s:\  w_{\pi,h}(s)>0\end{subarray}} w_{\pi,h}(s)}},\ \frac{1}{\sqrt{\alpha^{H-1}}} } \frac{18SH \cdot \tilde{L}(\tau-1) \sqrt{HA(\tau-1)}}{\alpha } \\& + \min \lbr{\frac{ 1 }{\min \limits_{\begin{subarray}{c}\pi,h,s:\  w_{\pi,h}(s)>0\end{subarray}} w_{\pi,h}(s)},\ \frac{1}{\alpha^{H-1}}}  \sbr{ 4SAH^2 \log\sbr{\frac{HSA}{\delta'}} + 5SAH^2 } ,
	\end{align*}
	where (a) is due to Lemma~\ref{lemma:insufficient_visit}, (b) uses the fact that for any risk level $\alpha \in (0,1]$, $k>0$ and $h \in [H]$, $\sum_{(s,a) \in \cS \times \cA} w_{kh}^{CVaR,\alpha,\underline{V}^{k}}(s,a)=1$, (c) comes from Lemma~\ref{lemma:zero_w_impliy_zero_w_cvar}, and (d) is due to Lemma~\ref{lemma:standard_visitation}.

	Thus, when $\log\sbr{\frac{HSA}{\delta'}} \geq 1$, we have
	\begin{align*}
		\tau-1 \leq & \!\min\! \Bigg\{\!\frac{ 1 }{\sqrt{\min \limits_{\begin{subarray}{c}\pi,h,s:\  w_{\pi,h}(s)>0\end{subarray}} w_{\pi,h}(s)}}, \frac{1}{\sqrt{\alpha^{H-1}}} \!\Bigg\} \frac{18SH\sqrt{HA}}{\varepsilon \alpha } \!\cdot\! \sqrt{\tau-1} \!\cdot\! \log\sbr{\frac{2HSA(\tau-1)^3}{\delta'}} \\& + \min \lbr{\frac{ 1 }{\min \limits_{\begin{subarray}{c}\pi,h,s:\  w_{\pi,h}(s)>0\end{subarray}} w_{\pi,h}(s)},\ \frac{1}{\alpha^{H-1}}}  \frac{ 4SAH^2 \log\sbr{\frac{HSA}{\delta'}} + 5SAH^2  }{\varepsilon} 
		\\
		\leq & \!\min\! \Bigg\{\!\frac{ 1 }{\sqrt{\min \limits_{\begin{subarray}{c}\pi,h,s:\  w_{\pi,h}(s)>0\end{subarray}} w_{\pi,h}(s)}}, \frac{1}{\sqrt{\alpha^{H-1}}} \!\Bigg\} \frac{54SH\sqrt{HA}}{\varepsilon \alpha } \!\cdot\! \sqrt{\tau-1} \!\cdot\! \log\sbr{\frac{2HSA(\tau-1)}{\delta'}} \\& + \min \lbr{\frac{ 1 }{\min \limits_{\begin{subarray}{c}\pi,h,s:\  w_{\pi,h}(s)>0\end{subarray}} w_{\pi,h}(s)},\ \frac{1}{\alpha^{H-1}}} \frac{ 9SAH^2 }{\varepsilon} \log\sbr{\frac{HSA}{\delta'}}
	\end{align*}
	
	Using Lemma~\ref{lemma:technical_tool_fast_active} with $A=1$, $B=0$, $C= \min \{\frac{ 1 }{\sqrt{\min_{\begin{subarray}{c}\pi,h,s:\  w_{\pi,h}(s)>0\end{subarray}} w_{\pi,h}(s)}},\ \frac{1}{\sqrt{\alpha^{H-1}}} \} \frac{54SH\sqrt{HA}}{\varepsilon \alpha }$, $D= \min \{\frac{ 1 }{\min_{\begin{subarray}{c}\pi,h,s:\  w_{\pi,h}(s)>0\end{subarray}} w_{\pi,h}(s)},\ \frac{1}{\alpha^{H-1}} \} \frac{ 9SAH^2 }{\varepsilon} $, $E=0$, $\beta=\frac{2HSA}{\delta'}$ and $T=\tau-1$, and recalling that algorithm $\algcvarbpi$ does not sample any trajectory in the stopping episode $\tau$, we have that the number of used trajectories is bounded by
	\begin{align*}
		\tau-1 = O \Bigg( & \min \Bigg\{\frac{ 1 }{\min \limits_{\begin{subarray}{c}\pi,h,s:\  w_{\pi,h}(s)>0\end{subarray}} w_{\pi,h}(s)},\ \frac{1}{\alpha^{H-1}} \Bigg\} \frac{ H^3 S^2 A}{\varepsilon^2 \alpha^2 } \cdot \\& \log^2 \Bigg( \min \Bigg\{\frac{ 1 }{\min \limits_{\begin{subarray}{c}\pi,h,s:\  w_{\pi,h}(s)>0\end{subarray}} w_{\pi,h}(s)},\ \frac{1}{\alpha^{H-1}} \Bigg\} \frac{HSA}{ \varepsilon \alpha \delta } \Bigg) \Bigg) .
	\end{align*}
\end{proof}

\subsection{Sample Complexity Lower Bound}\label{apx:lb_bpi} 

Below we present the sample complexity lower bound for Iterated CVaR RL-BPI and provide its proof.

We say algorithm $\cA$ is $(\delta,\varepsilon)$-correct if $\cA$ returns an $\varepsilon$-optimal policy $\hat{\pi}$ such that
$
V^{\hat{\pi}}_1(s_1) \geq V^{*}_1(s_1) - \varepsilon 
$ 
with probability $1-\delta$.

\begin{theorem}[Sample Complexity Lower Bound] \label{thm:bpi_lb}
	There exists an instance of Iterated CVaR RL-BPI, where $\min_{\begin{subarray}{c}\pi,h,s:\  w_{\pi,h}(s)>0\end{subarray}} w_{\pi,h}(s) > \alpha^{H-1}$ and the number of trajectories used by any $(\delta,\varepsilon)$-correct algorithm is at least 
	\begin{align*}
		\Omega \Bigg( \frac{ H^2 A}{\varepsilon^2 \alpha \min \limits_{\begin{subarray}{c}\pi,h,s:\  w_{\pi,h}(s)>0\end{subarray}} w_{\pi,h}(s)} \log \sbr{\frac{1}{\delta}} \Bigg) .
	\end{align*}
	In addition, there also exists an instance of Iterated CVaR RL-BPI, where $\alpha^{H-1} > \min_{\begin{subarray}{c}\pi,h,s:\  w_{\pi,h}(s)>0\end{subarray}} w_{\pi,h}(s)$ and the number of trajectories used by any $(\delta,\varepsilon)$-correct algorithm is at least 
	\begin{align*}
		\Omega \Bigg( \frac{A}{\alpha^{H-1} \varepsilon^2} \log \sbr{\frac{1}{\delta}} \Bigg) .
	\end{align*}
\end{theorem}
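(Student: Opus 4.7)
The plan is to reuse the two hard instances already constructed for the regret lower bound (Figures~\ref{fig:lower_bound} and~\ref{fig:lower_bound_alpha_H}) and convert those constructions into BPI sample-complexity lower bounds via a standard change-of-measure / Le Cam argument for best-arm identification embedded in an MDP. The two instances differ only in the transition at the bandit state $s_n$, where action $a_J$ is drawn uniformly from $\cA$ and all perturbations are of size $\eta$.

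First, I would invoke Eqs.~\eqref{eq:lb_proof_suboptimality} and~\eqref{eq:lb_proof_suboptimality_2} directly: any output policy $\hat\pi$ with $\hat\pi(s_n)\neq a_J$ suffers suboptimality $\frac{0.6\eta(H-n)}{\alpha}$ in the first instance and $\frac{0.6\eta}{\alpha}$ in the second. Therefore, to be $(\varepsilon,\delta)$-correct the algorithm must identify $a_J$ with probability $\geq 1-\delta$ whenever $\eta$ is tuned so that this gap equals $2\varepsilon$. In the first instance this forces $\eta=\Theta(\varepsilon\alpha/H)$ (since $n<H/2$ so $H-n=\Theta(H)$), and in the second $\eta=\Theta(\varepsilon\alpha)$.

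Next, I would reduce the problem to a weighted $A$-armed BAI lower bound by exploiting the KL chain rule for trajectory distributions. Because the $A$ candidate MDPs differ only in the Bernoulli transition at $(s_n,a_J)$, the KL between trajectory laws decomposes as
\[
\kl\bigl(\Pr_{\text{unif}}\|\Pr_j\bigr)
= \ex_{\text{unif}}[V_{s_n,a_j}]\cdot \kl\!\bigl(\bernoulli(\alpha)\,\|\,\bernoulli(\alpha-\eta)\bigr)
\leq \ex_{\text{unif}}[V_{s_n,a_j}]\cdot\tfrac{c_1\eta^2}{\alpha},
\]
with $V_{s_n,a_j}$ counting visits to $(s_n,a_j)$, and $\ex[V_{s_n,a_j}]=w(s_n)\cdot\ex[N_{s_n,a_j}]$. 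A standard identification-vs-confusion argument (e.g., Kaufmann--Capp\'e--Garivier, or Lemma~1 of Mannor--Tsitsiklis) then shows that any $(\varepsilon,\delta)$-correct algorithm must, when averaged over the $A$ hypotheses, satisfy
\[
w(s_n)\cdot\tau \;\geq\; \Omega\!\left(\tfrac{A\alpha}{\eta^2}\log\tfrac{1}{\delta}\right).
\]
Plugging in $w(s_n)=\mu^{n-1}=\min_{\pi,h,s:w_{\pi,h}(s)>0} w_{\pi,h}(s)$ with $\eta=\Theta(\varepsilon\alpha/H)$ yields the first bound $\Omega\bigl(\frac{H^2 A}{\varepsilon^2\alpha\,\min w}\log\frac{1}{\delta}\bigr)$; plugging in $w(s_n)=\alpha^{H-2}$ with $\eta=\Theta(\varepsilon\alpha)$ yields the second bound $\Omega\bigl(\frac{A}{\alpha^{H-1}\varepsilon^2}\log\frac{1}{\delta}\bigr)$.

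The main technical obstacle is the cleanly localized decoupling between the ``horizontal'' MDP dynamics (which are shared across all hypotheses, since only the transition at $(s_n,\cdot)$ is perturbed) and the ``vertical'' bandit test at $s_n$. The chain rule for trajectory KL handles this localization, so the argument reduces to a visitation-weighted BAI lower bound; the rest is bookkeeping. In particular, the extra $H^2/\alpha$ in the first bound comes from the amplification $(H-n)=\Theta(H)$ of $\eta$ into the value gap combined with the $1/\alpha$ in the Bernoulli KL, while the $1/\alpha^{H-1}$ factor in the second bound is the product of the $1/\alpha$ from the KL and the $1/\alpha^{H-2}$ from $1/w(s_n)$.
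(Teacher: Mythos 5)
Your proposal is correct and follows essentially the same route as the paper: the paper's proof also reuses the two regret-lower-bound instances, tunes $\eta=\Theta(\varepsilon\alpha/(H-n))$ (resp.\ $\Theta(\varepsilon\alpha)$) so that misidentifying $a_J$ at $s_n$ costs more than $\varepsilon$, and applies a visitation-weighted change-of-measure argument (in the style of Theorem~2 of \citep{dann2015sample}) with $\kl(\bernoulli(\alpha)\|\bernoulli(\alpha-\eta))\leq c_1\eta^2/\alpha$ and $\ex[V_{s_n,a_j}]=w(s_n)\ex[N_{s_n,a_j}]$ to get exactly the two stated bounds.
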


Theorem~\ref{thm:bpi_lb} corroborates that when $\alpha$ is small, the factor $\min_{\pi,h,s:\ w_{\pi,h}(s)>0} w_{\pi,h}(s)$ is unavoidable in general. This reveals the intrinsic hardness of Iterated CVaR RL, i.e., when the agent is highly risk-sensitive, she needs to spend a number of trajectories on exploring critical but hard-to-reach states in order to identify an optimal policy.

\begin{proof}[Proof of Theorem~\ref{thm:bpi_lb}]
	This proof uses a similar analytical procedure as Theorem 2 in \citep{dann2015sample}.
	
	First, we consider the instance in Figure~\ref{fig:lower_bound} as in the proof of Theorem~\ref{thm:cvar_rm_lb}, where $\min_{\begin{subarray}{c}\pi,h,s:\  w_{\pi,h}(s)>0\end{subarray}} w_{\pi,h}(s) > \alpha^{H-1}$. Below we prove that on this instance any algorithm must suffer a $O( \frac{ 1 }{\min \limits_{\begin{subarray}{c}\pi,h,s:\  w_{\pi,h}(s)>0\end{subarray}} w_{\pi,h}(s)} \cdot \frac{H^3 S^2 A}{\varepsilon^2 \alpha^2} \log (\frac{1}{\delta}) )$ regret.
	
	Fix an algorithm $\cA$. 
	Define $\cE_{s_n}:=\{ \hat{\pi}(s_n)=a_J \}$ as the event that the output policy $\hat{\pi}$ of algorithm $\cA$ chooses the optimal action in state $s_n$.
	
	Using the similar analysis as in the proof of Theorem~\ref{thm:cvar_rm_lb} (Eq.~\eqref{eq:lb_proof_suboptimality}), we have
	\begin{align*}
		V^{*}_1(s_1)-V^{\pi}_1(s_1) = & 0.6 (H-n) \cdot \frac{\eta}{\alpha} \cdot  (1-\indicator{ \cE_{s_n} }) .
	\end{align*}
	
	For $\pi$ to be $\varepsilon$-optimal, we need
	\begin{align*}
		\varepsilon \geq V^{*}_1(s_1)-V^{\pi}_1(s_1) =  0.6 (H-n) \cdot \frac{\eta}{\alpha} \cdot  (1-\indicator{ \cE_{s_n} })  ,
	\end{align*}
	which is equivalent to
	\begin{align*}
		\indicator{ \cE_{s_n} } \geq 1- \frac{\varepsilon \alpha}{0.6 (H-n) \cdot \eta} .
	\end{align*}
	
	Let $\eta=\frac{8 e^4 \varepsilon \alpha}{0.6c_0  (H-n)}$ for some constant $c_0$ and small enough $\varepsilon$. Then, for $\pi$ to be $\varepsilon$-optimal, we need
	\begin{align*}
		\indicator{ \cE_{s_n} } \geq 1-\frac{c_0}{8e^4} .
	\end{align*}
	
	Let $\phi:=1-\frac{c_0}{8e^4}$.
	For algorithm $\cA$ to be $(\varepsilon,\delta)$-correct, we need
	\begin{align*}
		1-\delta \leq & \Pr[ V^{*}-V^{\pi} \geq \varepsilon]
		\\
		\leq & \Pr[\indicator{ \cE_{s_n} } \geq \phi]  
		\\
		\leq & \frac{\ex[\cE_{s_n}]}{\phi}
		\\
		\leq & \frac{1}{\phi} \Pr[\cE_{s_n}] ,
	\end{align*} 
	which is equivalent to
	\begin{align*}
		\Pr[\bar{\cE}_{s_n}] = 1- \Pr[\cE_{s_n}] \leq  1-\phi+\phi \delta .
	\end{align*}
	
	Recall that $0<\alpha<\frac{1}{3}$. For any $j \in [A]$, $\kl(p_{unif}(s_n,a_j)\|p_j(s_n,a_j)) = \kl(\bernoulli(\alpha)\|\bernoulli(\alpha-\eta)) \leq \frac{\eta^2}{(\alpha-\eta) (1-\alpha+\eta)} \leq \frac{c_1 \cdot \eta^2}{\alpha}$ for some constant $c_1$ and small enough $\eta$.
	Let $V_{s_n}$ be the number of times that algorithm $\cA$ visited state $s_n$.
	To ensure $\Pr[\bar{\cE}_{s_n}]  \leq 1-\phi+\phi \delta$, we need 
	\begin{align*}
		\ex[V_{s_n}] \geq & \sum_{j=1}^{A} \frac{1}{\kl(p_{unif}(s_n,a_j)\|p_j(s_n,a_j))} \log \sbr{\frac{c_2}{1-\phi+\phi \delta}}
		\\
		\geq & \frac{\alpha A}{c_1 \cdot \eta^2} \log \sbr{\frac{c_2}{1-\phi+\phi \delta}}
		\\
		= & \frac{\alpha A \cdot 0.6^2 c_0^2 (H-n)^2}{c_1 \cdot 64 e^8 \varepsilon^2 \alpha^2} \log \sbr{\frac{c_2}{\frac{c_0}{8e^4} + \delta}} 
	\end{align*}
	for some constant $c_2$.

	Let $c_0$ be a small constant such that $\frac{c_0}{8e^4} < \delta$. Let $w(s_n)$ denote the probability of visiting $s_n$ in an episode, and this probability is the same for all policies. 
	Let $\tau$ denote the number of trajectories required by $\cA$ to be $(\varepsilon,\delta)$-correct.
	Then, $\tau$ must satisfy
	\begin{align*}
		\tau \geq & \frac{A \cdot 0.6^2 c_0^2  (H-n)^2}{c_1 \cdot 64 e^8 \varepsilon^2 \alpha \cdot w(s_n)} \log \sbr{\frac{c_2}{\frac{c_0}{8e^4} + \delta}}
		\\
		= & \Omega \sbr{ \frac{ H^2 A}{\varepsilon^2 \alpha \cdot w(s_n)} \log \sbr{\frac{1}{\delta}} } .
	\end{align*}
	
	Recall that $n<\frac{1}{2}H$ and $0<\alpha<\mu<\frac{1}{3}$.
	Thus, in the constructed instance (Figure~\ref{fig:lower_bound}), we have that $\min_{\begin{subarray}{c}\pi,h,s:\  w_{\pi,h}(s)>0\end{subarray}} w_{\pi,h}(s)=w(s_n)=\mu^{n-1} > \alpha^{H-1}$, and
	\begin{align*}
		\tau = & \Omega \sbr{ \frac{ H^2 A}{\varepsilon^2 \alpha \cdot \min \limits_{\begin{subarray}{c}\pi,h,s:\  w_{\pi,h}(s)>0\end{subarray}} w_{\pi,h}(s)} \log \sbr{\frac{1}{\delta}} } .
	\end{align*}
	
	Next, we consider the instance in Figure~\ref{fig:lower_bound_alpha_H} as in the proof of Theorem~\ref{thm:cvar_rm_lb}, where $\alpha^{H-1}>\min_{\begin{subarray}{c}\pi,h,s:\  w_{\pi,h}(s)>0\end{subarray}} w_{\pi,h}(s)$. Below we prove that on this instance any algorithm must suffer a $O( \frac{1}{\alpha^{H-1}} \cdot \frac{H^3 S^2 A}{\varepsilon^2 \alpha^2} \log (\frac{1}{\delta}) )$ regret.
	
	Define $\cE_{s_n}:=\{ \hat{\pi}(s_n)=a_J \}$ as the event that the output policy $\hat{\pi}$ of algorithm $\cA$ chooses the optimal action in state $s_n$.
	
	Using the similar analysis as in the proof of Theorem~\ref{thm:cvar_rm_lb} (Eq.~\eqref{eq:lb_proof_suboptimality_2}), we have
	\begin{align*}
		V^{*}_1(s_1)-V^{\pi}_1(s_1) = & \frac{0.6\eta}{\alpha} \cdot  (1-\indicator{ \cE_{s_n} }) .
	\end{align*}
	
	For $\pi$ to be $\varepsilon$-optimal, we need
	\begin{align*}
		\varepsilon \geq V^{*}_1(s_1)-V^{\pi}_1(s_1) =  \frac{0.6\eta}{\alpha} \cdot  (1-\indicator{ \cE_{s_n} })  ,
	\end{align*}
	which is equivalent to
	\begin{align*}
		\indicator{ \cE_{s_n} } \geq 1- \frac{\varepsilon \alpha}{0.6 \eta} .
	\end{align*}
	
	Let $\eta=\frac{8 e^4 \varepsilon \alpha}{0.6c_0}$ for some constant $c_0$ and small enough $\varepsilon$. Then, for $\pi$ to be $\varepsilon$-optimal, we need
	\begin{align*}
		\indicator{ \cE_{s_n} } \geq 1-\frac{c_0}{8e^4} .
	\end{align*}
	
	Let $\phi:=1-\frac{c_0}{8e^4}$.
	For algorithm $\cA$ to be $(\varepsilon,\delta)$-correct, we need
	\begin{align*}
		1-\delta \leq & \Pr[ V^{*}-V^{\pi} \geq \varepsilon]
		\\
		\leq & \Pr[\indicator{ \cE_{s_n} } \geq \phi]  
		\\
		\leq & \frac{\ex[\cE_{s_n}]}{\phi}
		\\
		\leq & \frac{1}{\phi} \Pr[\cE_{s_n}] ,
	\end{align*} 
	which is equivalent to
	\begin{align*}
		\Pr[\bar{\cE}_{s_n}] = 1- \Pr[\cE_{s_n}] \leq  1-\phi+\phi \delta .
	\end{align*}
	
	Recall that $0<\alpha<\frac{1}{4}$. For any $j \in [A]$, $\kl(p_{unif}(s_n,a_j)\|p_j(s_n,a_j)) = \kl(\bernoulli(\alpha)\|\bernoulli(\alpha-\eta)) \leq \frac{\eta^2}{(\alpha-\eta) (1-\alpha+\eta)} \leq \frac{c_1 \cdot \eta^2}{\alpha}$ for some constant $c_1$ and small enough $\eta$.
	Let $V_{s_n}$ be the number of times that algorithm $\cA$ visited state $s_n$.
	To ensure $\Pr[\bar{\cE}_{s_n}]  \leq 1-\phi+\phi \delta$, we need 
	\begin{align*}
		\ex[V_{s_n}] \geq & \sum_{j=1}^{A} \frac{1}{\kl(p_{unif}(s_n,a_j)\|p_j(s_n,a_j))} \log \sbr{\frac{c_2}{1-\phi+\phi \delta}}
		\\
		\geq & \frac{\alpha A}{c_1 \cdot \eta^2} \log \sbr{\frac{c_2}{1-\phi+\phi \delta}}
		\\
		= & \frac{\alpha A \cdot 0.6^2 c_0^2 }{c_1 \cdot 64 e^8 \varepsilon^2 \alpha^2} \log \sbr{\frac{c_2}{\frac{c_0}{8e^4} + \delta}} 
	\end{align*}
	for some constant $c_2$.

	Let $c_0$ be a small constant such that $\frac{c_0}{8e^4} < \delta$. Let $w(s_n)$ denote the probability of visiting $s_n$ in an episode, and this probability is the same for all policies. 
	Let $\tau$ denote the number of trajectories required by $\cA$ to be $(\varepsilon,\delta)$-correct.
	Then, $\tau$ must satisfy
	\begin{align*}
		\tau \geq & \frac{A \cdot 0.6^2 c_0^2 }{c_1 \cdot 64 e^8 \varepsilon^2 \alpha \cdot w(s_n)} \log \sbr{\frac{c_2}{\frac{c_0}{8e^4} + \delta}}
		\\
		= & \Omega \sbr{ \frac{ A}{\varepsilon^2 \alpha \cdot w(s_n)} \log \sbr{\frac{1}{\delta}} } .
	\end{align*}
	
	Recall that $n=H-1$.
	Thus, in the constructed instance (Figure~\ref{fig:lower_bound_alpha_H}), we have that $w(s_n)=\alpha^{n-1}=\alpha^{H-2}$, and
	\begin{align*}
		\tau = & \Omega \sbr{ \frac{ A}{\varepsilon^2 \alpha \cdot \alpha^{H-2} \log \sbr{\frac{1}{\delta}} } }
		\\
		= & \Omega \sbr{ \frac{ A}{\varepsilon^2 \alpha^{H-1} \log \sbr{\frac{1}{\delta}} } } .
	\end{align*}
\end{proof}

\section{Proofs for Worst Path RL}

In this section, we provide the proofs of regret upper and lower bounds (Theorems~\ref{thm:min_regret_ub} and \ref{thm:min_regret_lb}) for Worst Path RL.

\subsection{Proofs of Regret Upper Bound}


\subsubsection{Concentration}

Recall that for any $k>0$ and $(s,a) \in \cS \times \cA$, $n_k(s,a)$ is the number of times that $(s,a)$ was visited before episode $k$.
For any $k>0$ and $(s',s,a)\in \cS \times \cS \times \cA$, let $n_k(s',s,a)$ denote the number of times that $(s,a)$ was visited and transitioned to $s'$ before episode $k$.

For any policy $\pi$ and $(s,a)\in \cS \times \cA$, let $\upsilon_{\pi}(s,a)$ and $\upsilon_{\pi}(s)$ denote the probabilities that $(s,a)$ and $s$ are visited at least once in an episode under policy $\pi$, respectively. 

\begin{lemma} \label{lemma:bernoulli_upsilon(s,a)}
	It holds that
	\begin{align*}
		\Pr \mbr{n_k(s,a) \geq \frac{1}{2} \sum_{k'=1}^{k-1} \upsilon_{\pi^{k'}}(s,a) - \log \sbr{\frac{SA}{\delta'}}, \ \forall k>0,\ \forall (s,a) \in \cS \times \cA} \geq 1-\delta'
	\end{align*}
\end{lemma}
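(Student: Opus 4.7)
The plan is to apply the same Bernoulli multiplicative concentration tool that was used for the step-wise visitation in Lemma~\ref{lemma:con_visitation} (namely Lemma~F.4 of \citep{dann2017unifying}), but now to the episode-level indicator of ``visited at least once'' rather than to a fixed step $h$. For each episode $k'$ and each $(s,a)\in \cS\times\cA$, define the Bernoulli random variable
\begin{align*}
	X_{k'}(s,a) := \indicator{(s,a) \textup{ is visited at least once during episode } k'}.
\end{align*}
By definition of $\upsilon_{\pi^{k'}}(s,a)$, we have $\ex[X_{k'}(s,a)\,|\,\cF_{k'-1}] = \upsilon_{\pi^{k'}}(s,a)$, where $\cF_{k'-1}$ is the history up to the end of episode $k'-1$ (the policy $\pi^{k'}$ is $\cF_{k'-1}$-measurable, as it is computed from past data).

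First, I would observe the elementary but crucial inequality
\begin{align*}
	n_k(s,a) \;\geq\; \sum_{k'=1}^{k-1} X_{k'}(s,a),
\end{align*}
since the right-hand side only counts each episode that visits $(s,a)$ once, while $n_k(s,a)$ is the total number of such visits (an episode contributes at least $1$ if it visits $(s,a)$ at all). So it suffices to prove the claimed lower bound for $\sum_{k'=1}^{k-1} X_{k'}(s,a)$. Second, I would apply Lemma~F.4 of \citep{dann2017unifying} to the sequence $\{X_{k'}(s,a)\}_{k'\geq 1}$ with failure probability $\delta'/(SA)$: this yields, for a fixed $(s,a)$,
\begin{align*}
	\Pr\!\left[\, \sum_{k'=1}^{k-1} X_{k'}(s,a) \geq \tfrac{1}{2}\sum_{k'=1}^{k-1} \upsilon_{\pi^{k'}}(s,a) - \log\!\sbr{\tfrac{SA}{\delta'}},\ \forall k>0 \right] \geq 1 - \tfrac{\delta'}{SA}.
\end{align*}

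Finally I would union bound over the $SA$ pairs $(s,a)\in\cS\times\cA$ to conclude the joint statement with probability at least $1-\delta'$, and combine with the inequality $n_k(s,a)\geq \sum_{k'=1}^{k-1} X_{k'}(s,a)$ to obtain the lemma as stated. There is no real obstacle here: the only subtle point is choosing the episode-level indicator $X_{k'}(s,a)$ (rather than a per-step indicator as in Lemma~\ref{lemma:con_visitation}), since $\upsilon_{\pi^{k'}}(s,a)$ is an episode-wise ``at least once'' probability. The slack between $n_k(s,a)$ and $\sum_{k'} X_{k'}(s,a)$ is harmless because we only need a lower bound on $n_k(s,a)$.
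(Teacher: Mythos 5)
Your proposal is correct and follows essentially the same route as the paper: both condition on the filtration of past episodes, treat the per-episode ``visited at least once'' indicator as a Bernoulli variable with success probability $\upsilon_{\pi^{k'}}(s,a)$, and invoke Lemma~F.4 of \citep{dann2017unifying} with a union bound over $(s,a)$. Your explicit remark that $n_k(s,a)\geq \sum_{k'=1}^{k-1}X_{k'}(s,a)$ just makes precise a step the paper leaves implicit.
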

\begin{proof}[Proof of Lemma~\ref{lemma:bernoulli_upsilon(s,a)}]
	For any $k$ and $(s,a) \in \cS \times \cA$, conditioning on the filtration generated by episodes $1,\dots,k-1$, whether the algorithm visited $(s,a)$ at least once in episode $k$ is a Bernoulli random variable with success probability $\upsilon_{\pi^k}(s,a)$.
	Then, using Lemma F.4 in \citep{dann2017unifying}, we can obtain this lemma.
\end{proof}

%

\begin{lemma} \label{lemma:bernoulli_p(s'|s,a)}
	It holds that
	\begin{align*}
		\Pr \Bigg\{ n_k(s',s,a) \geq & \frac{1}{2} \cdot n_k(s,a) \cdot p(s'|s,a) - 2 \log \sbr{\frac{SA}{\delta'}}, 
		\\ & \forall k>0,\ \forall (s',s,a) \in \cS \times \cS \times \cA \Bigg\} \geq 1-\delta' .
	\end{align*}
\end{lemma}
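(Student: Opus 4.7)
The plan is to mirror the argument of Lemma~11, this time applying the martingale-Bernoulli lower tail inequality (Lemma~F.4 of \citep{dann2017unifying}) to the sequence of transition outcomes at $(s,a)$ rather than to the sequence of visitation indicators. The observation that makes this work is that each time $(s,a)$ is visited, the next state is drawn independently from $p(\cdot \mid s,a)$, so the indicator of transitioning to $s'$ is a Bernoulli random variable with success probability $p(s' \mid s,a)$ that is conditionally independent of the history given the event that $(s,a)$ was just visited.

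More concretely, first I would fix a triple $(s', s, a) \in \cS \times \cS \times \cA$ and enumerate visits to $(s,a)$ across all episodes and steps in chronological order. Let $\tau_i$ denote the (random) global time of the $i$-th visit to $(s,a)$, and define $X_i := \I\{\text{the transition at time } \tau_i \text{ lands in } s'\}$. Let $\cF_i$ be the filtration generated by the trajectory up to and including the $i$-th visit. Then $\Pr[X_i = 1 \mid \cF_{i-1}] = p(s' \mid s,a)$ deterministically, so $\{X_i\}$ is a sequence of conditionally Bernoulli random variables whose conditional means are all equal to $p(s' \mid s,a)$. By Lemma~F.4 of \citep{dann2017unifying} applied to this sequence (exactly as in the proof of Lemma~11, but with a different underlying Bernoulli sequence), with probability at least $1 - \delta'/(S^2 A)$, for every $T \geq 0$,
\begin{align*}
\sum_{i=1}^{T} X_i \;\geq\; \tfrac{1}{2} T \cdot p(s' \mid s,a) \;-\; \log\!\sbr{\tfrac{S^2 A}{\delta'}} .
\end{align*}
Instantiating $T = n_k(s,a)$, which is a stopping time with respect to $\{\cF_i\}$, yields $n_k(s',s,a) \geq \tfrac{1}{2} n_k(s,a)\, p(s'\mid s,a) - \log(S^2 A/\delta')$ simultaneously for all $k>0$.

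Finally, I would take a union bound over all $(s',s,a) \in \cS \times \cS \times \cA$, which costs a factor of $S^2 A$ in the failure probability, giving the uniform-in-$(s',s,a,k)$ statement with failure probability at most $\delta'$ and additive slack $\log(S^2 A / \delta')$. Since $\log(S^2 A / \delta') = 2\log S + \log(A/\delta') \leq 2\log(SA/\delta')$ whenever $A/\delta' \geq 1$, the slack can be weakened to $2\log(SA/\delta')$ to match the statement of the lemma. The only genuine subtlety in the argument is the correct set-up of the filtration so that $n_k(s,a)$ is a valid stopping time and the conditional Bernoulli property of $X_i$ holds; everything else is a direct reuse of the concentration tool already invoked for Lemma~11.
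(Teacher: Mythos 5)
Your proposal is correct and follows essentially the same route as the paper: condition on each visit to $(s,a)$, note that the indicator of transitioning to $s'$ is a conditionally Bernoulli variable with success probability $p(s'|s,a)$, apply Lemma F.4 of \citep{dann2017unifying} to that sequence, and union bound over all triples. Your explicit accounting of the filtration, the stopping-time instantiation $T=n_k(s,a)$, and the $S^2A$ union bound (which explains the factor $2$ in the slack $2\log(SA/\delta')$) is more detailed than the paper's one-line argument but substantively identical.
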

\begin{proof}[Proof of Lemma~\ref{lemma:bernoulli_p(s'|s,a)}]
	For any $k$, $h \in [H]$ and $(s,a) \in \cS \times \cA$, conditioning on the event $\{s^k_h=s,a^k_h=a\}$, the indicator $\indicator{s^k_{h+1}=s'}$ is a Bernoulli random variable with success probability $p(s'|s,a)$.
	Then, using Lemma F.4 in \citep{dann2017unifying}, we can obtain this lemma.
\end{proof}

To summarize, we define some concentration events which will be used in the following proof.
\begin{align*}
	\cG_1:= & \Bigg\{ n_k(s,a) \geq \frac{1}{2} \sum_{k'=1}^{k-1} \upsilon_{\pi^{k'}}(s,a) - \log \sbr{\frac{SA}{\delta'}}, \ \forall k>0,\ \forall (s,a) \in \cS \times \cA \Bigg\} 
	\\
	\cG_2:= & \Bigg\{  n_k(s',s,a) \geq \frac{1}{2} \cdot n_k(s,a) \cdot p(s'|s,a) - 2 \log \sbr{\frac{SA}{\delta'}}, \ 
	\forall k>0,\ \forall (s',s,a) \in \cS \times \cS \times \cA \Bigg\} 
	\\
	\cG:= & \cG_1 \cap \cG_2 
\end{align*}

\begin{lemma}\label{lemma:min_con_event}
	Letting $\delta'=\frac{\delta}{2}$, it holds that
	\begin{align*}
		\Pr \mbr{\cG} \geq 1-\delta .
	\end{align*}
\end{lemma}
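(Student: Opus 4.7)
The plan is straightforward: this lemma is simply a union bound over the two concentration events that have already been established in Lemmas~\ref{lemma:bernoulli_upsilon(s,a)} and \ref{lemma:bernoulli_p(s'|s,a)}. Specifically, Lemma~\ref{lemma:bernoulli_upsilon(s,a)} states that $\Pr[\cG_1] \geq 1 - \delta'$, and Lemma~\ref{lemma:bernoulli_p(s'|s,a)} states that $\Pr[\cG_2] \geq 1 - \delta'$.

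By the union bound applied to the complements, $\Pr[\cG_1^c \cup \cG_2^c] \leq \Pr[\cG_1^c] + \Pr[\cG_2^c] \leq 2\delta'$, so $\Pr[\cG] = \Pr[\cG_1 \cap \cG_2] \geq 1 - 2\delta'$. Substituting the chosen value $\delta' = \delta/2$ yields $\Pr[\cG] \geq 1 - \delta$, which is exactly the claim.

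There is no real obstacle here — the technical work is entirely contained in the two prior lemmas (each of which invokes Lemma F.4 of \citep{dann2017unifying} on a Bernoulli process). The only thing worth double-checking is that the two events are defined on the same probability space generated by the algorithm's trajectories, which is immediate from the construction, so that the union bound applies directly without any independence assumption.
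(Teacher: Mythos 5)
Your proof is correct and is exactly the argument the paper gives: a union bound combining Lemmas~\ref{lemma:bernoulli_upsilon(s,a)} and \ref{lemma:bernoulli_p(s'|s,a)}, each holding with probability at least $1-\delta'$, so that $\Pr[\cG] \geq 1-2\delta' = 1-\delta$. No differences worth noting.
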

\begin{proof}[Proof of Lemma~\ref{lemma:min_con_event}]
	This lemma can be obtained by combining Lemmas~\ref{lemma:bernoulli_upsilon(s,a)} and \ref{lemma:bernoulli_p(s'|s,a)}.
\end{proof}

\subsubsection{Overestimation and Good Stage}


Recall that in Worst Path RL, for any $k>0$, $h \in [H]$ and $(s,a) \in \cS \times \cA$, $Q^{*}_h(s,a):= r(s,a)+ \min_{s' \sim p(\cdot|s,a)}(V^{*}_{h+1}(s'))$ and $V^{*}_h(s):= \max_{a \in \cA} Q^{*}_h(s,a)$. In addition, $\hat{Q}^k_h(s,a) := r(s,a)+\min_{s' \sim \hat{p}^k(\cdot|s,a)}(\hat{V}^k_{h+1}(s'))$ and $\hat{V}^k_h(s) := \max_{a \in \cA} \hat{Q}^k_h(s,a)$.

\begin{lemma}[Overestimation] \label{lemma:worst_path_overest}
	For any $k>0$, $h \in [H]$ and $(s,a) \in \cS \times \cA$, $\hat{Q}^{k}_h(s,a) \geq Q^{*}_h(s,a)$ and $\hat{V}^{k}_h(s) \geq V^{*}_h(s)$. 
\end{lemma}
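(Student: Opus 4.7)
The plan is to prove both inequalities simultaneously by backward induction on $h$, from $h=H+1$ down to $h=1$, using the crucial observation that for Worst Path RL the empirical support is always contained in the true support.

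For the base case, both $\hat{V}^k_{H+1}(s) = 0$ and $V^{*}_{H+1}(s) = 0$ for all $s \in \cS$ by definition, so the statement holds trivially at level $H+1$.

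For the inductive step, I would fix $h \in [H]$ and assume $\hat{V}^k_{h+1}(s') \geq V^{*}_{h+1}(s')$ for all $s' \in \cS$. The main observation is that $\supp(\hat{p}^k(\cdot|s,a)) \subseteq \supp(p(\cdot|s,a))$, because the empirical distribution can only place mass on successor states that have actually been observed during interaction, and every observed successor must lie in the true support. Since the $\min$ operator is monotone with respect to the set over which it is taken (taking a minimum over a smaller set yields a value at least as large), I would write
\begin{align*}
\min_{s' \sim \hat{p}^k(\cdot|s,a)}\!\bigl(\hat{V}^k_{h+1}(s')\bigr)
&= \min_{s' \in \supp(\hat{p}^k(\cdot|s,a))}\!\bigl(\hat{V}^k_{h+1}(s')\bigr) \\
&\geq \min_{s' \in \supp(p(\cdot|s,a))}\!\bigl(\hat{V}^k_{h+1}(s')\bigr) \\
&\geq \min_{s' \in \supp(p(\cdot|s,a))}\!\bigl(V^{*}_{h+1}(s')\bigr)
= \min_{s' \sim p(\cdot|s,a)}\!\bigl(V^{*}_{h+1}(s')\bigr),
\end{align*}
where the first inequality uses the support containment and the second uses the inductive hypothesis applied pointwise inside the $\min$. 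Adding $r(s,a)$ to both sides gives $\hat{Q}^k_h(s,a) \geq Q^{*}_h(s,a)$ for every $(s,a)$.

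To extend from Q-values to value functions, I would take the maximum over actions on both sides: $\hat{V}^k_h(s) = \max_{a} \hat{Q}^k_h(s,a) \geq \max_{a} Q^{*}_h(s,a) = V^{*}_h(s)$. This closes the induction. The only nontrivial part is the support containment $\supp(\hat{p}^k(\cdot|s,a)) \subseteq \supp(p(\cdot|s,a))$, which is the structural feature of Worst Path RL that the algorithm $\algmin$ is designed to exploit; everything else is routine monotonicity of $\min$ and $\max$.
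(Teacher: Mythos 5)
Your proof is correct and takes essentially the same route as the paper's: both argue by backward induction from $h=H+1$, with the key observation that the empirical support $\supp(\hat{p}^k(\cdot|s,a))$ is contained in the true support, so the $\min$ over the empirical support can only be larger, and the inductive hypothesis handles the shift from $\hat{V}^k_{h+1}$ to $V^*_{h+1}$. Your write-up simply makes the support-containment and monotonicity steps more explicit than the paper's informal case split on whether all successor states have been detected.
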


\textbf{Remark.} Lemma~\ref{lemma:worst_path_overest} shows that if the Q-value of some state-action pair is not accurately estimated, it can only be overestimated (not underestimated). This feature is due to the $\min$ metric in the Worst Path RL formulation (Eq.~\eqref{eq:min_bellman}).

\begin{proof}[Proof of Lemma~\ref{lemma:worst_path_overest}]
	We prove this lemma by induction.
	
	
	For any $k>0$ and $s \in \cS$, $\hat{V}^k_{H+1}(s)=V^{*}_{H+1}(s)=0$.
	
	For any $k>0$, $h \in [H]$ and $(s,a) \in \cS \times \cA$, since $r(s,a)$ is known and $\hat{V}^k_{h+1}(s) \geq V^{*}_{h+1}(s)$ (due to the induction hypothesis), if $\hat{p}^k(\cdot|s,a)$ has detected all successor states, then $\hat{Q}^{k}_h(s,a) \geq Q^{*}_h(s,a)$. Otherwise, if $\hat{p}^k(\cdot|s,a)$ has not detected all successor states, due to the property of $\min$, we also have $\hat{Q}^{k}_h(s,a) \geq Q^{*}_h(s,a)$. Therefore, we have $\hat{V}^{k}_h(s) \geq V^{*}_h(s)$, which completes the proof.
\end{proof}

\begin{lemma}[Non-increasing Estimated Value] \label{lemma:decreasing_estimated_Q_value}
	For any $k_1,k_2>0$ such that $k_1 \leq k_2$, $h \in [H]$ and $(s,a) \in \cS \times \cA$, $\hat{Q}^{k_1}_h(s,a) \geq \hat{Q}^{k_2}_h(s,a)$ and $\hat{V}^{k_1}_h(s) \geq V^{k_2}_h(s)$. 
\end{lemma}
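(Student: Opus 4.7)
The plan is to prove the lemma by backward induction on $h$, exploiting two observations that together drive the monotonicity: (i) the support of the empirical transition, $\supp(\hat{p}^k(\cdot|s,a))$, can only grow as $k$ increases, since every successor state ever observed from $(s,a)$ is retained in subsequent empirical distributions; and (ii) the $\min$ operator in the definition of $\hat{Q}^k_h(s,a)$ is monotone in both the set it ranges over (larger set $\Rightarrow$ smaller $\min$, once we also use the induction hypothesis to pointwise decrease the values) and in the values themselves.

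First I would fix $k_1 \leq k_2$ and set up the backward induction with base case $h = H+1$, where $\hat V^{k_1}_{H+1}(s) = \hat V^{k_2}_{H+1}(s) = 0$ for every $s$, so the claim holds trivially. For the inductive step, assume $\hat V^{k_1}_{h+1}(s') \geq \hat V^{k_2}_{h+1}(s')$ for all $s' \in \cS$. Fix $(s,a) \in \cS \times \cA$. I would write
\begin{align*}
\hat Q^{k_2}_h(s,a)
&= r(s,a) + \min_{s' \in \supp(\hat{p}^{k_2}(\cdot|s,a))} \hat V^{k_2}_{h+1}(s') \\
&\leq r(s,a) + \min_{s' \in \supp(\hat{p}^{k_2}(\cdot|s,a))} \hat V^{k_1}_{h+1}(s') \\
&\leq r(s,a) + \min_{s' \in \supp(\hat{p}^{k_1}(\cdot|s,a))} \hat V^{k_1}_{h+1}(s')
= \hat Q^{k_1}_h(s,a),
\end{align*}
where the first inequality uses the induction hypothesis pointwise inside the $\min$, and the second inequality uses $\supp(\hat{p}^{k_1}(\cdot|s,a)) \subseteq \supp(\hat{p}^{k_2}(\cdot|s,a))$ together with the fact that taking $\min$ over a superset can only decrease the value. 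Since this holds for every action $a$, taking $\max_a$ on both sides yields $\hat V^{k_1}_h(s) \geq \hat V^{k_2}_h(s)$, closing the induction.

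The only subtlety worth flagging is the support-growth fact in step (ii). It is not a probabilistic statement but an algorithmic one: the empirical transition $\hat{p}^k$ is formed by sample counts, so once some successor $s'$ is observed from $(s,a)$, it remains in $\supp(\hat{p}^{k'}(\cdot|s,a))$ for all $k' \geq k$. I would make this explicit by noting $n_{k_1}(s',s,a) \leq n_{k_2}(s',s,a)$ directly implies $\{s' : \hat{p}^{k_1}(s'|s,a) > 0\} \subseteq \{s' : \hat{p}^{k_2}(s'|s,a) > 0\}$. Beyond that, there is no real obstacle: the argument is a clean monotone-iteration proof that does not rely on any concentration event, and thus holds deterministically for every realization.
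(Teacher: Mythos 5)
Your proof is correct and follows essentially the same route as the paper's: backward induction on $h$, combining the induction hypothesis with the fact that $\supp(\hat{p}^{k_1}(\cdot|s,a)) \subseteq \supp(\hat{p}^{k_2}(\cdot|s,a))$ so that the $\min$ can only decrease. Your write-up is in fact more explicit than the paper's (which argues the same two monotonicity facts informally in terms of "detected successor states"), but there is no substantive difference in the argument.
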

\begin{proof}[Proof of Lemma~\ref{lemma:decreasing_estimated_Q_value}]
	We prove this lemma by induction.
	
	For any $k_1,k_2>0$ such that $k_1 \leq k_2$ and $s \in \cS$, $\hat{V}^{k_1}_{H+1}(s) = V^{k_2}_{H+1}(s)=0$.
	
	For any $k_1,k_2>0$ such that $k_1 \leq k_2$, $h \in [H]$ and $(s,a) \in \cS \times \cA$, since $r(s,a)$ is known and $\hat{V}^{k_1}_{h+1}(s) \geq V^{k_2}_{h+1}(s)$ (due to the induction hypothesis), if $\hat{p}^{k_1}(\cdot|s,a)$ has detected all successor states, then $\hat{Q}^{k_1}_h(s,a) \geq \hat{Q}^{k_2}_h(s,a)$. Otherwise, if $\hat{p}^{k_1}(\cdot|s,a)$ has not detected all successor states, due to the $\min$ metric and that $\hat{p}^{k_2}(\cdot|s,a)$ will detect more (or the same) successor states than $\hat{p}^{k_1}(\cdot|s,a)$, we also have $\hat{Q}^{k_1}_h(s,a) \geq \hat{Q}^{k_2}_h(s,a)$. Therefore, we have $\hat{V}^{k_1}_h(s) \geq V^{k_2}_h(s)$, which completes the proof.
\end{proof}
\textbf{Remark.} Combining Lemmas~\ref{lemma:worst_path_overest} and \ref{lemma:decreasing_estimated_Q_value}, we have that as the episode $k$ increases, the estimated value $\hat{Q}^{k}_h(s,a)$ ($\hat{V}^{k}_h(s)$) will decrease to its true value $Q^{*}_h(s,a)$ ($V^{*}_h(s)$) or keep the same.

Let $\cS_*:=\{s \in \cS: v_{\pi^*}(s)>0\}$ denote the set of states which are reachable for an optimal policy.

\begin{lemma}[Good Stage] \label{lemma:optimal_policy_stage}
	If there exists some episode $\bar{k}>0$ which satisfies that for any $h \in [H]$ and $s \in \cS_*$, $\hat{V}^{\bar{k}}_h(s)=V^{*}_h(s)$ and $\pi^{\bar{k}}_h(s)$ suggests an optimal action, then we have that for any $k \geq \bar{k}$, $h \in [H]$ and $s \in \cS_*$, $\hat{V}^{k}_h(s)=V^{*}_h(s)$ and $\pi^{k}_h(s)$ suggests an optimal action, and thus for any $k \geq \bar{k}$, algorithm $\algmin$ takes an optimal policy.
\end{lemma}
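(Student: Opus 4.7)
The plan is to show, by combining the two preceding structural lemmas for $\algmin$, that both the value-function identity $\hat V^k_h(s) = V^*_h(s)$ and the optimality of $\pi^k_h(s)$ propagate forward from episode $\bar k$ to every $k \ge \bar k$, for $s \in \cS_*$ and $h \in [H]$. The final statement about $\algmin$ taking an optimal policy will then follow since the initial state $s_1 \in \cS_*$ and an optimal action at $s \in \cS_*$ transitions only into $\cS_*$, so the trajectory produced by $\pi^k$ stays inside $\cS_*$.

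For the value-function part, I would sandwich $\hat V^k_h(s)$. Lemma~\ref{lemma:worst_path_overest} gives $\hat V^k_h(s) \ge V^*_h(s)$, while Lemma~\ref{lemma:decreasing_estimated_Q_value} with $k_1 = \bar k \le k_2 = k$ gives $\hat V^k_h(s) \le \hat V^{\bar k}_h(s) = V^*_h(s)$ by hypothesis. The two bounds force equality. This part is direct and uses nothing beyond the two earlier lemmas.

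For the policy part, I would first observe that the same sandwich, now applied at the $Q$-function level, yields $\hat Q^k_h(s, a^*) = Q^*_h(s, a^*) = V^*_h(s)$ for every optimal action $a^*$ at $(s,h)$ with $s \in \cS_*$, so every optimal action attains the maximum of $\hat Q^k_h(s, \cdot)$. Next, Lemma~\ref{lemma:decreasing_estimated_Q_value} gives $\hat Q^k_h(s, a) \le \hat Q^{\bar k}_h(s, a) \le \hat V^{\bar k}_h(s) = V^*_h(s)$ for every action $a$, so the set of $\argmax$ actions at episode $k$ is contained in that at episode $\bar k$. In particular, $\pi^{\bar k}_h(s)$ — which is optimal by hypothesis — still lies in the (possibly smaller) $\argmax$ set at episode $k$; a consistent deterministic tie-breaking rule in the $\argmax$ then forces $\pi^k_h(s) = \pi^{\bar k}_h(s)$, which is optimal.

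The main obstacle is the tie-breaking subtlety in the $\argmax$ step. The value identity is mechanical, but concluding that $\pi^k_h(s)$ is optimal is not automatic if several actions tie for the maximum, since a suboptimal action's $\hat Q$-value could be coincidentally overestimated exactly to $V^*_h(s)$ (we only have $\hat Q^k_h(s,a) \ge Q^*_h(s,a)$, not strict inequality for suboptimal $a$). The cleanest resolution is to assume the algorithm uses a deterministic tie-breaker (e.g.\ lexicographic), so that the shrinking of the $\argmax$ set from $\bar k$ to $k$ preserves the chosen action; any looser notion of $\argmax$ would require a separate argument that all tied maximizers at episode $k$ are themselves optimal.
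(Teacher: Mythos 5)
Your proof is correct and follows essentially the same route as the paper: both arguments sandwich $\hat Q^k_h(s,a)$ between $Q^*_h(s,a)$ (Lemma~\ref{lemma:worst_path_overest}) and $\hat Q^{\bar k}_h(s,a)$ (Lemma~\ref{lemma:decreasing_estimated_Q_value}) to propagate the good-stage property forward in $k$. The only difference is in handling a possible tie at $V^*_h(s)$: the paper reads the hypothesis that $\pi^{\bar k}_h(s)$ suggests an optimal action as equivalent to every suboptimal action satisfying $\hat Q^{\bar k}_h(s,a) < V^{*}_h(s)$ strictly, which then propagates by monotonicity and removes the ambiguity, whereas you resolve the same subtlety by invoking an order-based deterministic tie-breaker together with the shrinkage of the argmax set; both readings are legitimate, and yours is arguably the more explicit about the issue.
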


\textbf{Remark.} Lemma~\ref{lemma:optimal_policy_stage} reveals that if in some episode $\bar{k}$, for any step $h$ and state $s \in \cS_*$, algorithm $\algmin$ estimates the V-value accurately and chooses an optimal action, then hereafter, algorithm $\algmin$ always takes an optimal policy. 

We say algorithm $\algmin$ enters a \emph{good stage}, if starting from some episode $\bar{k}$, for any $k \geq \bar{k}$, $h \in [H]$ and $s \in \cS_*$, $\hat{V}^{k}_h(s)=V^{*}_h(s)$ and $\pi^{k}_h(s)$ suggests an optimal action.

\begin{proof}[Proof of Lemma~\ref{lemma:optimal_policy_stage}]
	Suppose that in episode $\bar{k}$, we have that for any $h \in [H]$ and $s \in \cS_*$, $\hat{V}^{\bar{k}}_h(s)=V^{*}_h(s)$ and $\pi^{\bar{k}}_h(s)$ suggests an optimal action. This is equivalent to the statement that in episode $\bar{k}$, for any $h \in [H]$ and $s \in \cS_*$, for each optimal action $a$ (such that $Q^{*}_h(s,a)=V^{*}_h(s)$), $\hat{Q}^{\bar{k}}_h(s,a)=Q^{*}_h(s,a)$, and for each suboptimal action $a$ (such that $Q^{*}_h(s,a)<V^{*}_h(s)$), $Q^{*}_h(s,a) \leq \hat{Q}^{\bar{k}}_h(s,a) < V^{*}_h(s)$.
	
	Using Lemmas~\ref{lemma:worst_path_overest} and \ref{lemma:decreasing_estimated_Q_value}, we have that for any $h \in [H]$ and $s \in \cS_*$, as $k$ increases, $\hat{Q}^{k}_h(s,a)$ will either decrease to the true value $Q^{*}_h(s,a)$ or keep the same. Therefore, we have that for any $k \geq \bar{k}$, $h \in [H]$ and $s \in \cS_*$, for each optimal action $a$, $\hat{Q}^{\bar{k}}_h(s,a)=Q^{*}_h(s,a)$, and for each suboptimal action $a$, $Q^{*}_h(s,a) \leq \hat{Q}^{\bar{k}}_h(s,a) < V^{*}_h(s)$, which completes the proof.
	
	%
	%
	%
\end{proof}

\subsubsection{Proof of Theorem~\ref{thm:min_regret_ub}}

\begin{proof}[Proof of Theorem~\ref{thm:min_regret_ub}]
	Suppose that event $\cG$ holds.
	
	Let 
	$$
	\bar{T}:= \sum \limits_{(s,a)} \frac{1}{\min \limits_{\begin{subarray}{c} \pi:\  \upsilon_{\pi}(s,a)>0 \end{subarray}} \upsilon_{\pi}(s,a) \cdot \min \limits_{s' \in \supp( p(\cdot|s,a))} p(s'|s,a) } \cdot 8 \sbr{2\log\sbr{ \frac{SA}{\delta} }+1} .
	$$ 
	For any $(s,a) \in \cS \times \cA$, let $$
	\bar{T}(s,a):= \frac{1}{\min \limits_{\begin{subarray}{c} \pi:\  \upsilon_{\pi}(s,a)>0 \end{subarray}} \upsilon_{\pi}(s,a) \cdot \min \limits_{s' \in \supp( p(\cdot|s,a))} p(s'|s,a) } \cdot 8 \sbr{2\log\sbr{ \frac{SA}{\delta} }+1} .
	$$ 
	It holds that $\bar{T}=\sum_{(s,a) \in \cS \times \cA} \bar{T}(s,a)$.

	According to Lemma~\ref{lemma:optimal_policy_stage}, in order to prove Theorem~\ref{thm:min_regret_ub}, it suffices to prove that in episode $\bar{T}+1$, for any $h \in [H]$ and $s \in \cS_*$, $\hat{V}^{\bar{T}+1}_h(s)=V^{*}_h(s)$ and $\pi^{\bar{T}+1}_h(s)$ suggests an optimal action, i.e., algorithm $\algmin$ has entered the good stage in episode $\bar{T}+1$. We prove this statement by contradiction.
	
	Suppose that in episode $\bar{T}+1$, there exists some $h \in [H]$ and some $s \in \cS_*$ which satisfy that $\hat{V}^{\bar{T}+1}_h(s) > V^{*}_h(s)$ (the value function can only be overestimated) or $\pi^{\bar{T}+1}_h(s)$ suggests a suboptimal action. 
	
	If the policy $\pi^{\bar{T}+1}$ taken in episode $\bar{T}+1$ is optimal, then there exists some $h \in [H]$, some $s \in \cS_*$ and some optimal action $a$ which satisfy that $v_{\pi^{\bar{T}+1}}(s)>0$ and $\hat{Q}^{\bar{T}+1}_h(s,a)>Q^{*}_h(s,a)$.
	Otherwise, if the policy $\pi^{\bar{T}+1}$ taken in episode $\bar{T}+1$ is suboptimal, then there exists some $h \in [H]$, some $s \in \cS_*$ and some suboptimal action $a$ which satisfy that $v_{\pi^{\bar{T}+1}}(s)>0$ and $\hat{Q}^{\bar{T}+1}_h(s,a)>Q^{*}_h(s,a)$.  
	Hence, no matter which of the above cases happen, we have that there exists some $h \in [H]$ and some $(s,a) \in \cS \times \cA$ which satisfy that $v_{\pi^{\bar{T}+1}}(s,a)>0$ and $\hat{Q}^{\bar{T}+1}_h(s,a)>Q^{*}_h(s,a)$.
	
	Under the $\min$ metric in Worst Path RL,
	the overestimation of a Q-value comes from the following reasons: (i) Algorithm $\algmin$ has not detected the successor state with the lowest V-value. (ii) The V-values of successor states at the next step are overestimated.
	
	If the overestimation of $\hat{Q}^{\bar{T}+1}_h(s,a)$ comes from the overestimation of the V-values at the next step (reason (ii)), then we have that at the next step, there exists some state-action pair whose Q-value is overestimated. Then, we can trace the overestimation from $(s,a)$ at step $h$ to some $(s',a')$ at some step $h' \geq h$, which satisfies that $\hat{Q}^{\bar{T}+1}_{h'}(s',a') > Q^{*}_{h'}(s',a')$ and $\hat{V}^{\bar{T}+1}_{h'+1}(x)=V^{*}_{h'+1}(x)$ for any $x \in \cS$. In other words, the overestimation of $\hat{Q}^{\bar{T}+1}_{h'}(s',a')$ is purely due to that at $(s',a')$, algorithm $\algmin$ has not detected the successor state $x$ with the lowest value $V^{*}_{h'+1}(x)$.

	
	For any $k>0$ and $(s,a) \in \cS \times \cA$, let $\cT^k(s,a)=\{k' < k: \upsilon_{\pi^{k'}}(s,a)>0\}$ denote the set of episodes where $(s,a)$ is reachable before episode $k$. 
	We consider the following two cases according to whether $|\cT^{\bar{T}+1}(s',a')|$ is large enough to detect all successor states of $(s',a')$.
	
	\paragraph{Case (1):} If $|\cT^{\bar{T}+1}(s',a')| \geq \bar{T}(s',a')$, using Lemma~\ref{lemma:bernoulli_upsilon(s,a)}, we have 
	\begin{align*}
		n_k(s',a') \geq & \frac{1}{2} \sum_{k'=1}^{k-1} \upsilon_{\pi^{k'}}(s',a') - \log \sbr{\frac{SA}{\delta'}}
		\\
		\geq & \frac{1}{2} \cdot \bar{T}(s',a') \cdot \min \limits_{\begin{subarray}{c} \pi:\  \upsilon_{\pi}(s',a')>0 \end{subarray}} \upsilon_{\pi}(s',a')
		-  \log\sbr{ \frac{SA}{\delta} }
		\\
		= & \frac{ 4 \sbr{2\log\sbr{ \frac{SA}{\delta} }+1} }{\min \limits_{s' \in \supp( p(\cdot|s,a))} p(s'|s,a) } -  \log\sbr{ \frac{SA}{\delta} }
		\\
		\geq & \frac{ 2 \sbr{2\log\sbr{ \frac{SA}{\delta} }+1} }{\min \limits_{s' \in \supp( p(\cdot|s,a))} p(s'|s,a) }
	\end{align*}
	Then, using Lemma~\ref{lemma:bernoulli_p(s'|s,a)}, we have that for any $s \in \supp(p(\cdot|s',a'))$,
	\begin{align*}
		n_k(s,s',a') \geq & \frac{1}{2} \cdot n_k(s',a') \cdot \min \limits_{s \in \supp( p(\cdot|s',a'))} p(s|s',a') - 2 \log\sbr{ \frac{SA}{\delta} }
		\\
		\geq & \frac{1}{2} \cdot 2 \sbr{2\log\sbr{ \frac{SA}{\delta} }+1} - 2 \log\sbr{ \frac{SA}{\delta} }
		\\
		= & 1
	\end{align*}
	which contradicts that $\hat{Q}^{k}_{h'}(s',a')$ is overestimated.
	
	\paragraph{Case (2):} If $|\cT^{\bar{T}+1}(s',a')| < \bar{T}(s',a')$, we say the overestimation in episode $\bar{T}+1$ is due to the insufficient visitation on $(s',a')$. 
	Then, among episodes $1,\dots,\bar{T}$, we exclude the episodes contained in $\cT^{\bar{T}+1}(s',a')$,  i.e., we ignore the episodes where $(s',a')$ is reachable and can be the source of the overestimation. Then, the number of excluded episodes due to $(s',a')$ is $|\cT^{\bar{T}+1}(s',a')| < \bar{T}(s',a')$.
	
	According to Lemma~\ref{lemma:optimal_policy_stage}, since episode $\bar{T}+1$ has not entered the good stage, for any $k \leq \bar{T}$, episode $k$ has also not entered the good stage. Then, for any $k \leq \bar{T}$, there exists some $h \in [H]$ and some $s \in \cS_*$ which satisfy that $\hat{V}^{k}_h(s) > V^{*}_h(s)$ or $\pi^{k}_h(s)$ suggests a suboptimal action. This implies that there exists some $h \in [H]$ and some $(s,a) \in \cS \times \cA$ which satisfy that $v_{\pi^{k}}(s,a)>0$ and $\hat{Q}^{k}_h(s,a)>Q^{*}_h(s,a)$.
	
	Consider the last episode $k$ among episodes $1,\dots, \bar{T}$ which is not excluded. Using the above argument, let $(s^k,a^k)$ denote the source of overestimation in episode $k$ which satisfies that $\hat{Q}^{k}_{h'}(s',a') > Q^{*}_{h'}(s',a')$ and $\hat{V}^{k}_{h'+1}(x)=V^{*}_{h'+1}(x)$ for any $x \in \cS$. Since we have excluded all the episodes where $(s',a')$ is reachable among episodes $1,\dots, \bar{T}$ and episode $k$ is not excluded, it holds that $(s^k,a^k) \neq (s',a')$. We repeat the above analysis on $\cT^k(s^k,a^k)$. If Case (1) happens, i.e., $|\cT^{k}(s^k,a^k)| \geq \bar{T}(s^k,a^k)$, then we can derive a contradiction and complete the proof. If Case (2) happens, i.e., $|\cT^{k}(s^k,a^k)| < \bar{T}(s^k,a^k)$, we exclude episode $k$ and the episodes contained in $\cT^{k}(s^k,a^k)$. Then, the number of excluded episodes due to $(s^k,a^k)$ among episodes $1,\dots, \bar{T}$ is at most $|\cT^{k}(s^k,a^k)| + 1 \leq |\bar{T}(s^k,a^k)|$.
	
	We repeat the above procedure. Once Case (1) happens, we can derive a contradiction and complete the proof. Otherwise, if Case (2) keeps happening, we will exclude the episodes due to the reachability and possible overestimation of $(s,a)$ for all $(s,a) \in \cS \times \cA$, and the total number of excluded episodes is strictly smaller than $ \sum_{(s,a) \in \cS \times \cA} \bar{T}(s,a)=\bar{T}$. Thus, there exists some episode $k_0$ among episodes $1,\dots, \bar{T}$ which satisfies that for any $(s,a) \in \cS \times \cA$, $\upsilon_{\pi^{k_0}}(s,a)=0$, which gives a contradiction. 
\end{proof}

\subsection{Regret Lower Bound}\label{apx:worst_path_lb}

In the following, 
we establish a regret lower bound for Worst Path RL and give its proof.

To exclude trivial instance-specific algorithms and formally state our lower bound, we first define an $o(K)$-consistent algorithm as an algorithm which guarantees an $o(K)$ regret on any instance of Worst Path RL.

\begin{theorem} \label{thm:min_regret_lb}
	There exists an instance of Worst Path RL, for which the regret of any $o(K)$-consistent algorithm is at least
	\begin{align*}
		\Omega \Bigg( \max_{ (s,a):\  \exists h,\ a \neq \pi^{*}_h(s) } \frac{ H }{\min \limits_{ \pi:\   \upsilon_{\pi}(s,a)>0 } \upsilon_{\pi}(s,a) \cdot \min \limits_{s' \in \supp( p(\cdot|s,a))} p(s'|s,a) } \Bigg) ,
	\end{align*}
	where $\max_{\begin{subarray}{c} (s,a): \exists h,\ a \neq \pi^{*}_h(s) \end{subarray}}$ takes the maximum over all $(s,a)$ such that $a$ is sub-optimal in state $s$ at some step.
\end{theorem}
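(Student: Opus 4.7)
The plan is to reduce to a two-hypothesis testing problem where the hypotheses differ only by a single rare but catastrophic transition. Let $(s^*, a^*)$ attain the maximum on the right-hand side, and set $w := \min_{\pi:\, \upsilon_{\pi}(s^*,a^*)>0} \upsilon_{\pi}(s^*,a^*)$ and $p := \min_{s' \in \supp(p(\cdot|s^*,a^*))} p(s'|s^*,a^*)$, with $s^*_{\mathrm{rare}}$ a successor attaining the minimum $p$. I would construct two instances $\cM_0$ and $\cM_1$ that share the same transition kernel and differ only in the rewards/structure of the sub-MDP rooted at $s^*_{\mathrm{rare}}$: in $\cM_0$ the rollout from $s^*_{\mathrm{rare}}$ matches the rollouts from the typical successors of $(s^*,a^*)$, so that under the worst-path criterion $a^*$ is strictly optimal at $s^*$ by a gap of $\Omega(H)$; in $\cM_1$ the rollout from $s^*_{\mathrm{rare}}$ collapses all subsequent reward to $0$, so that the $\min$ over successors drives the worst-path $Q$-value at $(s^*,a^*)$ down to $r(s^*,a^*)$ and makes $a^*$ suboptimal by an $\Omega(H)$ gap relative to some alternative action $a'$ which avoids a rare bad successor. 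The remaining structure of the MDP is chosen so that $(s^*, a^*)$ truly attains the minimum visitation probability $w$.

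Next I would run a change-of-measure argument between $\cM_0$ and $\cM_1$. On any trajectory that never executes the transition $(s^*,a^*) \to s^*_{\mathrm{rare}}$, the observation distributions of the two environments coincide; call $\tau$ the first episode in which this transition is observed. Under any policy $\pi$, the per-episode probability of this event equals $\upsilon_\pi(s^*,a^*) \cdot p$. Because the algorithm is $o(K)$-consistent on $\cM_0$, on $\cM_0$ it must play the $\cM_0$-optimal policy (which reaches $(s^*,a^*)$ with probability exactly $w$) in all but $o(K)$ episodes; coupling the two environments up to $\tau$ transfers this behavior to $\cM_1$. Hence under $\cM_1$, in every episode $k < \tau$ outside the $o(K)$ exceptional set, the algorithm picks an action at $s^*$ that is suboptimal by $\Omega(H)$, contributing $\Omega(H) \cdot \upsilon_{\pi^k}(s^*) = \Omega(H)$ expected regret per such episode (the per-episode regret is $\Omega(H)$ once we account for the probability of actually reaching $s^*$, which must be $\Omega(1)$ since $s^*$ can be designed on the optimal path and the gap normalization absorbs this factor). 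Since $\mathbb{E}[\tau] = \Theta(1/(wp))$, summing yields a total regret of $\Omega(H/(wp))$ on $\cM_1$.

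The main obstacle will be the final book-keeping step: turning the $o(K)$-consistency on $\cM_0$ into the statement that, under $\cM_1$ and before $\tau$, the algorithm really does play the $\cM_0$-optimal policy (rather than some other policy with $\upsilon_\pi(s^*, a^*) \geq w$ that happens to avoid the bad action). This requires either a Bretagnolle--Huber style inequality applied to the KL between the laws of the histories under $\cM_0$ and $\cM_1$ stopped at $\tau$, or a direct likelihood-ratio coupling showing that the algorithm's action sequence before $\tau$ has identical joint law under the two environments. A secondary subtlety is ensuring the construction is internally consistent: the policy that achieves $\upsilon_\pi(s^*,a^*) = w$ must remain the unique $\cM_0$-optimal policy, which constrains the reward design on the ``approach'' path to $s^*$ and on the alternative action $a'$.
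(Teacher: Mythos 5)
Your overall strategy is the same as the paper's: a two-hypothesis construction in which the hypotheses can only be separated by observing a rare transition out of a hard-to-reach bandit state, a counting argument showing that any $o(K)$-consistent algorithm must spend $\Omega\bigl(1/(\upsilon_{\pi}(s^*,a^*)\cdot p)\bigr)$ episodes playing the ``wrong'' action at that state before the separating event occurs, and an $\Omega(H)$ per-episode regret charge for each such episode. The paper implements this with a single chain instance ending in a bandit state $s_n$ whose two actions differ only in whether $x_3$ is in the support of the next-state distribution; the symmetry of the two actions and the fact that every policy reaches $s_n$ with the same probability make the coupling bookkeeping you worry about in your last paragraph essentially automatic. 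However, there are two genuine problems in your execution.

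First, your justification of the $\Omega(H)$ per-episode charge is wrong and, taken literally, destroys the bound. You write that the contribution is ``$\Omega(H)\cdot \upsilon_{\pi^k}(s^*)$'' and then argue $\upsilon_{\pi^k}(s^*)=\Omega(1)$. But the whole point of the $1/\min_{\pi}\upsilon_{\pi}(s^*,a^*)$ factor is that $s^*$ is \emph{hard} to reach; if every policy reached it with constant probability the factor would be $O(1)$ and the theorem would say nothing. The correct observation — and the one the paper's proof relies on — is that under the Worst Path criterion the regret is \emph{not} weighted by visitation probability at all: $V^{\pi}_1(s_1)$ is the minimum cumulative reward over all positive-probability trajectories, so a policy that takes the bad action at $s^*$ loses the full $\Omega(H)$ gap in its value no matter how small $\upsilon_{\pi}(s^*)$ is. Second, in this paper's model the reward function is deterministic and known to the learner (the algorithms use $r(s,a)$ directly, with no estimation), so two instances that ``share the same transition kernel and differ only in the rewards'' downstream of $s^*_{\mathrm{rare}}$ are distinguishable from episode one and yield no lower bound. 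The indistinguishable pair must differ in the \emph{support of the transition kernel} at the bandit state — e.g., which action has the low-value successor with probability $p$ — exactly as in the paper's construction, and then the change-of-measure argument you sketch (laws of the history agree until the first observation of that transition) goes through.
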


The intuition behind this lower bound is as follows. For a critical but hard-to-reach state $s$, any $o(K)$-consistent algorithm must explore all actions $a$ in state $s$, in order to detect their induced successor states $s'$ and distinguish the optimal action. This process incurs a regret dependent on factors $\min_{\pi: \upsilon_{\pi}(s,a)>0} \upsilon_{\pi}(s,a)$ and $\min_{s' \in \supp( p(\cdot|s,a))} p(s'|s,a)$, and hence the lower bound.

\begin{figure} [t!]
	\centering     
	\includegraphics[width=0.9\textwidth]{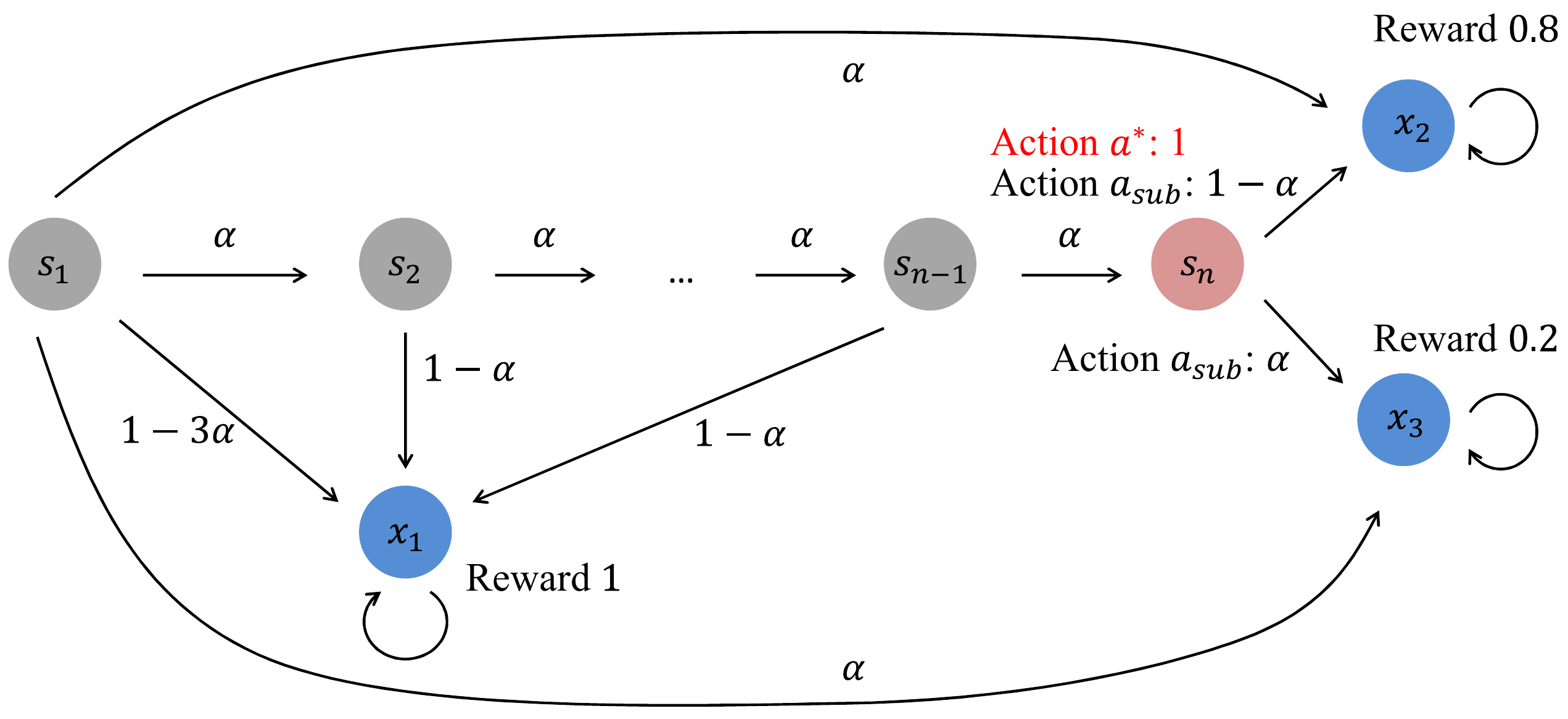}
	\caption{The instance for lower bound under the $\min$ metric (Theorem~\ref{thm:min_regret_lb}).} \label{fig:lower_bound_min} 
\end{figure}

\begin{proof}[Proof of Theorem~\ref{thm:min_regret_lb}]
	Consider the instance $\cI$ as shown in Figure~\ref{fig:lower_bound_min}:
	
	The action space contains two actions, i.e., $\cA=\{a_1,a_2\}$.
	The state space is $\cS=\{s_1,s_2,\dots,s_n,x_1,x_2,x_3\}$, where $n=S-3$ and $s_1$ is the initial state. Let $H \gg S$ and $0<\alpha<\frac{1}{4}$.
	
	The reward functions are as follows.
	For any $a \in \cA$, $r(x_1,a)=1$, $r(x_2,a)=0.8$ and $r(x_3,a)=0.2$. For any $i \in [n]$ and $a \in \cA$, $r(s_i,a)=0$.
	
	The transition distributions are as follows.
	For any $a \in \cA$, $p(s_2|s_1,a)=\alpha$, $p(x_1|s_1,a)=1-3\alpha$, $p(x_2|s_1,a)=\alpha$ and $p(x_3|s_1,a)=\alpha$.
	For any $i\in\{2,\dots,n-1\}$ and $a \in \cA$,
	$p(s_{i+1}|s_i,a)=\alpha$ and $p(x_1|s_i,a)=1-\alpha$.
	$x_1$, $x_2$ and $x_3$ are absorbing states, i.e., for any $a \in \cA$, $p(x_1|x_1,a)=1$, $p(x_2|x_2,a)=1$ and $p(x_3|x_3,a)=1$.
	The state $s_n$ is a bandit state, which has an optimal action and a suboptimal action.
	Let $a_*$ denote the optimal action in state $s_n$, which is uniformly drawn from $\{a_1,a_2\}$, and let $a_{sub}$ denote the other sub-optimal action in state $s_n$.
	For the optimal action $a_*$, $p(x_2|s_n,a_*)=1$.
	For the sub-optimal action $a_{sub}$, $p(x_2|s_n,a_{sub})=1-\alpha$ and $p(x_3|s_n,a_{sub})=\alpha$.

	
	Fix an $o(K)$-consistent algorithm $\cA$, which guarantees a sub-linear regret on any instance of Worst Path RL. 
	We have that $\cA$ needs to observe the transition from $(s_n,a_{sub})$ to $x_3$ at least once.
	Otherwise, without any observation of the transition from $(s_n,a_{sub})$ to $x_3$,   $\cA$ can only trivially choose $a_1$ or $a_2$ in state $s_n$, and no matter $\cA$ chooses $a_1$ or $a_2$, it will suffer a linear regret in the counter instance where the unchosen action is optimal.

	Thus, any $o(K)$-consistent algorithm must observe the transition from $(s_n,a_{sub})$ to $x_3$ at least once, and needs at least
	$$
	\frac{1}{\upsilon_{\pi_{sub}}(s_n,a_{sub}) \cdot p(x_3|s_n,a_{sub})}
	$$
	episodes with sub-optimal policies. 
	Here $\pi_{sub}$ denotes a policy which chooses $a_{sub}$ in state $s_n$, and $\upsilon_{\pi_{sub}}(s_n,a_{sub})$ denotes the probability that $(s_n,a_{sub})$ is visited at least once in an episode under policy $\pi_{sub}$.
	
	
	Once the agent takes a sub-optimal policy in an episode, she will suffer regret $0.6(H-n)$ in this episode.
	
	Therefore, $\cA$ needs to suffer at least 
	$$
	\Omega \sbr{\frac{1}{\upsilon_{\pi_{sub}}(s_n,a_{sub}) \cdot p(x_3|s_n,a_{sub})} \cdot 0.6(H-n)}
	$$
	regret in expectation.

	Since in the constructed instance (Figure~\ref{fig:lower_bound_min})
	\begin{align*}
		&\max_{\begin{subarray}{c} (s,a):\ \exists h,\ a \neq \pi^{*}_h(s) \end{subarray}} \frac{1}{\min \limits_{\begin{subarray}{c} \pi: \ \upsilon_{\pi}(s,a)>0 \end{subarray}} \upsilon_{\pi}(s,a) \cdot \min \limits_{s' \in \supp( p(\cdot|s,a))} p(s'|s,a) } = \frac{1}{\upsilon(s_n,a_{sub}) \cdot p(x_3|s_n,a_{sub})} ,
	\end{align*} 
	we have that $\cA$ needs to suffer at least
	\begin{align*}
		\Omega \sbr{ \max_{\begin{subarray}{c} (s,a):\  \exists h,\ a \neq \pi^{*}_h(s) \end{subarray}} \frac{ H }{\min \limits_{\begin{subarray}{c} \pi: \  \upsilon_{\pi}(s,a)>0 \end{subarray}} \upsilon_{\pi}(s,a) \cdot \min \limits_{s' \in \supp( p(\cdot|s,a))} p(s'|s,a) } } 
	\end{align*}
	regret.
\end{proof}

\section{Technical Tool}

In this section, we present a useful technical tool.

\begin{lemma}[Lemma 13 in \citep{menard2021fast}] \label{lemma:technical_tool_fast_active}
	Let $A, B, C, D, E$ and $\beta$ be positive scalars such that $1 \leq B \leq E$ and $\beta \geq e$. If $T \geq 0$ satisfies
	\begin{align*}
		T \leq C \sqrt{T \sbr{ A \log\sbr{\beta T} + B \log^2 \sbr{\beta T} } } + D \sbr{A \log\sbr{\beta T} + E \log^2 \sbr{\beta T} } ,
	\end{align*}
	then we have
	\begin{align*}
		T \leq C^2 \sbr{A+B} C_1^2 + \sbr{D+2\sqrt{D}C} \sbr{A+E} C_1^2 +1 ,
	\end{align*}
	where
	\begin{align*}
		C_1=\frac{8}{5}\log \sbr{11 \beta^2 \sbr{A+E} \sbr{C+D}} .
	\end{align*}
\end{lemma}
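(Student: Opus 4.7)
The plan is to reduce the transcendental self-bounding inequality to a one-variable quadratic in $\sqrt{T}$, solve it explicitly, and then close the loop where $L := \log(\beta T)$ reappears on both sides.

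First, I would dispose of boundary cases: when $T \leq 1$, the conclusion holds because of the $+1$ on the right-hand side, so I may assume $T \geq 1$, which with $\beta \geq e$ forces $L \geq 1$. Then $AL \leq AL^2$ lets me bound the two bracketed terms on the right of the hypothesis by $(A+B)L^2$ and $(A+E)L^2$ respectively, reducing the hypothesis to
\begin{align*}
T \leq CL\sqrt{(A+B)\,T} + D(A+E)L^2 \; =: \; V\sqrt{T} + U,
\end{align*}
with $V = CL\sqrt{A+B}$ and $U = D(A+E)L^2$. Solving the quadratic $y^2 - Vy - U \leq 0$ in $y = \sqrt{T}$ and using $\sqrt{V^2 + 4U} \leq V + 2\sqrt{U}$ yields $\sqrt{T} \leq V + \sqrt{U}$, so
\begin{align*}
T \leq V^2 + 2V\sqrt{U} + U = C^2(A+B)L^2 + 2CL^2\sqrt{D(A+B)(A+E)} + D(A+E)L^2.
\end{align*}
The hypothesis $B \leq E$ now pays off: $\sqrt{(A+B)(A+E)} \leq A+E$, which collapses the cross term to $2\sqrt{D}\,C(A+E)L^2$. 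Collecting terms produces exactly $T \leq C^2(A+B)L^2 + (D + 2\sqrt{D}\,C)(A+E)L^2$, which already matches the target claim up to the substitution $L \leftrightarrow C_1$.

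The only remaining task is to upgrade $L = \log(\beta T)$ into the explicit constant $C_1$. Feeding the bound above back into itself gives $T \leq K\,L^2$ where $K \leq (C + \sqrt{D})^2(A+E) \leq (C + D + 1)^2(A+E)$, so
\begin{align*}
L \leq \log(\beta K) + 2\log L.
\end{align*}
Using the elementary tail inequality $2\log L \leq \tfrac{3}{5}L + c_0$ valid for all $L \geq 1$ (with $c_0$ an absolute constant), rearranging gives $L \leq \tfrac{5}{2}(\log(\beta K) + c_0)$. Coarsening $\log(\beta K)$ by $\beta \geq e$ and $B \leq E$ to absorb the factors $(C+D+1)^2$ and $c_0$ into a single multiplicative constant inside the logarithm produces $L \leq \tfrac{8}{5}\log(11\beta^2(A+E)(C+D)) = C_1$, and substituting back yields the claim.

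The main obstacle is the last paragraph. The algebra up to $T \leq C^2(A+B)L^2 + (D+2\sqrt{D}\,C)(A+E)L^2$ is routine and forced by the shape of the hypothesis, but the specific numerical constants $\tfrac{8}{5}$ and $11$ in $C_1$ hinge on a careful trade-off in $2\log L \leq \tfrac{3}{5}L + c_0$ and on coarsening $(C+\sqrt{D})^2$ into $(C+D)$ inside the logarithm. The $\tfrac{8}{5}$ arises from a slight cushion above $\tfrac{5}{2}$ used to swallow $c_0$ and the $\beta^2$ appears because one must control the nested $2\log L$ term by an extra factor of $\beta$; verifying that the cushion is large enough in every parameter regime (including small values of $C, D$, where $\sqrt{D}$ and $C^2$ behave qualitatively differently from $C+D$) is the only genuinely delicate part of the proof.
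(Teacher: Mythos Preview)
The paper does not prove this lemma at all; it merely quotes it from M\'enard et al.\ (2021) as a technical tool, so there is no ``paper's own proof'' to compare against. Your structural plan---reduce to a quadratic in $\sqrt{T}$, solve it, expand the square, use $B\le E$ to collapse the cross term into exactly $C^2(A+B)L^2+(D+2\sqrt{D}\,C)(A+E)L^2$, and then bootstrap $L=\log(\beta T)$ to an explicit constant---is the standard route for this kind of self-bounding inequality and is correct up to that final bootstrap.

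The gap is in the last paragraph, and it is not merely delicate: with the constants \emph{exactly as written} in the paper's statement of $C_1$, the inequality $L\le C_1$ is false in general, and no amount of cushioning repairs it. The obstruction is that your intermediate bound gives $L-2\log L\le \log(\beta K)$ with $K\ge C^2(A+B)$, so the right-hand side contains a $2\log C$ term; but $C_1=\tfrac{8}{5}\log(11\beta^2(A+E)(C+D))$ scales like $\tfrac{8}{5}\log C$ for large $C$, and $\tfrac{8}{5}<2$. Concretely, take $A=D=10^{-2}$, $B=E=1$, $C=10^{6}$, $\beta=e$, and $T=10^{15}$: then $L\approx 35.5$ while $C_1\approx 29.2$, the hypothesis $T\le C\sqrt{T(AL+BL^2)}+D(AL+EL^2)$ is satisfied (the right side is $\approx 1.12\times10^{15}$), yet the claimed conclusion evaluates to $\approx 8.6\times 10^{14}<T$. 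So the lemma as transcribed in this paper is actually false; note also that the paper's own application of it uses $B=E=0$, violating the stated condition $1\le B\le E$.

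In short, your algebra is fine and your diagnosis that the constant-matching is the crux is correct, but the ``careful trade-off'' you allude to cannot be made to work for the stated $C_1$: the obstacle is a transcription issue in the paper's restatement (most likely the argument of the logarithm in $C_1$ should involve $(C+D)^2$ or $C^2+D$, or the leading coefficient should be larger), not a subtlety in your argument. If you replace $C_1$ by, say, $\tfrac{8}{5}\log\bigl(c\,\beta(A+E)(C+\sqrt{D})^2\bigr)$ for a suitable absolute constant $c$, your bootstrap $L-2\log L\le \log(\beta K)$ together with $2\log L\le \tfrac{3}{8}L+O(1)$ closes cleanly.
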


	}
	
\end{document}